\documentclass[doublespacing]{utdthesis}
% For one-and-a-half spacing, use: \documentclass[halfspacing]{utdthesis}

%%% Load any desired packages in the space below.
%%% Warning: Do not load packages that change the margins, headers, or footers!
%%%
% Optional: If you want to use Times as your font, load it here.  Note that
% although package "times" should work, it may not be the best choice.  Newer
% LaTeX distributions offer "mathptmx" and "newtxtext,newtxmath" as superior
% replacements.  You should find out which is best for your LaTeX.  (If this
% sounds confusing, you probably shouldn't try to change the font to Times.)
%\usepackage{times}
%
% Optional: If your LaTeX has microtype, use it to improve text quality:
\usepackage{microtype}
%
% Recommended: If your dissertation contains math, use the AMS packages:
\usepackage{amsmath,amssymb,amsthm}
%
% Recommended: If your dissertation needs embedded graphics, use graphicx:
\usepackage{graphicx}
%
% Recommended: If your bibliography contains web page URLs, the url package
% improves their appearance (e.g., better line breaking):
\usepackage{url}
%
% Required: To satisfy UTD's formatting requirements for citations, use the
% "natbib" package.  (Use other citation packages at your own risk; not all
% are flexible enough to meet UTD's requirements.)  If you wish to use numeric
% citations, change "authoryear" to "numbers" below.  Use the "chicago" BibTeX
% style, which most closely matches the Turabian formatting required by UTD.
% UTD mandates a blank line between each pair of bibliography entries, so set
% \bibsep as shown below.  Finally, if you are accustomed to using \cite as
% your citation macro, point it to natbib's \citep macro as shown.
\usepackage[authoryear]{natbib}
\bibliographystyle{chicago}
\setlength{\bibsep}{12pt plus 1pt minus 1pt}
\let\cite=\citep
%
% Required: If you have any wide tables or figures that need to be typeset
% in landscape, use the rotating package:
\usepackage{rotating}
%
% Optional: If you use hyperref to auto-generate hyperlinks, always load it
% LAST since it modifies everything else.  In addition, only load hyperref if
% you use pdftex or pdflatex to generate PDFs directly.  Do NOT use it if you
% use plain tex or latex to generate a DVI file.  (If you are generating DVI
% files which you then convert to PDF, you should seriously consider switching
% to pdflatex.  The DVI format loses information because it cannot support
% modern PDF document features.  Using pdflatex to generate PDFs directly
% therefore results in documents of significantly higher quality.)
\usepackage{ifpdf}
\ifpdf
  \usepackage{hyperref}
\fi
\usepackage{amsmath}
\usepackage{amsmath,bm}
\usepackage{amsthm}
\usepackage{booktabs}
\usepackage[ruled, vlined, linesnumbered]{algorithm2e}%
\usepackage{algpseudocode}
\usepackage{caption}
\usepackage{subcaption}
\urlstyle{same}
\usepackage{xcolor}
\usepackage{amsfonts}
\usepackage{tabularx}
\usepackage{bbm}
\usepackage{enumitem}
\usepackage{multirow}
\usepackage{mathtools}
\usepackage[normalem]{ulem}
\usepackage{threeparttable}

\newtheorem{theorem}{Theorem}

\newtheorem{lemma}{Lemma}
\newtheorem{proposition}{Proposition}
\newtheorem{case}{Case}

\newcommand\blfootnote[1]{%
  \begingroup
  \renewcommand\thefootnote{}\footnote{#1}%
  \addtocounter{footnote}{-1}%
  \endgroup
}

\makeatletter
\newcommand{\multiline}[1]{%
  \begin{tabularx}{\dimexpr\linewidth-\ALG@thistlm}[t]{@{}X@{}}
    #1
  \end{tabularx}
}
\makeatother
\makeatletter
\def\@opargbegintheorem#1#2#3{\trivlist
   \item[]{\bfseries #1\ #2\ (#3)} \itshape}
\makeatother

\def\BibTeX{{\rm B\kern-.05em{\sc i\kern-.025em b}\kern-.08em
    T\kern-.1667em\lower.7ex\hbox{E}\kern-.125emX}}

\newcommand{\x}{\textbf{x}}
\newcommand{\y}{\textbf{y}}
\newcommand{\p}{\textbf{p}}

\newcommand{\rr}{\textbf{r}}
\newcommand{\rrr}{\textbf{R}}

\newcommand{\w}{\textbf{w}}

\newcommand{\uu}{\textbf{u}}
\newcommand{\cc}{\textbf{c}}
\newcommand{\feng}{\textcolor{black}}
\newcommand{\zxj}{\textcolor{black}}
\newcommand{\blue}{\textcolor{blue}}

\DeclareMathOperator*{\argmin}{arg\,min}
%%% End of packages.

%%% Define all your personal macros here (if you have any).
%

%
%%% End of personal macro definitions.

%%% The following definitions MUST come before the document begins.
%
\author{Xujiang Zhao}
\title{Multidimensional Uncertainty Quantification For \\ Deep Neural Networks}
\thesistype{Dissertation}  % or "Thesis"
\degreefull{Doctor of Philosophy}
\degreeabbr{PhD}
\subject{Computer Science}
\graduationmonth{August}
\graduationyear{2022}
\prevdegrees{} % comma-separated list of PREVIOUS degrees

% List committee members in order.  Mark chairpersons with a "*":
\committeemember*{Feng Chen}
\committeemember{Gopal Gupta}
\committeemember{Rishabh Iyer}
\committeemember{Chung Hwan Kim}
%
%%% End of definitions.

%%% Beginning of actual thesis document.

\begin{document}

\frontmatter

\signaturepage

\copyrightpage{2022} % optional

% \begin{dedication} % optional
% This thesis class file \\
% is dedicated to my students, \\
% who suffered without a proper one \\
% until the present time.
% \end{dedication}

\maketitle

\begin{acks}{June 2022} % date when thesis first submitted to committee
    I am tremendously grateful to my advisor, Professor Feng Chen, for his excellent advice, encouragement, and support over the years. He brought my attention to machine learning, data mining, and other exciting research topics at the early stage of my PhD program. During my studies at UAlbany and UT Dallas, he was always willing to help me with suggestions for my research. His excellent research personality profoundly influences me. 

I want to thank my committee members, Professor Gopal Gupta, Professor Rishabh Iyer, and Professor Chung Hwan Kim, for their willingness to serve as my committee members. I would also like to thank my Examining Committee Chair, Professor Justin Ruths, for his valuable time. I got invaluable help and great feedback from them. Their many critically important comments helped me to improve this dissertation significantly.

I am grateful to Professor Jin-Hee Cho, Professor Rishabh Iyer, and Professor Qi Yu for their care and generous help, especially in the early stage of my PhD career. They gave me detailed instructions and support.

I received tremendous help from friends and colleagues: Professor Zhiqiang Tao, Dr. Chen Zhao, Dr. Baojian Zhou, Dr. Fei Jie, Dr. Adil Alim, Chunpai Wang, Changbin Li, Haoliang Wang, Yuzhe Ou, Linlin Yu, Yibo Hu, Zhuoyi Wang, and Junfeng Guo. I also had two fantastic summer internships at \textit{Alibaba Damo Academy} and \textit{NEC Laboratories America} in the summer of 2019 and 2021, advised by great team members Dr. Hongxia Yang and Dr. Xuhcao Zhang. 

I would also like to express my sincere gratitude to my parents, sister, and brother for their unconditional love.

Finally, this dissertation is dedicated to my wife, Chongchong, and my son, Shuhuan. I want to thank my wife and my son for their love and support. 
\end{acks}

\begin{abstract}
    Deep neural networks (DNNs) have received tremendous attention and achieved great success in various applications, such as image
and video analysis, natural language processing, recommendation systems, and drug discovery. However, inherent uncertainties derived from different root causes have been realized as serious hurdles for DNNs to find robust and trustworthy solutions for real-world problems. A lack of consideration of such uncertainties may lead to unnecessary risk. For example, a self-driving autonomous car can misdetect a human on the road. A deep learning-based medical assistant may misdiagnose cancer as a benign tumor. 

In this work, we study how to measure different uncertainty causes for DNNs and use them to solve diverse decision-making problems more effectively. In the first part of this thesis, we develop a general learning framework to quantify multiple types of uncertainties caused by different root causes, such as vacuity (i.e., uncertainty due to a lack of evidence) and dissonance (i.e., uncertainty due to conflicting evidence), for graph neural networks. We provide a theoretical analysis of the relationships between different uncertainty types. We further demonstrate that dissonance is most effective for misclassification detection and vacuity is most effective for Out-of-Distribution (OOD) detection. In the second part of the thesis, we study the significant impact of OOD objects on  semi-supervised learning (SSL) for DNNs and develop a novel framework to improve the robustness of existing SSL algorithms against OODs. In the last part of the thesis, we create a general learning framework to quantity multiple uncertainty types for multi-label temporal neural networks. We further develop novel uncertainty fusion operators to quantify the fused uncertainty of a subsequence for early event detection.

\end{abstract}

\tableofcontents
\listoffigures % required if you have any figures
\listoftables % required if you have any tables

\mainmatter

\chapter{Introduction}
\label{chapter:intro}
    \section{Motivations}
Deep neural networks have reached almost every field of science in the last decade and have become an essential component of a wide range of real-world applications~\cite{szegedy2015going,graves2013generating,wang2021clear,dong2022neural}. Because of the increasing spread, confidence in neural network predictions became increasingly important. However, basic neural networks do not provide certainty estimates and suffer from over-confidence or under-confidence, indicating poor calibration. For instance, given several cat breeds images as training data to train a neural network, the model should return a prediction with relatively high confidence when a test sample is a similar cat breed image. However, when the test sample is a dog image, is the neural network able to recognize the test sample coming from a different data distribution (\textit{out of distribution} test data)~\cite{gal2016dropout, wang2022layerada}? Ideally, we hope the neural networks can say "I do not know" when the test sample is an out-of-distribution sample. Unfortunately, the answer is "NO" because the model was trained with images of different kinds of cats and has hopefully learned to recognize them. But the model has never seen a dog before, and an image of a dog would be outside of the data distribution the model was trained on. Even more critically, in high-risk fields, such as medical image analysis with a diagnostics system that has never been observed before or scenes that a self-driving system has never been trained to handle~\cite{kendall2017uncertainties}. 

A possible desired behavior of a model in such cases would be to return a prediction with the additional information that the test sample lies outside the data distribution. In other words, we hope our model can quantity a high level of uncertainty (alternatively, low confidence) with such out-of-distribution inputs. In addition to the out-of-distribution situation, other scenarios may lead to uncertainty. One scenario is noisy data, such as the label noise due to the measurement imprecision~\cite{kendall2017uncertainties} causes aleatoric uncertainty and conflicting evidence due to dissonance uncertainty~\cite{josang2018uncertainty}. Another scenario is the uncertainty in model parameters that best explain the observed (training) data~\cite{gal2015bayesian}. 

Many researchers have been working on understanding and quantifying uncertainty in a neural network’s prediction to overcome over or under-confidence prediction issues. As a result, different types and sources of uncertainty have been identified, and various approaches to measure and quantify uncertainty in neural networks have been proposed~\cite{guo2022survey}. 
Predictive uncertainty quantification in the deep learning research domain has been mainly explored based on two uncertainty types: aleatoric and epistemic uncertainty. Aleatoric uncertainty refers to inherent ambiguity in outputs for a given input and cannot be reduced due to randomness in the data. Aleatoric uncertainty can be estimated by probabilistic neural networks, such as mixture density networks~\cite{mackay1999density}. Epistemic uncertainty indicates uncertainty about the model parameters estimated based on the training data. This uncertainty measures how well the model is matched to the data and can be reduced through the collection of additional data. Epistemic uncertainty can be estimated based on Bayesian neural networks (BNNs) that can learn a posterior distribution over parameters. The accuracy of uncertainty estimation depends on the choice of the prior distribution and the accuracy of the approximate posterior distribution, as the exact posterior is often infeasible to compute. Recent developments of approximate Bayesian approaches include $L$ Laplace approximation~\cite{mackay1992bayesian}, variational inference~\cite{graves2011practical}, dropout-based variational inference~\cite{gal2016dropout}, expectation propagation~\cite{hernandez2015probabilistic} and stochastic gradient Markov chain Monte Carlo (MCMC)~\cite{welling2011bayesian}. 

In the belief (or evidence) theory domain, uncertainty reasoning has been substantially explored, such as Fuzzy Logic~\citep{de2018intelligent}, Dempster-Shafer Theory (DST)~\citep{sentz2002combination}, or Subjective Logic (SL)~\citep{josang2016subjective}.  Belief theory focuses on reasoning inherent uncertainty in information caused by unreliable, incomplete, deceptive, or conflicting evidence.  SL considered predictive uncertainty in subjective opinions in terms of {\em vacuity} (i.e., a lack of evidence)~\cite{zhao2018deep,xu2021boosting,alim2019uncertainty} and {\em vagueness} (i.e., failing in discriminating a belief state)~\citep{josang2016subjective}. Recently, other uncertainty types have been studied, such as {\em dissonance} caused by conflicting evidence\citep{josang2018uncertainty,zhao2019uncertainty,shi2020multifaceted}.

However, inherent uncertainties derived from different root causes have been realized as serious hurdles for DNNs to find robust and trustworthy solutions for real-world problems. A lack of consideration of such uncertainties may lead to unnecessary risk. For example, a self-driving autonomous car can misdetect a human on the road. A deep learning-based medical assistant may misdiagnose cancer as a benign tumor. In this work, we study how to measure different uncertainty causes for DNNs and use them to solve diverse decision-making problems more effectively. 
% However, they do not allow representing and interpreting uncertainty in terms of the different root causes introduced by missing, incomplete, deceptive, or conflicting information. The uncertainty caused by poor data quality based on diverse reasons can be a key hurdle to achieving timely, effective decision-making. In this dissertation, we address problems of multidimensional uncertainty quantification for deep neural networks in the following aspects:

(1) \textit{Uncertainty-aware semi-supervised learning on graph data}.
% Inherent uncertainties derived from different root causes have been realized as serious hurdles to finding practical solutions for real-world problems. 
We study the multidimensional uncertainty quantification for graph neural networks(GNNs)~\citep{kipf2017semi, velickovic2018graph}, which have received tremendous attention in the data science community. Despite their superior performance in semi-supervised node classification and regression, they did not consider various uncertainties in their decision process. Although many methods~\citep{zhao2018deep2, liuuncertainty,zhang2018bayesian,zhao2018uncertainty} have been proposed to estimate uncertainty for GNNs, no prior work has considered uncertainty decomposition in GNNs. To address the challenge of uncertainty decomposition in GNNs, we propose the research question: \textit{can we quantify multidimensional uncertainty types in both deep learning and belief and evidence theory domains for node-level classification, misclassification detection, and out-of-distribution detection tasks on graph data?}

(2) \textit{Uncertainty-aware robust semi-supervised learning}. Recent Semi-supervised learning works show significant improvement in semi-supervised learning algorithms' performance using better-unlabeled data representations. However, recent work~\cite{oliver2018realistic} shows that the SSL algorithm's performance could degrade when the unlabeled set has out-of-distribution examples. To address the challenge of out-of-distribution issues in semi-supervised learning, we first propose the research questions: \textit{How out-of-distribution data hurt semi-supervised learning performance?} and \textit{Can we develop an efficient and effective uncertainty-based approach for robust semi-supervised learning with out of distribution data?}

(3) \textit{Uncertainty-aware early event detection with multi-labels}. 
Early event detection aims to detect event even before the event is complete.
To achieve the earliness of event detection, existing approaches can be broadly divided into several major categories. Prefix-based techniques \cite{gupta2020fault,gupta2020early,gupta2020divide,gupta2019early,gupta2019game} aim to learn a minimum prefix length of the time series using the training instances and utilize it to classify a testing time series. Shapelet-based approaches \cite{yan2020extracting,zhao2019asynchronous,yao2019dtec} focus on obtaining a set of key shapelets from the training dataset and utilizing them as class discriminatory features. Model-based methods for event early detection \cite{mori2019early,lv2019effective} are proposed to obtain conditional probabilities by either fitting a discriminative classifier or using generative classifiers on training. Although these approaches address the importance of early detection, they primarily focus on an event with a single label but fail to be applied to cases with multiple labels. Another non-negligible issue for early event detection is a prediction with overconfidence \cite{zhao2020uncertainty,sensoy2018evidential} due to high vacuity uncertainty exists in the early time series, which refers to a lack of evidence. It results in an over-confidence estimation and hence unreliable predictions. To address the aforementioned issues, we propose the research questions: \textit{How to quantify the uncertainty for multi-label time series classification?} and \textit{How to make a reliable prediction for early event detection?}

In this dissertation, we aim to solve the challenges mentioned above. Therefore, a general question we want to ask is: How can we design effective and efficient uncertainty methods for deep neural networks under different problem settings?

\section{Summary of Main Contributions}

To answer the above question, we propose a novel approach for each challenge by studying an effective and efficient algorithm that is followed by a detailed theoretic analysis. The main contributions are list out as follows.

(1) \textit{Semi-supervised learning on graph data}. We proposed a multi-source uncertainty framework for GNNs. The proposed framework first provides the estimation of various types of uncertainty from both deep learning and evidence/belief theory domains, such as dissonance (derived from conflicting evidence) and vacuity (derived from lack of evidence). In addition, we designed a Graph-based Kernel Dirichlet distribution Estimation (GKDE) method to reduce errors in quantifying predictive uncertainties. Furthermore, we first provided a theoretical analysis of the relationships between different types of uncertainties considered in this work. We demonstrate via a theoretical analysis that an OOD node may have a high predictive uncertainty under GKDE. Based on the six real graph datasets, we compared the performance of our proposed framework with that of other competitive counterparts. We found that dissonance-based detection yielded the best results in misclassification detection, while vacuity-based detection performed best in OOD detection. The main results are from~\cite{zhao2020uncertainty}

(2) \textit{Semi-supervised learning with OODs setting}. 
To answer the question of "\textit{How out-of-distribution data hurt semi-supervised learning performance?}", we first study the critical causes of OOD's negative impact on SSL algorithms. In particular, we found that 1) certain kinds of OOD data instances close to the decision boundary have a more significant impact on performance than those far away, and 2) Batch Normalization (BN), a popular module, could degrade the performance instead of improving the performance when the unlabeled set contains OODs. To address the above causes, we proposed a novel unified weighted robust SSL framework that can be easily extended to many existing SSL algorithms and improve their robustness against OODs. To address the limitation of low-order approximations in bi-level optimization, we developed an efficient hyper-parameter optimization algorithm that considers high-order approximations of the objective and is scalable to a higher number of inner optimization steps to learn a massive amount of weight parameters. In addition, we conduct a theoretical analysis of the impact of faraway OODs in the BN step and propose weighted batch normalization (WBN) to carry the weights over in the BN step. We also discuss the connection between our approach and low-order approximation approaches. 
Finally, we address a critical issue of the existing bi-level optimization-based reweighting schemes, which is that they are much slower (close to $3\times$) compared to the original learning (SSL) algorithms --  we show that several simple tricks like just considering the last layer in the inner loop and doing the weight updates every few epochs enable us to have a run-time comparable to the base SSL while maintaining the accuracy gains of reweighting.
Extensive experiments on synthetic and real-world datasets prove that our proposed approach significantly improves the robustness of four representative SSL algorithms against OODs compared with four state-of-the-art robust SSL approaches. The main content is from~\cite{zhao2020robust}.

(3) \textit{Early event detection with multi-label setting}. 
We first introduce a novel problem, namely \textit{early event detection with multiple labels}. A temporal event with multiple labels that occurs sequentially along the timeline is considered in this problem setting. This work aims to accurately detect all classes at the ongoing stage of an event within the least amount of time. To this end, technically, we propose a novel framework, 
Multi-Label Temporal Evidential Neural Network (MTENN), for early event detection in temporal data. MTENN is able to quality predictive uncertainty due to the lack of evidence for multi-label classifications at each time stamp based on belief/evidence theory. In addition, we introduce two novel uncertainty estimation heads (weighted binomial comultiplication (WBC) and uncertainty mena scan statistics(UMSS)) to quantify the fused uncertainty of a sub-sequence for early event detection. We demonstrate that WBC is effective for detection accuracy and UMSS is effective for detection delay. We validate the performance of our approach with state-of-the-art techniques on real-world audio and video datasets. Theoretic analysis and empirical studies demonstrate the effectiveness and efficiency of the proposed framework in both detection delay and accuracy. The results have been accepted in~\cite{zhao2022seed}.

\section{Outline}
The rest of the dissertation is organized as follows. In Chapter \ref{chapter:2}, We present our first research on multidimensional uncertainty quantification for GNNs. In Chapter \ref{chapter:3}, we study an uncertainty-based robust semi-supervised learning framework via bi-level optimization. In Chapter \ref{chapter:4}, we further quantify the multi-label uncertainty and sequential uncertainty for early event detection. We finally conclude the dissertation in Chapter \ref{chapter:conclusion}.

\chapter{Uncertainty Aware Semi-Supervised Learning on \\ Graph Data}
\label{chapter:2}
    \section{Introduction}\blfootnote{\copyright 2020 Advances in Neural Information Processing Systems. Reprinted, with permission, from Xujiang Zhao, Feng Chen, Shu Hu, Jin-Hee Cho, ``Uncertainty Aware Semi-Supervised Learning on Graph Data," Proceedings of the 34th International Conference on Neural Information Processing Systems, pp.12827-12836.} 
Inherent uncertainties derived from different root causes have realized as serious hurdles to find effective solutions for real-world problems. Critical safety concerns have been brought due to lack of considering diverse causes of uncertainties, resulting in high risk due to misinterpretation of uncertainties (e.g., misdetection or misclassification of an object by an autonomous vehicle).  Graph neural networks (GNNs)~\citep{kipf2017semi, velickovic2018graph} have received tremendous attention in the data science community. Despite their superior performance in semi-supervised node classification and regression, they didn't consider various types of uncertainties in the their decision process.  Predictive uncertainty estimation~\citep{kendall2017uncertainties} using Bayesian NNs (BNNs) has been explored for classification prediction and regression in the computer vision applications, based on aleatoric uncertainty (AU) and epistemic uncertainty (EU). AU refers to data uncertainty from statistical randomness (e.g., inherent noises in observations) while EU indicates model uncertainty due to limited knowledge (e.g., ignorance) in collected data.  In the belief or evidence theory domain, Subjective Logic (SL)~\citep{josang2018uncertainty} considered vacuity (or a lack of evidence or ignorance) as uncertainty in a subjective opinion. Recently other uncertainty types, such as dissonance, consonance, vagueness, and monosonance~\citep{josang2018uncertainty}, have been discussed based on SL to measure them based on their different root causes. 

We first considered multidimensional uncertainty types in both deep learning (DL) and belief and evidence theory domains for node-level classification, misclassification detection, and out-of-distribution (OOD) detection tasks.  By leveraging the learning capability of GNNs and considering multidimensional uncertainties, we propose an uncertainty-aware estimation framework by quantifying different uncertainty types associated with the predicted class probabilities.  
In this work, we made the following {\bf key contributions}:
\begin{itemize}[leftmargin=*, noitemsep]
\item \textbf{A multi-source uncertainty framework for GNNs}. Our proposed framework first provides the estimation of various types of uncertainty from both DL and evidence/belief theory domains, such as dissonance (derived from conflicting evidence) and vacuity (derived from lack of evidence).  In addition, we designed a Graph-based Kernel Dirichlet distribution Estimation (GKDE) method to reduce errors in quantifying predictive uncertainties.
\item \textbf{Theoretical analysis}:  Our work is the first that provides a theoretical analysis about the relationships between different types of uncertainties considered in this work.  We demonstrate via a theoretical analysis that an OOD node may have a high predictive uncertainty under GKDE.
\item \textbf{Comprehensive experiments for validating the performance of our proposed framework}: Based on the six real graph datasets, we compared the performance of our proposed framework with that of other competitive counterparts. We found that dissonance-based detection yielded the best results in misclassification detection while vacuity-based detection best performed in OOD detection. 
\end{itemize}
Note that we use the term `predictive uncertainty' in order to mean uncertainty estimated to solve prediction problems.

\section{Related Work} \label{sec:related-work}
DL research has mainly considered {\it aleatoric} uncertainty (AU) and {\it epistemic} uncertainty (EU) using BNNs for computer vision applications.  AU consists of homoscedastic uncertainty (i.e., constant errors for different inputs) and heteroscedastic uncertainty (i.e., different errors for different inputs)~\citep{gal2016uncertainty}.  A Bayesian DL framework was presented to simultaneously estimate both AU and EU in regression (e.g., depth regression) and classification (e.g., semantic segmentation) tasks~\citep{kendall2017uncertainties}.  Later, {\em distributional uncertainty} was defined based on distributional mismatch between testing and training data distributions~\citep{malinin2018predictive}.  {\em Dropout variational inference}~\citep{gal2016dropout} was used for approximate inference in BNNs using epistemic uncertainty, similar to \textit{DropEdge}~\citep{rong2019dropedge}.  Other algorithms have considered overall uncertainty in node classification~\citep{eswaran2017power, liuuncertainty, zhang2019bayesian}. However, no prior work has considered uncertainty decomposition in GNNs. 

In the belief (or evidence) theory domain, uncertainty reasoning has been substantially explored, such as Fuzzy Logic~\citep{de2018intelligent}, Dempster-Shafer Theory (DST)~\citep{sentz2002combination}, or Subjective Logic (SL)~\citep{josang2016subjective}.  Belief theory focuses on reasoning inherent uncertainty in information caused by unreliable, incomplete, deceptive, or conflicting evidence.  SL considered predictive uncertainty in subjective opinions in terms of {\em vacuity} (i.e., a lack of evidence) and {\em vagueness} (i.e., failing in discriminating a belief state)~\citep{josang2016subjective}. Recently, other uncertainty types have been studied, such as {\em dissonance} caused by conflicting evidence\citep{josang2018uncertainty}.
In the deep NNs, \citep{sensoy2018evidential} proposed evidential deep learning (EDL) model, using SL to train a deterministic NN for supervised classification in computer vision based on the sum of squared loss.  However, EDL didn't consider a general method of estimating multidimensional uncertainty or graph structure.

\section{Multidimensional Uncertainty and Subjective Logic}

This section provides an overview of SL and discusses multiple types of uncertainties estimated based on SL, called {\em evidential uncertainty}, with the measures of \textit{vacuity} and \textit{dissonance}.  In addition, we give a brief overview of {\em probabilistic uncertainty}, discussing the measures of \textit{aleatoric} uncertainty and \textit{epistemic} uncertainty.

\subsection{Notations}
Vectors are denoted by lower case bold face letters, \textit{e.g.}, belief vector $\bm{b} \in [0, 1]^K$ and class probability ${\bm p} \in [0, 1]^K$ where their \textit{i}-th entries are $b_i, p_i$. Scalars are denoted by lowercase italic letters, \textit{e.g.} $u\in[0, 1]$. Matrices are denoted by capital italic letters. $\omega$ denotes the subjective opinion. Some important notations are listed in Table~\ref{table:notation_2}

\begin{table*}[h]%{l}{0.8\textwidth}
% \footnotesize
\caption{Important notations and corresponding descriptions.}
\centering
  \begin{tabular}{l|l}
    \toprule
      \textbf{Notations} & \textbf{Descriptions}  \\
    \midrule
    $\mathcal{G}$   & Graph dataset  \\
    $\mathbb{V}$   &  A ground set of nodes   \\
    $\mathbb{L}$   &  Training nodes   \\
    $\mathbb{E}$   &  A ground set of edges   \\
    $\rrr$   &  Node-level feature matrix   \\
    $y_i$ & Class label of node $i$ \\
    $\p_i$ & Class probability of node $i$ \\
    ${\bm \theta}$ & model parameters \\
    $\omega$ & Subjective opinion \\
    $\bm{b}$ & Belief mass distribution \\
    $\bm{\alpha}$ & Dirichlet distribution parameters \\
    $u$ & Vacuity uncertainty \\
    $K$ & Number of classes \\
    $S$ & Dirichlet strength (summation of ${\bm \alpha}$) \\
    $\bm{e}$ & Evidence vector \\
    $diss(\omega)$ & Dissonance uncertainty based on opinion $\omega$ \\
    $I(\cdot)$ & Mutual information \\
    $H(\cdot)$ & Entropy function \\
    $f(\cdot)$ & GNNs model function \\
    \bottomrule
  \end{tabular}
   \label{table:notation_2}
\end{table*}

\subsection{Subjective Logic}\label{SL}

A multinomial opinion of a random variable $y$ is represented by $\omega = (\bm{b}, u, \bm{a})$ where a domain is $\mathbb{Y} \equiv \{1, \cdots, K\}$ and the additivity requirement of $\omega$ is given as $\sum_{k \in \mathbb{Y}} b_k + u = 1$.  To be specific, each parameter indicates,
\begin{itemize}
\item $\bm{b}$: {\em belief mass distribution} over $\mathbb{Y}$ and $\bm{b} = [b_1, \ldots, b_K]^T$;
\item $u$: {\em uncertainty mass} representing {\em vacuity of evidence};
\item $\bm{a}$: {\em base rate distribution} over $\mathbb{Y}$ and $\bm{a} = [a_1, \ldots, a_K]^T$.
\end{itemize}
The projected probability distribution of a multinomial opinion can be calculated as:
\begin{equation} \label{eq:multinomial-projected}
P(y=k) = b_k + a_k u,\;\;\; \forall k \in \mathbb{Y}. 
\end{equation}  

A multinomial opinion $\omega$ defined above can be equivalently represented by a $K$-dimensional Dirichlet probability density function (PDF), where the special case with $K=2$ is the Beta PDF as a binomial opinion. 
%Multinomial probability density over a domain of cardinality $K$ is represented by the $K$-dimensional Dirichlet PDF where the special case with $K=2$ is the Beta PDF as a binomial opinion. 
%Denote a domain of $K$ mutually disjoint elements in $\mathbb{Y}$ and 
Let ${\bm \alpha}$ be a strength vector over the singletons (or classes) in $\mathbb{Y}$ and ${\bf p} = [p_1, \cdots, p_K]^T$ be a probability distribution over $\mathbb{Y}$. The Dirichlet PDF 
%of a Dirichlet: $ \sim \text{Dir}({\bf p}; {\bm \alpha})$
with ${\bf p}$ as a random vector $K$-dimensional variables is defined by:
\begin{eqnarray} \label{eq:multinomial-dir}
\mathrm{Dir}(\bm{p}| {\bm \alpha}) = \frac{1}{B({\bm \alpha})} \prod\nolimits_{k\in \mathbb{Y}} p_k ^{(\alpha_k-1)},
\end{eqnarray} 
where $\frac{1}{B({\bm \alpha})} = \frac{\Gamma (\sum_{k \in \mathbb{Y}} \alpha_k)}{\prod_{k \in \mathbb{Y}} (\alpha_k)}$, $\alpha_k \geq 0$, and $p_k \neq 0$, if $\alpha_k < 1$.

The term \textit{evidence} is introduced as a measure of the amount of supporting observations collected from data that a sample should be classified 
%in favor of a sample to be classified 
into a certain class. Let $e_k$ be the evidence derived for the class $k\in \mathbb{Y}$.  The total strength $\alpha_k$ for the  belief of each class $k \in \mathbb{Y}$ can be calculated as: 
$\alpha_k = e_k + a_k W$, 
where $e_k \geq 0, \forall k \in \mathbb{Y}$, and $W$ refers to a non-informative weight representing the amount of uncertain evidence.  Given the Dirichlet PDF as defined above, the expected probability distribution over $\mathbb{Y}$ can be calculated as:
\begin{equation} \label{eq:multinomial-expected}
\mathbb{E}[p_k] = \frac{\alpha_k}{\sum_{k=1}^K \alpha_k} = \frac{e_k+a_k W}{W+\sum_{k=1}^K e_k}. 
\end{equation}
The observed evidence in a Dirichlet PDF can be mapped to a multinomial opinion as follows:
\begin{equation} \label{eq:multinomial-belief}
b_k = \frac{e_k}{S}, \;
u = \frac{W}{S},  
\end{equation}
where $S = \sum_{k=1}^K \alpha_k$ refers to the Dirichlet strength.
Without loss of generality, we set  $a_k = \frac{1}{K}$ and the non-informative prior weight (i.e., $W = K$), which indicates that  $a_k \cdot  W = 1$ for each $k \in \mathbb{Y}$.

\subsection{Evidential Uncertainty}

In~\cite{josang2018uncertainty}, we discussed a number of multidimensional uncertainty dimensions of a subjective opinion based on the formalism of SL, such as singularity, vagueness, vacuity, dissonance, consonance, and monosonance.  These uncertainty dimensions can be observed from binomial, multinomial, or hyper opinions depending on their characteristics (e.g., the vagueness uncertainty is only observed in hyper opinions to deal with composite beliefs). In this work, we discuss two main uncertainty types that can be estimated in a multinomial opinion, which are {\em vacuity} and {\em dissonance}. 

The main cause of vacuity is derived from a lack of evidence or knowledge, which corresponds to the uncertainty mass, $u$, of a multinomial opinion in SL as:
$vac(\omega) \equiv u = K/S,$ as estimated in Eq.~(\ref{eq:multinomial-belief}).
This uncertainty exists because the analyst may have  insufficient information or knowledge to analyze the uncertainty. The {\em dissonance} of a multinomial opinion can be derived from the same amount of conflicting evidence and can be estimated based on the difference between singleton belief masses (e.g., class labels), which leads to `inconclusiveness' in decision-making applications. For example, a four-state multinomial opinion is given as $(b_1, b_2, b_3, b_4, u, a) = (0.25, 0.25, 0.25, 0.25, 0.0, a)$ based on Eq.~\eqref{eq:multinomial-belief}, although the vacuity $u$ is zero, a decision can not be made if there are the same amounts of beliefs supporting respective beliefs.  Given a multinomial opinion with non-zero belief masses, the measure of dissonance can be calculated as:
% \begin{equation}
% \label{eq:belief-dissonance-multi}
% diss(\omega) = \sum\limits_{y_{i}\in \mathbb{Y}}\left(\frac{\bm{b}_{Y}(y_{i})\!\!\!\! \sum\limits_{y_j \in \mathbb{Y}\setminus y_i}\!\!\!\!\!\bm{b}_{Y}(y_{j}) \mbox{Bal}(y_{j},y_{i})}{\sum\limits_{y_j \in \mathbb{Y}\setminus y_i}\bm{b}_{Y}(y_{j})}  \right), \mbox{Bal}(y_{j},y_{i}) = 1 - \frac{|\bm{b}_{Y}(y_{j})-\bm{b}_{Y}(y_{i})|}{\bm{b}_{Y}(y_{j})+\bm{b}_{Y}(y_{i})} \nonumber
% \end{equation}
\begin{align}
\label{eq:dis}
    diss(\omega)=\sum_{i=1}^{K}\Big(\frac{b_i \sum_{j\neq i}b_j \text{Bal}(b_j,b_i)}{\sum_{j\neq i}b_j}\Big), 
\end{align}
where the relative mass balance between a pair of belief masses $b_j$ and $b_i$ is defined as $\mbox{Bal}(b_j, b_i) = 1 - |b_j - b_i|/(b_j + b_i)$. We note that  the dissonance is measured only when the belief mass is non-zero. If all belief masses equal to zero with vacuity being 1 (i.e., $u=1$), the dissonance will be set to zero.

\subsection{Probabilistic Uncertainty}   
\label{sect:multi-dim uncertainty}
For classification, the estimation of the probabilistic uncertainty relies on the design of an appropriate Bayesian DL model with parameters $\bm{\theta}$. Given input $x$ and dataset $\mathcal{G}$, we estimate a class probability by $P(y|x) = \int P(y|x;\bm{\theta}) P(\bm{\theta}|\mathcal{G}) d\bm{\theta}$, and obtain \textbf{\textit{epistemic uncertainty}} estimated by mutual information~\cite{depeweg2018decomposition, malinin2018predictive}:
\begin{eqnarray}
\underbrace{I(y, \bm{\theta}|x, \mathcal{G})}_{\text{\textbf{\textit{Epistemic}}}} =\underbrace{\mathcal{H}\big[ \mathbb{E}_{P(\bm{\theta}|\mathcal{G})}[P(y|x;\bm{\theta})] \big]}_{\text{\textbf{\textit{Entropy}}}} -  \underbrace{\mathbb{E}_{P(\bm{\theta}|\mathcal{G})}\big[\mathcal{H}[P(y|x;\bm{\theta})] \big]}_{\text{\textbf{\textit{Aleatoric}}}}, 
 \label{eq:epistemic}
\end{eqnarray}
where $\mathcal{H}(\cdot)$ is Shannon's entropy of a probability distribution. The first term indicates {\bf \textit{entropy}} that represents the total uncertainty while the second term is {\bf \textit{aleatoric}} that indicates data uncertainty.  By computing the difference between entropy and aleatoric uncertainties, we obtain epistemic uncertainty, which refers to uncertainty from model parameters. 

\section{Relationships Between Multiple Uncertainties}
\begin{figure*}[t!]
  \centering
  \includegraphics[width=0.5\textwidth]{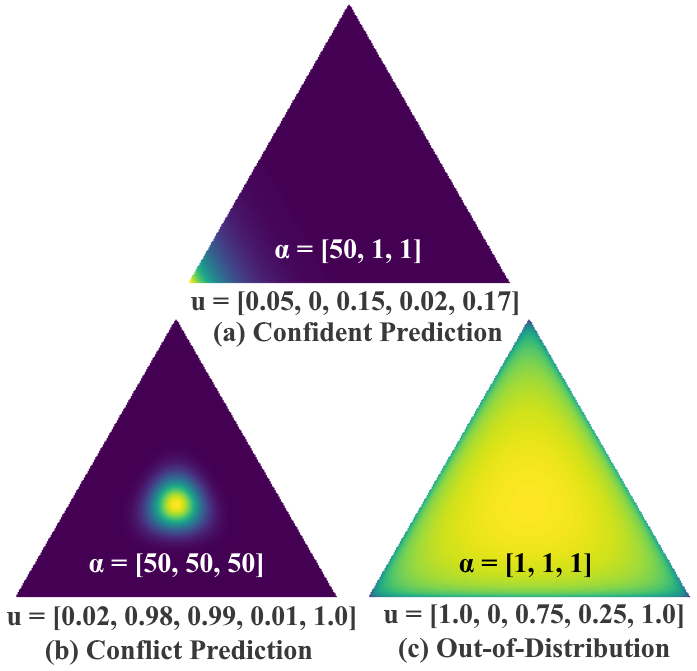}
  \caption{Multiple uncertainties of different predictions. Let ${\bf u}=[u_v, u_{diss}, u_{alea}, u_{epis}, u_{en}]$.}\label{fig:example}
\end{figure*}  

We use the shorthand notations $u_{v}$, $u_{diss}$, $u_{alea}$, $u_{epis}$, and $u_{en}$ to represent vacuity, dissonance, aleatoric, epistemic, and entropy, respectively.  

To interpret multiple types of uncertainty, we show three prediction scenarios of 3-class classification in Figure~\ref{fig:example}, in each of which the strength parameters $\alpha = [\alpha_1, \alpha_2, \alpha_3]$ are known.  To make a prediction with high confidence, the subjective multinomial opinion, following a Dirichlet distribution, will yield a sharp distribution on one corner of the simplex (see Figure~\ref{fig:example} (a)). For a prediction with conflicting evidence, called a conflicting prediction (CP), the multinomial opinion should yield a central distribution, representing confidence to predict a flat categorical distribution over class labels (see Figure~\ref{fig:example} (b)).  For an OOD scenario with $\alpha=[1, 1, 1]$, the multinomial opinion would yield a flat distribution over the simplex (Figure~\ref{fig:example} (c)), indicating high uncertainty due to the lack of evidence.  The first technical contribution of this work is as follows.

\begin{theorem}
We consider a simplified scenario, where a multinomial random variable $y$ follows a K-class categorical distribution: $y \sim \text{Cal}(\p)$, the class probabilities $\p$ follow a Dirichlet distribution: $\p\sim \text{Dir}({\bm \alpha})$, and ${\bm \alpha}$ refer to the Dirichlet parameters. Given a total Dirichlet strength $S=\sum_{i=1}^K \alpha_i$, 
%\sout{where $K$ is the number of classes,} 
for any opinion $\omega$ on a multinomial random variable $y$, we have

\begin{enumerate}
\item General relations on all prediction scenarios. 
    
(a) $u_v+ u_{diss} \le 1$; (b) $u_v > u_{epis}$.
    
    % \begin{enumerate}
    %     \item $u_v+ u_{diss} \le 1$;
    %     \item $u_v > u_{epis}$
    % \end{enumerate}
    \item Special relations on the OOD and the CP.
    \begin{enumerate}
        \item For an OOD sample with a uniform prediction (i.e., $\alpha=[1, \ldots, 1]$), we have 
    \begin{eqnarray}
    1= u_v = u_{en}>  u_{alea} > u_{epis} > u_{diss} = 0 \nonumber
    \end{eqnarray}
        \item For an in-distribution sample with a conflicting prediction (i.e., $\alpha=[\alpha_1, \ldots, \alpha_K]$ with $\alpha_1 = \alpha_2 =\cdots = \alpha_K$, if $S \rightarrow \infty$),  we have 
    \begin{eqnarray}
    u_{en} = 1, \lim_{S\rightarrow \infty} u_{diss} =\lim_{S\rightarrow \infty} u_{alea} =1 , \lim_{S\rightarrow \infty} u_{v} =\lim_{S\rightarrow \infty} u_{epis} =0 \nonumber
    \end{eqnarray}
    \text{with} $u_{en} > u_{alea}> u_{diss}> u_{v}>u_{epis} $.
    \end{enumerate}
\end{enumerate}
\label{theorem_un}
\end{theorem}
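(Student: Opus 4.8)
The plan is to reduce every quantity to a closed form in the Dirichlet parameters ${\bm\alpha}$, the strength $S=\sum_k\alpha_k$, and the class number $K$, and then compare them, using $\alpha_k=e_k+1\ge 1$, $b_k=(\alpha_k-1)/S$, and $u_v=K/S$. For part (1a) I would use only the additivity constraint $\sum_k b_k+u_v=1$ together with the elementary bound $\text{Bal}(b_j,b_i)\le 1$: since every balance factor is at most one, each inner ratio $\big(\sum_{j\neq i}b_j\text{Bal}(b_j,b_i)\big)/\big(\sum_{j\neq i}b_j\big)\le 1$, so $u_{diss}\le\sum_i b_i=1-u_v$, which is exactly (1a).

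For part (1b) I would first put the probabilistic quantities in closed form. With $\p\sim\text{Dir}({\bm\alpha})$ and $y\sim\text{Cat}(\p)$ the mean is $\mathbb{E}[p_k]=\alpha_k/S$, so $u_{en}=-\sum_k(\alpha_k/S)\log(\alpha_k/S)$, while the standard identity $\mathbb{E}[p_k\log p_k]=(\alpha_k/S)\big(\psi(\alpha_k+1)-\psi(S+1)\big)$ gives $u_{alea}=\psi(S+1)-\sum_k(\alpha_k/S)\psi(\alpha_k+1)$, hence $u_{epis}=u_{en}-u_{alea}=\sum_k(\alpha_k/S)\big[h(\alpha_k)-h(S)\big]$ where $h(x):=\psi(x+1)-\log x$ and $\psi$ is the digamma function. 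The proof of $u_v>u_{epis}$ then rests on two digamma facts valid for $x\ge 1$: $h(x)>0$ and $h(x)<1/(2x)$. Dropping the nonnegative $h(S)$ term and applying the upper bound gives $u_{epis}<\sum_k(\alpha_k/S)\cdot 1/(2\alpha_k)=K/(2S)<K/S=u_v$.

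Part (2) is then mostly substitution plus asymptotics. For the OOD case ${\bm\alpha}=[1,\dots,1]$ I would note that $S=K$ forces $u_v=1$ and all $b_k=0$, so $u_{diss}=0$ by the stated convention, while the mean is uniform so $u_{en}$ attains its maximum (here normalized so $u_{en}=1$); plugging $\alpha_k=1,\,S=K$ into the closed forms yields $u_{alea}=\psi(K+1)-\psi(2)=H_K-1$ and $u_{epis}=\log K-(H_K-1)$, and the chain $u_{alea}>u_{epis}>0$ follows from the harmonic-number inequalities $2H_K-2>\log K$ and $H_K<\log K+1$. For the conflicting in-distribution case I would set $\alpha_k=S/K$, so each $b_k=1/K-1/S$ and all balances equal one, giving the exact value $u_{diss}=1-K/S$; the mean stays uniform, so $u_{en}=1$ for every $S$, while $u_{alea}=\psi(S+1)-\psi(S/K+1)$ and $u_{epis}=u_{en}-u_{alea}$. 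Sending $S\to\infty$ and using $\psi(x+1)=\log x+\tfrac{1}{2x}+O(x^{-2})$ yields the stated limits $u_{diss},u_{alea}\to 1$ and $u_v,u_{epis}\to 0$.

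The finite-$S$ ordering $u_{en}>u_{alea}>u_{diss}>u_v>u_{epis}$ is where the real care is needed, and I expect this to be the main obstacle. The comparisons $u_{en}>u_{alea}$ (equivalently $u_{epis}>0$, nonnegativity of mutual information) and $u_{diss}>u_v$ (i.e. $S>2K$) are easy, and $u_v>u_{epis}$ is already covered by (1b). The delicate comparison is $u_{alea}>u_{diss}$, since both tend to $1$: I would expand $u_{alea}=1-\tfrac{K-1}{2S\log K}+O(S^{-2})$ against $u_{diss}=1-\tfrac{K}{S}$ and check that the correction term of $u_{alea}$ is the smaller one, i.e. $\tfrac{K-1}{2\log K}<K$, which holds for all $K\ge 2$. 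The whole argument hinges on having sharp enough two-sided digamma bounds to prove $u_v>u_{epis}$ globally and to resolve the second-order race between $u_{alea}$ and $u_{diss}$; assembling these bounds and checking the small-$K$ boundary cases by hand is the part that needs the most attention. I would also flag explicitly that the entropy-type measures are normalized by $\log K$, so that the uniform prediction gives $u_{en}=1$, which is what makes the equalities $u_v=u_{en}=1$ meaningful in the OOD case.
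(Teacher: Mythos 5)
Your proposal is correct, and while it follows the same overall skeleton as the paper's proof (reduce every quantity to a closed form in $\bm{\alpha}$, $S$, $K$, then substitute and take limits), it differs in three substantive steps, in ways worth recording. For (1b), the paper never bounds digamma terms directly: it introduces the expected pairwise KL divergence $\mathcal{K}[y,\bm{p}|\bm{\alpha}]$ between two independent draws of $\bm{p}$, shows that the mutual information satisfies $u_{epis}\le\mathcal{K}$, and computes $\mathcal{K}=(K-1)/S$ exactly using only the recurrence $\psi(x+1)=\psi(x)+1/x$, whence $u_{epis}\le (K-1)/S<K/S=u_v$. Your route via $u_{epis}=\sum_k(\alpha_k/S)\bigl[h(\alpha_k)-h(S)\bigr]$ with $0<h(x)<1/(2x)$ yields the tighter bound $K/(2S)$ and avoids the auxiliary ensemble-diversity quantity, at the cost of requiring the sharp two-sided digamma bounds; both arguments are valid. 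For (2a), the paper proves that $u_{alea}$ is monotonically increasing in $K$ (via a separate lemma on harmonic sums) and compares the extreme values at $K=2$, whereas your harmonic-number inequalities $2H_K-2>\ln K$ and $H_K<\ln K+1$ reach the same ordering more directly. Finally, the step you flag as the main obstacle --- $u_{alea}>u_{diss}$ in (2b) --- does not require any second-order asymptotic race: since $u_{en}=1$ in the conflicting case, $u_{alea}=u_{en}-u_{epis}=1-u_{epis}$ and $u_{diss}=1-u_v$ exactly, so $u_{alea}-u_{diss}=u_v-u_{epis}>0$ is precisely statement (1b), valid for every finite $S$ rather than only asymptotically. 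This is the identity the paper uses, and it spares you both the digamma expansion and the check $\tfrac{K-1}{2\ln K}<K$; the only genuinely asymptotic comparison left is $u_{diss}>u_v$, which needs $S>2K$, a caveat your write-up and the paper's share.
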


The proof of Theorem~\ref{theorem_un} can be found in Appendix~\ref{app:A1}. As demonstrated in Theorem~\ref{theorem_un} and Figure~\ref{fig:example}, entropy cannot distinguish OOD (see Figure~\ref{fig:example} (c)) and conflicting predictions (see Figure~\ref{fig:example} (b)) because entropy is high for both cases. Similarly, neither aleatoric uncertainty nor epistemic uncertainty can distinguish OOD from conflicting predictions.  In both cases, aleatoric uncertainty is high while epistemic uncertainty is low.  On the other hand, vacuity and dissonance can clearly distinguish OOD from a conflicting prediction.  For example, OOD objects typically show high vacuity with low dissonance while conflicting predictions exhibit low vacuity with high dissonance.  This observation is confirmed through the empirical validation via our extensive experiments in terms of misclassification and OOD detection tasks.

\section{Uncertainty-Aware Semi-Supervised Learning}
In this section, we describe our proposed uncertainty framework based on semi-supervised node classification problem. It is designed to predict the subjective opinions about the classification of testing nodes, such that a variety of uncertainty types, such as vacuity, dissonance, aleatoric uncertainty, and epistemic uncertainty, can be quantified based on the estimated subjective opinions and posterior of model parameters. As a subjective opinion can be equivalently represented by a Dirichlet distribution about the class probabilities, we proposed a way to predict the node-level subjective opinions in the form of node-level Dirichlet distributions. The overall description of the framework is shown in Figure~\ref{fig:framework2}.

\subsection{Problem Definition} \label{subsec:problem-definition}
Given an input graph $\mathcal{G} = (\mathbb{V}, \mathbb{E}, {\bf r}, {\bf y}_\mathbb{L})$, where $\mathbb{V} = \{1, \ldots, N \}$ is a ground set of nodes, $\mathbb{E} \subseteq \mathbb{V}\times \mathbb{V}$ is a ground set of edges, $\rr = [\rr_1, \cdots, \rr_N]^T \in \mathbb{R}^{N\times d}$ is a node-level feature matrix, $\rr_i\in \mathbb{R}^d$ is the feature vector of node $i$, $\y_{\mathbb{L}}=\{y_i \mid i \in \mathbb{L}\}$ are the labels of the training nodes $\mathbb{L} \subset \mathbb{V}$, and $y_i \in \{1, \ldots, K\}$ is the class label of node $i$. {\bf We aim to predict}: (1) the \textbf{class probabilities} of the testing nodes: $\p_{\mathbb{V} \setminus \mathbb{L}} = \{\p_i \in [0, 1]^K \mid i \in \mathbb{V} \setminus \mathbb{L}\}$; and (2) the \textbf{associated multidimensional uncertainty estimates} introduced by different root causes: $\mathbf{u}_{\mathbb{V} \setminus \mathbb{L}} = \{\mathbf{u}_i \in [0, 1]^m \mid i \in \mathbb{V} \setminus \mathbb{L}\}$, where $p_{i, k}$ is the probability that the class label $y_i = k$ and $m$ is the total number of
uncertainty types. 

\begin{figure*}[t!]
  \centering
  \includegraphics[width=0.95\linewidth]{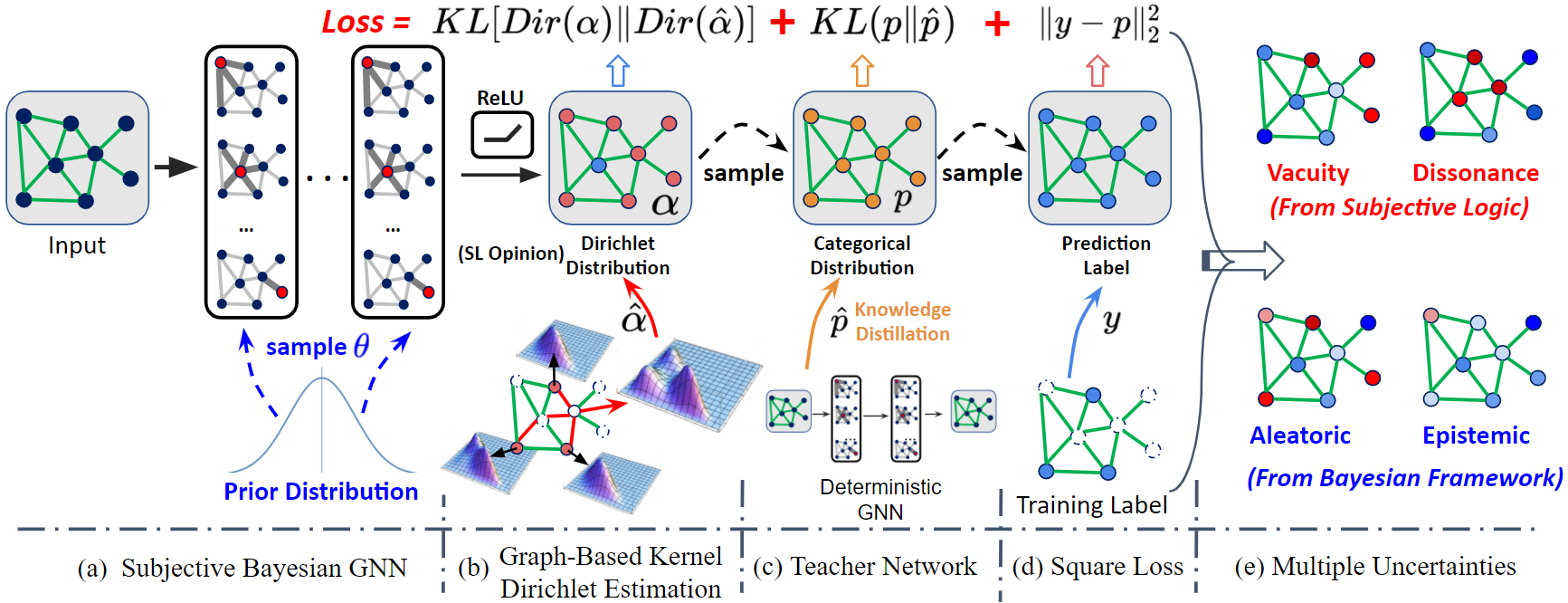}
 \caption{Uncertainty Framework Overview. Subjective Bayesian GNN (a) is designed for estimating the different types of uncertainties. The loss function includes a square error (d) to reduce bias, GKDE (b) to reduce errors in uncertainty estimation, and teacher network (c) to refine class probability.}
  \label{fig:framework2}
\end{figure*}  

\subsection{Proposed Uncertainty Framework} \label{subsec:bay-dl}
\textbf{Learning evidential uncertainty.}
As discussed in Section~\ref{SL}, evidential uncertainty can be derived from multinomial opinions or equivalently Dirichlet distributions to model a probability distribution for the class probabilities. Therefore, we design a Subjective GNN (S-GNN) $f$ to form their multinomial opinions for the node-level Dirichlet distribution $\text{Dir}(\p_i | {\bm \alpha}_i)$ of a given node $i$. Then, the conditional probability $P(\p |A, \rr; \bm{\theta})$ can be obtained by:
\begin{eqnarray} 
P(\p |A, \rr; \bm{\theta} )=\prod\nolimits_{i=1}^N \text{Dir}(\p_i|\bm{\alpha}_i), \ \bm{\alpha}_i=f_i(A,\rr;\bm{\theta}), \label{GCN_1}
\end{eqnarray}
where $f_i$ is the output of S-GNN for node $i$, $\bm{\theta}$ is the model parameters, and $A$ is an adjacency matrix. The Dirichlet probability function $\text{Dir}(\p_i | \bm{\alpha}_i)$ is defined by Eq.~\eqref{eq:multinomial-dir}.

% \begin{eqnarray}\small 
%     \text{Dir}(\p_i | \bm{\alpha}_i)=\frac{\Gamma(S_i)}{\prod_{k=1}^K \Gamma(\alpha_{ik})}\prod\nolimits_{k=1}^K p_{ik}^{\alpha_{ik}-1}.
% \end{eqnarray}
Note that S-GNN is similar to classical GNN, except that we use an activation layer (e.g., \textit{ReLU}) instead of the \textit{softmax} layer (only outputs class probabilities).  This ensures that S-GNN would output non-negative values, which are taken as the parameters for the predicted Dirichlet distribution. 

\textbf{Learning probabilistic uncertainty.}
Since probabilistic uncertainty relies on a Bayesian framework, we proposed a Subjective Bayesian GNN (S-BGNN) that adapts S-GNN to a Bayesian framework, with the model parameters $\bm{\theta}$ following a prior distribution. The joint class probability of $\y$ can be estimated by: 
\begin{eqnarray}
P(\y |A, \rr; \mathcal{G}) &=& \int \int P(\y | \p) P(\p |A, \rr; \bm{\theta} ) P(\bm{\theta} | \mathcal{G}) d \p d\bm{\theta} \nonumber \\
&\approx& \frac{1}{M}\sum_{m=1}^M  \sum_{i=1}^N \int P(\y_i | \p_i) P(\p_i | A, \rr;\bm{\theta}^{(m)} ) d \p_i, \quad \bm{\theta}^{(m)} \sim q( \bm{\theta}) 
\label{Baye_model}
\end{eqnarray}
where $P(\bm{\theta} | \mathcal{G})$ is the posterior, estimated via dropout inference, that provides an approximate solution of posterior $q(\bm{\theta})$ and taking samples from the posterior distribution of models~\citep{gal2016dropout}.  Thanks to the benefit of dropout inference, training a DL model directly by minimizing the cross entropy (or square error) loss function can effectively minimize the KL-divergence between the approximated distribution and the full posterior (i.e., KL[$q(\bm{\theta})\|P(\theta|\mathcal{G})$]) in variational inference~\citep{gal2016dropout, kendall2015bayesian}. For interested readers, please refer to more detail in Appendix B.8.

Therefore, training S-GNN with stochastic gradient descent enables learning of an approximated distribution of weights, which can provide good explainability of data and prevent overfitting.  We use a {\em loss function} to compute its Bayes risk with respect to the sum of squares loss $\|\y-\p\|^2_2$ by:
\begin{eqnarray} 
\mathcal{L}(\bm{\theta}) &=&  \sum\nolimits_{i\in \mathbb{L}} \int \|\y_i-\p_i\|^2_2 \cdot P(\p_i |A, \rr; \bm{\theta}) d \p_i \nonumber \\
&=&  \sum\nolimits_{i\in \mathbb{L}} \sum\nolimits_{k=1}^K \big(y_{ik}-\mathbb{E}[p_{ik}]\big)^2 + \text{Var}(p_{ik}),
\label{loss}
\end{eqnarray}
where $\y_i$ is an one-hot vector encoding the ground-truth class with $y_{ij} = 1$ and $y_{ik} \neq $ for all $k \neq j$ and $j$ is a class label. Eq.~\eqref{loss} aims to minimize the prediction error and variance, leading to maximizing the classification accuracy of each training node by removing excessive misleading evidence.
%~\citep{sensoy2018evidential}
\subsection{Graph-based Kernel Dirichlet distribution Estimation (GKDE)}
\begin{figure*}[t!]
  \centering
   \includegraphics[width=0.5\textwidth]{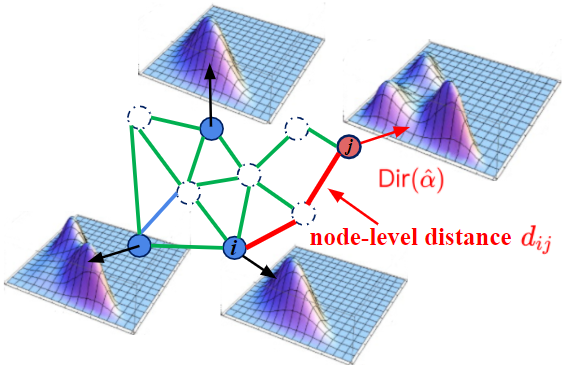}
 \caption{Illustration of GKDE. Estimate prior Dirichlet distribution $\text{Dir}(\hat{\alpha})$ for node $j$ (red) based on training nodes (blue) and graph structure information.}
  \label{fig:gkde}
\end{figure*}  

The loss function in Eq.~\eqref{loss} is designed to measure the sum of squared loss based on class labels of training nodes.  However, it does not directly measure the quality of the predicted node-level Dirichlet distributions~\cite{zhao2019quantifying,hu2021multidimensional}.  To address this limitation, we proposed \textit{Graph-based Kernel Dirichlet distribution Estimation} (GKDE) to better estimate node-level Dirichlet distributions by using graph structure information.  The key idea of the GKDE is to estimate prior Dirichlet distribution parameters for each node based on the class labels of training nodes (see Figure~\ref{fig:gkde}). Then, we use the estimated prior Dirichlet distribution in the training process to learn the following patterns: (i) nodes with a high vacuity will be shown far from training nodes; and (ii) nodes with a high dissonance will be shown near the boundaries of classes.

Based on SL, let each training node represent one evidence for its class label. Denote the contribution of evidence estimation for node $j$ from training node $i$ by $\mathbf{h}(y_i,d_{ij}) =[h_1, \ldots, h_k, \ldots, h_K] \in[0, 1]^K$, where $h_k(y_i,d_{ij})$ is obtained by: 
\begin{eqnarray}
h_k(y_i,d_{ij}) = \begin{cases}0 & y_i \neq k \\ g(d_{ij}) & y_i = k,  \end{cases}
\end{eqnarray}
 $g(d_{ij}) = \frac{1}{\sigma \sqrt{2\pi}}\exp({-\frac{{d^2_{ij}}}{2\sigma^2}})$ is the Gaussian kernel function used to estimate the distribution effect between nodes $i$ and $j$, and $d_{ij}$ means the \textbf{node-level distance} (\textbf{the shortest path between nodes $i$ and $j$}), and $\sigma$ is the bandwidth parameter. The prior evidence is estimated based GKDE: $\hat{\bm{e}}_j = \sum_{i\in \mathbb{L}} \mathbf{h}(y_i,d_{ij})$, where $\mathbb{L}$ is a set of training nodes and the prior Dirichlet distribution $\hat{\bm{\alpha}}_j = \hat{\bm{e}}_j +\bf 1$.  
% Vacuity and dissonance can be obtained using $\hat{\bm{\alpha}}_j$, such as $\hat{u}_{v_j} = K/\sum_{j\in \mathbb{L}} \hat{\bm{\alpha}}_j$. 
During the training process, we minimize the KL-divergence between model predictions of Dirichlet distribution and prior distribution: $\min \text{KL}[\text{Dir}(\bm{\alpha}) \| \text{Dir}(\hat{\bm{\alpha}})]$.
This process can prioritize the extent of data relevance based on the estimated evidential uncertainty, which is proven effective based on the proposition below.

\begin{proposition}
Given $L$ training nodes, for any testing nodes $i$ and $j$, let ${\bm d}_i = [d_{i1}, \ldots, d_{iL}]$ be the vector of graph distances from nodes $i$ to training nodes and ${\bm d}_j = [d_{j1}, \ldots, d_{jL}]$ be the graph distances from nodes $j$ to training nodes, where $d_{il}$ is the node-level distance between nodes $i$ and $l$. If for all $l\in \{1, \ldots, L\}$, $d_{il} \ge d_{jl}$, then we have
\begin{eqnarray}
\hat{u}_{v_i} \ge \hat{u}_{v_j}, \nonumber
\end{eqnarray}
where $ \hat{u}_{v_i}$ and $\hat{u}_{v_j}$ refer to vacuity uncertainties of nodes $i$ and $j$ estimated based on GKDE.
\label{proposition_GKDE}
\end{proposition}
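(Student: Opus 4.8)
The plan is to reduce the vacuity of each node to a single scalar---the total Dirichlet strength of its GKDE prior---and then exploit monotonicity of the Gaussian kernel. Recall from Eq.~\eqref{eq:multinomial-belief} that vacuity is $u = K/S$, so for the prior Dirichlet I would write $\hat{u}_{v_i} = K/\hat{S}_i$, where $\hat{S}_i = \sum_{k=1}^K \hat{\alpha}_{i,k}$ is the strength of $\mathrm{Dir}(\cdot\,|\,\hat{\bm{\alpha}}_i)$. Since $\hat{u}_v$ is a strictly decreasing function of $\hat{S}$, the proposition is equivalent to the single inequality $\hat{S}_i \le \hat{S}_j$, and the whole argument comes down to controlling the total strength.

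The key step is to make $\hat{S}_i$ explicit. Using $\hat{\bm{\alpha}}_i = \hat{\bm{e}}_i + \mathbf{1}$ and $\hat{\bm{e}}_i = \sum_{l\in\mathbb{L}} \mathbf{h}(y_l, d_{il})$, I would sum over the classes $k$ and swap the order of summation:
\begin{equation}
\hat{S}_i = \sum_{k=1}^K\Big(1 + \sum_{l\in\mathbb{L}} h_k(y_l,d_{il})\Big) = K + \sum_{l\in\mathbb{L}} \sum_{k=1}^K h_k(y_l,d_{il}). \nonumber
\end{equation}
The crucial observation is that, by the definition of $\mathbf{h}$, each training node $l$ contributes evidence to exactly one class (namely $y_l$), so the inner sum collapses to a single kernel value, $\sum_{k=1}^K h_k(y_l,d_{il}) = g(d_{il})$. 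Hence $\hat{S}_i = K + \sum_{l\in\mathbb{L}} g(d_{il})$, and the label information $y_l$ drops out entirely---only the distances survive.

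It then suffices to compare the two kernel sums. Because $g(d) = \frac{1}{\sigma\sqrt{2\pi}}\exp\!\big(-d^2/(2\sigma^2)\big)$ is strictly decreasing in $d\ge 0$, the hypothesis $d_{il}\ge d_{jl}$ gives $g(d_{il})\le g(d_{jl})$ for every $l$; summing over $l\in\mathbb{L}$ yields $\hat{S}_i \le \hat{S}_j$, and therefore $\hat{u}_{v_i} = K/\hat{S}_i \ge K/\hat{S}_j = \hat{u}_{v_j}$. I do not expect a serious obstacle here: the only point requiring care is the telescoping of $\sum_k h_k$ to $g(d_{il})$, which hinges on the one-hot structure of $\mathbf{h}$, together with the elementary monotonicity of the Gaussian on the nonnegative reals. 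Everything else is bookkeeping.
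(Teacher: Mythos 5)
Your proof is correct and follows essentially the same route as the paper's: both reduce the claim to comparing the total evidence (equivalently the Dirichlet strength $\hat{S} = \sum_k e_k + K$), use the one-hot structure of $\mathbf{h}$ to collapse each training node's contribution to a single kernel value $g(d_{il})$, and then invoke the monotonicity of the Gaussian kernel to conclude $\hat{u}_{v_i} = K/\hat{S}_i \ge K/\hat{S}_j = \hat{u}_{v_j}$. Your write-up is, if anything, a cleaner rendering of the same argument.
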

\begin{proof}
Let ${\bm y} = [y_1, \ldots, y_L]$ \feng{be} the label vector for training nodes. \feng{B}ased on GKDE, the evidence contribution for the node $i$ and a training node $l\in \{1, \ldots, L|\}$ is ${\bm h} (y_l, d_{il})=[h_1(y_l, d_{il}), \ldots, h_K(y_l, d_{il})]$, where
\begin{eqnarray}
h_k(y_l, d_{il}) = \begin{cases}0 & y_l \neq k\\ g(d_{il}) = \frac{1}{\sigma \sqrt{2\pi}}\exp({-\frac{{d_{il}}^2}{2\sigma^2}})  & y_l = k \end{cases} \label{po3_hk},
\end{eqnarray}
and the prior evidence \feng{can be estimated} based GKDE: 
\begin{eqnarray}
\hat{{\bm e}}_i = \sum_{m=1}^{L} \sum_{k=1}^K h_k(y_l, d_{il}),
\end{eqnarray}
\feng{where} $\hat{{\bm e}}_i=[e_{i1},...,e_{iK}]$. Since each training node only contributes the same evidence \feng{based on} its label based on Eq.~\eqref{po3_hk}, the total evidence is estimated by all \feng{the} contributing evidence as
\begin{eqnarray}
\sum_{k=1}^K e_{ik} = \sum_{m=1}^{L} \frac{1}{\sigma \sqrt{2\pi}} \exp({-\frac{{d_{il}}^2}{2\sigma^2}}) , \quad \sum_{k=1}^K e_{jk} = \sum_{m=1}^{L} \frac{1}{\sigma \sqrt{2\pi}} \exp({-\frac{{d_{jl}}^2}{2\sigma^2}}),
\end{eqnarray}
where the vacuity values for node $i$ and node $j$ based on GKDE are,
\begin{eqnarray}
\label{eq:i-j-vacuity}
\hat{u}_{v_i} = \frac{K}{\sum_{k=1}^K e_{ik} + K}, \quad \hat{u}_{v_j} = \frac{K}{\sum_{k=1}^K e_{jk} + K}.
\end{eqnarray}

Now, we prove Eq.~\eqref{eq:i-j-vacuity} above.  If $d_{il}\ge d_{jl}$ for $\forall l\in \{1, \ldots, L\}$, we have
\begin{eqnarray}
\sum_{k=1}^K e_{ik} &=& \sum_{m=1}^{L} \frac{1}{\sigma \sqrt{2\pi}} \exp({-\frac{{d_{il}}^2}{2\sigma^2}}) \\
&\le& \sum_{m=1}^{L} \frac{1}{\sigma \sqrt{2\pi}} \exp({-\frac{{d_{jl}}^2}{2\sigma^2}}) \nonumber\\
&=& \sum_{k=1}^K e_{jk}, \nonumber
\end{eqnarray}
such that
\begin{eqnarray}
\hat{u}_{v_i} = \frac{K}{\sum_{k=1}^K e_{ik} + K} \ge \frac{K}{\sum_{k=1}^K e_{jk} + K} = \hat{u}_{v_j}. 
\end{eqnarray}
\end{proof}
The above proposition shows that if a testing node is too far from training nodes, the vacuity will increase, implying that an OOD node is expected to have a high vacuity.

In addition, we designed a simple iterative knowledge distillation method~\citep{44873} (i.e., Teacher Network) to refine the node-level classification probabilities. The key idea is to train our proposed model (Student) to imitate the outputs of a pre-train vanilla GNN (Teacher) by adding a regularization term of KL-divergence. This leads to solving the following optimization problem: 
\begin{eqnarray} 
\min\nolimits_{\bm{\theta}}  \mathcal{L}(\bm{\theta}) + \lambda_1 \text{KL}[\text{Dir}({\bm \alpha}) \| \text{Dir}(\hat{\bm \alpha})] + \lambda_2  \text{KL}[P(\y \mid A,\rr;\mathcal{G}) \parallel P(\y|\hat{\p})],
\label{joint loss}
\end{eqnarray}
where $\hat{\p}$ is the vanilla GNN's (Teacher) output and $\lambda_1$ and $\lambda_2$ are trade-off parameters. Algorithm~\ref{algorithm_S-BGCN-T-K} shows the details of our proposed framework.

\begin{algorithm}[H]
\small{
\DontPrintSemicolon
\KwIn{$\mathbb{G} = (\mathbb{V}, \mathbb{E}, \rr)$ and $\y_{\mathbb{L}}$}
\KwOut{$\p_{\mathbb{V} \setminus \mathbb{L}}$, $\uu_{\mathbb{V} \setminus \mathbb{L}}$} 
\SetKwBlock{Begin}{function}{end function}
%\Begin($\text{generateCosts} {(} EV_{i,v,r} {)}$)
{
  $\ell=0$; \;
  Set hyper-parameters $\eta, \lambda_1, \lambda_2$; \;
  
  Initialize the parameters $\gamma, \beta$; \;
  
  Calculate the prior Dirichlet distribution $\text{Dir}(\hat{\alpha})$; \;
  Pretrain the teacher network to get $\text{Prob}(\y |\hat{\p})$; \;
 \Repeat{convergence }
 {
 
 Forward pass to compute $\bm{\alpha}$, $\text{Prob}(\p_i |A, \rr; \mathcal{G})$ for $i\in\mathbb{V}$;\;
 
 Compute joint probability $\text{Prob}(\y |A, \rr; \mathcal{G})$;\;
 
 Backward pass via the chain-rule the calculate the sub-gradient gradient: $g^{(\ell)} = \nabla_\Theta \mathcal{L}(\Theta)$ \;
 
 Update parameters using step size $\eta$ via 
  $\Theta^{(\ell+1)} = \Theta^{(\ell)} - \eta \cdot g^{(\ell)}$\; 

  $\ell = \ell + 1$;\;
 }
 %$e^{(t)} = \textbf{B}^{(t)} - \hat{\textbf{B}}^{(t)}$, 
 %$r^{(t)} = \textbf{U}^{(t)} - \hat{\textbf{U}}^{(t)} $\;
 Calculate $\p_{\mathbb{V} \setminus \mathbb{L}}$, $\uu_{\mathbb{V} \setminus \mathbb{L}}$ 

\Return{$\p_{\mathbb{V} \setminus \mathbb{L}}$, $\uu_{\mathbb{V} \setminus \mathbb{L}}$}
}
\caption{S-BGCN-T-K}\label{algorithm_S-BGCN-T-K}
}
\end{algorithm}

\section{Experiments} \label{sec:exp-results-analysis}
In this section, we conduct experiments on the tasks of misclassification and OOD detections to answer the following questions for semi-supervised node classification:

\noindent {\bf Q1. Misclassification Detection:} What type of uncertainty is the most promising indicator of high confidence in node classification predictions? 

\noindent {\bf Q2. OOD Detection:}  What type of uncertainty is a key indicator of accurate detection of OOD nodes? 

\noindent {\bf Q3. GKDE with Uncertainty Estimates:}  How can GKDE help enhance prediction tasks with what types of uncertainty estimates?

Through extensive experiments, we found the following answers to the above questions:

\noindent {\bf A1.} Dissonance (i.e., uncertainty due to conflicting evidence) is more effective than other uncertainty estimates in misclassification detection. 

\noindent {\bf A2.}  Vacuity (i.e., uncertainty due to lack of confidence) is more effective than other uncertainty estimates in OOD detection.

\noindent {\bf A3.}  GKDE can indeed help improve the estimation quality of node-level Dirichlet distributions, resulting in a higher OOD detection.

\subsection{Experiment Setup} 
\textbf{Datasets}: We used six datasets, including three citation network datasets~\citep{sen2008collective} (i.e., Cora, Citeseer, Pubmed) and three new datasets~\citep{shchur2018pitfalls} (i.e., Coauthor Physics, Amazon Computer, and Amazon Photo). 
We summarized the description and experimental setup of the used datasets as follows.  

\begin{table*}[h]%{l}{0.8\textwidth}
\footnotesize
\caption{Description of datasets and their experimental setup for the node classification prediction.}
\centering
\vspace{-0mm}
  \begin{tabular}{lcccccc}
    \toprule
     & \textbf{Cora} & \textbf{Citeseer}  & \textbf{Pubmed} & \textbf{Co. Physics} & \textbf{Ama.Computer} & \textbf{Ama.Photo} \\
    \midrule
    \textbf{\#Nodes}   & 2,708 & 3,327 &  19,717 &  34, 493 & 13, 381 & 7, 487 \\
    \textbf{\#Edges}   &  5,429 &   4,732 &  44,338 &  282, 455 & 259, 159 & 126, 530   \\
    \textbf{\#Classes}  & 7 & 6 &  3  & 5 & 10 & 8  \\
    \textbf{\#Features}  & 1,433 & 3,703 &  500 &  8,415 & 767 & 745   \\
    \textbf{\#Training nodes}  & 140 & 120 &  60 &  100 & 200 & 160   \\
    \textbf{\#Validation nodes}  & 500 & 500 &  500 &  500  &  500 &  500  \\
    \textbf{\#Test nodes}  & 1,000 & 1,000 &  1,000 &  1000  &  1,000 &  1000 \\
    \bottomrule
  \end{tabular}
   \label{table:datasets-description}
\end{table*}

\noindent {\bf Cora, Citeseer, and Pubmed}~\citep{sen2008collective}: These are citation network datasets, where \feng{each} network is a directed network \feng{in which} a node represents a document and an edge is a citation link, meaning that there exists an edge when $A$ document cites $B$ document, or vice-versa with a direction. Each node's feature vector contains a bag-of-words representation of a document. For simplicity, we don't discriminate the direction of links and treat citation links as undirected edges and construct a binary, symmetric adjacency matrix $\mathbf{A}$. Each node is labeled with the class to which it belongs. 

\noindent {\bf Coauthor Physics, Amazon Computers, and Amazon Photo }~\citep{shchur2018pitfalls}:
Coauthor Physics is the dataset for co-authorship graphs based on the Microsoft Academic Graph from the KDD Cup 2016 Challenge\footnote{KDD Cup 2016 Dataset: Online Available at \url{https://kddcup2016.azurewebsites.net/}}. In the graphs, a node is an author and an edge exists when two authors co-author a paper. A node's features represent the keywords of its papers and the node's class label indicates its most active field of study.  Amazon Computers and Amazon Photo are the segments of an Amazon co-purchase graph~\citep{mcauley2015image}, where a node is a good (i.e., product), and an edge exists when two goods are frequently bought together. A node's features are the bag-of-words representation of product reviews and the node's class label is the product category.

For all the used datasets, we deal with undirected graphs with 20 training nodes for each category. We chose the same dataset splits as in~\citep{yang2016revisiting} with an additional validation node set of 500 labeled examples for the hyperparameter obtained from the citation datasets, and followed the same dataset splits in~\citep{shchur2018pitfalls} for Coauthor Physics, Amazon Computer, and Amazon Photo datasets, for the fair comparison\footnote{The source code and datasets are accessible at \href{https://github.com/zxj32/uncertainty-GNN}{\color{magenta}{https://github.com/zxj32/uncertainty-GNN}}}. 

\noindent {\bf Metric}: We use\feng{d} the following metrics for our experiments:
\begin{itemize}[leftmargin=*]
\item {\em Area Under Receiver Operating Characteristics (AUROC)}: AUROC shows the area under the curve where FPR (false positive rate) is in $x$-axis and TPR (true positive rate) is in $y$-axis. It can be interpreted as the probability that a positive example is assigned a higher detection score than a negative example\citep{fawcett2006introduction}.
A perfect detector corresponds to an AUROC score of 100\%.
\item {\em Area Under Precision-Prediction Curve (AUPR)}: The PR curve is a graph showing the precision=TP/(TP+FP) and recall=TP/(TP+FN) against each other,and AUPR denote\feng{s} the area under the precision-recall curve. The ideal case is when Precision is 1 and Recall is 1. 
\end{itemize}

\noindent {\bf Comparing Schemes}:
We conducted an extensive comparative performance analysis based on our proposed models and several state-of-the-art competitive counterparts. We implemented all models based on the most popular GNN model, GCN~\citep{kipf2017semi}.  We compared our model (S-BGCN-T-K) against (1) Softmax-based GCN~\citep{kipf2017semi} with uncertainty measured based on entropy; and (2) Drop-GCN that adapts the Monte-Carlo Dropout~\citep{gal2016dropout, ryu2019uncertainty} into the GCN model to learn probabilistic uncertainty; (3) EDL-GCN that adapts the EDL model~\citep{sensoy2018evidential} with GCN to estimate evidential uncertainty; (4) DPN-GCN that adapts the DPN~\citep{malinin2018predictive} method with GCN to estimate probabilistic uncertainty. We evaluated the performance of all models considered using the area under the ROC (AUROC) curve and area under the Precision-Recall (AUPR) curve in both experiments~\citep{hendrycks17baseline}.

\noindent {\bf Model Setups for semi-supervised node classification}.
Our models \feng{were} initialized using Glorot initialization~\citep{glorot2010understanding} and trained to minimize loss using the Adam SGD optimizer~\citep{kingma2014adam}. For the S-BGCN-T-K model, we use{\feng d} the {\em early stopping strategy}~\citep{shchur2018pitfalls} on Coauthor Physics, Amazon Computer, and Amazon Photo datasets while {\em non-early stopping strategy} \feng{was} used in citation datasets (i.e., Cora, Citeseer and Pubmed). We set bandwidth $\sigma=1$ for all datasets in GKDE, and set trade-off parameters $\lambda_1=0.001$ for misclassification detection, $\lambda_1=0.1$ for OOD detection and $\lambda_2=\min(1,t/200)$ (where $t$ is the index of a current training epoch) for both task; other hyperparameter configurations are summarized in Table~\ref{table:BGCN-T}. 

For semi-supervised node classification, we use{\feng d} 50 random weight initialization for our models on Citation network datasets. For Coauthor Physics, Amazon Computer, and Amazon Photo datasets, we report{\feng ed} the result based on 10 random train/validation/test splits.  In both effects of uncertainty on misclassification and the OOD detection, we report{\feng ed} the AUPR and AUROC results in percent averaged over 50 times of randomly chosen 1000 test nodes in all of the test sets (except training or validation set) for all models tested on the citation datasets. For S-BGCN-T-K model in these tasks, we use{\feng d} the same hyperparameter configurations as in Table~\ref{table:BGCN-T}, except S-BGCN-T-K Epistemic using 10,000 epochs to obtain the best result.

\begin{table}[h]
  \caption{Hyperparameter configurations of S-BGCN-T-K model}
\centering
\scriptsize
  \begin{tabular}{ l c c c c c c } 
    \toprule
 & \textbf{Cora} & \textbf{Citeseer}  & \textbf{Pubmed}   & \textbf{Co.Physics} & \textbf{Ama.Computer}  & \textbf{Ama.Photo}\\
    \midrule
 \textbf{Hidden units}   & 16 & 16 &  16 &  64 & 64 & 64   \\
 \textbf{Learning rate}  &  0.01 & 0.01  & 0.01  &  0.01 & 0.01  & 0.01    \\
 \textbf{Dropout}  & 0.5 & 0.5 &  0.5  & 0.1 & 0.2 & 0.2 \\
 \textbf{$L_2$ reg.strength}  & 0.0005 & 0.0005 & 0.0005  &  0.001 &  0.0001 & 0.0001   \\
 \textbf{Monte-Carlo samples}  & 100 & 100 &  100  &  100 & 100 & 100  \\
 \textbf{Max epoch}  & 200 & 200 &  200 &  100000 & 100000 & 100000  \\
    \bottomrule
\end{tabular}
  \label{table:BGCN-T}
 
\end{table}

\noindent {\bf Baseline Setting}.
In the experiment part, we consider\feng{ed} 4 baselines. For GCN, we use{\feng d} the same hyper-parameters as~\citep{kipf2016semi}. For EDL-GCN, we use{\feng d} the same hyper-parameters as GCN, and replace{\feng d} softmax layer to activation layer (Relu) with squares loss~\citep{sensoy2018evidential}. For DPN-GCN, we use{\feng d} the same hyper-parameters as GCN, and change{\feng d} the softmax layer to activation layer (exponential)\feng{.} \feng{N}ote that \feng{as} we can not generate OOD node, we only use{\feng d} in\feng{-}distribution loss of (see Eq.12 in~\citep{malinin2018predictive}) and ignore{\feng d} the OOD part loss. For Drop-GCN,  we use{\feng d} the same hyper-parameters as GCN, and set Monte Carlo sampling times $M=100$, dropout rate equal to 0.5.

\noindent {\bf Experimental Setup for Out-of-Distribution (OOD) Detection}
For OOD detection on semi-supervised node classification, we randomly selected 1-4 categories as OOD categories and trained the models only based on training nodes of the other categories. In this setting, we still train\feng{ed} a model for the semi-supervised node classification task, but only part of node categories \feng{were} not \feng{used} for training. Hence, we suppose that our model only outputs partial categories (as we don't know the OOD category), see Table~\ref{tab:ood_data}.  For example, Cora dataset, we train\feng{ed} the model with 80 nodes (20 nodes for each category) with the predictions of 4 categories. The positive ratio is the ratio of out-of-distribution nodes among on all test nodes.
\vspace{-2mm}
\begin{table*}[h]
\caption{Description of datasets and their experimental setup for the OOD detection.}
\centering
\scriptsize
  \begin{tabular}{lcccccc}
    \toprule
    Dataset & \textbf{Cora} & \textbf{Citeseer}  & \textbf{Pubmed} & \textbf{Co.Physics} & \textbf{Ama.Computer}  & \textbf{Ama.Photo} \\
    \midrule
    \textbf{Number of training categories}   &  4 &  3 &  2 & 3  &5  &  4 \\
    \textbf{Training nodes}   &  80 &  60 &  40 & 60  &100   &80   \\
    \textbf{Test nodes} & 1000& 1000 & 1000 & 1000& 1000  &  1000  \\
    \textbf{Positive ratio}  & 38\% & 55\% & 40.4\% & 45.1\%  &  48.1\% &51.1\% \\
    \bottomrule
  \end{tabular}
    \label{tab:ood_data}
\vspace{-3mm}
\end{table*}

\subsection{Results}
\noindent {\bf Misclassification Detection.} The misclassification detection experiment involves detecting whether a given prediction is incorrect using an uncertainty estimate.  Table~\ref{AUPR:uncertainty} shows that S-BGCN-T-K outperforms all baseline models under the AUROC and AUPR for misclassification detection. The outperformance of dissonance-based detection is fairly impressive. This confirms that low dissonance (a small amount of conflicting evidence) is the key to maximizing the accuracy of node classification prediction.  We observe the following performance order: ${\tt Dissonance} > {\tt Entropy} \approx {\tt Aleatoric} > {\tt Vacuity} \approx {\tt Epistemic}$, which is aligned with our conjecture: higher dissonance with conflicting prediction leads to higher misclassification detection. We also conducted experiments on additional three datasets and observed similar trends of the results, as demonstrated in Appendix C.
\begin{table*}[th!]
\footnotesize
% \scriptsize
\caption{AUROC and AUPR for the Misclassification Detection.}
\centering
\begin{tabular}{c||c|ccccc|ccccc|c}
\hline
\multirow{2}{*}{Data} & \multirow{2}{*}{Model} & \multicolumn{5}{c|}{AUROC} & \multicolumn{5}{c|}{AUPR} & \multirow{2}{*}{Acc} \\  
                  &                   & Va.\tnote{*}& Dis. & Al. & Ep. & En. & Va. & Dis. &  Al. & Ep. &En. & \\ \hline
\multirow{5}{*}{Cora} & S-BGCN-T-K & 70.6 & \textbf{82.4} & 75.3 & 68.8 & 77.7&  90.3  & \textbf{95.4} & 92.4 & 87.8 &93.4 & \textbf{82.0} \\ 
                  &  EDL-GCN &  70.2  &  81.5  & -   &  - &  76.9  &90.0 &  94.6  &  -  & - &  93.6 & 81.5 \\    
                  &    DPN-GCN &  -  &  -  & 78.3   &  75.5 &  77.3  & -  &  -  &  92.4  & 92.0 &92.4 & 80.8 \\  
                  & Drop-GCN &  -  &  -  & 73.9   &  66.7  & 76.9  & -  &  -  &  92.7  & 90.0 &93.6 & 81.3 \\ 
                  &GCN & -  &  -  &  -  &  -   & 79.6&  -  &  -  &  -  &  -  & 94.1 & 81.5 \\ 
                  \hline
\multirow{5}{*}{Citeseer} &  S-BGCN-T-K & 65.4& \textbf{74.0}& 67.2& 60.7&  70.0& 79.8 & \textbf{85.6} & 82.2 & 75.2 &83.5& \textbf{71.0}  \\ 
                   &  EDL-GCN &  64.9  &  73.6  & -   &  - &  69.6  &79.2 &  84.6 &  -  & - &  82.9 & 70.2 \\    
                  &    DPN-GCN &  -  &  -  & 66.0   &  64.9 &  65.5  & -  &  -  & 78.7  & 77.6 &78.1 & 68.1 \\  
                  &                  Drop-GCN &   - &  -  & 66.4 & 60.8 &   69.8 &   -  & -& 82.3 & 77.8  &83.7 & 70.9  \\ 
                  &                  GCN &  -  &  -  &  -  &  -  & 71.4 &  -  &  -  &  -  & -   &83.2 & 70.3 \\ \hline
\multirow{5}{*}{Pubmed} &  S-BGCN-T-K & 64.1 & \textbf{73.3} & 69.3 & 64.2 &  70.7& 85.6 & \textbf{90.8} & 88.8& 86.1   &89.2 & \textbf{79.3} \\
                  &  EDL-GCN &  62.6  &  69.0  & -   &  - &  67.2  &84.6 &  88.9 &  -  & - &  81.7 & 79.0 \\    
                  &    DPN-GCN &  -  &  -  & 72.7   &  69.2 &  72.5  & -  &  -  &  87.8  & 86.8 &87.7 & 77.1 \\  
                  &                  Drop-GCN &  -  &  -  &  67.3 & 66.1 &   67.2&  -  &  -& 88.6 & 85.6   &89.0 & 79.0  \\ 
                  &                  GCN &  -  &  -  &   - & -  & 68.5&  -  &   - & -   &  - &89.2 & 79.0 \\ \hline
                 &  S-BGCN-T-K & 66.0 & \textbf{89.3}& 83.0& 83.4 &  83.2&   95.4 & \textbf{98.9}& 98.4& 98.1  & 98.4 & \textbf{92.0} \\
       Amazon    &  EDL-GCN &  65.1  &  88.5  & -   &  - &  82.2  &94.6 &  98.1 &  -  & - &  98.0 & 91.2 \\    
        Photo    &    DPN-GCN &  -  &  -  & 84.6   &  84.2 &  84.3  & -  &  -  &  98.4  & 98.3 &98.3 & 92.0 \\  
                  &                  Drop-GCN &  -  &  -  &  84.5 & 84.4 &   84.6&  -  &  -& 98.2 & 98.1 &98.2 & 91.3 \\ 
                  &                  GCN &  -  &  -  &   - & -   &   86.8&  -  &   - & -   &  -  &98.5 & 91.2 \\ \hline
 &  S-BGCN-T-K & 65.0 & \textbf{87.8}& 83.3& 79.6  & 83.6& 89.4 & \textbf{96.3} & 95.0& 94.2 & 95.0 & \textbf{84.0} \\
      Amazon      &  EDL-GCN &  64.1  &  86.5  & -   &  - &  82.2  &93.6 &  97.1 &  -  & - &  97.0 & 79.7 \\    
     Computer     &    DPN-GCN &  -  &  -  & 82.0   &  81.7 &  81.8  & -  &  -  &  95.8  & 95.7 &95.8 & 85.2 \\  
                  &                  Drop-GCN &  -  &  -  &  79.1 & 75.9 &   79.2&  -  &  -& 95.1 & 94.5   &  95.1 & 79.6 \\ 
                  &                  GCN &  -  &  -  &   - & -   &   81.7&  -  &   - & -   &  -  &95.4  & 79.7\\ \hline
 &  S-BGCN-T-K & 80.2 & \textbf{91.4}& 87.5& 81.7  & 87.6& 98.3 &\textbf{99.4} & 99.0& 98.4 & 98.9 & \textbf{93.0}  \\
      Coauthor     &  EDL-GCN &  78.8  &  89.5  & -   &  - &  86.2  &96.6 &  97.2 &  -  & - &  97.0 & 92.7 \\    
     Physics   &    DPN-GCN &  -  &  -  & 90.0   &  89.9 &  90.0  & -  &  -  &  99.3  & 99.3 &99.3 & 92.5 \\  
                  &                  Drop-GCN &  -  &  -  &  87.6 & 84.1 &   87.7&  -  &  -& 98.9 & 98.6   &  98.9 & 93.0 \\ 
                  &                  GCN &  -  &  -  &   - & -   &   88.7&  -  &   - & -   &  - &99.0 & 92.8 \\ \hline
\end{tabular}
\begin{tablenotes}\footnotesize
\centering
\item[*] Va.: Vacuity, Dis.: Dissonance, Al.: Aleatoric, Ep.: Epistemic, En.: Entropy 
\end{tablenotes}
\label{AUPR:uncertainty}
\end{table*}

\begin{table*}[th!]
\footnotesize
\caption{AUROC and AUPR for the OOD Detection.}
\centering
\begin{tabular}{c||c|ccccc|ccccc}
\hline
\multirow{2}{*}{Data} & \multirow{2}{*}{Model} & \multicolumn{5}{c|}{AUROC} & \multicolumn{5}{c}{AUPR} \\  
                  &                   & Va.\tnote{*}& Dis. & Al. & Ep.  &En. & Va. & Dis. &  Al. & Ep. & En. \\ \hline
\multirow{5}{*}{Cora} & S-BGCN-T-K & \textbf{87.6} & 75.5 & 85.5 & 70.8  & 84.8&  \textbf{78.4} & 49.0 & 75.3 & 44.5 &  73.1  \\ 
                  &    EDL-GCN &            84.5 & 81.0 & & -& 83.3&    74.2 & 53.2 & - & -&71.4  \\ 
                  &       DPN-GCN &  -  &  -  & 77.3   &  78.9 &  78.3  & -  &  -  &  58.5  & 62.8 &  63.0  \\ 
                  &       Drop-GCN &  -  &  -  & 81.9   &  70.5 &  80.9  & -  &  -  &  69.7  & 44.2 &  67.2  \\  
                  &                  GCN & -  &  -  &  -  &  -  &  80.7&  -  &  -  &  -  &  -  & 66.9 \\ \hline
\multirow{5}{*}{Citeseer} &  S-BGCN-T-K & \textbf{84.8} &55.2&78.4 & 55.1 &  74.0& \textbf{86.8} & 54.1 & 80.8 & 55.8 & 74.0  \\ 
                  &  EDL-GCN &                 78.4 &59.4&- & - &  69.1&         79.8 & 57.3 & - & - & 69.0  \\ 
                  & DPN-GCN &   - &  -  & 68.3 & 72.2 &  69.5 &   -  & -& 68.5 & 72.1   & 70.3  \\ 
                  &                  Drop-GCN &   - &  -  & 72.3 & 61.4 &  70.6 &   -  & -& 73.5 & 60.8   & 70.0  \\ 
                  &                  GCN &  -  &  -  &  -  &  -  &  70.8 &  -  &  -  &  -  & -   & 70.2  \\ \hline
\multirow{5}{*}{Pubmed} &  S-BGCN-T-K & \textbf{74.6} &67.9& 71.8 & 59.2 & 72.2& \textbf{69.6} & 52.9 & 63.6& 44.0   &56.5  \\
                  &    EDL-GCN & 71.5 &68.2& - & - &  70.5& 65.3 & 53.1 & -& -   & 55.0  \\
                   & DPN-GCN &  -  &  -  &  63.5 & 63.7 &   63.5&  -  &  -& 50.7 & 53.9   & 51.1  \\ 
                  &                  Drop-GCN &  -  &  -  &  68.7 & 60.8 &   66.7&  -  &  -& 59.7 & 46.7   & 54.8  \\ 
                  &                  GCN &  -  &  -  &   - & -   &  68.3&  -  &   -   &  -  & -&55.3 \\ \hline
\multirow{5}{*}{Amazon Photo} &  S-BGCN-T-K & \textbf{93.4} & 76.4& 91.4& 32.2  & 91.4&         \textbf{ 94.8} & 68.0 & 92.3& 42.3   & 92.5  \\
                  &     EDL-GCN & 63.4 & 78.1& - & - & 79.2&          66.2 & 74.8 & -&-   & 81.2  \\
                  & DPN-GCN &  -  &  -  &  83.6 &  83.6 &   83.6&  -  &  -& 82.6 & 82.4 &   82.5 \\ 
                  &                  Drop-GCN &  -  &  -  &  84.5 & 58.7 &   84.3&  -  &  -& 87.0 & 57.7   &86.9 \\ 
                  &                  GCN &  -  &  -  &   - & -   &   84.4&  -  &   -   &  -  & -&87.0 \\ \hline
\multirow{5}{*}{Amazon Computer} &  S-BGCN-T-K & \textbf{82.3} & 76.6& 80.9& 55.4  & 80.9& \textbf{70.5} & 52.8 & 60.9& 35.9 & 60.6  \\
                  &    EDL-GCN & 53.2 & 70.1& - & - &  70.0&    33.2 & 43.9 & -& - & 45.7  \\
                   &                  DPN-GCN &  -  &  -  &  77.6 & 77.7 & 77.7 &  -  &  -& 50.8 & 51.2 & 51.0 \\ 
                  &                  Drop-GCN &  -  &  -  &  74.4 & 70.5 &  74.3&  -  &  -& 50.0 & 46.7   &  49.8 \\ 
                  &                  GCN &  -  &  -  &   - & -   &   74.0&    - & -   &  -  & -&48.7 \\ \hline
\multirow{5}{*}{Coauthor Physics} &  S-BGCN-T-K & \textbf{91.3} & 87.6& 89.7& 61.8 & 89.8& \textbf{72.2} & 56.6 & 68.1& 25.9 & 67.9  \\
                  &     EDL-GCN & 88.2 & 85.8& - & -  & 87.6&    67.1 & 51.2 & -&- & 62.1  \\
                   &                  DPN-GCN &  -  &  -  & 85.5 &85.6 &   85.5&  -  &  -& 59.8 & 60.2 &   59.8 \\ 
                  &                  Drop-GCN &  -  &  -  &  89.2 & 78.4 &   89.3&  -  &  -& 66.6 & 37.1   &66.5 \\ 
                  &                  GCN &  -  &  -  &   -    &  - & 89.1&  -  &  -   &  -  & -&64.0 \\ \hline
\end{tabular}
\begin{tablenotes}\footnotesize
\centering
\item[*] Va.: Vacuity, Dis.: Dissonance, Al.: Aleatoric, Ep.: Epistemic, En.: Entropy 
\end{tablenotes}
\label{Table: AUROC_AUPR:ood}
\end{table*}

\noindent {\bf OOD Detection.} This experiment involves detecting whether an input example is out-of-distribution (OOD) given an estimate of uncertainty. For semi-supervised node classification, we randomly selected one to four categories as OOD categories and trained the models based on training nodes of the other categories. Due to the space constraint, the experimental setup for the OOD detection is detailed in Appendix B.3. 

In Table~\ref{Table: AUROC_AUPR:ood}, across six network datasets, our vacuity-based detection significantly outperformed the other competitive methods, exceeding the performance of the epistemic uncertainty and other type of uncertainties. This demonstrates that the vacuity-based model is more effective than other uncertainty estimates-based counterparts in increasing OOD detection. We observed the following performance order: ${\tt Vacuity} > {\tt Entropy} \approx {\tt Aleatoric} > {\tt Epistemic} \approx {\tt Dissonance}$, which is consistent with the theoretical results as shown in Theorem~\ref{theorem_un}.

\noindent {\bf Ablation Study.}
We conducted additional experiments (see Table~\ref{Table: Ablation_M} and Table~\ref{Table: Ablation_O}) in order to demonstrate the contributions of the key technical components, including GKDE, Teacher Network, and subjective Bayesian framework.  The key findings obtained from this experiment are: (1) GKDE can enhance the OOD detection (i.e., 30\% increase with vacuity), which is consistent with our theoretical proof about the outperformance of GKDE in uncertainty estimation, i.e., OOD nodes have a higher vacuity than other nodes; and (2) the Teacher Network can further improve the node classification accuracy.

\begin{table*}[th!]
\footnotesize
\caption{Ablation experiment on AUROC and AUPR for the Misclassification Detection.}
\label{Table: Ablation_M}
\vspace{1mm}
\centering
\begin{tabular}{c||c|ccccc|ccccc|c}
\hline
\multirow{2}{*}{Data} & \multirow{2}{*}{Model} & \multicolumn{5}{c|}{AUROC} & \multicolumn{5}{c}{AUPR} \\  
                  &                   & Va.\tnote{*}& Dis. & Al. & Ep. &En. & Va. & Dis. &  Al. & Ep.  &En. & Acc \\ \hline
\multirow{4}{*}{Cora} & S-BGCN-T-K & 70.6 & 82.4 & 75.3 & 68.8 & 77.7&  90.3  & \textbf{95.4} & 92.4 & 87.8 &93.4 & 82.0 \\ 
 & S-BGCN-T &  70.8  &  \textbf{82.5}  &75.3  & 68.9 & 77.8  &90.4  &  \textbf{95.4}  &  92.6  & 88.0 &93.4 & \textbf{82.2} \\
                  & S-BGCN &  69.8  &  81.4  & 73.9   &  66.7  & 76.9  & 89.4  &  94.3  &  92.3  & 88.0 &93.1 & 81.2 \\  
                   & S-GCN &  70.2  &  81.5  & -   & -  & 76.9  & 90.0  &  94.6  &  -  & - &93.6 & 81.5 \\
                  \hline
\multirow{4}{*}{Citeseer} &  S-BGCN-T-K & 65.4& \textbf{74.0}& 67.2& 60.7&  70.0& 79.8 & \textbf{85.6} & 82.2 & 75.2 &83.5 & 71.0 \\ 
                  &  S-BGCN-T & 65.4& 73.9& 67.1& 60.7& 70.1& 79.6 & 85.5 & 82.1 & 75.2 & 83.5 &\textbf{71.3} \\ 
                  &     S-BGCN &   63.9 &  72.1  & 66.1 & 58.9  & 69.2& 78.4 & 83.8   &80.6 &75.6&82.3&70.6  \\ 
                  &                  S-GCN &  64.9  &  71.9  &  -  & - & 69.4 &  79.5  & 84.2  &  -  & -   & 82.5& 71.0  \\ \hline
\multirow{4}{*}{Pubmed} &  S-BGCN-T-K & 63.1 & \textbf{69.9} & 66.5 & 65.3& 68.1& 85.6 & 90.8 & 88.8& 86.1   &89.2 & \textbf{79.3} \\
                  &     S-BGCN-T & 63.2 & \textbf{69.9} & 66.6 & 65.3 & 64.8&  85.6 & \textbf{90.9} & 88.9& 86.0   &89.3 & 79.2 \\
                  &                  S-BGCN &  62.7  &   68.1 & 66.1 &  64.4 & 68.0&  85.4  &  90.5& 88.6 & 85.6   &  89.2&78.8  \\ 
                  &                  S-GCN &  62.9  &   69.5 & -   &  - & 68.0&  85.3  &  90.4 & -   &  -  & 89.2&79.1 \\ \hline
 &  S-BGCN-T-K & 66.0 & 89.3& 83.0& 83.4  & 83.2&   95.4 & \textbf{98.9}& 98.4& 98.1 & 98.4 & 92.0\\
   Amazon    &     S-BGCN-T &  66.1 & 89.3& 83.1& 83.5  & 83.3&   95.6 & 99.0& 98.4& 98.2 & 98.4&\textbf{92.3}\\
   Photo     &                  S-BGCN &   68.6  & \textbf{ 93.6} & 90.6 &  83.6 & 90.6&  90.4  &  98.1& 97.3 & 95.8 &  97.3&81.0 \\ 
                  &                  S-GCN &    -  &   - & -   &  - & 86.7&  -  &   - & -   &  -  & -&98.4 \\ \hline
 &  S-BGCN-T-K & 65.0 & 87.8& 83.3& 79.6  & 83.6& 89.4 & 96.3 & 95.0& 94.2 &  95.0 & 84.0\\
    Amazon      &     S-BGCN-T & 65.2 & 88.0& 83.4& 79.7  & 83.6& 89.4 & \textbf{96.5} & 95.0& 94.5 & 95.1 &\textbf{84.1} \\
    Computer     &                  S-BGCN &  63.7  &  \textbf{89.1} & 84.3 &  76.1 & 84.4&  84.9  &  95.7& 93.9 & 91.4   &  93.9&76.1 \\ 
                  &                  S-GCN &   -  &   - & -   &  - & 81.5&  -  &   - & -   &  -  & -&95.2 \\ \hline
      &  S-BGCN-T-K & 80.2 & 91.4& 87.5& 81.7 & 87.6& 98.3 &\textbf{99.4} & 99.0& 98.4 & 98.9 & 93.0\\
   Coauthor     &     S-BGCN-T & 80.4 & \textbf{91.5}& 87.6& 81.7  & 87.6& 98.3 & \textbf{99.4} & 99.0& 98.6 &  99.0 & \textbf{93.2} \\
    Physics     &                  S-BGCN &  79.6   &  90.5 & 86.3 &  81.2 & 86.4&  98.0  &  99.2& 98.8 & 98.3   &  98.8&92.9 \\ 
                  &                  S-GCN &  89.1   &  89.0 & -   &  - & 89.2&  99.0  & 99.0 & -   &  -  & 99.0&92.9 \\ \hline
\end{tabular}
\begin{tablenotes}\footnotesize
\centering
\item[*] Va.: Vacuity, Dis.: Dissonance, Al.: Aleatoric, Ep.: Epistemic, En.: Entropy 
\end{tablenotes}
\end{table*}

\noindent {\bf Time Complexity Analysis}.
S-BGCN has a similar time complexity with GCN while S-BGCN-T has the double complexity of GCN. For a given network where $|\mathbb{V}|$ is the number of nodes, $|\mathbb{E}|$ is the number of edges, $C$ is the number of dimensions of the input feature vector for every node, $F$ is the number of features for the output layer, and $M$ is Monte Carlo sampling times. 

\begin{table*}[h]
\footnotesize
\caption{Big-O time complexity of our method and baseline GCN.}
\centering
\small
  \begin{tabular}{lccccc}
    \toprule
    Dataset & GCN & S-GCN  & S-BGCN & S-BGCN-T & S-BGCN-T-K   \\
    \midrule
    Time Complexity (Train) & $O(|\mathbb{E}|CF)$ &  $O(|\mathbb{E}|CF)$ & $O(2|\mathbb{E}|CF)$ & $O(2|\mathbb{E}|CF)$  &$O(2|\mathbb{E}|CF)$   \\
    Time Complexity (Test)  & $O(|\mathbb{E}|CF)$ &  $O(|\mathbb{E}|CF)$ & $O(M|\mathbb{E}|CF)$ & $O(M|\mathbb{E}|CF)$ &$O(M|\mathbb{E}|CF)$  \\
    
    \bottomrule
  \end{tabular}
\label{tab:complexity}
\end{table*}

\begin{table*}[th!]
\footnotesize
\caption{Ablation experiment on  AUROC and AUPR for the OOD Detection.}
\centering
\begin{tabular}{c||c|ccccc|ccccc}
\hline
\multirow{2}{*}{Data} & \multirow{2}{*}{Model} & \multicolumn{5}{c|}{AUROC} & \multicolumn{5}{c}{AUPR} \\  
                  &                   & Va.\tnote{*}& Dis. & Al. & Ep. &En. & Va. & Dis. &  Al. & Ep.  &En. \\ \hline
\multirow{4}{*}{Cora} & S-BGCN-T-K & \textbf{87.6} & 75.5 & 85.5 & 70.8  & 84.8&  \textbf{78.4} & 49.0 & 75.3 & 44.5 &  73.1  \\ 
                  &    S-BGCN-T &            84.5 & 81.2 & 83.5 & 71.8 & 83.5&    74.4 & 53.4 & 75.8 & 46.8 & 71.7  \\ 
                  &                  S-BGCN &  76.3  &  79.3  & 81.5   &  70.5  & 80.6  & 61.3  & 55.8  &  68.9  & 44.2 &  65.3  \\  
                  &                  S-GCN & 75.0  &  78.2  &  -  &  -    & 79.4&  60.1  &  54.5  &  -  &  -  & 65.3 \\ \hline
\multirow{4}{*}{Citeseer} &  S-BGCN-T-K & \textbf{84.8} &55.2&78.4 & 55.1 & 74.0& \textbf{86.8} & 54.1 & 80.8 & 55.8 & 74.0  \\ 
                  &  S-BGCN-T &                 78.6 &59.6&73.9 & 56.1 & 69.3&         79.8 & 57.4 & 76.4 & 57.8 & 69.3  \\ 
                  &                  S-BGCN &   72.7 &  63.9  & 72.4 & 61.4 &  70.5 &   73.0  & 62.7& 74.5 & 60.8   & 71.6  \\ 
                  &                  SGCN &  72.0 &  62.8  &  -  &  -   & 70.0 &  71.4  &  61.3  &  -  & -   & 70.5  \\ \hline
\multirow{4}{*}{Pubmed} &  S-BGCN-T-K & \textbf{74.6} &67.9& 71.8 & 59.2& 72.2& \textbf{69.6} & 52.9 & 63.6& 44.0   &56.5  \\
                  &     S-BGCN-T & 71.8 &68.6& 70.0 & 60.1 & 70.8&          65.7 & 53.9 & 61.8& 46.0   & 55.1  \\
                  &                  S-BGCN &  70.8 &  68.2  &  70.3 & 60.8 & 68.0& 65.4  &  53.2& 62.8 & 46.7   &  55.4  \\ 
                  &                  S-GCN &    71.4  &   68.8 & -   &  - & 69.7&  66.3  &   54.9 & -   &  -  &57.5 \\ \hline
% \multirow{4}{*}{Amazon Photo} &  S-BGCN-T-K & \textbf{93.4} & 76.4& 91.4& 32.2 & 91.4&         \textbf{ 94.8} & 68.0 & 92.3& 42.3   &  92.5  \\
%                   &     S-BGCN-T & 64.0 & 77.5& 79.9 & 52.6 & 79.8&          67.0 & 75.3 & 82.0& 53.7   &  81.9  \\
%                   &                  S-BGCN &  67.0 &  84.3  &  83.5 & 63.7 &  83.5&  67.3  &  82.6& 82.6 & 60.2   &  82.4 \\ 
%                   &                  S-GCN &    67.0  &   84.8 & -   &  - & 83.5&  67.3  &   82.6 & -   &  -&82.6 \\ \hline
 &  S-BGCN-T-K & \textbf{93.4} & 76.4& 91.4& 32.2  & 91.4&         \textbf{ 94.8} & 68.0 & 92.3& 42.3   & 92.5   \\
    Amazon     &     S-BGCN-T & 64.0 & 77.5& 79.9 & 52.6 & 79.8&          67.0 & 75.3 & 82.0& 53.7   & 81.9  \\
    Photo     & S-BGCN & 63.0 & 76.6& 79.8 & 52.7 & 79.7&          66.5 & 75.1 & 82.1& 53.9   & 81.7 \\ 
                  & S-GCN & 64.0 & 77.1& - & - & 79.6&          67.0 & 74.9 & -& -   & 81.6 \\ \hline
 &  S-BGCN-T-K & \textbf{82.3} & 76.6& 80.9& 55.4 & 80.9& \textbf{70.5} & 52.8 & 60.9& 35.9 &  60.6  \\
   Amazon       &     S-BGCN-T & 53.7 & 70.5& 70.4 & 69.9  & 70.1&    33.6 & 43.9 & 46.0& 46.8 & 45.9  \\
  Computer       &                  S-BGCN &  56.9  &  75.3  &  74.1 & 73.7 &  74.1&33.7  &  46.2& 48.3 & 45.6   &48.3 \\ 
                  &                  S-GCN &  56.9  &  75.3  & -&  - & 74.2&  33.7  &   46.2 & -   &  - &48.3 \\ \hline
 &  S-BGCN-T-K & \textbf{91.3} & 87.6& 89.7& 61.8  & 89.8& \textbf{72.2} & 56.6 & 68.1& 25.9 & 67.9  \\
     Coauthor     &     S-BGCN-T & 88.7 & 86.0& 87.9 & 70.2  & 87.8&    67.4 & 51.9 & 64.6& 29.4 & 62.4  \\
    Physics      &                  S-BGCN &  89.1  &  87.1  &  89.5 & 78.3 &   89.5&  66.1  &  49.2&64.6 & 35.6   & 64.3 \\ 
                  &                  S-GCN &  89.1  &    87.0 & -   &  - & 89.4&  -66.2  &   49.2 & -   &  -  & 64.3 \\ \hline
\end{tabular}
\begin{tablenotes}\footnotesize
\centering
\item[*] Va.: Vacuity, Dis.: Dissonance, Al.: Aleatoric, Ep.: Epistemic, D.En.: Differential Entropy, En.: Entropy 
\end{tablenotes}
\label{Table: Ablation_O}
\end{table*}

\noindent {\bf Compare with Bayesian GCN baseline}.
Compare with a (Bayesian) GCN baseline, Dropout+DropEdge~\cite{rong2019dropedge}. As shown in the table~\ref{AUPR:dropedge} below, our proposed method performed better than Dropout+DropEdge on the Cora and Citeer datasets for misclassificaiton detection. A similar trend was observed for OOD detection.

\begin{table*}[h!]
\small
\caption{Compare with DropEdge on Misclassification Detection .}
\vspace{-2mm}
\centering
\begin{tabular}{c||c|ccccc|ccccc}
\hline
\multirow{2}{*}{Dataset} & \multirow{2}{*}{Model} & \multicolumn{5}{c|}{AUROC} & \multicolumn{5}{c}{AUPR}  \\  
                  &                   & Va.\tnote{*}& Dis. & Al. & Ep. & En. & Va. & Dis. &  Al. & Ep. &En. \\ \hline
\multirow{2}{*}{Cora} & S-BGCN-T-K & 70.6 & \textbf{82.4} & 75.3 & 68.8 & 77.7&  90.3  & \textbf{95.4} & 92.4 & 87.8 &93.4 \\ 
                  & DropEdge &  -  &  -  & 76.6   &  56.1  & 76.6  & -  &  -  &  93.2  & 85.4 &93.2 \\ 
                  \hline
\multirow{2}{*}{Citeseer} &  S-BGCN-T-K & 65.4& \textbf{74.0}& 67.2& 60.7&  70.0& 79.8 & \textbf{85.6} & 82.2 & 75.2 &83.5  \\ 
                  &                  DropEdge &   - &  -  & 71.1 & 51.2 &   71.1 &   -  & -& 84.0 & 70.3  &84.0   \\ 
                  \hline
\end{tabular}
\begin{tablenotes}\footnotesize
\centering
\item[*] Va.: Vacuity, Dis.: Dissonance, Al.: Aleatoric, Ep.: Epistemic, En.: Entropy 
\end{tablenotes}
\label{AUPR:dropedge}
\end{table*}

\subsection{Why is Epistemic Uncertainty Less Effective than Vacuity?}
Although epistemic uncertainty is known to be effective to improve OOD detection~\citep{gal2016dropout, kendall2017uncertainties} in computer vision applications, our results demonstrate it is less effective than our vacuity-based approach.  The first potential reason is that epistemic uncertainty is always smaller than vacuity (From Theorem~\ref{theorem_un}), which potentially indicates that epistemic may capture less information related to OOD. Another potential reason is that the previous success of epistemic uncertainty for OOD detection is limited to supervised learning in computer vision applications, but its  effectiveness for OOD detection was not sufficiently validated in semi-supervised learning tasks.  Recall that epistemic uncertainty (i.e., model uncertainty) is calculated based on mutual information (see Eq.~\eqref{eq:epistemic}).  In a semi-supervised setting, the features of unlabeled nodes are also fed to a model for the training process to provide the model with high confidence in its output.  For example, the model output $P(\y|A, \rr;\theta)$ would not change too much even with differently sampled parameters $\bm{\theta}$, i.e., $P(\y|A, \rr;\theta^{(i)})\approx P(\y|A, \rr;\theta^{(j)})$, which result in a low epistemic uncertainty.  

To back up our conclusion, design an image classification experiment 
based on MC-Drop\cite{gal2016dropout} method to do the following experiment: 1) supervised learning on the MNIST dataset with 50 labeled images; 2) semi-supervised learning (SSL) on the MNIST dataset with 50 labeled images and 49950 unlabeled images, while there are 50\% OOD images (24975 FashionMNIST images) in the unlabeled set. For both experiments, we test the epistemic uncertainty on 49950 unlabeled set (50\% In-distribution (ID) images and 50\% OOD images). We conduct the experiment based on three popular SSL methods, VAT~\cite{miyato2018virtual}, Mean Teacher~\cite{tarvainen2017mean}, and pseudo label~\cite{lee2013pseudo}.
\begin{table*}[h]
\caption{Epistemic uncertainty for semi-supervised image classification.}
\centering
\small
  \begin{tabular}{lcccc}
    \toprule
    Epistemic & \textbf{Supervised } & \textbf{VAT}  & \textbf{Mean Teacher} & \textbf{Pseudo Label}\\
    \midrule
    \textbf{In-Distribution} & 0.140 &  \textbf{0.116}& \textbf{0.105} & \textbf{0.041}  \\
    \textbf{Out-of-Distribution}  & \textbf{0.249} & 0.049 & 0.076 & 0.020 \\
    \bottomrule
  \end{tabular}
\label{tab:ssl}
\end{table*}
Table~\ref{tab:ssl} shows the average epistemic uncertainty value for in-distribution samples and OOD samples. The result shows the same pattern with~\cite{kendall2017uncertainties, kendall2015bayesian} in a supervised setting, but an opposite pattern in a semi-supervised setting that low epistemic of OOD samples, which is less effective top detect OOD.
Note that the SSL setting is similar to our semi-supervised node classification setting, which feed the unlabeled sample to train the model.

\subsection{Graph Embedding Representations of Different Uncertainty Types} \label{subsec:uncertain-dist}

\begin{figure*}[th!]
  \centering
  \includegraphics[width=0.9\linewidth]{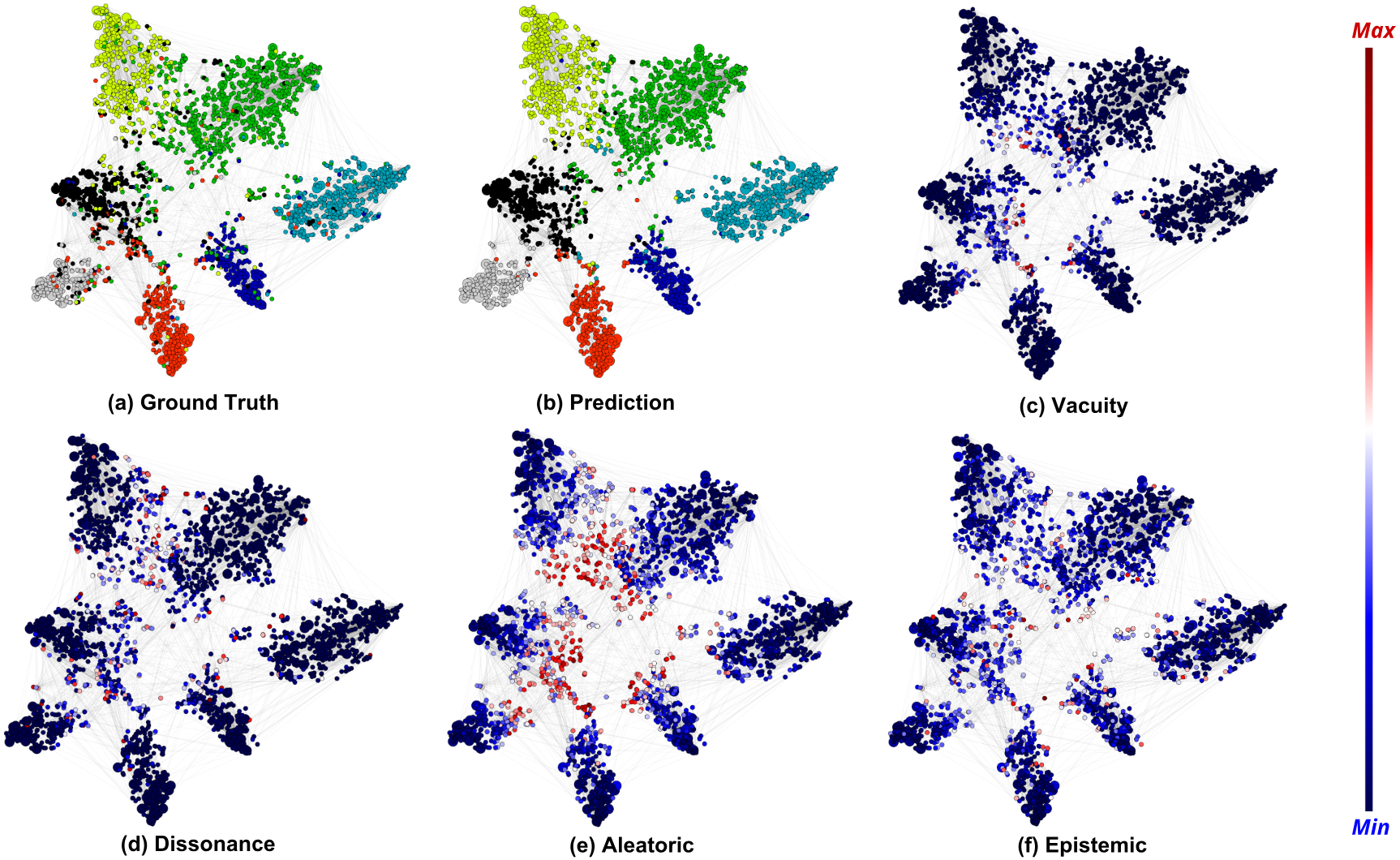}
  \small{
  \caption{Graph embedding representations of the Cora dataset for classes and the extent of uncertainty: (a) shows the representation of seven different classes; (b) shows our model prediction; and (c)-(f) present the extent of uncertainty for respective uncertainty types, including vacuity, dissonance, aleatoric, epistemic.}
  \label{fig:un_vis_cora}
\vspace{-3mm}
 }
\end{figure*}

\begin{figure*}[th!]
  \centering
  \includegraphics[width=0.9\linewidth]{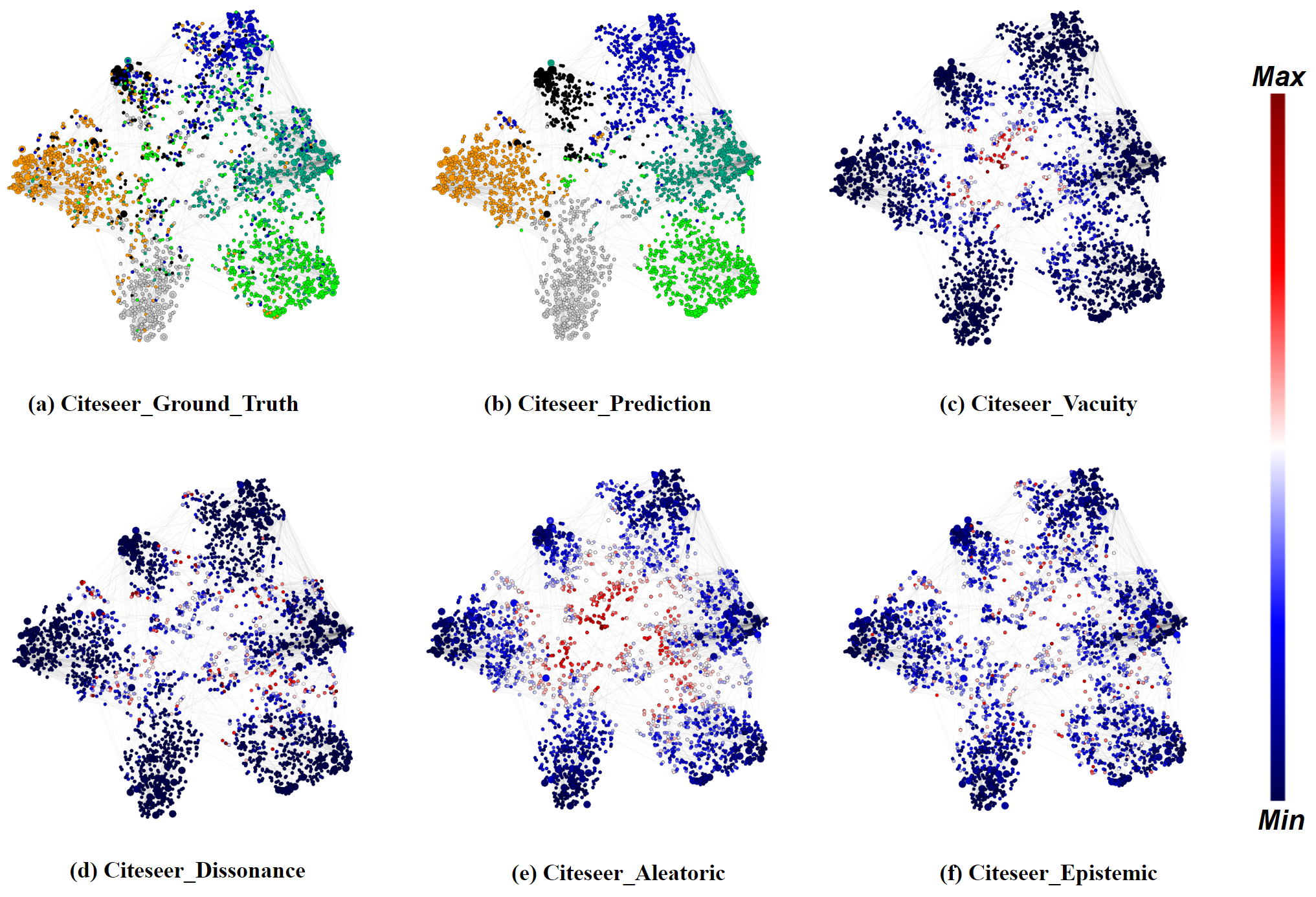}
  \small{
  \caption{Graph embedding representations of the Citeseer dataset for classes and the extent of uncertainty: (a) shows the representation of seven different classes, (b) shows our model prediction, and (c)-(f) present the extent of uncertainty for respective uncertainty types, including vacuity, dissonance, and aleatoric uncertainty, respectively.}
  \label{fig:un_vis}
 }
\end{figure*}

To better understand different uncertainty types, we used $t$-SNE ($t$-Distributed Stochastic Neighbor Embedding~\cite{maaten2008visualizing}) to represent the computed feature representations of a pre-trained BGCN-T model's first hidden layer on the Cora dataset and the Citeseer dataset. 

\noindent {\bf Seven Classes on Cora Dataset}: In Figure~\ref{fig:un_vis_cora}, (a) shows the representation of seven different classes, (b) shows our model prediction, and (c)-(f) present the extent of uncertainty for respective uncertainty types, including vacuity, dissonance, and aleatoric uncertainty, respectively.

\noindent {\bf Six Classes on Citeseer Dataset}: In Figure~\ref{fig:un_vis} (a), a node's color denotes a class on the Citeseer dataset where 6 different classes are shown in different colors. Figure~\ref{fig:un_vis} (b) is our prediction result.

\begin{figure*}[t!]
    \centering
    \begin{subfigure}[b]{0.44\textwidth}
        \centering
        \includegraphics[width=\linewidth]{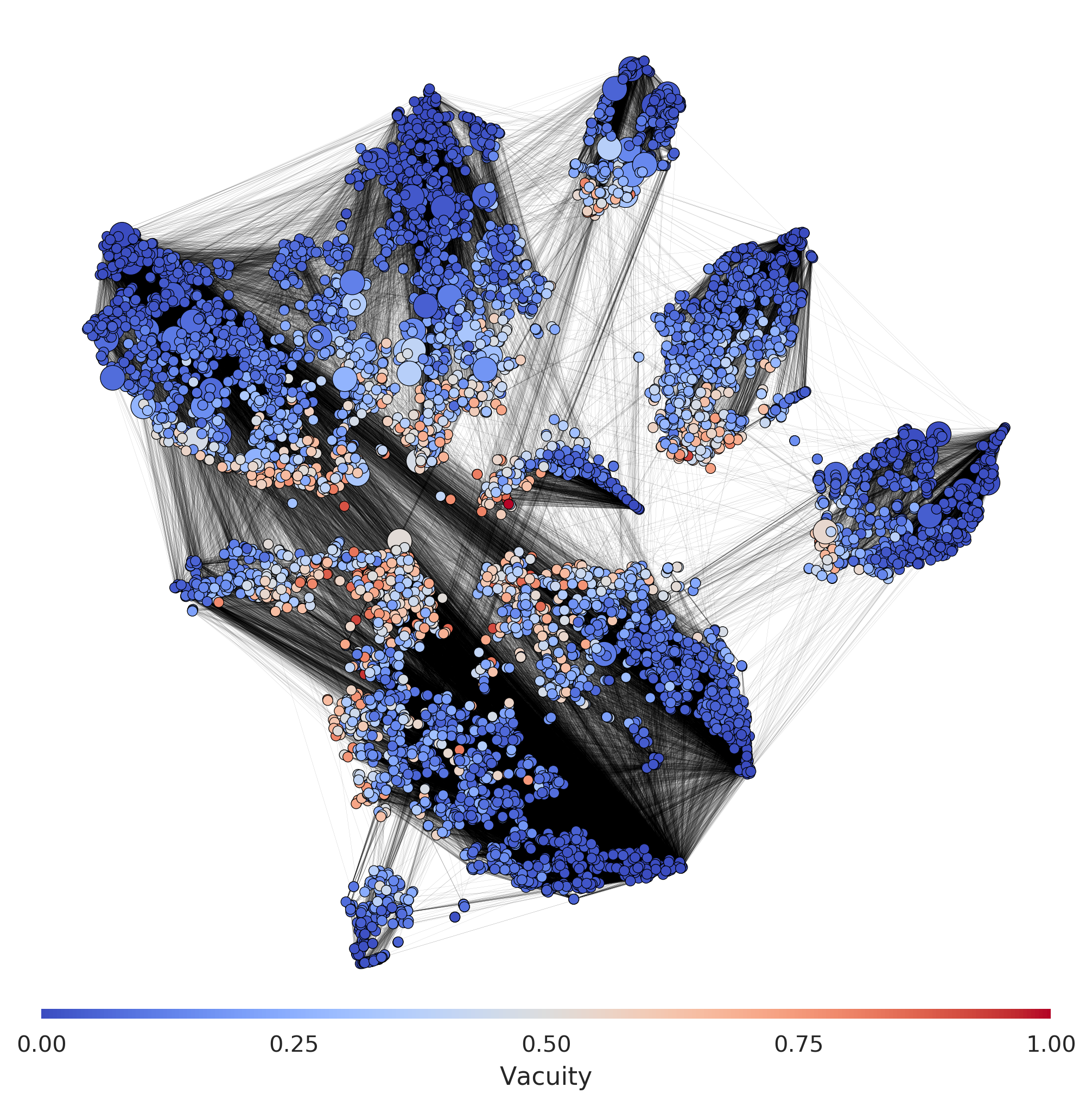}
        \caption{S-BGCN-T}
    \end{subfigure}
    % \hfill
    \begin{subfigure}[b]{0.44\textwidth}
        \centering
        \includegraphics[width=\linewidth]{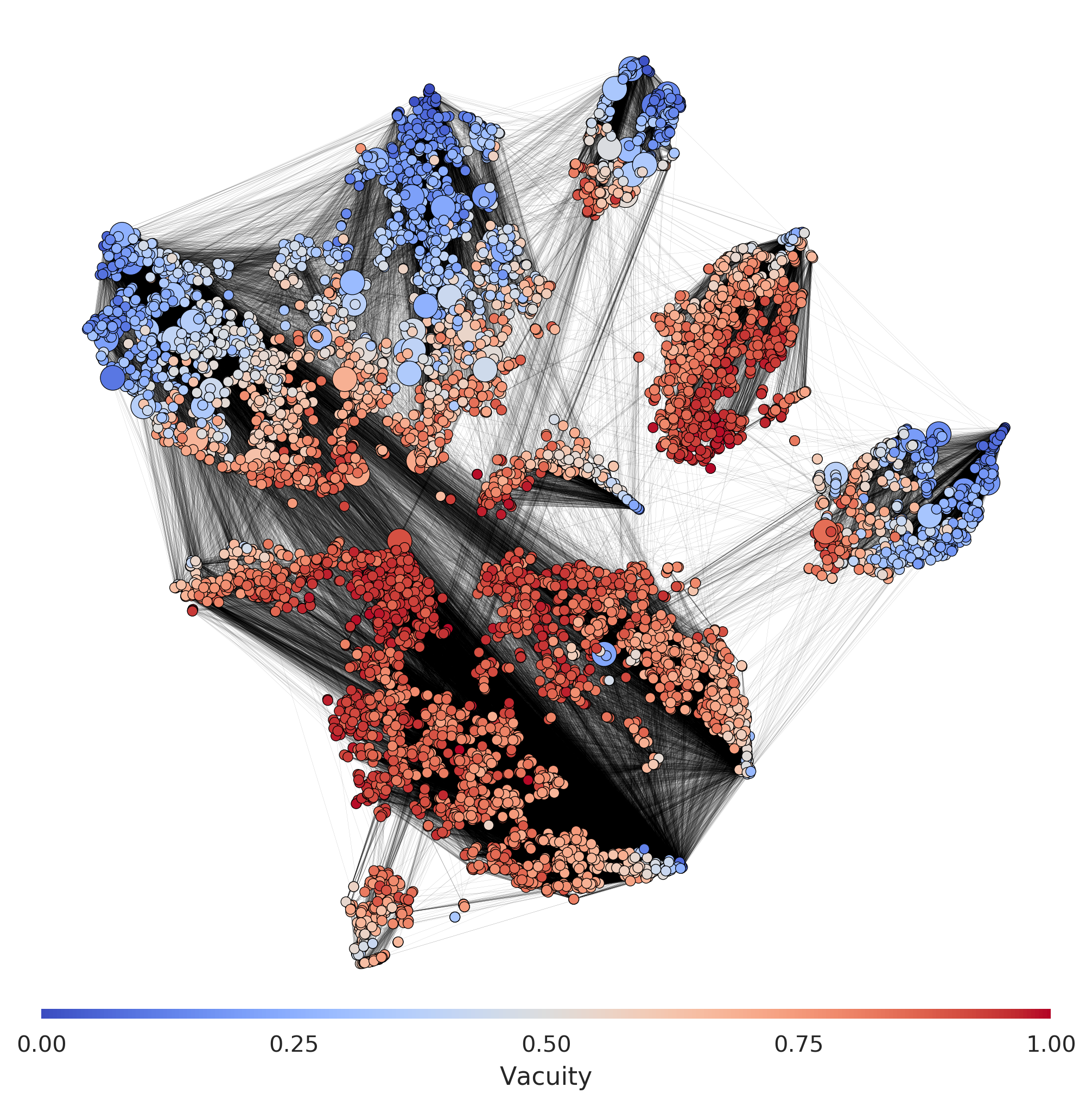}
        \caption{S-BGCN-T-K}
    \end{subfigure}
    \small{
    \caption{Graph embedding representations of the Amazon Photo dataset for the extent of vacuity uncertainty based on OOD detection experiment.}
    \label{fig:Ab_GKDE}
    }
\end{figure*}

\noindent {\bf Eight Classes on Amazon Photo Dataset}: In Figure~\ref{fig:Ab_GKDE}, a node's color denotes vacuity uncertainty value, and the big node represents the training node. These results are based on the OOD detection experiment. Comparing Figure~\ref{fig:Ab_GKDE} (a) and (b), we found that GKDE can indeed improve OOD detection.

For Figures~\ref{fig:un_vis} (c)-(f), the extent of uncertainty is presented where a blue color refers to the lowest uncertainty (i.e., minimum uncertainty) while a red color indicates the highest uncertainty (i.e., maximum uncertainty) based on the presented color bar. To examine the trends of the extent of uncertainty depending on either training nodes or test nodes, we draw training nodes as bigger circles than test nodes.  Overall we notice that most training nodes (shown as bigger circles) have low uncertainty (i.e., blue), which is reasonable because the training nodes are the ones that are already observed. Now we discuss the extent of uncertainty under each uncertainty type. 

\noindent {\bf Vacuity}: In Figure~\ref{fig:Ab_GKDE} (b), most training nodes show low uncertainty, we observe the majority of OOD nodes in the button cluster show high uncertainty as appeared in red.

\noindent {\bf Dissonance}: In Figure~\ref{fig:un_vis} (d), similar to vacuity, training nodes have low uncertainty. But unlike vacuity, test nodes are much less uncertain. Recall that dissonance represents the degree of conflicting evidence (i.e., a discrepancy between each class probability). However, in this dataset, we observe a fairly low level of dissonance and the obvious outperformance of Dissonance in node classification prediction.

\noindent {\bf Aleatoric uncertainty}: In Figure~\ref{fig:un_vis} (e), a lot of nodes show high uncertainty larger than 0.5 except for a small number of training nodes with low uncertainty. 

\noindent {\bf Epistemic uncertainty}: In Figure~\ref{fig:un_vis} (f), most nodes show very low epistemic uncertainty \feng{values} because uncertainty derived from model parameters can disappear as they are trained well.

\section{Conclusion} \label{sec:conclusion}
In this work, we proposed a multi-source uncertainty framework of GNNs for semi-supervised node classification. Our proposed framework provides an effective way of predicting node classification and out-of-distribution detection considering multiple types of uncertainty. We leveraged various types of uncertainty estimates from both DL and evidence/belief theory domains. Through our extensive experiments, we found that dissonance-based detection yielded the best performance on misclassification detection while vacuity-based detection performed the best for OOD detection, compared to other competitive counterparts.  In particular, it was noticeable that applying GKDE and the Teacher network further enhanced the accuracy of node classification and uncertainty estimates.

Although our method introduced in this Chapter achieves good performance in semi-supervised node classification, especially in an out-of-distribution setting where part of unlabeled nodes (\textit{e.g.}, OODs) does not belong to training (in-distribution) classes. However, we found that the OODs nodes may hurt the semi-supervised node classification performance for in-distribution nodes. A similar issue also occurs in transitional semi-supervised learning (SSL) problems. Semi-supervised learning algorithm's performance could degrade when the unlabeled set has out-of-distribution examples. To this end, in the following Chapter, we first study the impact of OOD data on SSL algorithms and propose a novel unified uncertainty-aware robust SSL framework to solve this issue.

\chapter{Uncertainty-Aware Robust Semi-Supervised Learning with Out of Distribution Data}
\label{chapter:3}
    \section{Introduction}\blfootnote{\copyright 2020 Arxiv. Reprinted, with permission, from Xujiang Zhao, Killamsetty Krishnateja, Rishabh Iyer, Feng Chen, ``Robust Semi-Supervised Learning with Out of Distribution Data", doi: 10.48550/arXiv.2010.03658.} 
Deep learning approaches have been shown to be successful on several supervised learning tasks, such as computer vision~\cite{szegedy2015going}, natural language processing~\cite{graves2013generating}, and speech recognition~\cite{graves2013speech}. However, these deep learning models are data-hungry and often require massive amounts of labeled examples to obtain good performance. Obtaining high-quality labeled examples can be very time-consuming and expensive, particularly where specialized skills are required in labeling (for example, in cancer detection on X-ray or CT-scan images). As a result, semi-supervised learning (SSL)~\cite{zhu2005semi} has emerged as a very promising direction, where the learning algorithms try to effectively utilize the large unlabeled set (in conjunction) with a relatively small labeled set. Several recent SSL algorithms have been proposed for deep learning and have shown great promise empirically. These include Entropy Minimization ~\cite{grandvalet2005semi}, pseudo-label based methods ~\cite{lee2013pseudo, arazo2019pseudo,berthelot2019mixmatch}
and consistency based methods ~\cite{sajjadi2016regularization,laine2016temporal,tarvainen2017mean,miyato2018virtual} to name a few.

\begin{figure}
 \centering
%  \vspace*{5pt}%
%  \hspace*{\fill}% 
  \begin{subfigure}{0.48\textwidth}     % start subfigure 1
    \centering
    \includegraphics[width=\textwidth]{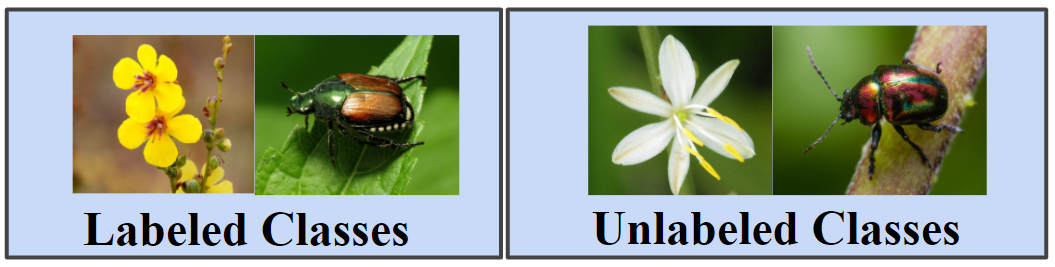}%
    % \captionsetup{skip=12pt}%
    \caption{Traditional semi-supervised learning }
    \label{fig:demo1}
  \end{subfigure}%          % end subfigure 1
  \begin{subfigure}{0.52\textwidth}        % start subfigure 2
    %\belowcaptionskip=8pt
    \centering
    \includegraphics[width=\textwidth]{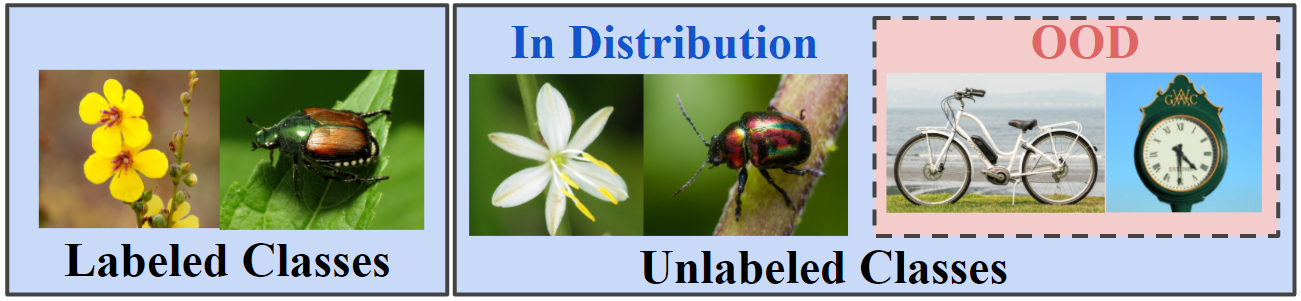}%
    % \captionsetup{skip=12pt}%
    \caption{Semi-supervised learning with OOD data}
    \label{fig:demo2}
  \end{subfigure}%  
  \caption{(a) Traditional SSL. (b) SSL with OODs.}
  \label{fig:example_RSSL}
\end{figure}

Despite the positive results of the above SSL methods, they are designed with the assumption that both labeled and unlabeled sets have the same distribution. Fig~\ref{fig:example_RSSL} (a) shows an example of this. However, this assumption may not hold in many real-world applications, such as web classification~\cite{yang2011can} and medical diagnosis~\cite{yang2015maximum}, where some unlabeled examples are from novel classes unseen in the labeled data. For example, Fig~\ref{fig:example_RSSL} (b) illustrates an image classification scenario with out-of-distribution data, where the unlabeled dataset contains two novel classes (bicycle and clock) compared to the in-distribution classes (flower and beetle) in the labeled dataset. When the unlabeled set contains OOD examples (OODs), deep SSL performance can degrade substantially and is sometimes even worse than simple supervised learning (SL) approaches~\cite{oliver2018realistic}. Moreover, it is unreasonable to expect a human to go through and clean a large and massive unlabeled set in such cases.

A general approach to robust SSL against OODs is to assign a weight to each unlabeled example based on some criteria and minimize a weighted training or validation loss. In particular,~\cite{yan2016robust} applied a set of weak annotators to approximate the ground-truth labels as pseudo-labels to learn a robust SSL model.~\cite{chen2019distributionally} proposed a distributionally robust model that estimates a parametric weight function based on both the discrepancy and the consistency between the labeled data and the unlabeled data.
\cite{chen2020semi} (UASD) proposed to weigh the unlabeled examples based on an estimation of predictive uncertainty for each unlabeled example. The goal of UASD is to discard the potentially irrelevant samples having low confidence scores and estimate the parameters by optimizing a regularized training loss.
% ~\cite{chen2020semi} proposed to weight the unlabeled examples based on an estimation of predictive-uncertainty for each unlabeled example to discard the potentially irrelevant samples having low confidence scores and estimate the parameters by optimizing a regularized training loss.
A state-of-the-art method~\cite{guo2020safe}, called DS3L, considers a shallow neural network to predict the weights of unlabeled examples and estimate the parameters of the neural network based on a clean labeled set via bi-level optimization. 
It is common to obtain a dataset composed of two parts, including a relatively small but accurately labeled set and a large but coarsely labeled set from inexpensive crowd-sourcing services or other noisy sources. 

There are three \textbf{main limitations} of DS3L and other methods as reviewed above. First, it lacks a study of the potential causes of the impact of OODs on SSL, and as a result, the interpretation of robust SSL methods becomes difficult. Second, existing robust SSL methods did not consider the negative impact of OODs on the utilization of BN in neural networks, and as a result, their robustness against OODs degrades significantly when a neural network includes BN layers. The utilization of BNs for deep SSL has an implicit assumption that the labeled and unlabeled examples follow a single or similar distribution, which is problematic when the unlabeled examples include OODs~\cite{ioffe2015batch}.
Third, the bi-level learning algorithm developed in DS3L relies on low-order approximations of the objective in the inner loop due to vanishing gradients or memory constraints. As a result of not using the high-order loss information, the learning performance of DS3L could be significantly degraded in some applications, as demonstrated in our experiments. Our main technical contributions over existing methods are summarized as follows:   

\noindent \textbf{The effect of OOD data points.} The first critical contribution of our work (Sec.~\ref{Impact_OOD}) is to analyze what kind of OOD unlabeled data points affect the performance of SSL algorithms. In particular, we observe that OOD samples lying close to the decision boundary have more influence on SSL performance than those far from the boundary. Furthermore, we observe that the OOD instances far from the decision-boundary (faraway  OODs) can degrade SSL performance substantially if the model contains a batch normalization (BN) layer. The last observation makes sense logically as well since the batch normalization heavily depends on the mean and variance of each batch's data points, which can be significantly different for OOD points that came from very different distributions. 
We find these observations about OOD points consistent across experiments on several synthetic and real-world datasets.

\noindent \textbf{Uncertainty-Aware Robust SSL Framework.} 
To address the above causes, we first proposed a simple modification to BN, called weighted batch normalization (WBN), to improve BN's robustness against OODs. We demonstrate that W-BN can learn a better representation with good statistics estimation via a theoretical analysis.
Finally, we proposed a unified, uncertainty-aware robust SSL approach to improve many existing SSL algorithms' robustness by learning to assign weights to unlabeled examples based on their meta-gradient directions derived from a loss function on a clean validation set. Different from safe SSL~\citep{guo2020safe}, we directly treat the weights as hyperparameters instead of the output of a parametric function. The effectiveness of this strategy has been well demonstrated in robust supervised learning problems against noise labels~\citep{ren2018learning}. Furthermore, treating the weights as hyperparameters enables the use of an implicit differentiation-based approach in addition to the meta-approximation algorithm considered in \citep{guo2020safe}. In addition, we proposed two efficient bi-level algorithms for our proposed robust SSL approach, including meta-approximation and implicit-differentiation based, that have different tradeoffs on computational efficiency and accuracy. We designed the first algorithm based on lower-order approximations of the objective in the meta optimization's inner loop. We designed the second algorithm based on higher-order approximations of the objective and is scalable to a higher number of inner optimization steps to learn a massive amount of weight parameters, but is less efficient than the first algorithm. 

\noindent \textbf{Speeding up the re-weighting algorithms.} A critical issue with the current approaches ~\cite{ren2018learning,shu2019meta,guo2020safe} for bi-level optimization is that they are around 3x slower (even after using a one-step gradient approximation) compared to the original training(in this case, the base SSL algorithm). Whereas more fancy and higher-order approaches like implicit differentiation~\cite{lorraine2020optimizing} are even slower. We show that by adopting some simple tricks like just considering the last layer of the network in the inner one-step optimization (while updating the hyper-parameters) and performing the weight update step, only every $L$ epochs can significantly speedup the reweighting while not significantly losing on its gains. In particular, we see that \emph{we can bring down the run-time from $3\times$ to $1.2\times$, thereby significantly improving the experimental turn around times and reducing cost, energy, and time requirements of the reweighting algorithms.} 

\noindent \textbf{Comprehensive experiments. } We conduct extensive experiments on synthetic and real-world datasets. The results demonstrate that our weighted robust SSL approach significantly outperforms existing robust approaches (L2RW, MWN, Safe-SSL, and UASD) on four representative SSL algorithms. We also perform an ablation study to demonstrate which components of our approach are most important for its success. 
Finally, we show the effect of using last layer gradients and infrequent weight updates on both accuracy and time speedups.

\section{Semi-Supervised Learning (SSL)}
\subsection{Notations}
Given a training set with a  labeled set of examples $\mathcal{D} = \{\x_i, y_i\}_{i=1}^n$ and an unlabeled set of examples $\mathcal{U} = \{\x_j\}_{j=1}^m$.
Vectors are denoted by lower case bold face letters, \textit{e.g.}, weight vector $\bm{w} \in [0, 1]^{|\mathcal{U}|}$ and uncertainty vector ${\bm u} \in [0, 1]^{|\mathcal{U}|}$ where their \textit{i}-th entries are $w_i, u_i$. Scalars are denoted by lowercase italic letters, \textit{e.g.}, trade-off parameter $\lambda \in \mathbb{R}$. Matrices are denoted by capital italic letters. Some important notations are listed in Table~\ref{table:notation_3}

\begin{table*}[h]%{l}{0.8\textwidth}
\caption{Important notations and corresponding descriptions.}
\centering
  \begin{tabular}{l|l}
    \toprule
      \textbf{Notations} & \textbf{Descriptions}  \\
    \midrule
    $\mathcal{D}$   & Labeled set  \\
    $\mathcal{U}$   &  Unlabeled set   \\
    $\mathcal{V}$   &  Validation set (labeled)   \\
    $\x_i$   &  Feature vector of sample $i$   \\
    $y_i$ & Class label of sample $i$ \\
    ${\bm \theta}$ & model parameters \\
    $f(\cdot)$ & semi-supervised learning model function\\
    $\mathcal{L}_V(\cdot)$ & Validation loss\\
    $\mathcal{L}_T(\cdot)$ & Training loss\\
    $l(\cdot)$ & Loss function for labeled data\\
    $r(\cdot)$ & Loss function for unlabeled data \\
    $\text{Un}(\cdot)$ & Uncertainty regularization term \\
    $J$ & Inner loop gradinet steps \\
    $P$ & Neumann series approximation parameters \\
    \bottomrule
  \end{tabular}
   \label{table:notation_3}
\end{table*}

Given a training set with a  labeled set of examples $\mathcal{D} = \{\x_i, y_i\}_{i=1}^n$ and an unlabeled set of examples $\mathcal{U} = \{\x_j\}_{j=1}^m$. For any classifier model $f(\x, \theta)$ used in SSL, where $\x\in \mathbb{R}^C$ is the input data, and $\theta$ refers to the parameters of the classifier model. The loss functions of many existing methods can be formulated in the following general form:
\vspace{-1mm}
\begin{eqnarray}
\sum\nolimits_{(\x_i, y_i)\in \mathcal{D}} l(f(\x_i, \theta), y_i) + \sum\nolimits_{x_j\in \mathcal{U}} r(f(\x_j, \theta)), 
\label{ssl_loss}
\end{eqnarray}
where $l(\cdot)$ is the loss function for labeled data (such as cross-entropy), and $r(\cdot)$ is the loss function (regularization function) on the unlabeled set.  The goal of SSL methods is to design an efficient regularization function to leverage the model performance information on the unlabeled dataset for effective training.
Pseudo-labeling~\cite{lee2013pseudo} uses a standard supervised loss function on an unlabeled dataset using “pseudo-labels” as a target label as a regularizer.
$\Pi$-Model~\cite{DBLP:conf/iclr/LaineA17, sajjadi2016regularization} designed a consistency-based regularization function that pushes the distance between
the prediction for an unlabeled sample and its stochastic perturbation (e.g., data augmentation or dropout~\cite{srivastava2014dropout}) to a small value. 
Mean Teacher~\cite{tarvainen2017mean} proposed to obtain a more stable target output $f(x, \theta)$ for unlabeled set by setting the target via an exponential moving average of parameters from previous training steps.
Instead of designing a stochastic $f(x, \theta)$, Virtual Adversarial Training (VAT)~\cite{miyato2018virtual} proposed to approximate a tiny perturbation to unlabeled samples that affect the output of the prediction function most.  MixMatch~\cite{berthelot2019mixmatch}, UDA~\cite{xie2019unsupervised}, and Fix-Match~\cite{sohn2020fixmatch} choose the pseudo-labels based on predictions of augmented samples, such as shifts, cropping, image flipping, weak and strong augmentation, and mix-up~\cite{zhang2017mixup} to design the regularization functions. However, the performance of most existing SSL can degrade substantially when the unlabeled dataset contains OOD examples~\cite{oliver2018realistic}.

\section{Impact of OOD on SSL Performance}\label{Impact_OOD}

In this section, we provide a systematic analysis of the impact of OODs for many popular SSL algorithms, such as Pseudo-Label(PL)~\cite{lee2013pseudo}, $\Pi$-Model~\cite{laine2016temporal}, 
Mean Teacher(MT)~\cite{tarvainen2017mean}, and Virtual Adversarial Training (VAT)~\cite{miyato2018virtual}. We illustrate the discoveries using the following synthetic and real-world datasets. While we mainly focus on VAT as the choice of the SSL algorithm, the observations extend to other SSL algorithms as well. 

\noindent {\bf Synthetic dataset.} We considered two moons dataset (red points are labeled data, gray circle points are in-distribution (ID) unlabeled data) with OOD (yellow triangle points)  points in three different scenarios that can exist in real-world, 1) Faraway OOD scenario where the OOD points exist far from decision boundary; 2) Boundary OOD scenario where the OOD points occur close to decision boundary; 3) Mixed OOD scenario where OOD points exist both far and close to the decision boundary, as shown in Fig~\ref{fig: impact_ood}.

\begin{figure*}[!t]
    \centering
    \begin{subfigure}[b]{0.32\textwidth}
        \centering
        \includegraphics[width=\linewidth]{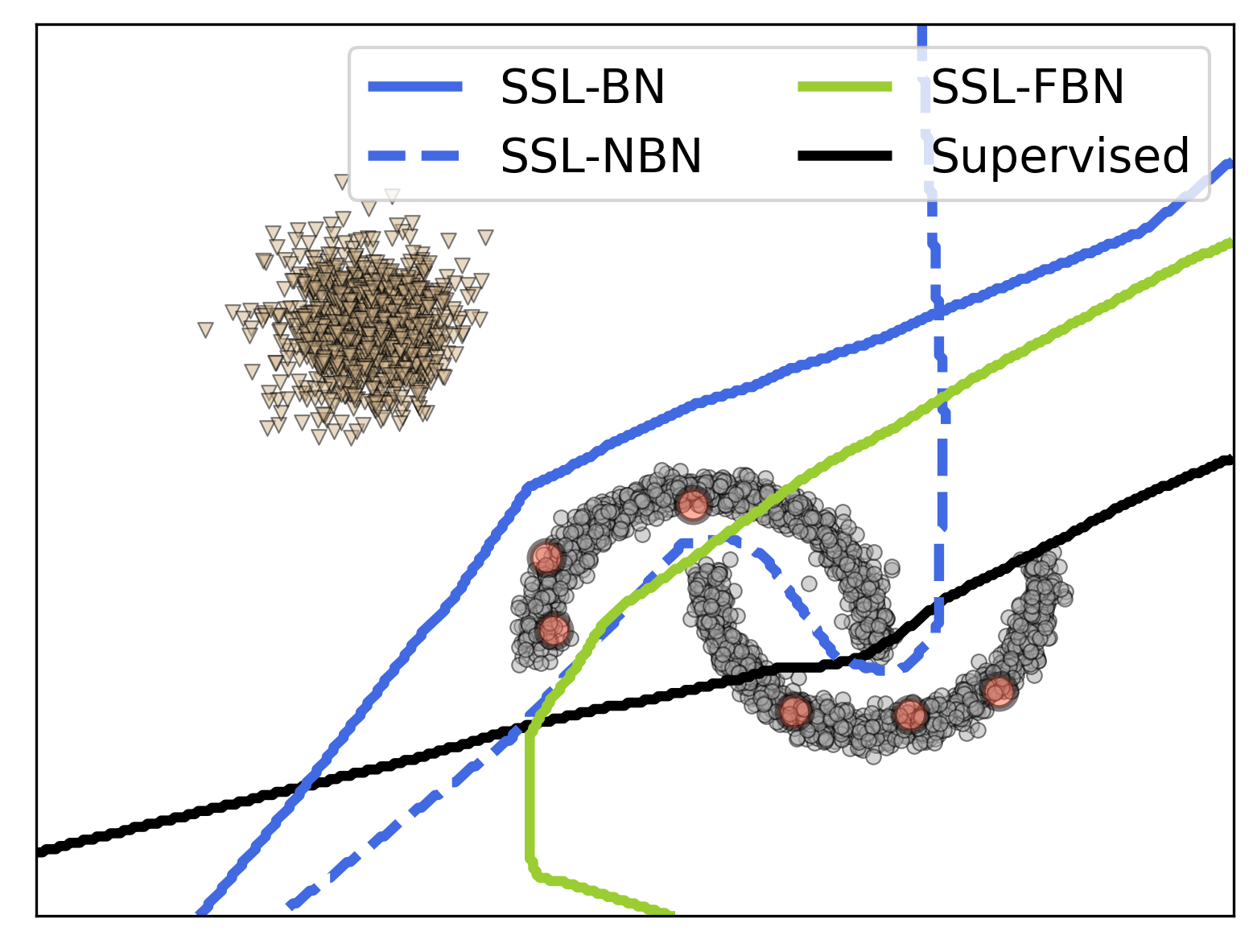}
        \caption{Faraway OODs}
    \end{subfigure}
    % \hfill
    \begin{subfigure}[b]{0.32\textwidth}
        \centering
        \includegraphics[width=\linewidth]{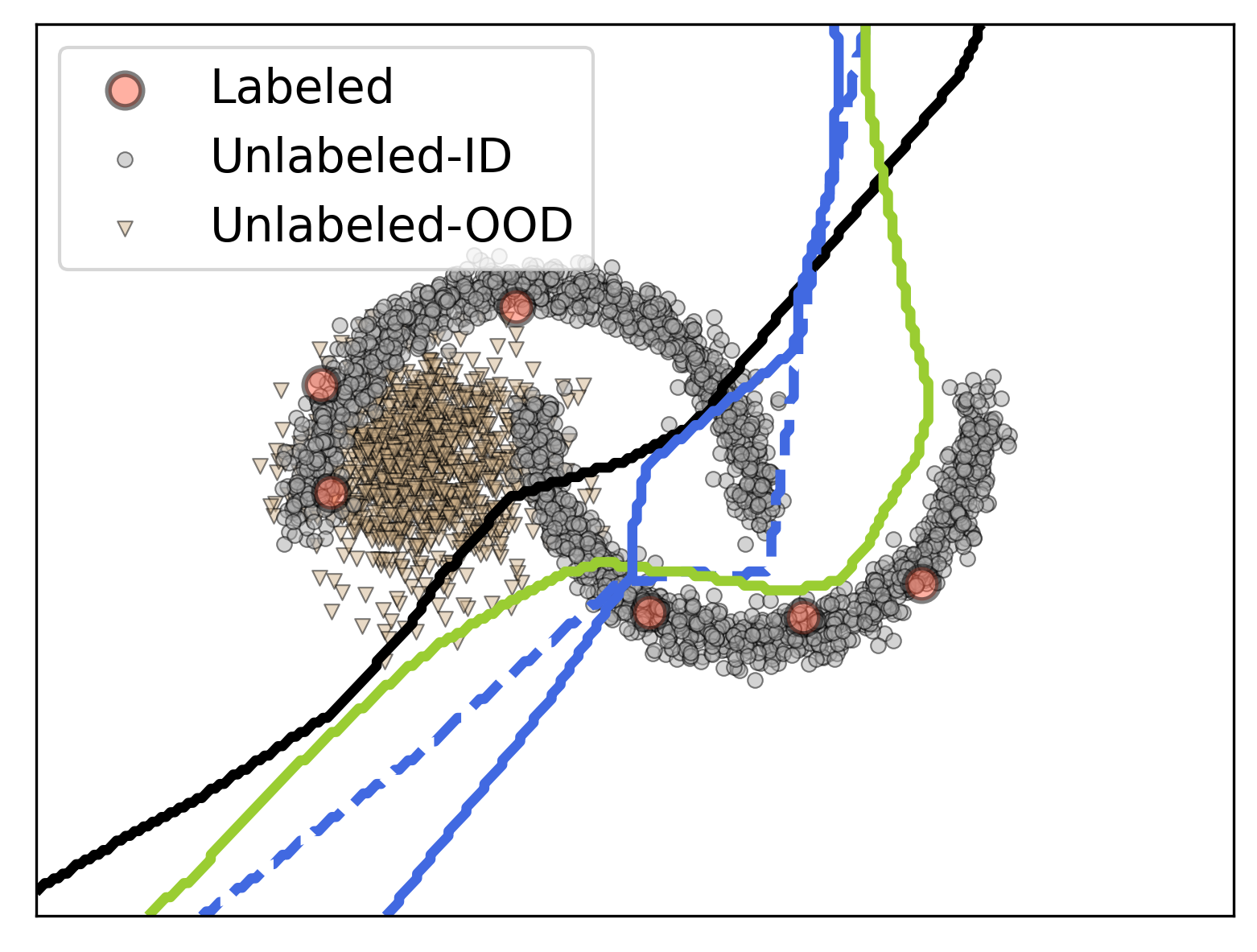}
        \caption{Boundary OODs}
    \end{subfigure}
    \begin{subfigure}[b]{0.32\textwidth}
        \centering
        \includegraphics[width=\linewidth]{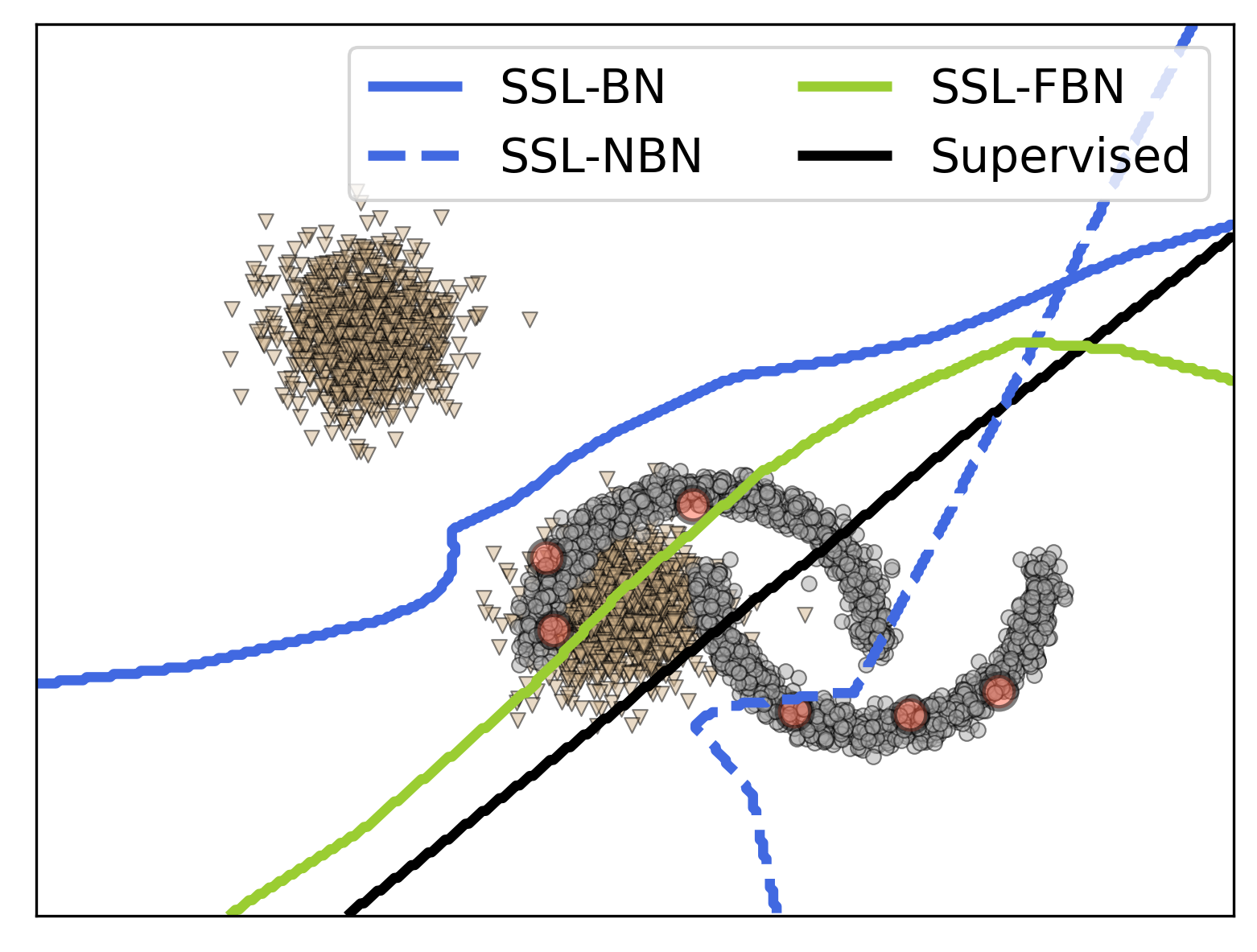}
        \caption{Mixed OODs}
    \end{subfigure}
    \caption{SSL performance with different type OODs in synthetic datasets. }
    \label{fig: impact_ood}
\end{figure*}

\noindent {\bf Real-world dataset.} We consider MNIST as ID data with three types of OODs to account for plausible real-world scenarios. 1)Faraway OOD: We used Fashion MNIST (F-MNIST) dataset, which contains fashion images as the Faraway OOD dataset as it inherently has different patterns compared to the MNIST dataset; 2) Boundary OOD: We used EMNIST  dataset, which contains handwritten character digits as Boundary OOD dataset as it has similar patterns compared to MNIST dataset; In addition to EMNIST, we also considered Mean MNIST (M-MNIST) as a boundary OOD dataset, which was generated by averaging MNIST images from two different classes (usage of M-MNIST as boundary OOD is also considered in ~\cite{guo2019mixup}); 4) Mixed OOD: For Mixed OOD dataset, we combined both Fashion MNIST and EMNIST together. To justify the usage of the above-mentioned datasets as Faraway, Boundary, and Mixed datasets, we use the vacuity uncertainty ~\cite{zhao2020uncertainty} values as an estimate of their distance to the decision boundary.
Table~\ref{epistemic} shows the average vacuity uncertainty for each type of OODs; the results indicate that Fashion MNIST has larger vacuity uncertainty compared to MNIST, whereas vacuity uncertainties of EMNIST and M-MNIST are closer to MNIST. For more details, refer to the supplementary material. We also conducted a similar experiment in Sec. 3 based on CIFAR10 dataset. We adapt CIFAR10 to a 6-class classification task, using 400 labels per class (from the 6 classes), the ID classes are: "bird", "cat", "deer", "dog", "frog", "horse", and OOD data are from classes: "airline", "automobile", "ship", "truck"), we regard this type of OODs as boundary OODs, named CIFAR-4. Besides, we use the sample from the SVHN dataset as faraway OODs and consider mixed OODs that combined both SVHN and CIFAR-4. Table~\ref{epistemic2} showed the vacuity uncertainty for each dataset and got a similar pattern. Note that we use WRN-28-2 as the backbone for the CIFAR10 experiment such that we can not remove the BN layer (the SSL performance would significantly decrease when removing BN from WRN-28-2), so we only show "SSL-BN" and "SSL-FBN" in this case.
The results are shown in Fig~\ref{fig: impact_ood_acc} (b), which shows a similar pattern made on the MNIST dataset.

\begin{figure*}[!t]
    \centering
    \begin{subfigure}[b]{0.42\textwidth}
        \centering
        \includegraphics[width=\linewidth]{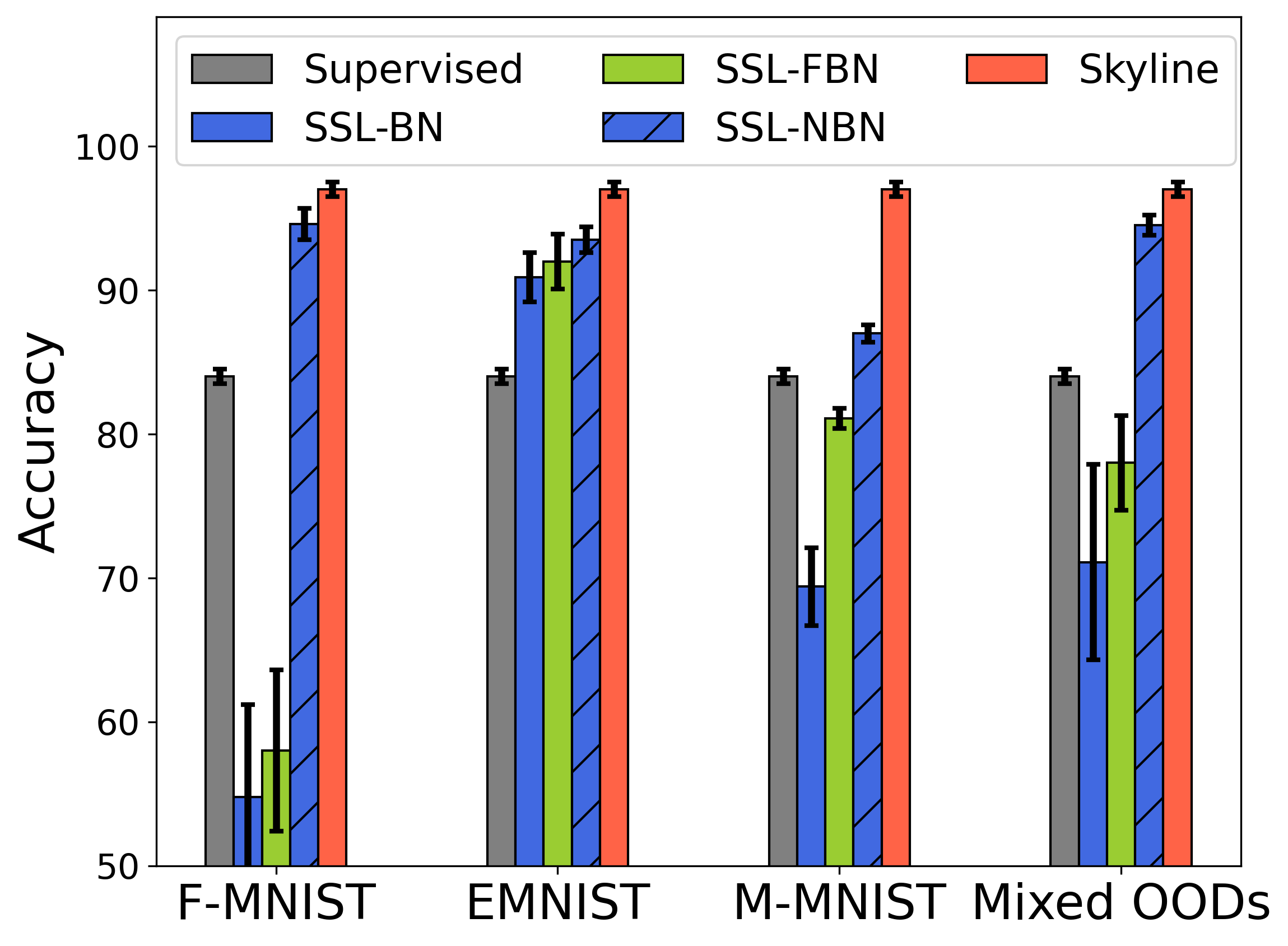}
        \caption{Real world OODs based on MNIST}
    \end{subfigure}
     \begin{subfigure}[b]{0.41\textwidth}
        \centering
        \includegraphics[width=\linewidth]{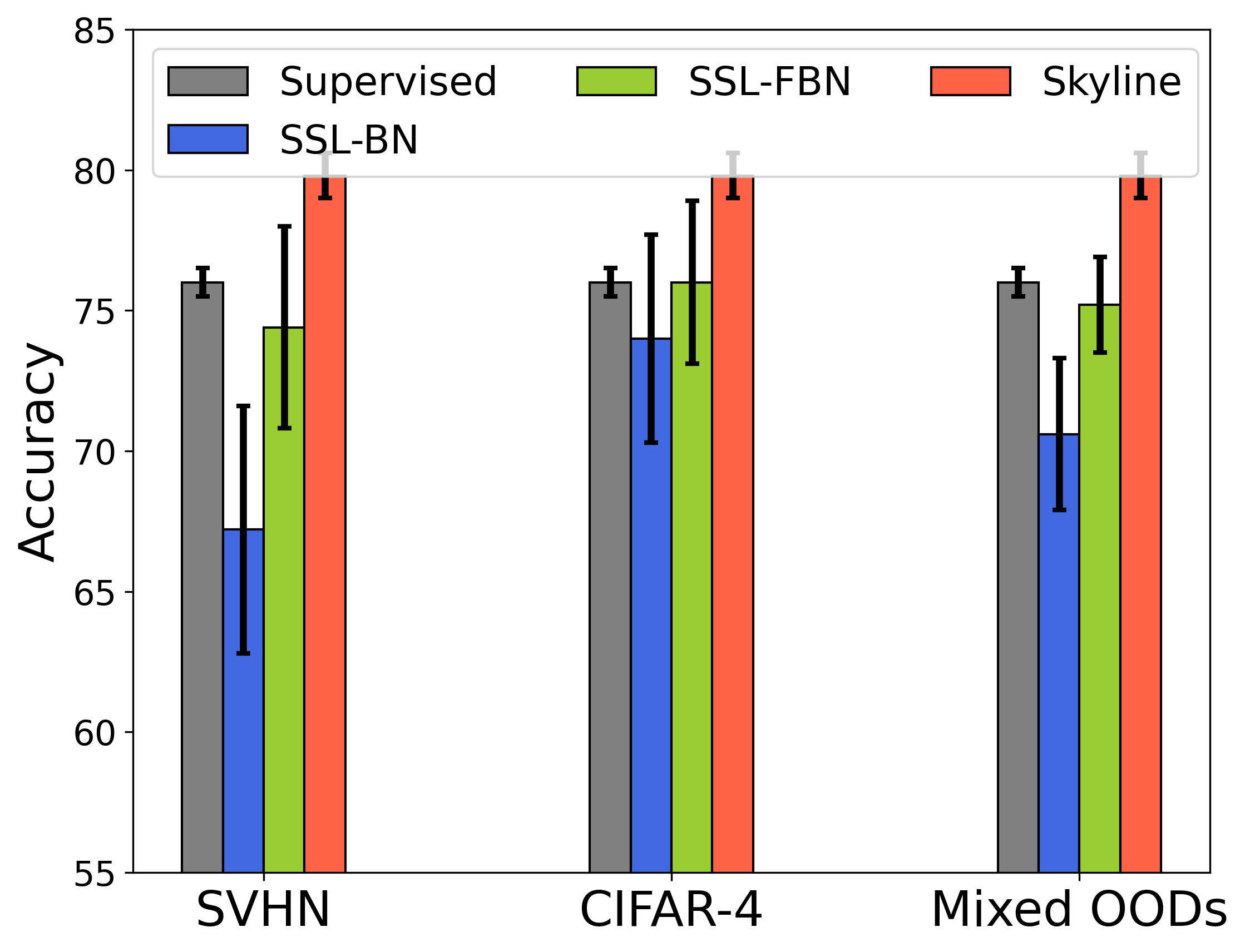}
        \caption{Real world OODs based on CIFAR10}
    \end{subfigure}
    \caption{SSL performance with different type OODs in real world datasets. }
    \label{fig: impact_ood_acc}
\end{figure*}

For all experiments in this section, we used a multilayer perceptron neural network (MLP) with three layers as a backbone architecture for the synthetic dataset and LeNet~\cite{lecun1989backpropagation} as a backbone for the real-world datasets. We consider the following models in the experiments: 1) \emph{SSL-NBN}: MLP or LeNet model without Batch Normalization; 2) \emph{SSL-BN}: MLP or LeNet model with Batch Normalization; 3) \emph{SSL-FBN}: MLP or LeNet model where we freeze the batch normalization layers for the unlabeled instances. Freezing BN (FBN) ~\cite{oliver2018realistic} is a common trick to improve the SSL model robustness where we freeze batch normalization layers by not updating \textit{running\_mean} and \textit{running\_variance} in the training phase.

\begin{table}[th!]
\caption{Uncertainty for different type OODs based on MNIST.} 
\centering
\begin{tabular}{ccccc}
\hline
\textit{F-MNIST} & \textit{EMNIST} & \textit{M-MNIST} &\textit{Mixed} & MNIST(ID) \\
                                    \hline
                  0.25&0.14& 0.15 & 0.20 & 0.09    \\
                 \hline 
\end{tabular}
\label{epistemic}
\end{table}

\begin{table}[th!]
\caption{Uncertainty for different type OODs based on CIFAR10.} 
\centering
\begin{tabular}{cccc}
\hline
\textit{SVHN} & \textit{CIFAR-4} &\textit{Mixed} &\textit{ CIFAR(ID)} \\
                                    \hline
                  0.27&0.18& 0.22 &0.15    \\
                 \hline 
\end{tabular}
\label{epistemic2}
\end{table}

The following are the main observations. First, from Fig \ref{fig: impact_ood}, we see that with BN (i.e., SSL-BN), there is a significant impact on model performance and learned decision boundaries in the presence of OOD. This performance degradation is even more pronounced in Faraway OOD since the BN statistics, like the running mean/variance can be significantly changed by faraway OOD points. Secondly, when we do not use BN (i.e., SSL-NBN), the impact of the Faraway OOD and mixed OOD data is reduced. However, in the case of boundary OOD (Fig \ref{fig: impact_ood} (b) and EMNIST/M-MNIST case of Fig \ref{fig: impact_ood_acc}), we still see significant performance degradation compared to the skyline. However, BN is a crucial component in more complicated models (Eg: ResNet family), and we expect OOD instances to play a significant role there. Finally, when freezing the BN layers for the unlabeled data (i.e., SSL-FBN), we see that the Faraway and Mixed OODs' effect is alleviated; but SSL-FBN still performs worse than the SSL-NBN in Faraway and Mixed OODs (and there is a big scope of improvement w.r.t the skyline). Finally, both SSL-NBN and SSL-FBN fail to efficiently mitigate the performance degradation caused by boundary OOD data points. In the supplementary material, we also show similar observations made on the CIFAR-10 dataset.

The main takeaways of the synthetic and real data experiments are as follows: 1) OOD instances close to the decision boundary (Boundary OODs) hurt SSL performance irrespective of the use of batch normalization; 2) OOD instances far from the decision boundary (Faraway OODs) hurt the SSL performance if the model involves BN. Freezing BN can reduce some impact of OOD to some extent but not entirely; 3) OOD instances far from the decision boundary will not hurt SSL performance if there is no BN in the model. In the next section, we propose a robust SSL reweighting framework to address above mentioned issues caused by OOD data points.

\section{Methodology}
In this section, we first proposed the uncertainty-aware robust SSL framework, and then introduce two efficient bi-level algorithms to train the robust SSL approach. More importantly, we proposed Weighted Batch Normalization (WBN) to improve the robustness of our robust SSL framework against OODs.

\subsection{Uncertainty-Aware Robust SSL Framework}~\label{sec.3.4.1}.
\noindent  {\bf Reweighting the unlabeled data.} Consider the semi-supervised classification problem with training data (labeled $\mathcal{D}$ and unlabeled $\mathcal{U}$) and classifier $f(x ;\theta)$. Generally, the optimal classifier parameter $\theta$ can be extracted by minimizing the SSL loss (Eq.~\eqref{ssl_loss}) calculated on the training set. In the presence of unlabeled OOD data, sample reweighting methods enhance the robustness of training by imposing weight $w_j$ on the $j$-th unlabeled sample loss,
\begin{eqnarray}
\mathcal{L}_{T}(\theta, \w) = \sum_{(\x_i, y_i)\in \mathcal{D}} l(f(\x_i; \theta), y_i) + \sum_{x_j\in \mathcal{U}} w_j r(f(\x_j; \theta)), \nonumber
\label{dss_ssl} 
\end{eqnarray}
where we denote $\mathcal{L}_U$ is the robust unlabeled loss, and we treat weight $\w$ as hyperparameter. Our goal is to learn a sample weight vector $\w$ such that $\w=0$ for OODs, $\w=1$ for In-distribution (ID) sample.

\begin{figure*}[!t]
    \centering
    \includegraphics[width=0.92\textwidth]{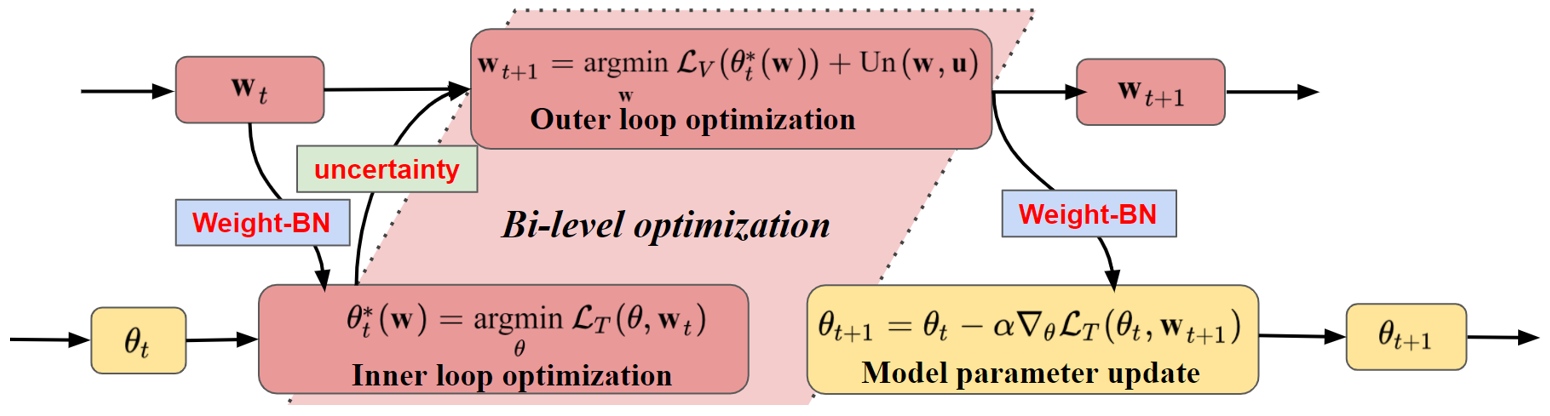}
    \vspace{-2mm}
    \caption{ Main flowchart of the proposed Weighted Robust SSL algorithm.}
    \label{fig:framework3}
\end{figure*}

\noindent Denote  $\mathcal{L}_{V}(\theta^*(\w), \w) = \mathcal{L}_{V}(\theta^*(\w))+\lambda \cdot \text{Reg}(\w)$ as the validation loss with a regularization term over the validation dataset, where $\text{Reg}(\w)$ is the regularization term, $\lambda$ is the regularization coefficient, and $\mathcal{L}_{V} (\theta^*(\w)) \triangleq \sum_{(\x_i, y_i)\in \mathcal{V}} l(f(\x_i, \theta^*(\w)), y_i)$. This labeled set could either be a held-out validation set, or the original labeled set $\mathcal D$. Intuitively, the problem given in Eq.~\eqref{ssl+dss_w} aims to choose weights of unlabeled samples $\w$ that minimize the supervised loss evaluated on the validation set when the model parameters $\theta^*(\w)$ are optimized by minimizing the weighted SSL loss $\mathcal{L}_T(\theta, \w)$.

\noindent  {\bf Uncertainty-aware bi-level optimization objective.} Since manual tuning and grid-search for each $w_i$ is intractable, we pose the weights optimization problem described above as a \emph{bi-level} optimization problem.
\begin{eqnarray}
&\mathop{\min}\limits_{\w}& \mathcal{L}_{V}(\theta^*(\w), \w) , \nonumber \\
&\text{s.t. }& \theta^*(\w) = \argmin_{\theta}\mathcal{L}_{T}(\theta, \w). 
\label{ssl+dss_w}
\end{eqnarray}
where $\mathcal{L}_{V}(\theta^*(\w), \w)$ is the objective to optimize $\w$. Based on the observation from Sec~\ref{Impact_OOD}, we find that OOD data usually holds high vacuity uncertainty. Therefore, we design the inner loop objective function with an uncertainty regularization term,
\begin{eqnarray}
    \mathcal{L}_{V}(\theta^*(\w), \w) = \mathcal{L}_{V}(\theta^*(\w))+\lambda \cdot \text{Un}(\w, \uu) \label{ssl+dss_u}
\end{eqnarray}
where $\text{Un}(\w, \uu)=\uu(1-\w)^2$ is the uncertainty regularization term, and $\lambda$ is the regularization coefficient. Based on this design, the objective function will push the OOD sample with a large weight and push the in-distribution sample with a small weight.

Calculating the optimal $\theta^*$ and $\w$ requires two nested loops of optimization, which is expensive and intractable to obtain the exact solution~\cite{franceschi2018bilevel}, especially when optimization involves deep learning model and large datasets. Since gradient-based methods like Stochastic Gradient Descent (SGD) have shown to be very effective for machine learning and deep learning problems~\cite{bengio2000gradient}, we adapt both high-order approximation and meta approximation strategies, as described in Sec~\ref{sec.3.4.2}.

\subsection{Bi-level Optimization Approximation}\label{sec.3.4.2}
In this section, We developed two efficient bi-level algorithms that have different tradeoffs in computational efficiency and accuracy.

\noindent  {\bf Implicit Differentiation.} Directly calculate the weight gradient $\frac{\partial \mathcal{L}_{V}(\theta^*(\w), \w)}{ \partial \w}$ by chain rule:
\begin{eqnarray}
    \frac{\partial \mathcal{L}_{V} (\theta^*(\w), \w)}{ \partial \w}
    =\underbrace{\frac{\partial \mathcal{L}_{V} }{ \partial \w}}_{(a)} + \underbrace{\frac{\partial \mathcal{L}_{V} }{ \partial \theta^*(\w)}}_{(b)} \times \underbrace{\frac{\partial \theta^*(\w)}{\partial \w}}_{(c)} \label{ID_gradient}
\end{eqnarray}
where (a) is the weight direct gradient (e.g., gradient from regularization term, $\text{Reg}(\w)$), (b) is the parameter direct gradient, which is easy to compute. The difficult part is the term (c) (best-response Jacobian). We approximate (c) by using the Implicit function theorem~\cite{lorraine2020optimizing},
\begin{eqnarray}
    \frac{\partial \theta^* (\w)}{\w} = - \underbrace{\Big[ \frac{\partial \mathcal{L}_{T}}{\partial \theta\partial \theta^T} \Big]^{-1}}_{(d)} \times  \underbrace{\frac{\partial \mathcal{L}_{T}}{\partial \w\partial \theta^T}}_{(e)} \label{IFT}
\end{eqnarray}

However, computing Eq.~\eqref{IFT} is challenging when using deep nets because it requires inverting a high dimensional Hessian (term (d)), which often requires $\mathcal{O}(m^3)$ operations. Therefore, we give the Neumann series approximations~\cite{lorraine2020optimizing} of the term (d) which we empirically found to be effective for SSL,
\begin{equation}
    \Big[ \frac{\partial \mathcal{L}_{T}}{\partial \theta\partial \theta^T} \Big]^{-1} \approx  \lim_{P \rightarrow{}\infty} \sum_{p=0}^P \Big[I- \frac{\partial \mathcal{L}_{T}}{\partial \theta\partial \theta^T} \Big]^p \label{neumann_serires}
\end{equation}
where $I$ is the identity matrix.

Since the algorithm mentioned in ~\cite{lorraine2020optimizing} utilizes the Neumann series approximation and efficient Hessian vector product to compute the Hessian inverse product, it can efficiently compute the Hessian-inverse product even when a larger number of weight hyperparameters are present. We should also note that the implicit function theorem's assumption $\frac{\partial \mathcal{L}_{T}}{\partial \theta} = 0$ needs to be satisfied to accurately calculate the Hessian inverse product. However in practice, we only approximate $\theta^{*}$, and simultaneously train both $\w$ and $\theta$ by alternatively optimizing $\theta$ using $\mathcal{L}_T$ and $\w$ using $\mathcal{L}_V$. 

\noindent  {\bf Meta approximation.} Here we proposed the meta-approximation method to jointly update both network parameters $\theta$ and hyperparameter $\w$ in an iterative manner. At iteration step $t$, we approximate $\theta^*_t \approx \theta^J_t$ on training set via low order approximation, where $J$ is inner loop gradient steps, Eq.~\eqref{gd} shows each gradient step update,
\begin{equation}
    \theta^j_{t} (\w_t) = \theta_{t}^{j-1} - \alpha \nabla_{\theta} \mathcal{L}_{T}(\theta_{t}^{j-1}, \w_{t}) \label{gd}
\end{equation}
then we update hyperparameter $\w_{t+1}$ on the basis of the net parameter $\theta^*_t$ and weight $\w_{t}$ obtained in the last iteration. To guarantee efficiency and general feasibility, the outer loop optimization to update weight is employed by one gradient step on the validation set $\mathcal{V}$, 
\begin{equation}
    \w_{t+1} = \w_{t} - \beta \nabla_{\w} \mathcal{L}_{V}(\theta^*_{t}, \w_{t})
\end{equation}
We analyze the convergence of the meta-approximation method and derive the following theorem.
\begin{theorem}\label{theorem_rssl_conv}
Suppose the validation function is Lipschitz smooth with constant $L$, and the supervised loss and unsupervised loss have $\gamma$-bounded gradients, let the step size $\alpha$ for $\theta$ satisfies $\alpha=\min \{1, \frac{k}{T}\}$ for some constant $k>0$ such that $\frac{k}{T}< 1$ and $\beta=\min \{\frac{1}{L}, \frac{C}{\sqrt{T}}\}$ for some constant $C>0$ such that $\frac{\sqrt{T}}{C}\le L$. Then, the meta approximation algorithm can achieve $\mathbb{E}[\|\nabla_\w \mathcal{L}_V(\theta_t)  \|^2_2] \le \epsilon$ in $\mathcal{O}(1/\epsilon^2)$ And more
specifically,
\begin{eqnarray}
    \min_{0\le t \le T} \mathbb{E}[\|\nabla_\w \mathcal{L}_V(\theta_t)  \|^2_2] \le \mathcal{O}(\frac{C}{\sqrt{T}}) 
\end{eqnarray}
where $C$ is some constant independent to the convergence
process.
\end{theorem}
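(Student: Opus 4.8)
The plan is to treat the outer objective $G(\w) \triangleq \mathcal{L}_V(\theta^*(\w), \w)$ as a single nonconvex function of the weight vector $\w$ and to run the standard descent-lemma-plus-telescoping argument for inexact stochastic first-order methods. The outer update $\w_{t+1} = \w_t - \beta \nabla_\w \mathcal{L}_V(\theta^*_t, \w_t)$ is viewed as an approximate gradient step on $G$, where the inexactness comes from two sources: mini-batch sampling of the losses, and the replacement of the true inner minimizer $\theta^*(\w_t)$ by the $J$-step inner iterate $\theta^J_t$ produced by the update Eq.~\eqref{gd}. First I would invoke the $L$-Lipschitz smoothness of $\mathcal{L}_V$ to write the descent inequality
\begin{equation*}
G(\w_{t+1}) \le G(\w_t) + \langle \nabla G(\w_t), \w_{t+1}-\w_t\rangle + \frac{L}{2}\|\w_{t+1}-\w_t\|^2,
\end{equation*}
and substitute the outer update to express the right-hand side in terms of $\beta$ and the approximate hypergradient $\hat g_t \triangleq \nabla_\w \mathcal{L}_V(\theta^*_t, \w_t)$.

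Next I would take conditional expectation over the mini-batch randomness and decompose $\mathbb{E}[\hat g_t] = \nabla G(\w_t) + b_t$, where $b_t$ is the bias introduced by using $\theta^J_t$ in place of the exact best response. The cross term $-\beta\langle \nabla G(\w_t), \mathbb{E}[\hat g_t]\rangle$ then yields the desired $-\beta\|\nabla G(\w_t)\|^2$ together with a residual $-\beta\langle \nabla G(\w_t), b_t\rangle$ that must be controlled. The $\gamma$-bounded-gradient assumption on the supervised and unsupervised losses supplies a uniform bound $\mathbb{E}[\|\hat g_t\|^2]\le \gamma^2$ (up to constants propagated through the chain rule of the inner loop), which tames the quadratic term $\frac{L\beta^2}{2}\mathbb{E}[\|\hat g_t\|^2]$. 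Rearranging produces a per-step inequality of the form $\beta \|\nabla G(\w_t)\|^2 \le \mathbb{E}[G(\w_t)]-\mathbb{E}[G(\w_{t+1})] + \frac{L\beta^2}{2}\gamma^2 + \beta\,\|\nabla G(\w_t)\|\,\|b_t\|$.

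Then I would telescope over $t=0,\dots,T-1$, use $G(\w_0)-\mathbb{E}[G(\w_T)]\le G(\w_0)-\inf G$, and divide by $\beta T$. Plugging in $\beta = \min\{1/L, C/\sqrt{T}\}$ — the constraint $\sqrt{T}/C \le L$ guarantees $\beta = C/\sqrt{T}$ in the relevant regime — makes the optimization-gap term of order $\mathcal{O}(1/(\beta T)) = \mathcal{O}(1/(C\sqrt{T}))$ and the variance term of order $\mathcal{O}(L\beta\gamma^2) = \mathcal{O}(C/\sqrt{T})$, so the dominant rate is $\mathcal{O}(C/\sqrt{T})$. Since $\min_{0\le t\le T}\mathbb{E}[\|\nabla G(\w_t)\|^2]$ is at most the average over $t$, this gives the claimed bound, and setting it at most $\epsilon$ yields the $\mathcal{O}(1/\epsilon^2)$ iteration complexity.

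The hard part will be controlling the bias $b_t$ from the inner loop: because the algorithm runs only $J$ finite steps of inner gradient descent with step size $\alpha$ rather than solving for $\theta^*(\w_t)$ exactly, $\hat g_t$ is a genuinely biased hypergradient. The key will be to bound the inner-loop error $\|\theta^J_t - \theta^*(\w_t)\|$ and hence $\|b_t\|$, using the smoothness of $\mathcal{L}_T$, the $\gamma$-bounded gradients, and the smallness of $\alpha = \min\{1, k/T\}$, so that the bias contributes only a lower-order term that is absorbed into the $\mathcal{O}(C/\sqrt{T})$ rate (and vanishes as $\alpha \to 0$). Making the dependence of $b_t$ on $\alpha$, $J$, and $L$ explicit, and verifying that it does not dominate the leading term, is the most delicate bookkeeping in the argument; the remainder is the routine nonconvex stochastic-gradient telescoping.
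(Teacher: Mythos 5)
Your plan diverges from the paper's proof in a way that opens a genuine gap. The paper never analyzes the exact bi-level objective: in its argument, $\nabla_\w \mathcal{L}_V(\theta_t)$ is the gradient of the validation loss composed with the \emph{one-step meta-approximation} $\theta^*(\w) = \theta - \alpha\nabla_\theta\mathcal{L}_T(\theta,\w)$, so no bias term ever appears. The per-step change $\mathcal{L}_V(\theta_{t+1}) - \mathcal{L}_V(\theta_t)$ is split into (a) a drift term caused by the update of the base parameters $\theta_{t-1}\rightarrow\theta_t$, bounded by $\alpha\gamma^2(\tfrac{\alpha L}{2}+1)$ using the $\gamma$-bounded gradients and $L$-smoothness, and (b) a descent term from the $\w$-update, which requires a separate lemma establishing that $\w \mapsto \mathcal{L}_V(\theta^*(\w))$ is $\tilde{L}$-smooth with $\tilde{L} = \alpha^2\gamma^2 L$, proved by differentiating through the one-step inner update. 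The choice $\alpha = \min\{1, \tfrac{k}{T}\}$ is what makes the accumulated drift $\sum_{t} \alpha\gamma^2(\tfrac{\alpha L}{2}+1) = \mathcal{O}(k\gamma^2)$ a constant, which after dividing by $\sum_t \beta = C\sqrt{T}$ yields the $\mathcal{O}(1/\sqrt{T})$ rate.

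Your approach instead targets the exact bilevel objective $G(\w) = \mathcal{L}_V(\theta^*(\w),\w)$ with $\theta^*(\w)$ the true inner minimizer, treating the computed hypergradient as a biased estimate. Two steps then fail under the stated assumptions. First, the bias control you identify as the hard part cannot be completed: with only smoothness and $\gamma$-bounded gradients (no strong convexity or PL condition on $\mathcal{L}_T$), $J$ steps of inner gradient descent from $\theta_t$ need not approach $\theta^*(\w_t)$ at all, so $\|\theta^J_t - \theta^*(\w_t)\|$ admits no useful bound; moreover your claim that the bias vanishes as $\alpha \to 0$ is backwards --- with finite $J$ and $\alpha \to 0$ one has $\theta^J_t \to \theta_t$, which is generically far from the exact minimizer, so the bias persists. (In the paper, small $\alpha$ plays an entirely different role: it tames the drift term (a), not a bias.) Second, your descent inequality applies the constant $L$ to $G$ as a function of $\w$, but $L$ is the smoothness of $\mathcal{L}_V$ in $\theta$; the smoothness of the composition in $\w$ must be derived separately --- this is exactly the paper's preliminary lemma giving $\tilde{L}=\alpha^2\gamma^2L$, and for the exact bilevel objective it would additionally require Lipschitz Hessians and an invertibility condition to control the best-response Jacobian. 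If you redefine the objective through the meta-approximation itself, as the paper implicitly does, both obstructions disappear and your telescoping argument reduces to essentially the paper's proof.
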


\begin{proof}
See Appendix~\ref{app:A3}.
\end{proof}

\noindent  {\bf Complexity.} Compared with regular optimization on a
the single-level problem, our robust SSL requires $J$ extra forward and backward passes of the classifier network. To compute the weight gradient via bi-level optimization, Meta approximation requires extra forward and backward passes. Therefore, compared with the regular training procedures of SSL, our robust SSL with Meta approximation needs approximately $(2+J)\times$ training time. For Implicit Differentiation, the training time is complex to estimate, we show the running time in Fig~\ref{fig:addition_ana} (c) on the experiment part.

\vspace{0.5mm}

\noindent{\bf Connections between implicit-differentiation and meta-approximation.}

\begin{proposition}\label{IFT_connection}
Suppose that the Hessian inverse of training loss $\mathcal{L}_T$  with the model parameters $\theta$  is equal to the identity matrix 
$\frac{\partial^2 \mathcal{L}_T}{\partial \theta \partial \theta^T} = \mathbf{I}$ (i.e.,$P=0$ for implicit differentiation approach). Suppose the model parameters are optimized using single-step gradient descent (i.e., $J=1$ for the low-order approximation approach), and the model learning rate is equal to one. Then, the weight update step in both implicit differentiation and low-order approximation approach is equal.
\end{proposition}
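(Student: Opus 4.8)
The plan is to write down both weight-update directions explicitly and show they reduce to the same expression under the stated hypotheses. I would begin from the implicit-differentiation gradient in Eq.~\eqref{ID_gradient}, substitute the best-response Jacobian from Eq.~\eqref{IFT}, and then impose the assumption $\frac{\partial^2 \mathcal{L}_T}{\partial \theta \partial \theta^T} = \mathbf{I}$. With this Hessian, the Neumann series in Eq.~\eqref{neumann_serires} truncated at $P=0$ returns exactly the identity term, so the Hessian-inverse factor (term (d)) disappears and the implicit-differentiation weight gradient collapses to
\begin{equation}
\frac{\partial \mathcal{L}_V(\theta^*(\w), \w)}{\partial \w} = \frac{\partial \mathcal{L}_V}{\partial \w} - \frac{\partial \mathcal{L}_V}{\partial \theta^*(\w)} \times \frac{\partial \mathcal{L}_T}{\partial \w \partial \theta^T}. \nonumber
\end{equation}

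Next I would expand the meta-approximation update. With $J=1$ inner step and unit model learning rate $\alpha=1$, Eq.~\eqref{gd} gives $\theta^1_t(\w) = \theta^0_t - \nabla_\theta \mathcal{L}_T(\theta^0_t, \w)$, and I would differentiate the validation loss $\mathcal{L}_V(\theta^1_t(\w), \w)$ with respect to $\w$ by the chain rule, mirroring the three terms (a)--(c) of Eq.~\eqref{ID_gradient}. The crucial observation is that the inner-loop base point $\theta^0_t$ does not depend on $\w$, so the Jacobian of the one-step iterate is $\frac{\partial \theta^1_t}{\partial \w} = -\frac{\partial^2 \mathcal{L}_T}{\partial \w \partial \theta^T}$, precisely the negative mixed partial appearing in the simplified implicit-differentiation formula. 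Substituting this back shows the meta-approximation weight gradient equals the right-hand side displayed above, and since both outer updates use the same validation step size $\beta$, the resulting weight-update steps coincide.

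The main subtlety to handle carefully is the identification of the evaluation points: implicit differentiation nominally evaluates at the converged optimum $\theta^*(\w)$, invoking the stationarity condition $\frac{\partial \mathcal{L}_T}{\partial \theta} = 0$ assumed by the implicit function theorem, whereas meta-approximation evaluates at the one-step iterate $\theta^1_t$. I would argue that under the stated hypotheses these play the same role, with $\theta^1_t$ serving as the surrogate for $\theta^*(\w)$, so that the direct validation terms $\frac{\partial \mathcal{L}_V}{\partial \w}$ (arising from the regularizer $\text{Un}(\w, \uu)$) and the parameter-direct gradients $\frac{\partial \mathcal{L}_V}{\partial \theta}$ match \emph{term by term}. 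The one place where I expect care to be needed is the bookkeeping of the cross-derivative $\frac{\partial^2 \mathcal{L}_T}{\partial \w \partial \theta^T}$, where index and transpose conventions must be tracked consistently across the two derivations; once it is verified to denote the same object in both, the equality of the update steps follows immediately.
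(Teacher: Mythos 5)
Your proposal is correct and follows essentially the same route as the paper's proof: both sides write the implicit-differentiation gradient via the chain rule, truncate the Neumann series at $P=0$ so the Hessian-inverse factor becomes the identity, compute the one-step ($J=1$, $\alpha=1$) meta-approximation Jacobian $\frac{\partial \theta^1}{\partial \w} = -\frac{\partial^2 \mathcal{L}_T}{\partial \w \partial \theta^T}$, and conclude the two weight updates coincide. The only cosmetic difference is that the paper simply sets the direct term $\frac{\partial \mathcal{L}_V}{\partial \w}$ to zero (assuming an unweighted validation loss), whereas you carry it through both derivations and match it term by term — and the evaluation-point identification you flag ($\theta^*$ versus the one-step iterate) is indeed the same implicit step the paper takes when it writes $\theta^*(\w) = \theta - \alpha \frac{\partial \mathcal{L}_T}{\partial \theta}$ in both cases.
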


\begin{proof}
See Appendix~\ref{app:A4}.
\end{proof}

Proposition~\ref{IFT_connection} shows that the weight gradient from implicit differentiation is the same as the weight gradient from the meta approximation method.
Hence, the meta-approximation is the same as the Implicit differentiation method when $P=0$. This also signifies that using larger $P$ values for inverse Hessian estimations will use higher-order information and is more accurate. We designed these two efficient hyper-parameter optimization algorithms that have different trade-offs on computational efficiency and accuracy. Meta approximation was designed based on lower-order approximations of the objective in the inner loop of meta optimization due to vanishing gradients or memory constraints. Implicit Differentiation was designed based on higher-order approximations of the objective is more accurate than Meta approximation while at the same time being computationally more expensive.

\subsection{Weighted Batch Normalization} \label{sec:WBN}
In practice, most deep SSL models would use deep CNN. While BN usually serves as an essential component for many deep CNN models~\cite{he2016deep,huang2017densely}. Specifically, BN normalizes input features by the mean and variance computed within each mini-batch. At the same time, OODs would indeed affect the SSL performance due to BN (discussed this issue in Sec.~\ref{Impact_OOD}). 
To address this issue, we proposed a \textit{Weighted Batch Normalization}  (WBN) that performs the normalization for each training mini-batch with sample weights $\w$. We present the WBN in Algorithm~\ref{algorithm_WBN}, where $\epsilon$ is a constant added to the mini-batch variance for numerical stability.

\begin{proposition}\label{proposition_WBN}
Give in-distribution mini-batch $\mathcal{I}=\{\x_i\}_{i=1}^m$, OOD mini-batch $\mathcal{O}=\{\hat{\x}_i\}_{i=1}^m$, and the mixed mini-batch $\mathcal{IO}=\mathcal{I}\cup \mathcal{O}$. Denote $\mu_{\mathcal{M}}$ is the mini-batch mean of $\mathcal{M}$ (either $\mathcal{O}, \mathcal{I}$ or $\mathcal{IO}$).
With faraway OOD: $\|\mu_{\mathcal{O}}- \mu_{\mathcal{I}}\|_2 > L $, where $L$ is large ($L\gg0$), we have:

\begin{enumerate}
    \item $ \|\mu_{\mathcal{IO}} - \mu_{\mathcal{I}}\|_2 > \frac{L}{2} $ and 
    $BN_{\mathcal{I}}(\x_i) \neq BN_{\mathcal{IO}}(\x_i)$;
    \item Given perfect weights $\w=\w_\mathcal{I} \cup \w_\mathcal{O}$, where $\w_\mathcal{I}={\bm 1}$ for mini-batch $\mathcal{I}$ and $\w_{\mathcal{O}}={\bm 0}$ for mini-batch $\mathcal{O}$, then $\mu_{\mathcal{I}} = \mu^{\w}_{\mathcal{IO}}$ and $BN_{\mathcal{I}}(\x_i) = WBN_{\mathcal{IO}}(\x_i, \w)$
\end{enumerate}
where $\mu^{\w}_{\mathcal{IO}}$ is the weighted mini-batch mean of $\mathcal{IO}$, $BN_{\mathcal{M}}(\x_i)$ is traditional batch normalizing transform based on mini-batch from $\mathcal{M}$ (either $\mathcal{I}$ or $\mathcal{IO}$) and $WBN_{\mathcal{IO}}(\x_i, \w)$ is weighted batch normalizing transform based on the set $\mathcal{IO}$ with weights $\w$.
\end{proposition}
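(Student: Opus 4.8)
The plan is to treat the two parts separately: part~(1) quantifies how a faraway OOD corrupts the ordinary batch statistics, while part~(2) shows the perfect weights exactly undo that corruption. Both rest on the same elementary identities for the mean and variance of a union of two equal-sized groups, so I would first record those identities and then apply them in each regime.

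First I would use that $|\mathcal{I}| = |\mathcal{O}| = m$, so the mixed mean is the simple average of the two group means,
\[
\mu_{\mathcal{IO}} = \frac{1}{2m}\Big(\sum_{i=1}^m \x_i + \sum_{i=1}^m \hat{\x}_i\Big) = \tfrac{1}{2}\big(\mu_{\mathcal{I}} + \mu_{\mathcal{O}}\big).
\]
Subtracting $\mu_{\mathcal{I}}$ gives $\mu_{\mathcal{IO}} - \mu_{\mathcal{I}} = \tfrac12(\mu_{\mathcal{O}} - \mu_{\mathcal{I}})$, hence $\|\mu_{\mathcal{IO}} - \mu_{\mathcal{I}}\|_2 = \tfrac12\|\mu_{\mathcal{O}} - \mu_{\mathcal{I}}\|_2 > L/2$ by the faraway-OOD hypothesis, which is the first half of part~(1).

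Next, for $BN_{\mathcal{I}}(\x_i) \neq BN_{\mathcal{IO}}(\x_i)$ I would write both transforms as affine maps $BN_{\mathcal{M}}(\x_i) = \gamma (\x_i - \mu_{\mathcal{M}})/\sqrt{\sigma^2_{\mathcal{M}} + \epsilon} + \beta$ and compare their slopes and shifts. Since $\gamma,\beta$ are shared, it suffices to compare the normalized values. The shifts already differ by $\tfrac12(\mu_{\mathcal{O}} - \mu_{\mathcal{I}})$; to rule out a coincidental match through the denominators I would invoke the two-group variance decomposition $\sigma^2_{\mathcal{IO}} = \tfrac12(\sigma^2_{\mathcal{I}} + \sigma^2_{\mathcal{O}}) + \tfrac14(\mu_{\mathcal{I}} - \mu_{\mathcal{O}})^2$ (componentwise), whose separation term has total mass exceeding $L^2/4$. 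For $L \gg 0$ this inflates the $\mathcal{IO}$ denominator strictly above the $\mathcal{I}$ denominator, so the two maps differ in scale and cannot agree on the in-distribution points. This is the step I expect to be the main obstacle, since a fully rigorous inequality must exclude the degenerate configurations where the shifted mean and rescaled variance conspire to cancel; I would dispatch these by leaning on the $L\gg 0$ assumption to make the variance inflation dominate the comparison.

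Finally, for part~(2) I would substitute the perfect weights $\w_{\mathcal{I}} = \bm{1}$, $\w_{\mathcal{O}} = \bm{0}$ directly into the weighted statistics. The weighted mean collapses,
\[
\mu^{\w}_{\mathcal{IO}} = \frac{\sum_{i=1}^m 1\cdot \x_i + \sum_{i=1}^m 0\cdot \hat{\x}_i}{\sum_{i=1}^m 1 + \sum_{i=1}^m 0} = \frac{1}{m}\sum_{i=1}^m \x_i = \mu_{\mathcal{I}},
\]
and the identical cancellation applied to the weighted second moment yields $\sigma^{2,\w}_{\mathcal{IO}} = \sigma^2_{\mathcal{I}}$. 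Inserting these into the definition of $WBN$ gives $WBN_{\mathcal{IO}}(\x_i, \w) = \gamma(\x_i - \mu_{\mathcal{I}})/\sqrt{\sigma^2_{\mathcal{I}} + \epsilon} + \beta = BN_{\mathcal{I}}(\x_i)$, which closes the argument. The only care needed is to confirm that the $WBN$ transform of Algorithm~\ref{algorithm_WBN} is defined using exactly these weighted first and second moments, so that the substitution above is legitimate.
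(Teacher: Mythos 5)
Your proposal is correct and follows essentially the same route as the paper's own proof: the equal-size mean identity $\mu_{\mathcal{IO}} = \tfrac12(\mu_{\mathcal{I}}+\mu_{\mathcal{O}})$, the two-group variance decomposition $\sigma^2_{\mathcal{IO}} = \tfrac12(\sigma^2_{\mathcal{I}}+\sigma^2_{\mathcal{O}}) + \tfrac14(\mu_{\mathcal{O}}-\mu_{\mathcal{I}})^2$ with the mean-gap term dominating when $L \gg 0$, and direct substitution of the perfect weights to collapse the weighted statistics to those of $\mathcal{I}$. The only cosmetic difference is in concluding part (1): the paper writes out the resulting approximation $BN_{\mathcal{IO}}(\x_i) \approx \gamma(\x_i - \mu_{\mathcal{O}})/\|\mu_{\mathcal{O}} - \mu_{\mathcal{I}}\|_2 + \beta$ and compares it to the exact $BN_{\mathcal{I}}(\x_i)$, whereas you compare the centers and denominators directly — the same argument, with the same reliance on $L \gg 0$ to make the comparison dominate.
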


\begin{proof}
See Appendix~\ref{app:A5}.
\end{proof}

Proposition~\ref{proposition_WBN} shows that when an unlabeled set contains OODs, the traditional BN behavior could be problematic, therefore resulting in incorrect statistics estimation. e.g., the output of BN for mixed mini-bath $BN_{\mathcal{IO}}(\x_i) \approx \gamma \frac{\x_i -\mu_{\mathcal{O}}}{\|\mu_{\mathcal{O}}-\mu_{\mathcal{I}} \|_2} + \beta$, which is not our expected result ($BN_{\mathcal{I}}(\x_i) = \gamma \frac{\x_i -\mu_{\mathcal{I}}}{\sqrt{\sigma^2_{\mathcal{I}}}+\epsilon} + \beta$).
While our proposed weighted batch normalization (WBN) can reduce the OOD effect and get the expected result. Therefore, our uncertainty-aware robust SSL framework uses WBN instead of BN if the model includes a BN layer.
Ablation studies in Sec.~\ref{experiment} demonstrate that such our approach with WBN can improve performance further. Finally, our uncertainty-aware robust SSL framework is detailed in Algorithm~\ref{algorithm_robustssl}.
% A more general usage of WBN will be further explored in future works.

\begin{algorithm}
\small{
\DontPrintSemicolon
\KwIn{A mini-batch $\mathcal{M}=\{\x_i\}_{i=1}^m$ and sample weight $\textcolor{blue}{\w}=\{w_i\}_{i=1}^m$; Parameters to be learned: $\gamma, \beta$}
\KwOut{$\{t_i= \text{\textit{WBN}}_{\mathcal{M}}(\x_i,\textcolor{blue}{\w}) \}$} 
\SetKwBlock{Begin}{function}{end function}
{
  Weighted mini-batch mean,
  \vspace{-2mm}
  \[\mu^{\textcolor{blue}{\w}}_{\mathcal{M}}\leftarrow \frac{1}{\sum_{i=1}^m \textcolor{blue}{w_i}} \sum_{i=1}^m \textcolor{blue}{w_i} \x_i \nonumber \vspace{-5mm} \] \;
  Weighted mini-batch variance,
  \vspace{-2mm}
  \[ {\sigma^{\textcolor{blue}{\w}}_{\mathcal{M}}}^2  \leftarrow \frac{1}{\sum_{i=1}^m \textcolor{blue}{w_i}} \sum_{i=1}^m \textcolor{blue}{w_i}(\x_i -\mu^{\textcolor{blue}{\w}}_{\mathcal{M}})^2 \nonumber \vspace{-5mm}\]\;
  Normalize,
  \[\vspace{-3mm}\hat{x}_i \leftarrow \frac{\x_i -\mu_{\mathcal{M}}^{\textcolor{blue}{\w} } }{\sqrt{{\sigma^{\textcolor{blue}{\w}}_{\mathcal{M}}}^2 +\epsilon}}\nonumber \vspace{-3mm}\] \;
  Scale and shift,
  \[\vspace{-3mm} t_i \leftarrow  \gamma \hat{x}_i + \beta \equiv \text{\textit{WBN}}_{\mathcal{M}}(\x_i, \textcolor{blue}{\w})\nonumber \] \;
}
\caption{\textcolor{blue}{Weighted} Batch Normalization}\label{algorithm_WBN}
}
\end{algorithm}

\begin{algorithm}
\small{
\DontPrintSemicolon
\KwIn{$\mathcal{D}, \mathcal{U}$}
\KwOut{$\theta, \w$} 
\SetKwBlock{Begin}{function}{end function}
{
  $t=0$; \;
  Set learning rate $\alpha$, $\beta$,and Hessian approximation $P$; \;
  Initialize model parameters $\theta$ and weight $\w$; \;
  Apply K-means do K-clusters for $\mathcal{U}$; \;
  \If {Model includes BN layer} {
    Apply \textit{Weight-Batch Normalization} instead of BN;}
 \Repeat{convergence}
 {
 \textcolor{gray}{**** Inner loop optimization, initial $\theta^0_t = \theta_t$ **** } \;
 \For{$j=1, \ldots, J$}{$ \theta^j_{t} (\w) = \theta^{j-1}_{t} - \alpha \nabla_{\w} \mathcal{L}_{T}(\theta^{j-1}_{t}, \w_{t})$} 
 \textcolor{gray}{**** Outer loop optimization, set $\theta^*_t = \theta^J_t$ **** }\;
 \uIf {Meta Approximation} {
    update weight via
 $\w_{t+1} = \w_{t} - \beta \nabla_{\w} \mathcal{L}_{V}(\theta^*_{t}(\w), \w_{t})$;}
 \uElseIf {Implicit Differentiation} {
    Approximate inverse Hessian via Eq.~\eqref{neumann_serires};\;
    Calculate best-response Jacobian by Eq.~\eqref{IFT};\;
    Calculate weight gradient $\nabla_{\w}\mathcal{L}_{V}$ via Eq.~\eqref{ID_gradient};\;
    update weight via $\w_{t+1} = \w_{t} - \beta \cdot \nabla_{\w}\mathcal{L}_{V}$;\;
    }
\textcolor{gray}{**** Update net parameters ****}
$\theta_{t+1} = \theta_{t} - \alpha \nabla_{\theta} \mathcal{L}_{T}(\theta_{t}, \w_{t+1})$\; 
$t=t+1$\;
 }
\Return{$\theta_{t+1}, \w_{t+1}$}
}
\caption{Uncertainty-Aware Robust SSL}\label{algorithm_robustssl}
}
\end{algorithm}

\subsection{ Additional Implementation Details:}
In this subsection, we discuss additional implementational and practical tricks to make our weighted robust SSL scalable and efficient.

\noindent {\bf Last-layer gradients.} Computing the gradients over deep models is time-consuming due to the enormous number of parameters in the model. To address this issue, we adopt a last-layer gradient approximation similar to ~\cite{ash2019deep,killamsetty2021retrieve,killamsetty2021grad} by
only considering the last classification layer gradients of the classifier model in inner loop optimization (step 10 in algorithm~\ref{algorithm_robustssl}). By simply using the last-layer gradients, we achieve significant speedups in weighted robust SSL.

\noindent {\bf Infrequent update $\w$.} We update the weight parameters every $L$ iterations ($L>2$). In our experiments, we see that we can set $L = 5$ without a significant loss in accuracy. For MNIST experiments, we can be even more aggressive and set $L = 20$.

\noindent  {\bf Weight Sharing and Regularization.} Considering the entire weight vector $\w$ (overall unlabeled points) is not practical for large datasets and easily overfits (see ablation study experiments), we propose two ways to fix this. The first is weight sharing via clustering, which we call Cluster Re-weight (CRW) method. Specifically, we use an unsupervised cluster algorithm (e.g., K-means algorithm) to embed unlabeled samples into $K$ clusters and assign a weight to each cluster such that we can reduce the dimensionality of $\w$ from $|M|$ to $|K|$, where $|K| \ll |M|$. In practice, for high dimensional data, we may use a pre-trained model to calculate embedding for each point before applying the cluster method. In cases where we do not have an effective pre-trained model for embedding, we consider another variant that applies weights to every unlabeled point but considers an L1 regularization in Eq~\eqref{ssl+dss_w} for sparsity in $\w$. We show that both these tricks effectively improve the performance of reweighting and prevent overfitting on the validation set.

\section{Experiment}\label{experiment}
To corroborate our algorithm, we conduct extensive experiments
comparing our approaches (R-Meta and R-IFT) with some popular baseline methods. We aim to answer the following questions:

\noindent {\bf Question 1:} Can our approach achieve better performance on both different types of OODs with varying OOD ratios compared with baseline methods?

\noindent {\bf Question 2:} How does our approach compare in terms of running times compared with baseline methods? 

\noindent {\bf Question 3:} What is the effect of each of the components of our approach (e.g., WBN, clustering/regularization, inverse Hessian approximation,  inner loop gradient steps)?  

\subsection{Evaluation on Synthetic Dataset}
We designed a synthetic experiment to illustrate how OODs affect SSL performance. The experimental setting is the same as the setting used in the section "Impact of OOD on SSL Performance". We used the Two Moons dataset with six labeled points and 2000 unlabeled (in-distribution) points and considered two OOD types, including faraway OODs and boundary OODs. We conducted the experiments with OOD ratio = $\{25\%, 50\%, 75\%\}$ and reported the averaged accuracy rate with mean and standard deviation over ten runs. Table~\ref{OOD_ex:syn} shows \feng{that} our robust SSL (named R-SSL-IFT (Implicit Differentiation) and R-SSL-Meta (Meta approximation)) is more effective than the four baselines on test accuracy. We also conducted experiments on additional synthetic datasets and observed similar trends of the results, \feng{as} shown in Fig~\ref{fig:syn_1}.

\begin{figure}[htb]
    \centering % <-- added
\begin{subfigure}{0.3\textwidth}
  \includegraphics[width=\linewidth]{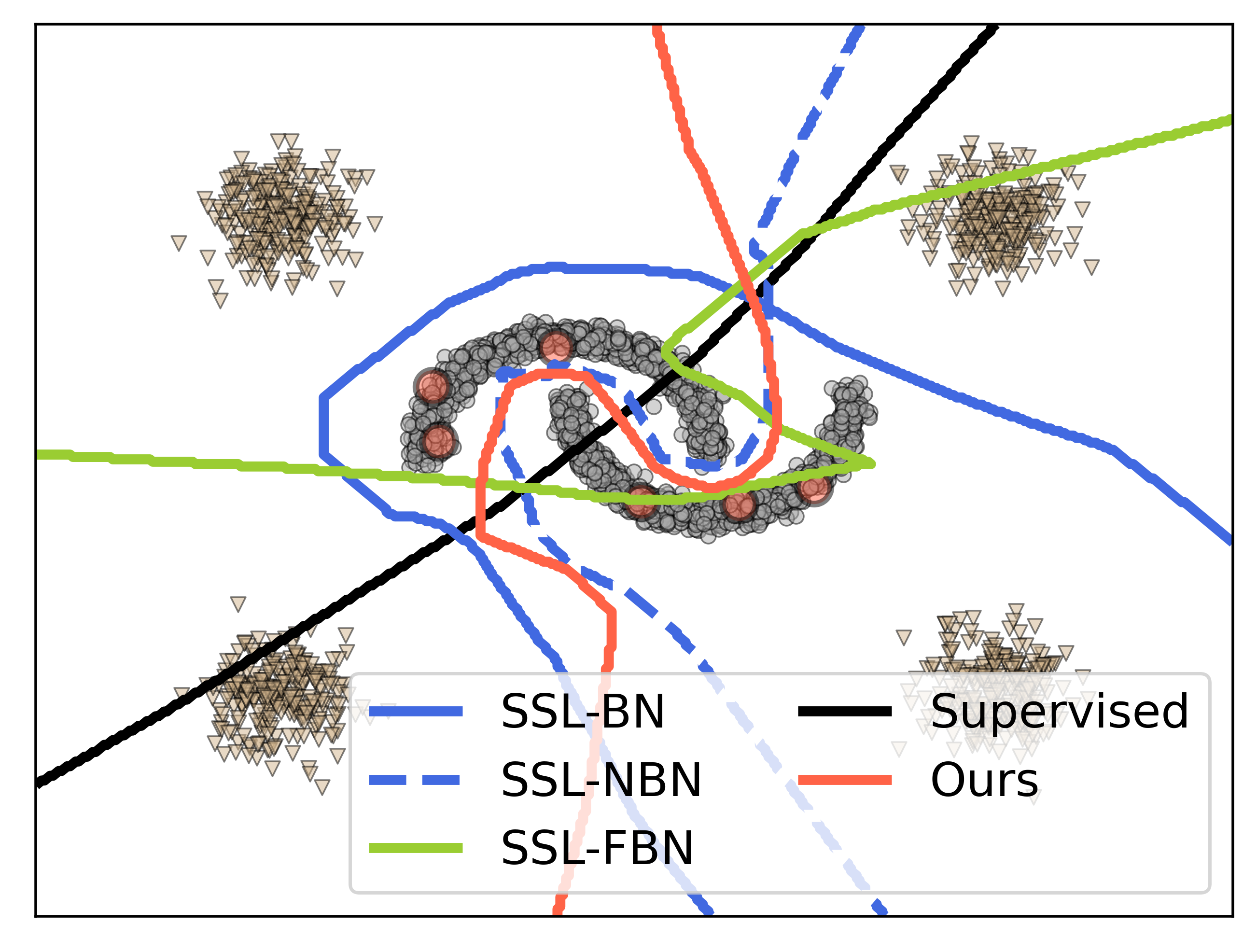}
  \caption{Faraway OODs}
  \label{fig:21}
\end{subfigure}\hfil % <-- added
\begin{subfigure}{0.3\textwidth}
  \includegraphics[width=\linewidth]{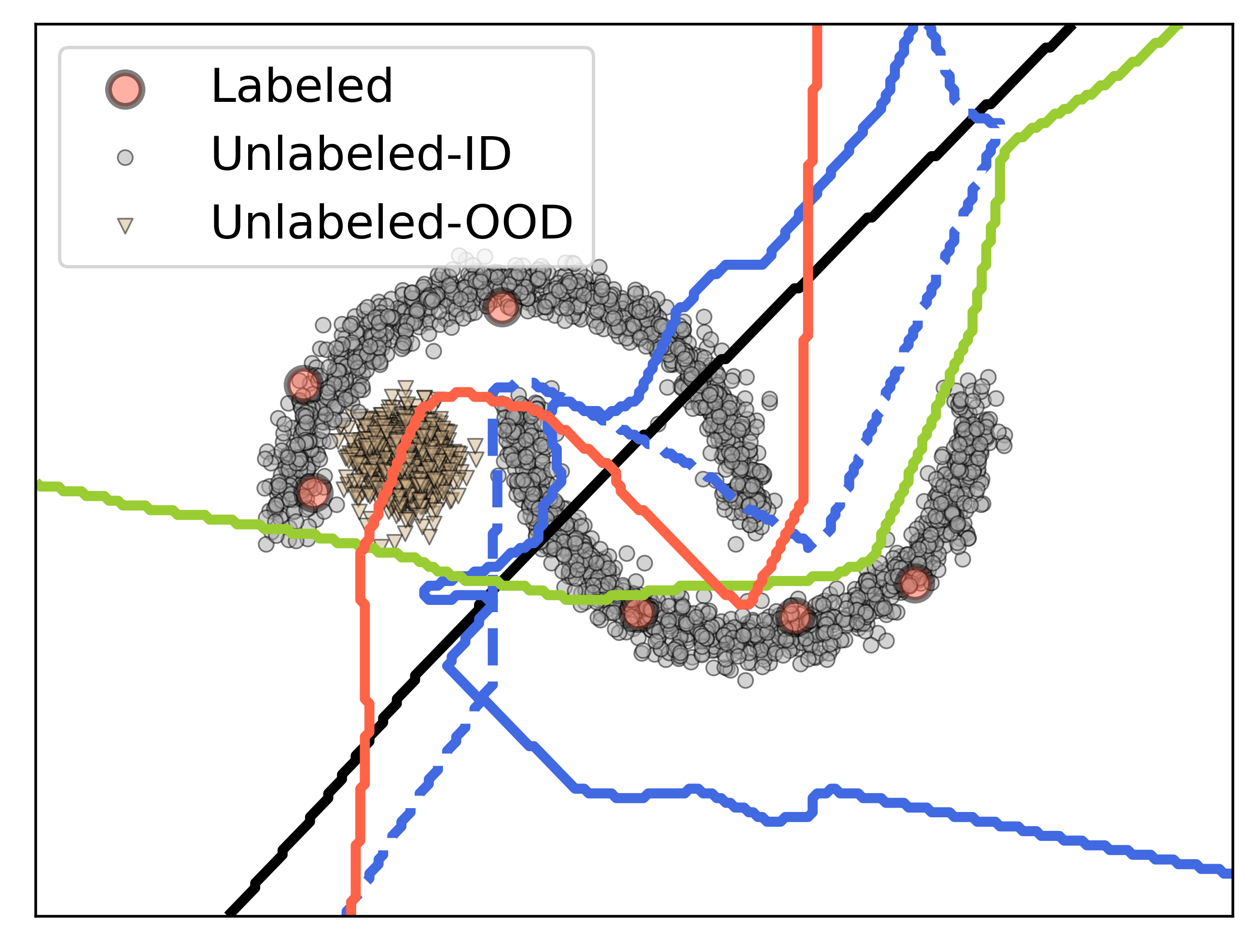}
  \caption{Boundary OODs}
  \label{fig:22}
\end{subfigure}\hfil % <-- added
\begin{subfigure}{0.3\textwidth}
  \includegraphics[width=\linewidth]{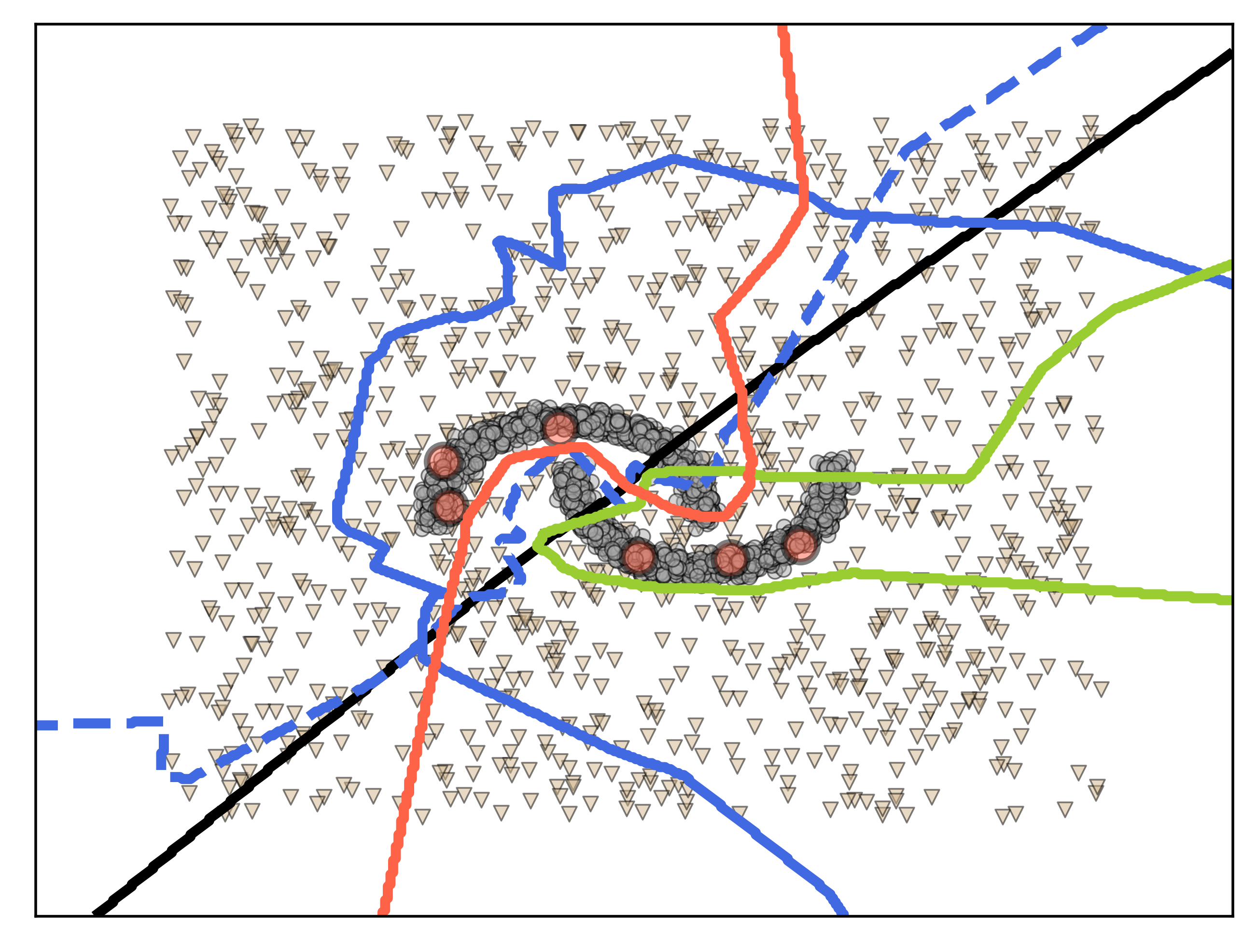}
  \caption{Random Noise}
  \label{fig:23}
\end{subfigure}

\medskip
\begin{subfigure}{0.3\textwidth}
  \includegraphics[width=\linewidth]{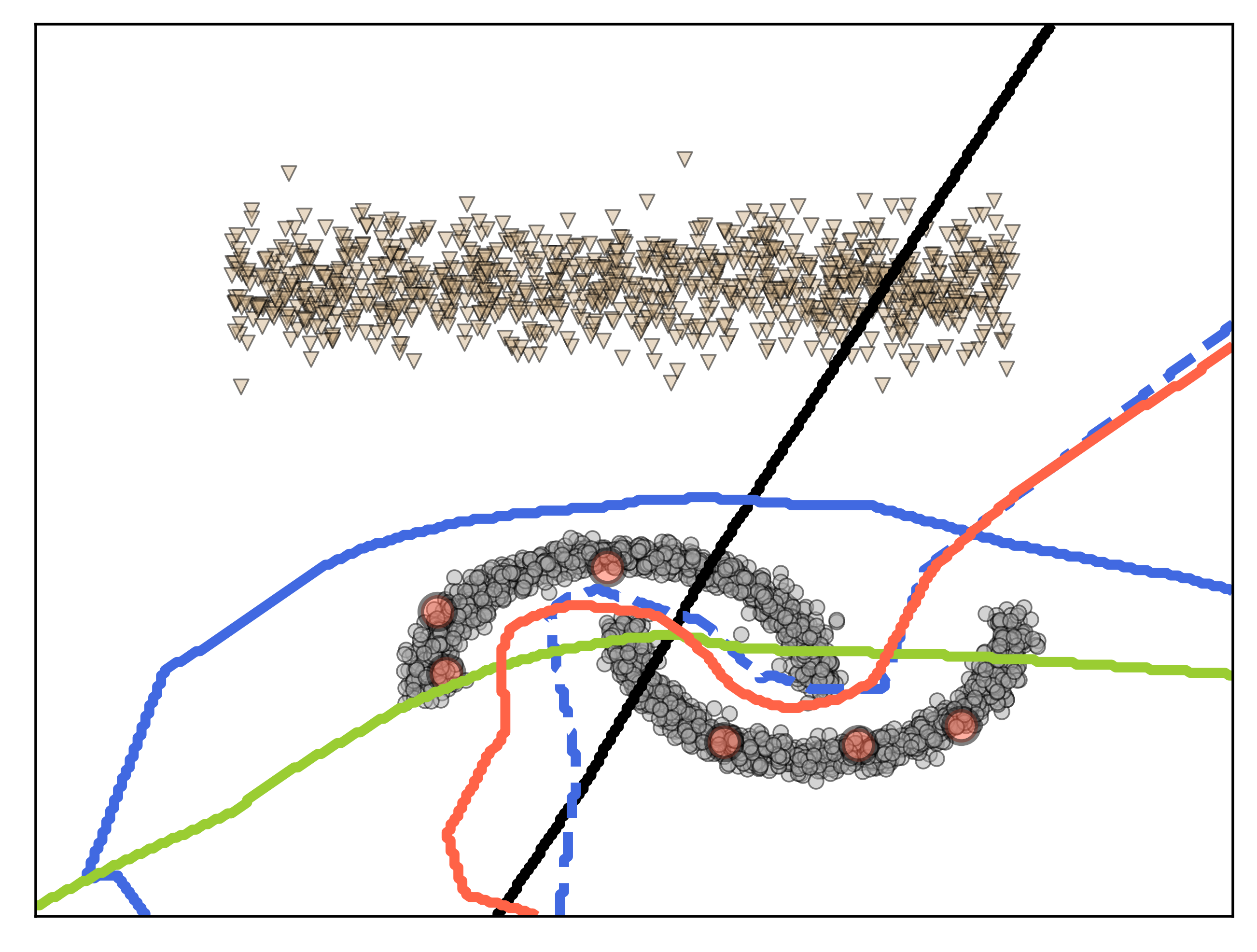}
  \caption{Faraway OODs}
  \label{fig:24}
\end{subfigure}\hfil % <-- added
\begin{subfigure}{0.3\textwidth}
  \includegraphics[width=\linewidth]{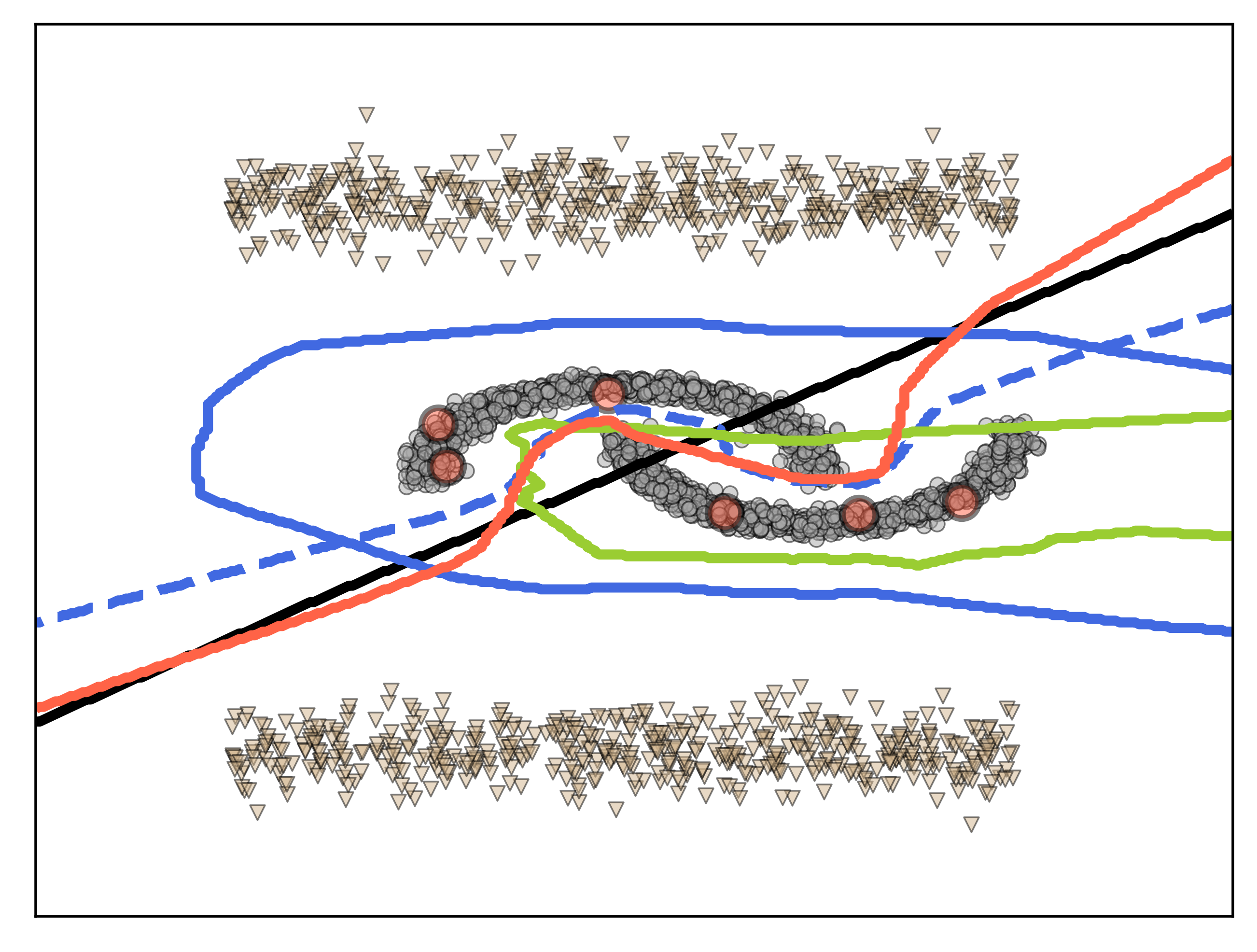}
  \caption{Faraway OODs}
  \label{fig:25}
\end{subfigure}\hfil % <-- added
\begin{subfigure}{0.3\textwidth}
  \includegraphics[width=\linewidth]{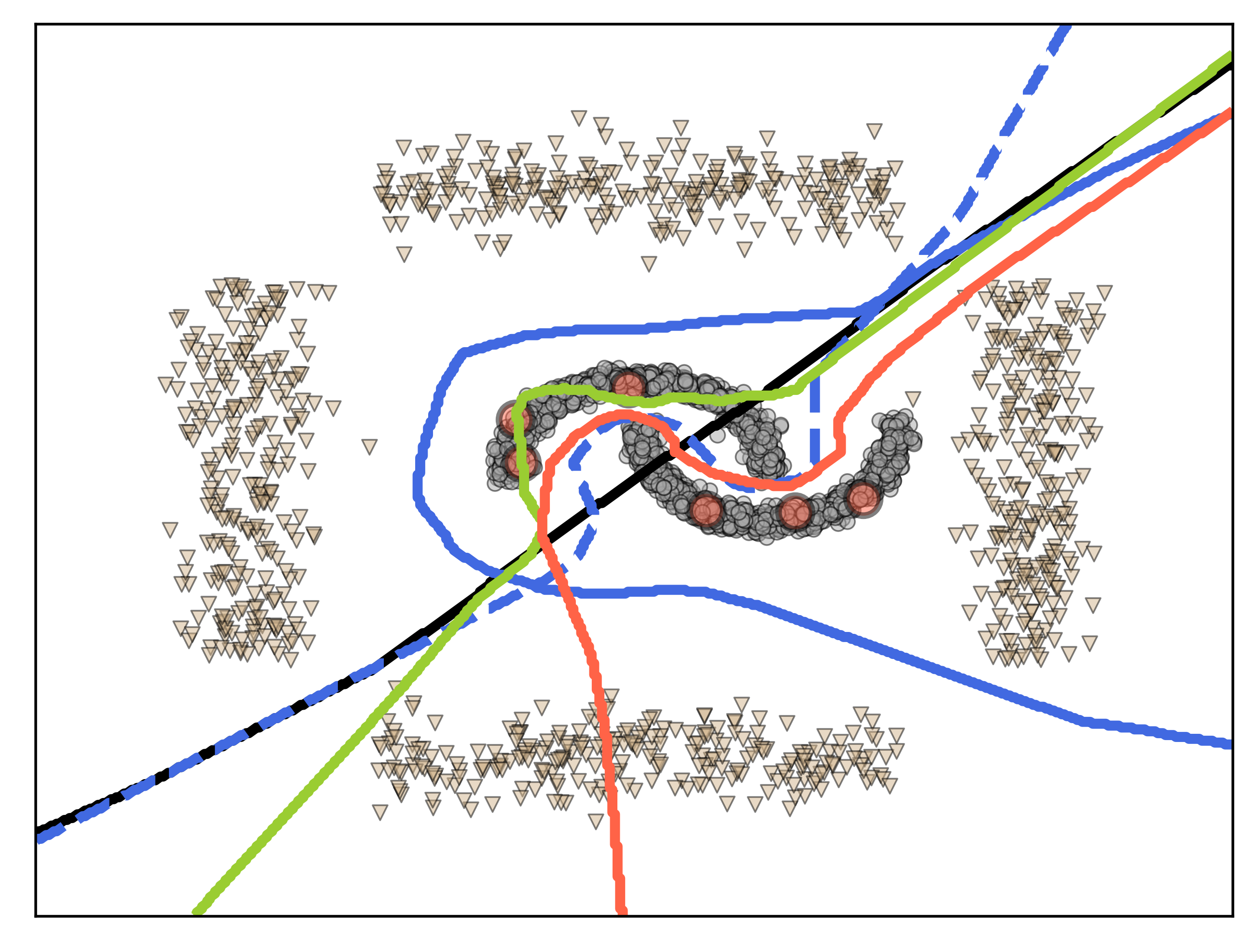}
  \caption{Faraway OODs}
  \label{fig:26}
\end{subfigure}
\caption{Additional experiment\feng{s} on \feng{the} synthetic dataset.}
\label{fig:syn_1}
\end{figure}

\begin{table}[th!]
\small
\caption{Test accuracies for \feng{the} two moons dataset \feng{(}$P=10, J=3$ for R-SSL-IFT and $J=1$ for R-SSL-Meta\feng{)}.} 
\vspace{1mm}
\centering
\begin{tabular}{c|ccc}
\hline
\multirow{2}{*}{Model} & \multicolumn{3}{c}{\textit{Faraway OODs}}  \\  
                                      & 25\%& 50\% &75\% \\ \hline
                  Supervised  &84.4$\pm$ 0.3 & 84.4$\pm$ 0.3 &  84.4$\pm$ 0.3 \\
                  SSL-NBN  &100.0$\pm$ 0.0 & 100.0$\pm$ 0.0&  100.0$\pm$ 0.0 \\
                  SSL-BN  &60.7$\pm$ 1.5 &50.0$\pm$ 0.0  & 50.0$\pm$ 0.0  \\
                  SSL-FBN  &89.7$\pm$ 0.7 &87.0$\pm$ 1.1  & 81.0$\pm$ 1.7  \\
                  \textbf{R-SSL-IFT}  &\textbf{100.0$\pm$ 0.0} &\textbf{100.0$\pm$ 0.0}  &\textbf{100.0$\pm$ 0.0}  \\
                  \textbf{R-SSL-Meta}  &\textbf{100.0$\pm$ 0.0} &\textbf{100.0$\pm$ 0.0}  &\textbf{100.0$\pm$ 0.0}  \\
                 \hline
& \multicolumn{3}{c}{\textit{Boundary OODs}}  \\  
                                      & 25\%& 50\% &75\% \\ \hline
                  Supervised  &84.4$\pm$ 0.3 & 84.4$\pm$ 0.3 &  84.4$\pm$ 0.3 \\
                  SSL-NBN  &87.3$\pm$ 0.3 & 83.4$\pm$ 0.4 &  82.4$\pm$ 0.5 \\
                  SSL-BN  &84.7$\pm$ 0.4 & 82.3$\pm$ 0.3 &  80.4$\pm$ 0.5  \\
                  SSL-FBN  &89.7$\pm$ 0.7 &87.0$\pm$ 1.1  & 81.0$\pm$ 1.7  \\
                  \textbf{R-SSL-IFT}  &\textbf{100.0$\pm$ 0.0} &\textbf{100.0$\pm$ 0.0}  &\textbf{100.0$\pm$ 0.0}  \\
                 \textbf{ R-SSL-Meta}  &\textbf{100.0$\pm$ 0.0} &\textbf{100.0$\pm$ 0.0}  &\textbf{100.0$\pm$ 0.0} \\
                 \hline 
\end{tabular}
\vspace{2mm}
\label{OOD_ex:syn}
\end{table}

\begin{table}[th!]
\small
\caption{Test accuracies for different $P$ at OOD ratio = 50\% on the synthetic dataset.} 
\vspace{1mm}
\centering
\begin{tabular}{c|cc}
\hline
Model &    \textit{ Faraway OODs} & \textit{Boundary OODs} \\
                                    \hline
                  R-SSL-IFT P=1  &55.0$\pm$ 2.1 &83.9$\pm$ 0.7  \\
                 R-SSL-IFT P=5  &91.0$\pm$ 3.1 &93.9$\pm$ 0.9 \\
                 R-SSL-IFT P=10  &100.0$\pm$ 0.0 &\textbf{100.0$\pm$ 0.0} \\
                 \hline 

\end{tabular}
\label{OOD_ex:syn_P}
\end{table}

\subsection{Real-world Dataset Details}
\noindent {\bf Datasets.}
We consider four image classification benchmark datasets. (1)~\textbf{MNIST}: a handwritten digit classification dataset, with
50,000/ 10,000/ 10,000 training/validation/test samples, with training data, split into two groups -- labeled and unlabeled in-distribution (ID) images (labeled data has ten images per class), and with two types of OODs: a) Fashion MNIST, b) Mean MNIST (where the OOD instances are mean images of two classes, more details refer to see Appendix D);
(2)~\textbf{CIFAR10}: a natural image dataset with 45,000/ 5,000/ 10,000 training/validation/test samples from 10 object classes, and following~\cite{oliver2018realistic}, we adapt CIFAR10 to a 6-class classification task, using 400 labels per class (from the 6 classes) and rest of the classes OOD (ID classes are "bird", "cat", "deer", "dog", "frog", "horse", and OOD data are from classes: "airline", "automobile", "ship", "truck"); 
(3)~\textbf{CIFAR100}: another natural image dataset with 45,000/5000/10,000 training/validation/test images, similar to CIFAR10, we adapt CIFAR100 to a 50-class classification task, with 40 labels per class -- the ID classes are the first 50 classes, and OOD data corresponds to the last 50 classes;
(4)~\textbf{SVHN-extra}: This is SVHN dataset with 531,131 additional digit images~\cite{tarvainen2017mean}, and we adapt SVHN-extra to a 5-class classification task, using 400 labels per class. The ID classes are the first five classes, and OOD data corresponds to the last five classes.

% \noindent {\bf ID Datasets.} We used four image classification benchmark datasets for SSL. (1)~\textbf{MNIST}: a handwritten digit classification dataset; (2) 

\noindent {\bf Comparing Methods.}
To evaluate the effectiveness of our proposed weighted robust SSL approaches, 
we compare with five state-of-the-art robust SSL approaches, including UASD~\cite{chen2020semi}, DS3L (DS3L)~\cite{guo2020safe}, L2RW~\cite{ren2018learning}, and MWN~\cite{shu2019meta}. The last two approaches L2RW and MWN, were originally designed for robust supervised learning (SL), and we adapted them to robust SSL by replacing the supervised learning loss function with an SSL loss function. 
% We note that the adapted method MWN is similar to the robust SSL method DS3L, except that it uses a validation set to optimize the parameters of a weight function of unlabeled examples, but DS3L uses the labeled training set for the same optimization procedure instead. 
We compare these robust approaches on four representative SSL methods, including Pseudo-Label (PL)~\cite{lee2013pseudo}, $\Pi$-Model (PI)~\cite{laine2016temporal,sajjadi2016regularization}, Mean Teacher (MT)~\cite{tarvainen2017mean}, and Virtual Adversarial Training (VAT)~\cite{miyato2018virtual}. One additional baseline is the supervised learning method, named "Sup," which ignored all the unlabeled examples during training. 
All the compared methods (except UASD~\cite{chen2020semi}) were built upon the open-source Pytorch implementation\footnote{\url{https://github.com/perrying/realistic-ssl-evaluation-pytorch}} by~\cite{oliver2018realistic}. As UASD has not released its implementation, we implemented UASD by ourselves.
% and directly used the results of UASD for the CIFAR10 dataset from its original paper~\cite{chen2020semi}, as we used the same experiment settings for this dataset. 
For DS3L, we implemented it based on the released code \footnote{\url{https://github.com/guolz-ml/DS3L}}.
For L2RW~\cite{ren2018learning}, we used the open-source Pytorch implementation\footnote{\url{https://github.com/danieltan07/learning-to-reweight-examples}}and adapted it to the SSL settings. For MWN~\cite{shu2019meta}, we used the authors' implementation\footnote{\url{https://github.com/xjtushujun/meta-weight-net}} and adapt to SSL. 

\begin{table}[th!]
\caption{Hyperparameter settings used in MNIST experiments for four representative SSL. All robust SSL methods (e.g., ours (WR-SSL), DS3L, and UASD) are developed based on these representative SSL methods.} 
\centering
\begin{tabular}{lc}
\hline
 \multicolumn{2}{c}{\textbf{Shared}} \\
\hline
Learning decayed by a factor of & 0.2\\
at training iteration & 1,000 \\
coefficient = 1 (Do not use warmup) & \\
\hline 
 \multicolumn{2}{c}{\textbf{Supervised}} \\
\hline
Initial learning rate & 0.003 \\
\hline
 \multicolumn{2}{c}{$\Pi$-\textbf{Model}} \\
\hline
Initial learning rate &0.003\\
Max consistency coefficient & 20\\
\hline
 \multicolumn{2}{c}{\textbf{Mean Teacher}} \\
 \hline
Initial learning rate & 0.0004 \\
Max consistency coefficient & 8 \\ 
Exponential moving average decay & 0.95 \\
\hline
\multicolumn{2}{c}{\textbf{VAT}} \\
\hline
Initial learning rate &0.003 \\
Max consistency coefficient &0.3 \\
VAT $\epsilon$ & 3.0 \\
VAT $\xi$ & $10^{-6}$ \\
\hline
\multicolumn{2}{c}{\textbf{Pseudo-Label}} \\
\hline
Initial learning rate& 0.0003 \\
Max consistency coefficient &1.0 \\
Pseudo-label threshold &0.95 \\
\hline
\end{tabular}
\label{hyper_mnist}
\end{table}

\noindent {\bf Setup.}
In our experiments, we implement our approaches (Ours-SSL) for four representative SSL methods, including Pseudo-Label (PL), $\Pi$-Model (PI), Mean Teacher (MT), and Virtual Adversarial Training (VAT). The term "SSL" in Ours-SSL represents the SSL method, e.g., (Ours-VAT denotes our weighted robust SSL algorithm implemented based on VAT.
We used the standard LeNet model as the backbone for the MNIST experiment and used \textit{WRN-28-2}~\cite{zagoruyko2016wide} as the backbone for CIFAR10, CIFAR100, and SVHN experiments. For a comprehensive and fair comparison of the CIFAR10 experiment, we followed the same experiment setting of~\cite{oliver2018realistic}. All the compared methods were built upon the open-source Pytorch implementation by ~\cite{oliver2018realistic}. The code and datasets are temporarily available for reviewing purposes at here\footnote{\url{https://anonymous.4open.science/r/WR-SSL-406F/README.md}}.
Our code and datasets are also submitted as supplementary material.

\noindent {\bf Hyperparameter setting.} For our WR-SSL approach, we update the weights only using last layer for the inner optimization,  we set $J=3$ (for inner loop gradient steps), $P = 5$ (for inverse Hessian approximation), $K=20$ (for CRW), $\lambda = 10^{-7}$ (for L1), and $L = 5$ (for infrequent update) for all experiments. We trained all the networks for 2,000 updates with a batch size of 100 for MNIST experiments, and 500,000 updates with a batch size of 100 for CIFAR10, CIFAR100, and SVHN experiments.
We did not use any form of early stopping but instead continuously monitored the validation set performance and reported test errors at the point of the lowest validation error.
We show the specific hyperparameters used with four representative SSL methods on MNIST experiments in Table~\ref{hyper_mnist}. For CIFAR10, we used the same hyperparameters as~\cite{oliver2018realistic}. For CIFAR100 and SVHN datasets, we used the same hyperparameters as CIFAR10.

\subsection{Performance with different real-world OOD datasets}
In all experiments, we report the performance over five runs. Denote OOD ratio= $ \mathcal{U}_{ood}/(\mathcal{U}_{ood}+ \mathcal{U}_{in})$ where $\mathcal{U}_{in}$ is ID unlabeled set,  $\mathcal{U}_{ood}$ is OOD unlabeled set, and $\mathcal{U}=\mathcal{U}_{in} + \mathcal{U}_{ood}$.

\begin{figure*}[!t]
    \centering
    \begin{subfigure}[b]{0.24\textwidth}
        \centering
        \includegraphics[width=\linewidth]{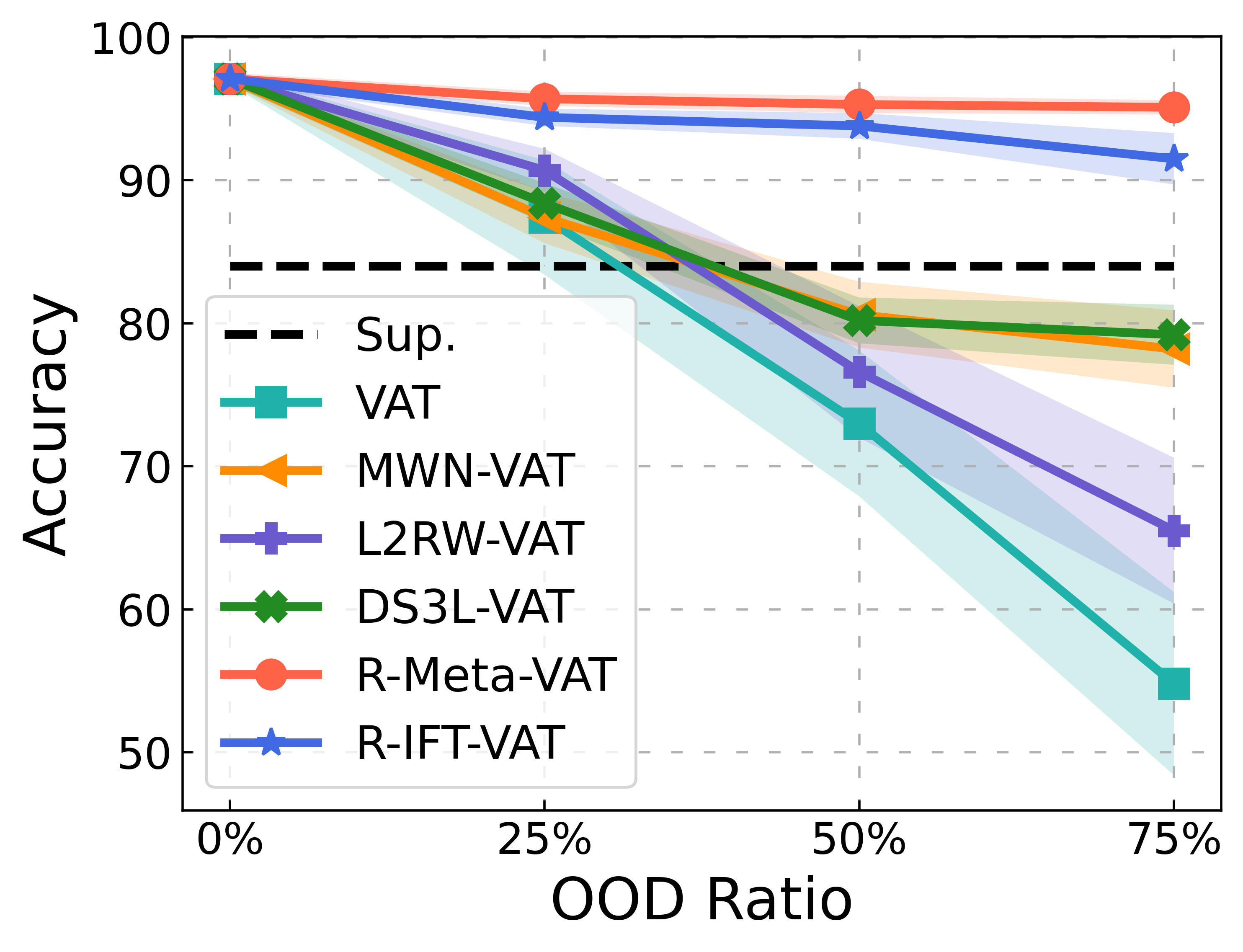}
        \caption{FashionMNIST.}
    \end{subfigure}
    \hfill
    \begin{subfigure}[b]{0.24\textwidth}
        \centering
        \includegraphics[width=\linewidth]{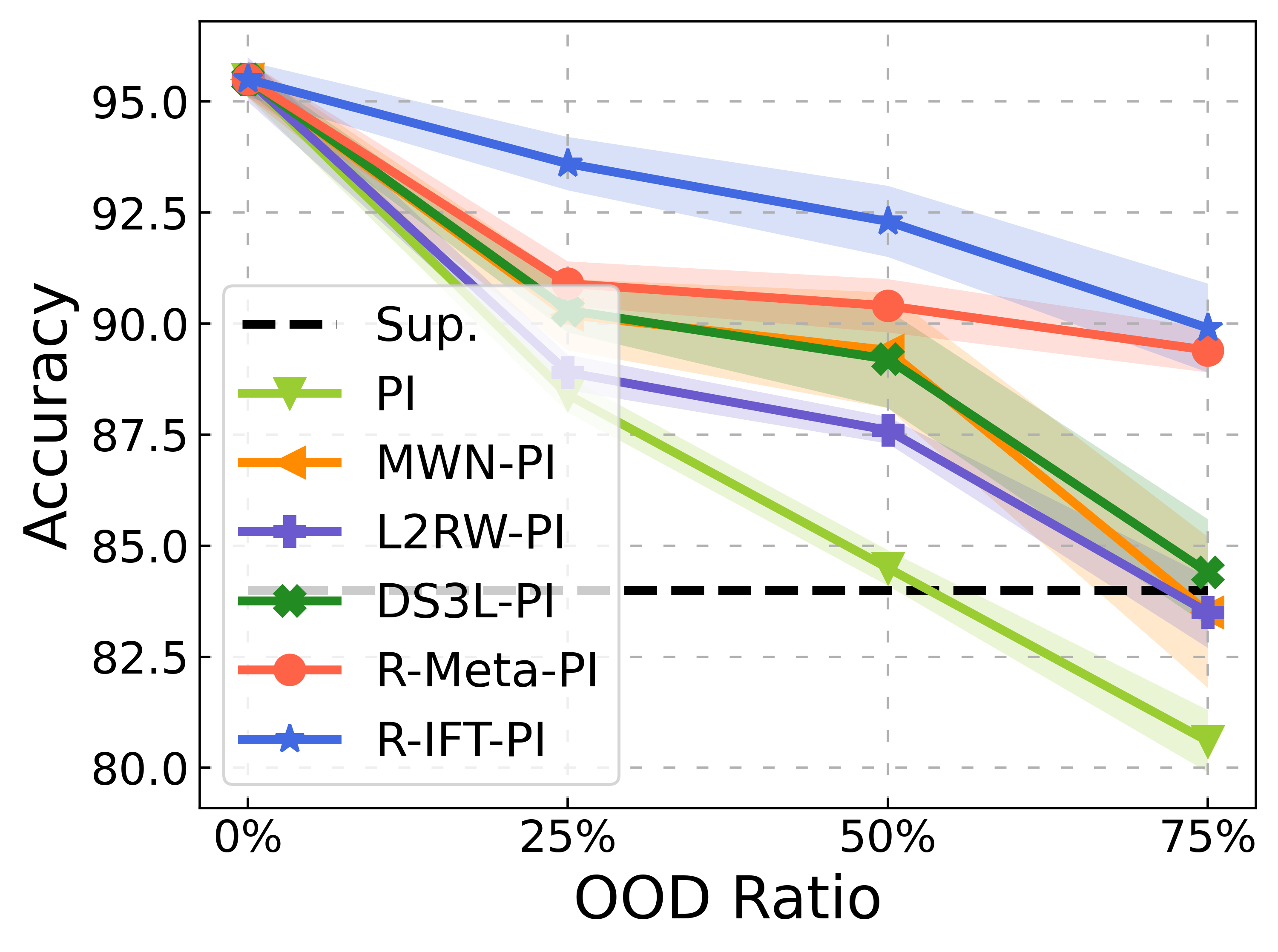}
        \caption{FashionMNIST.}
    \end{subfigure}
     \hfill
    \begin{subfigure}[b]{0.24\textwidth}
        \centering
        \includegraphics[width=\linewidth]{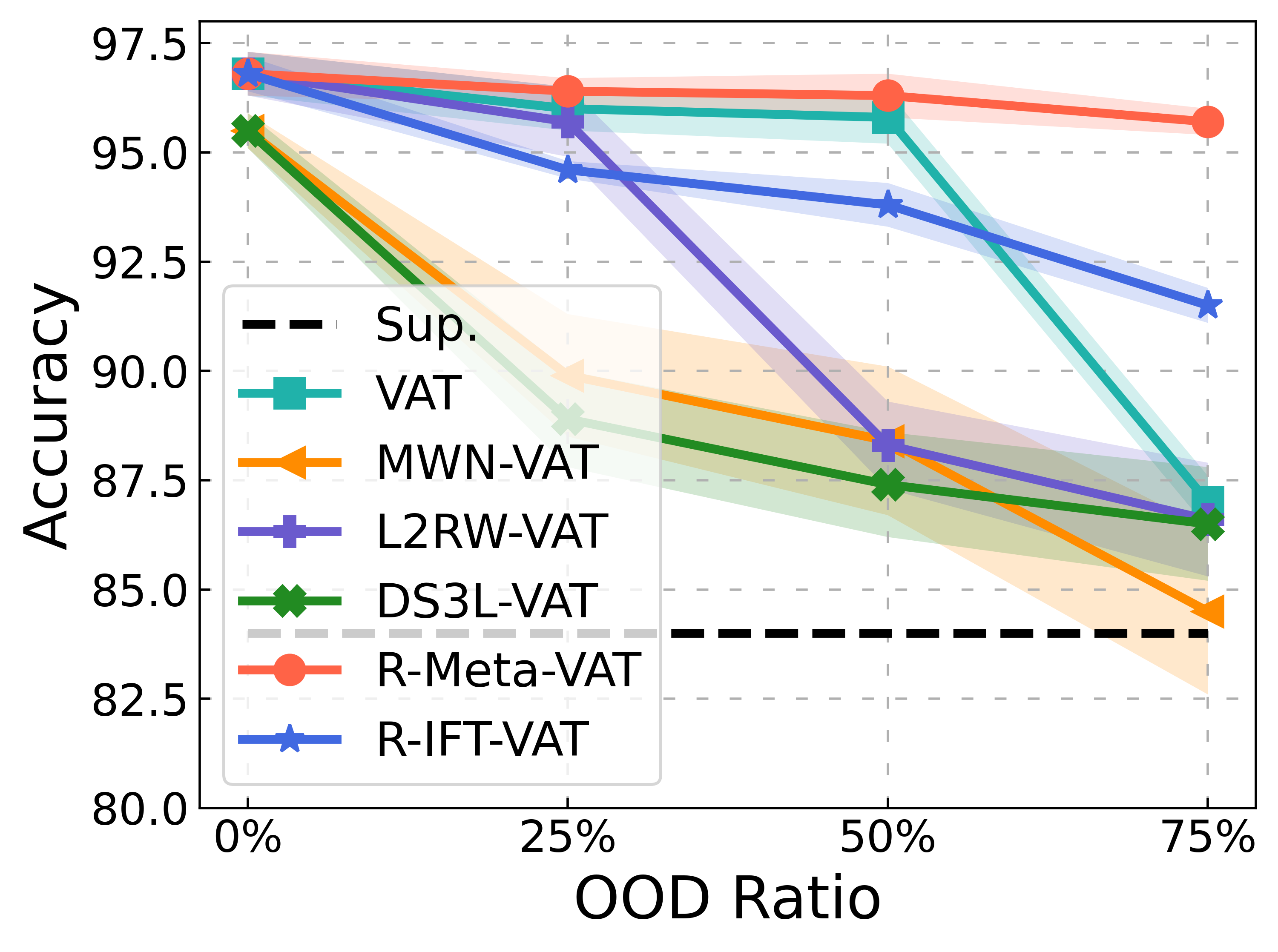}
        \caption{Boundary OOD.}
    \end{subfigure}
     \hfill
    \begin{subfigure}[b]{0.24\textwidth}
        \centering
        \includegraphics[width=\linewidth]{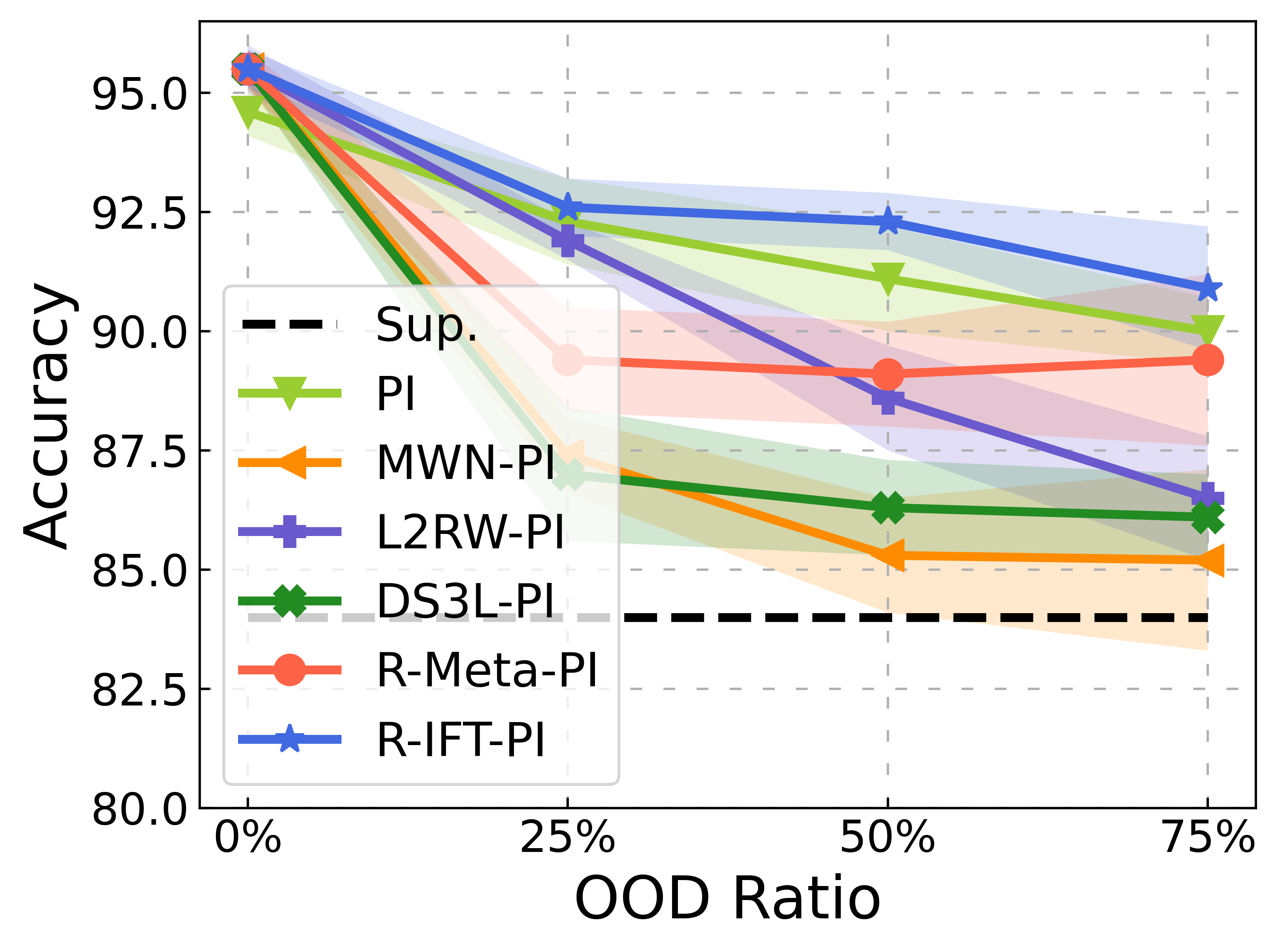}
        \caption{Boundary OOD.}
    \end{subfigure}
    \caption{Classification accuracy with varying OOD ratio on MNIST. (a)-(b) consider faraway OODs with batch normalization; (c)-(d) consider boundary OODs without batch normalization. Shaded regions indicate standard deviation.}
    \label{fig:rssl_mnist}
\end{figure*}

\noindent {\bf Impact of faraway OODs on SSL performance (with batch normalization).}  
%To simulate this problem setting, 
We used FashionMNIST~\citep{xiao2017fashion} dataset to construct a set of OODs $\mathcal{U}_{ood}$ faraway from the decision boundary of the MNIST dataset, as FashionMNIST and MNIST have been shown very different and considered as cross-domain benchmark datasets~\citep{meinke2019towards}. 
As shown in Fig~\ref{fig:rssl_mnist} (a)-(b),  with the OOD ratio increase, the performance of the existing SSL method decreases rapidly, whereas our approach can still maintain clear performance improvement, i.e., the outperformance of our methods are fairly impressive(e.g., 10\% increase with R-IFT-PI over PI model when OOD ratio = 75\%). Compare with other robust SSL methods, our methods surprisingly improves the accuracy and suffers much less degradation under a high OOD ratio. 

\begin{figure*}[!t]
    \centering
    \begin{subfigure}[b]{0.4\textwidth}
        \centering
        \includegraphics[width=\linewidth]{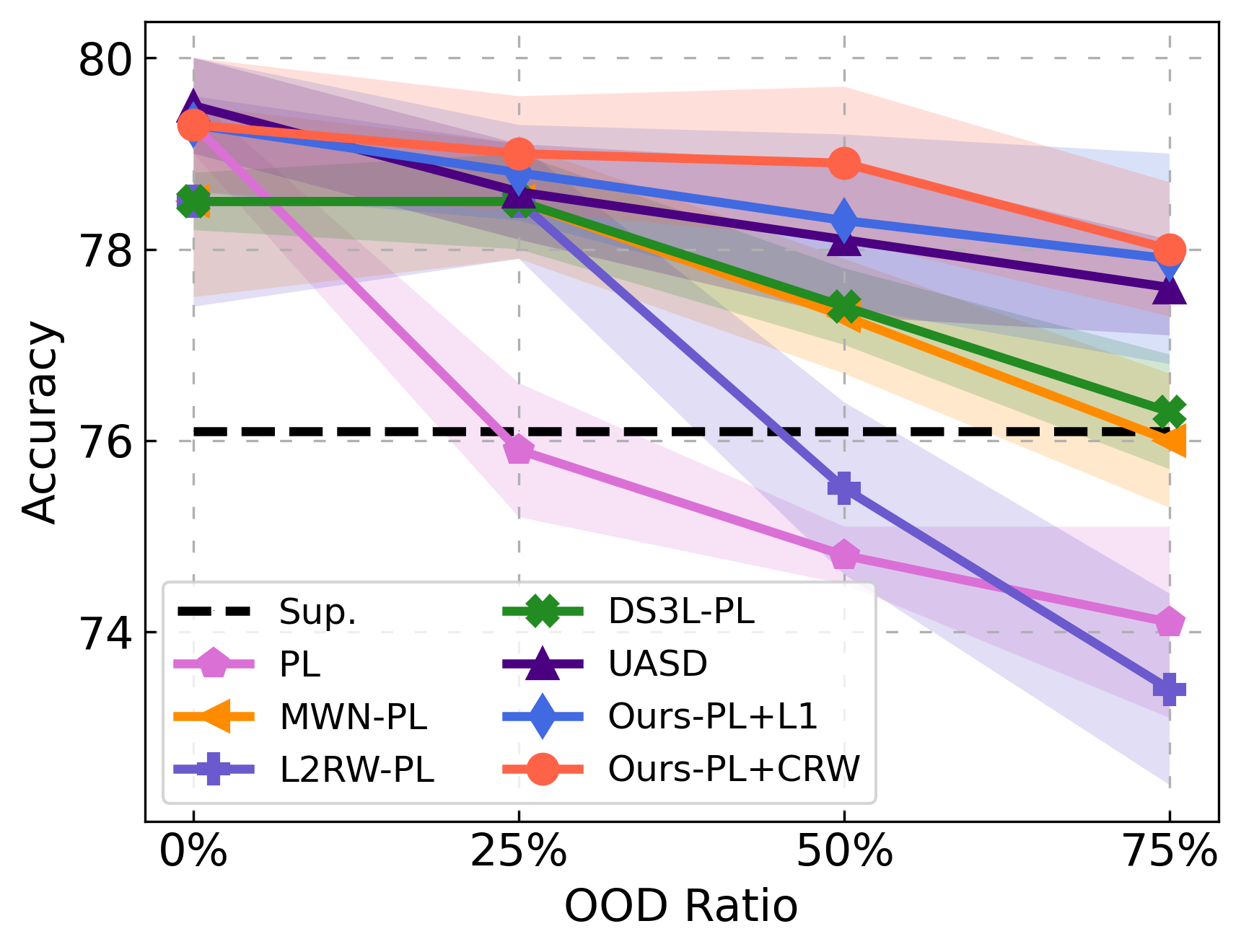}
        \caption{CIFAR10.}
    \end{subfigure}
    \begin{subfigure}[b]{0.4\textwidth}
        \centering
        \includegraphics[width=\linewidth]{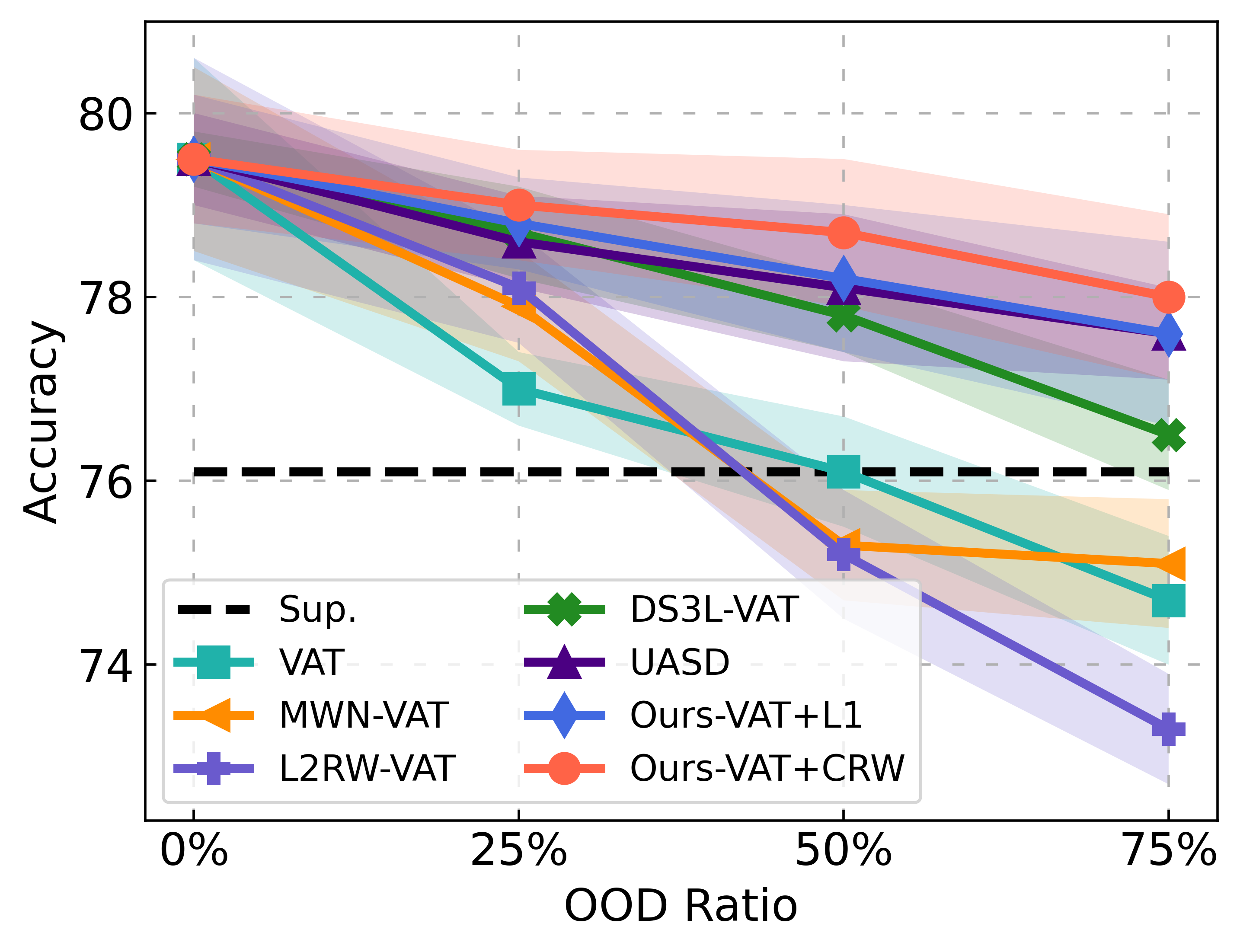}
        \caption{CIFAR10.}
    \end{subfigure}
    \caption{Classification accuracy with varying OOD ratio on CIFAR10. We use \textit{WRN-28-2} (contains BN module) as the backbone. Shaded regions indicate standard deviation.}
  \label{fig:cifar6_acc}
\end{figure*}

\noindent {\bf Impact of boundary OODs on SSL performance (without batch normalization).} We generate boundary OOD by mix up (fusion) existing ID unlabeled samples, i.e., $\hat{\x}_{ood} = 0.5(\x_i+ \x_j)$, where $\x_i$, $\x_j$ is ID images from a different class, and $\hat{\x}_{ood}$ can be regarded as boundary OOD~\citep{guo2019mixup}. As shown in Fig~\ref{fig:rssl_mnist} (c)-(d), we get a similar result pattern that the accuracy of existing SSL methods decreases when the OOD ratio increases. Across different OOD ratios, particularly our method significantly outperformed among all (e.g., 10\% increase with Meta-VAT over VAT when OOD ratio = 75\%).
We also conduct experiments based on other SSL algorithms, results are shown in the supplementary material.

\noindent {\bf Impact of mixed OODs on SSL performance (with batch normalization).} 
We followed~\citep{oliver2018realistic} to adapt CIFAR10 for a 6-class classification task, using 400 labels per class. The ID classes are: "bird", "cat", "deer", "dog", "frog", "horse", and OOD classes are: "airline", "automobile", "ship", "truck". 
As our implementation follows ~\citep{oliver2018realistic} that freeze BN layers for the WRN model, we freeze BN layers for all the methods for CIFAR10 for fair comparisons. In this dataset, the examples of the OOD classes were considered as OODs. As these OODs are from the same dataset, it may have OODs close to or far away from the decision boundary of the ID classes. We hence called these OODs as mixed-type OODs. The averaged accuracy of all compared methods v.s. OOD ratio is plotted in Fig~\ref{fig:cifar6_acc}. Across different OOD ratios, particularly our method significantly outperformed among all, strikingly exceeding the performance when the OOD ratio is large (i.e., 4.5\% increase with Ours-Meta over L2RW when OOD ratio = 75\%). Unlike most SSL methods that degrade drastically when increasing the OOD ratio, ours achieves a stable performance even in a 75\% OOD ratio. 

\begin{figure*}[!t]
    \centering
    \begin{subfigure}[b]{0.2\textwidth}
        \centering
        \includegraphics[width=\linewidth]{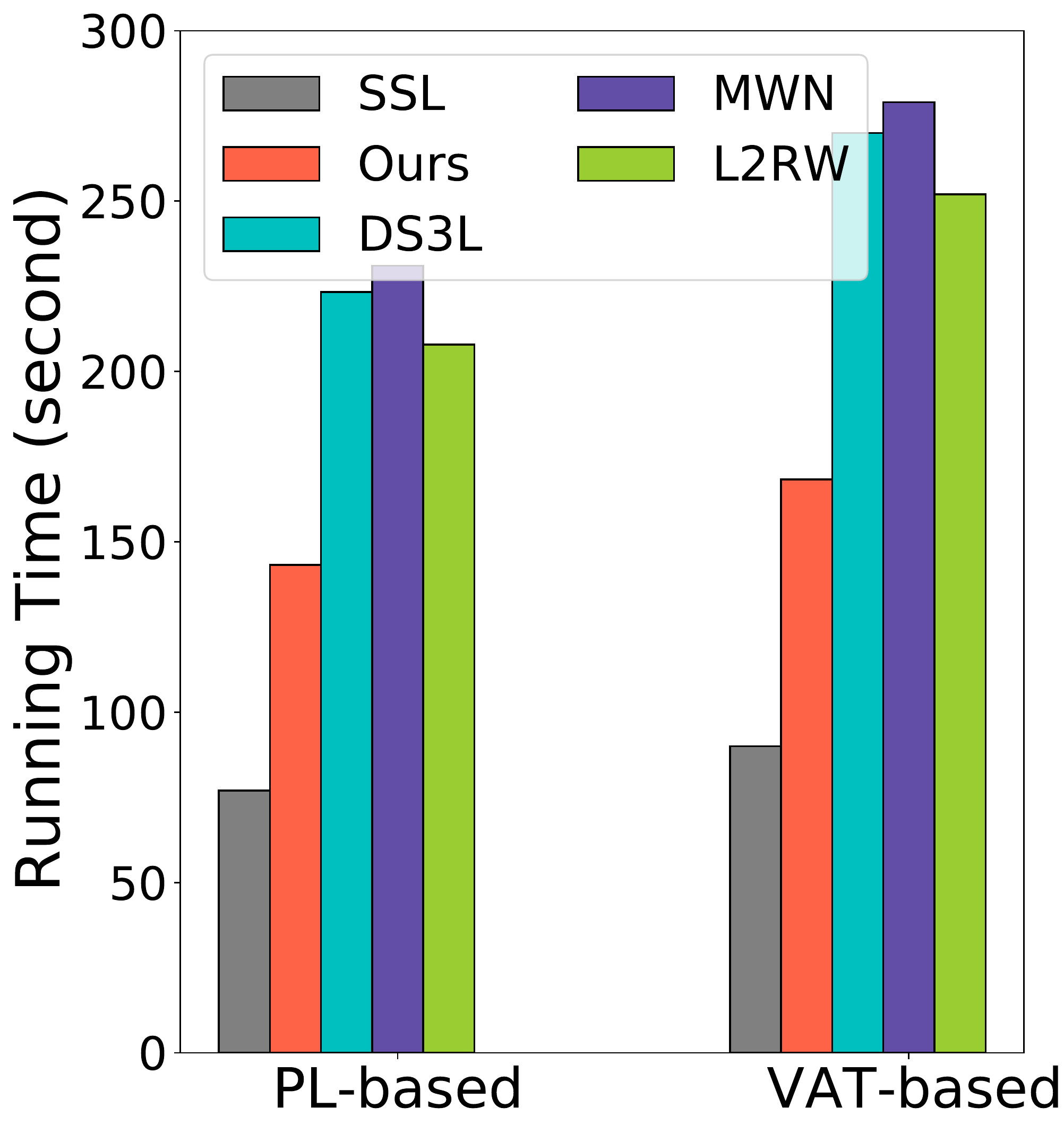}
        \caption{MNIST.}
    \end{subfigure}
    \begin{subfigure}[b]{0.2\textwidth}
        \centering
        \includegraphics[width=\linewidth]{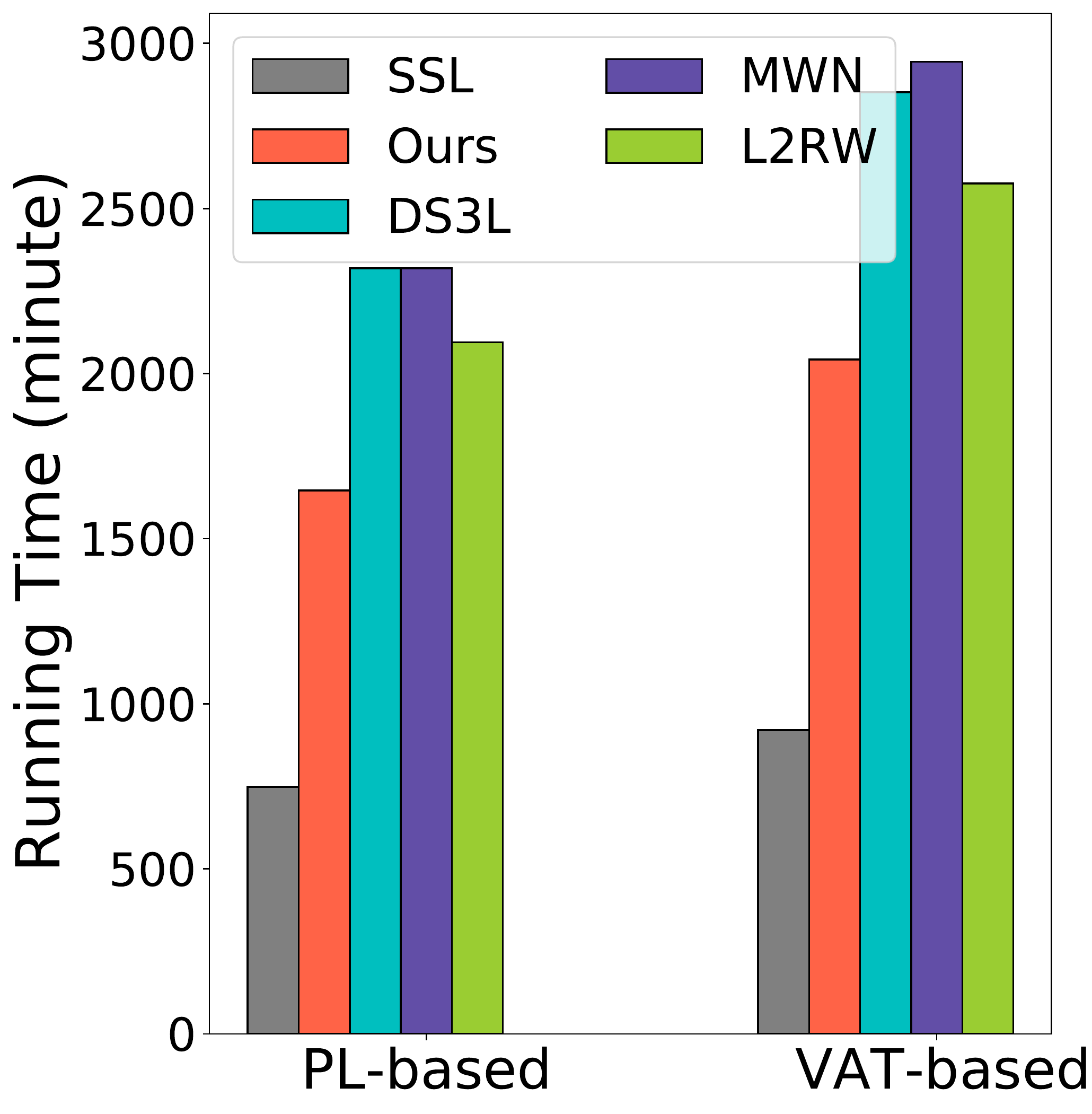}
        \caption{CIFAR10.}
    \end{subfigure}
    \begin{subfigure}[b]{0.18\textwidth}
        \centering
        \includegraphics[width=\linewidth]{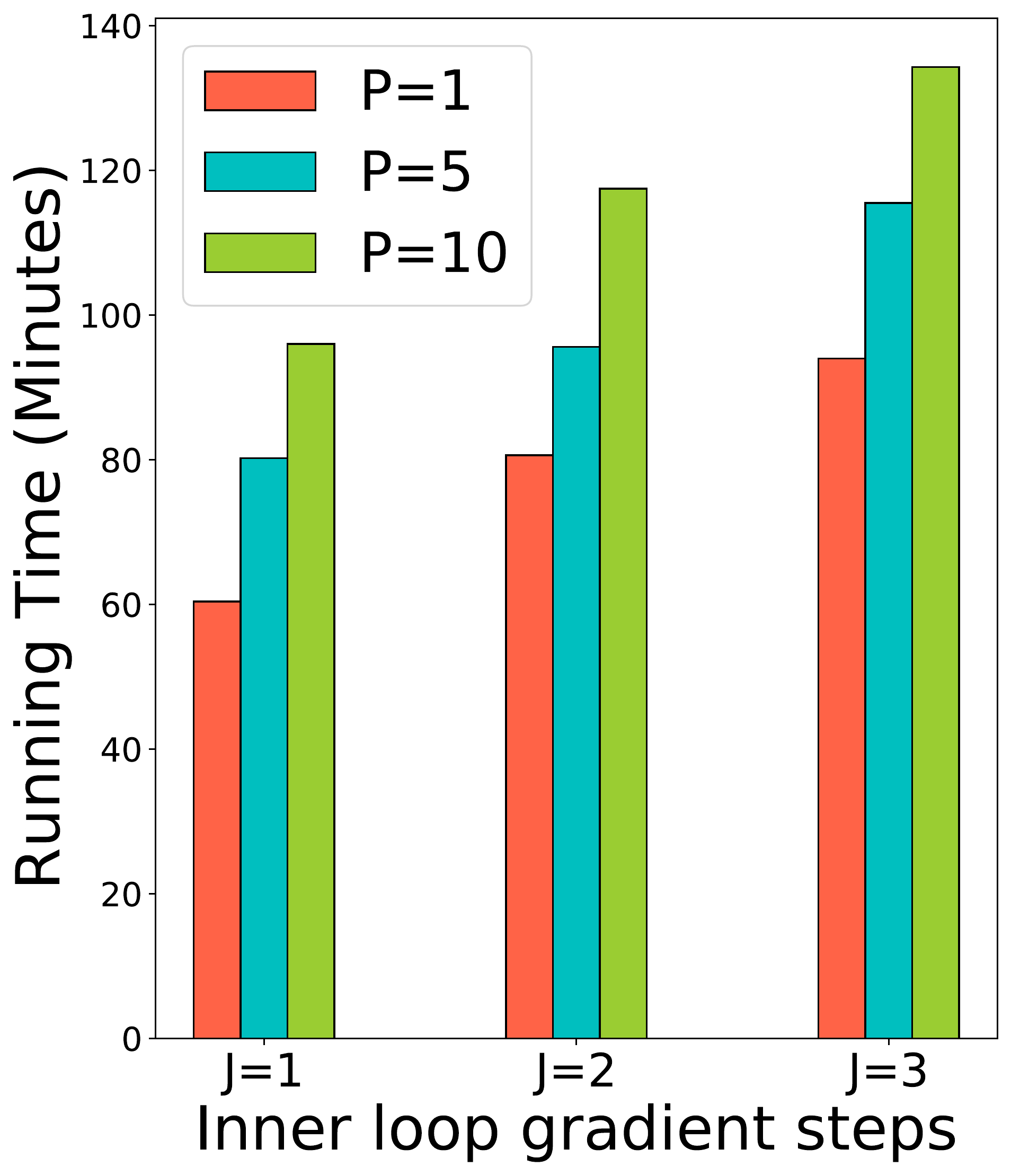}
        \caption{Running Time}
    \end{subfigure}
    \begin{subfigure}[b]{0.175\textwidth}
        \centering
        \includegraphics[width=\linewidth]{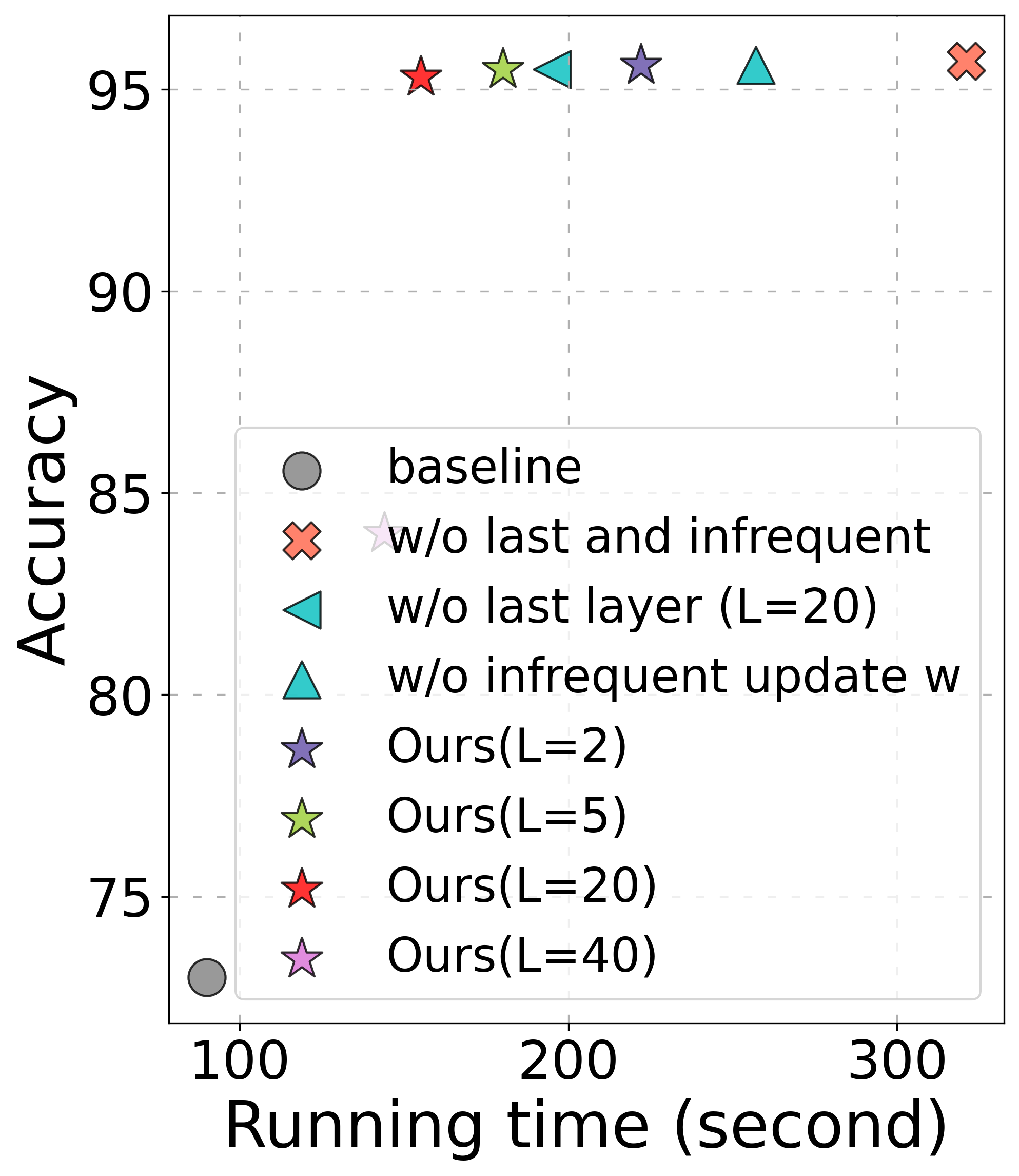}
        \caption{MNIST.}
    \end{subfigure}
    \begin{subfigure}[b]{0.175\textwidth}
        \centering
        \includegraphics[width=\linewidth]{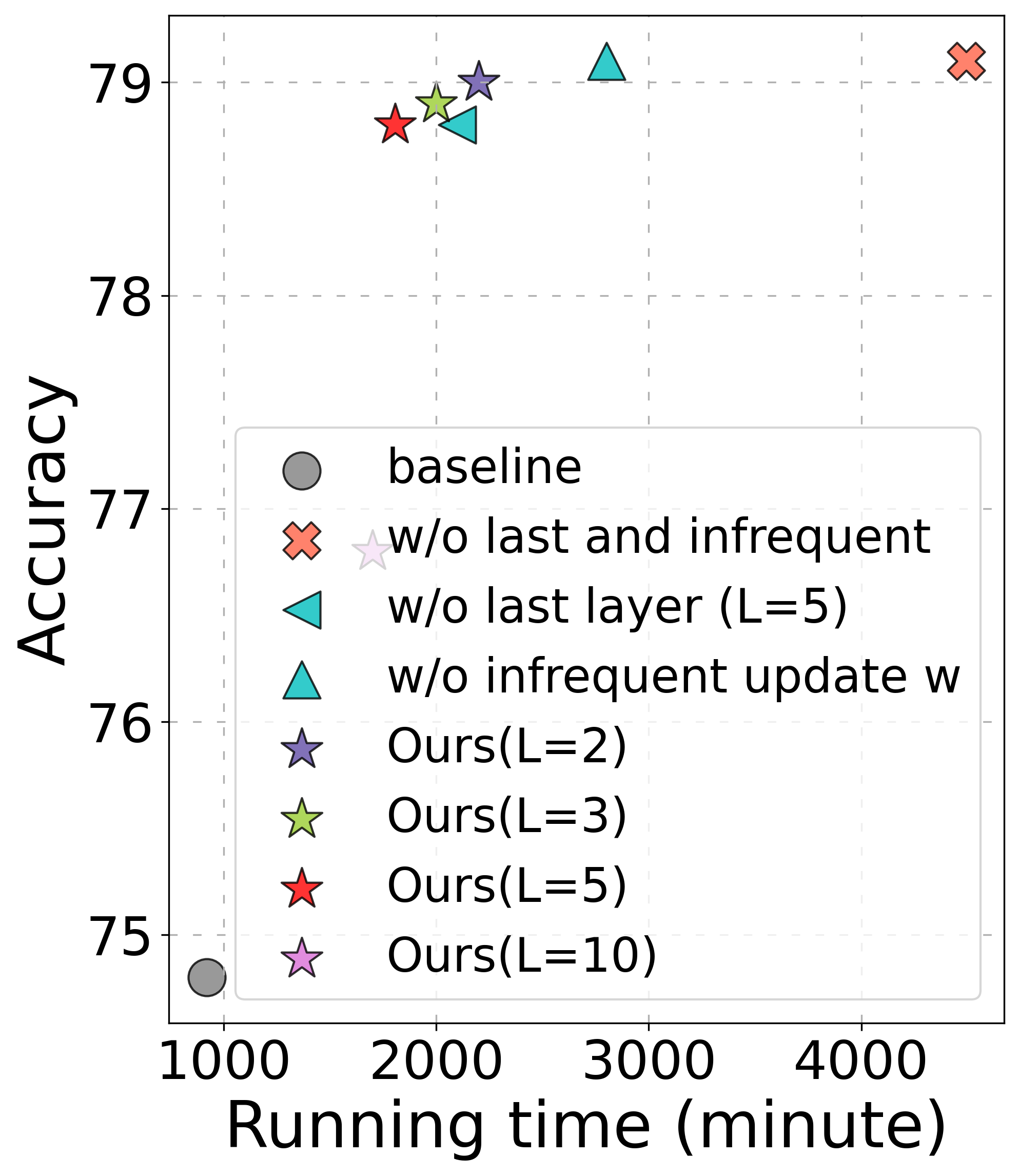}
        \caption{CIFAR10.}
    \end{subfigure}
    \caption{Running time results. (a)-(b) show our proposed approaches are only $1.7\times$ to $1.8\times$ slower compared to base SSL algorithms, while other robust SSL methods are $3\times$ slower. (c) shows that the running time of our method would increase with $J$ (inner loop gradients steps) and $P$ (inverse Hessian approximation) increase.
    (d)-(e) show the running time of our strategies with different combinations of tricks viz; last layer updates and updating weights every $L$ iterations. Note that by using only last layer updates, our strategies are around $2\times$ slower. With $L = 5$ and last layer updates, we are around $1.7 \times$ to $1.8 \times$ slower with comparable test accuracy.}
    \label{fig:run time}
\end{figure*}

\subsection{Efficiency Analysis}
To evaluate the efficiency of our proposed approach, we first compare the running time among all methods. Fig~\ref{fig:run time} (a)-(b) shows the running time (relative to the original SSL algorithm) for MNIST and CIFAR-10. We see that our proposed approaches are only $1.7\times$ to $1.8\times$ slower than the original SSL algorithm, while other robust SSL methods (L2RW and DS3L) are almost $3\times$ slower. We note that our implementation tricks can also be applied to these other techniques (DS3L, L2RW, MWN), but this would possibly degrade performance since these approaches' performance is worse than ours even without these tricks. To further analyze our proposed speedup strategies, we plot the running time v.s. accuracy in Fig~\ref{fig:run time} (d)-(e) for different settings (with/without last and with/without infrequent updates). As expected, the results show that, without the only last layer updates and the infrequent updates (i.e. if $L = 1$), Algorithm~\ref{algorithm_robustssl} is $3 \times$ slower than SSL baseline. Whereas with the last layer updates, it is around $2\times$ slower. We get the best trade-off between speed and accuracy considering both $L = 5$ and the last layer updates. In addition, we analyze the efficiency of our approach with varying inner loop gradient steps ($J$) and inverse Hessian approximation ($P$). The result shows that the running time of our method would increase with $J$ and $P$ increase, we choose the best trade-off between speed and accuracy considering $J = 3$ and $P = 5$.

\begin{figure*}[!t]
    \centering
    \begin{subfigure}[b]{0.4\textwidth}
        \centering
        \includegraphics[width=\linewidth]{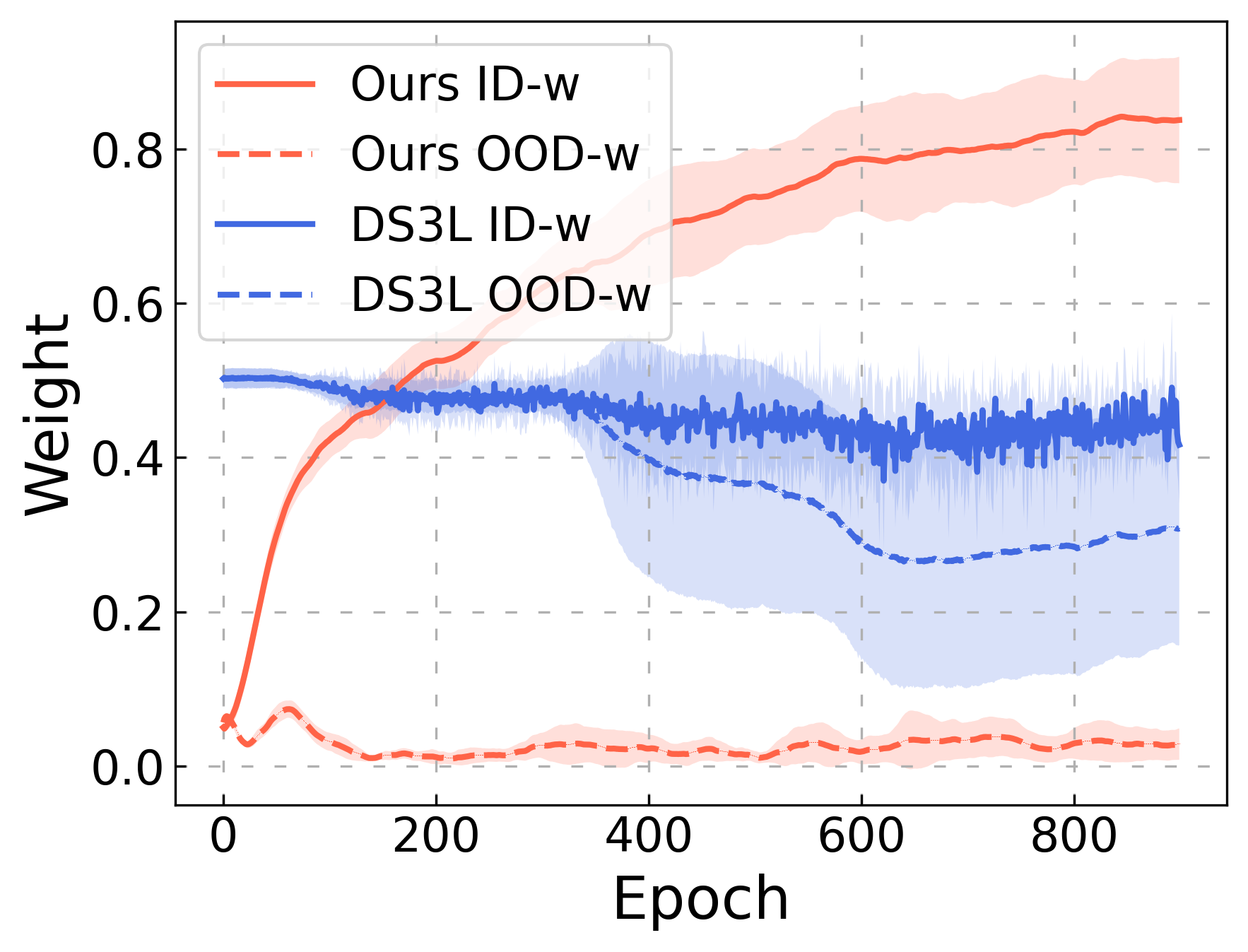}
        \caption{Weight variation curves.}
        \label{fig6a}
    \end{subfigure}
    \begin{subfigure}[b]{0.4\textwidth}
        \centering
        \includegraphics[width=\linewidth]{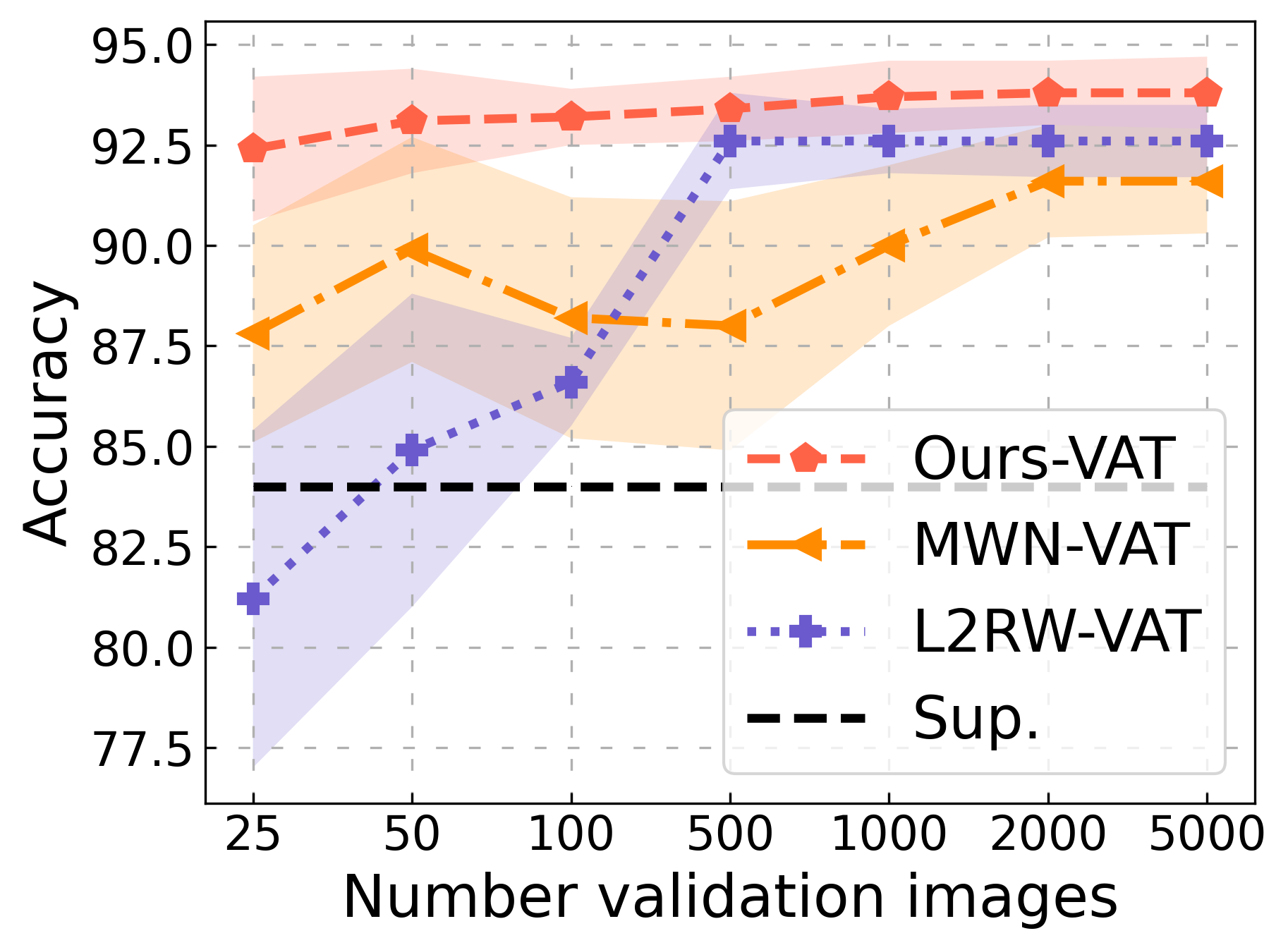}
        \caption{Effect of validation set.}
        \label{fig6b}
    \end{subfigure}
    \caption{(a) shows that our method learns optimal weights for ID and OOD samples; (b) shows that our method is stable even for a small validation set containing 25 images.}
    \label{fig:addition_ana} 
\end{figure*}

\begin{figure*}[!t]
    \centering
    \begin{subfigure}[b]{0.27\textwidth}
        \centering
        \includegraphics[width=\linewidth]{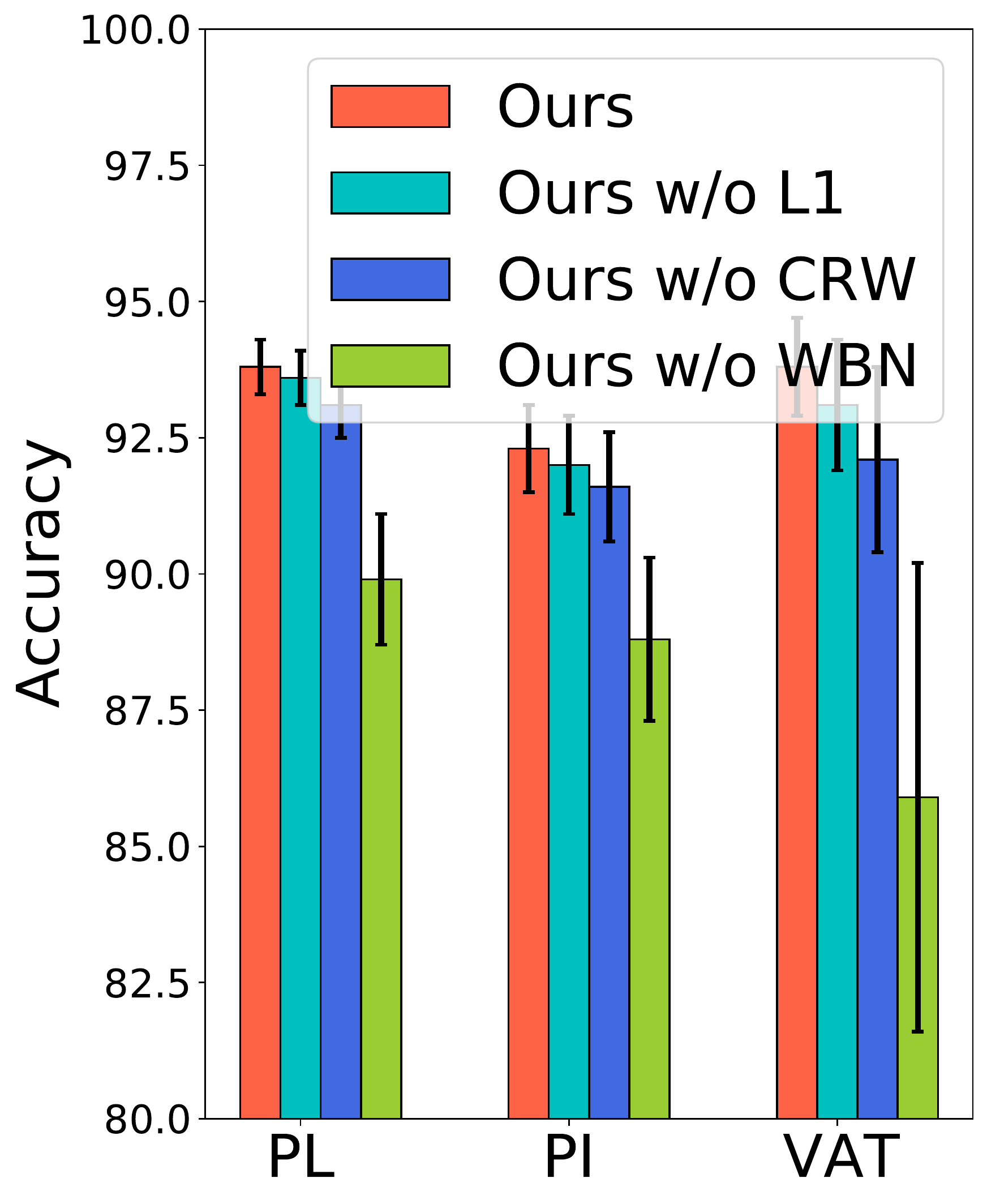}
        \caption{Ablation Study.}
        \label{fig6c}
    \end{subfigure}
    \begin{subfigure}[b]{0.25\textwidth}
        \centering
        \includegraphics[width=\linewidth]{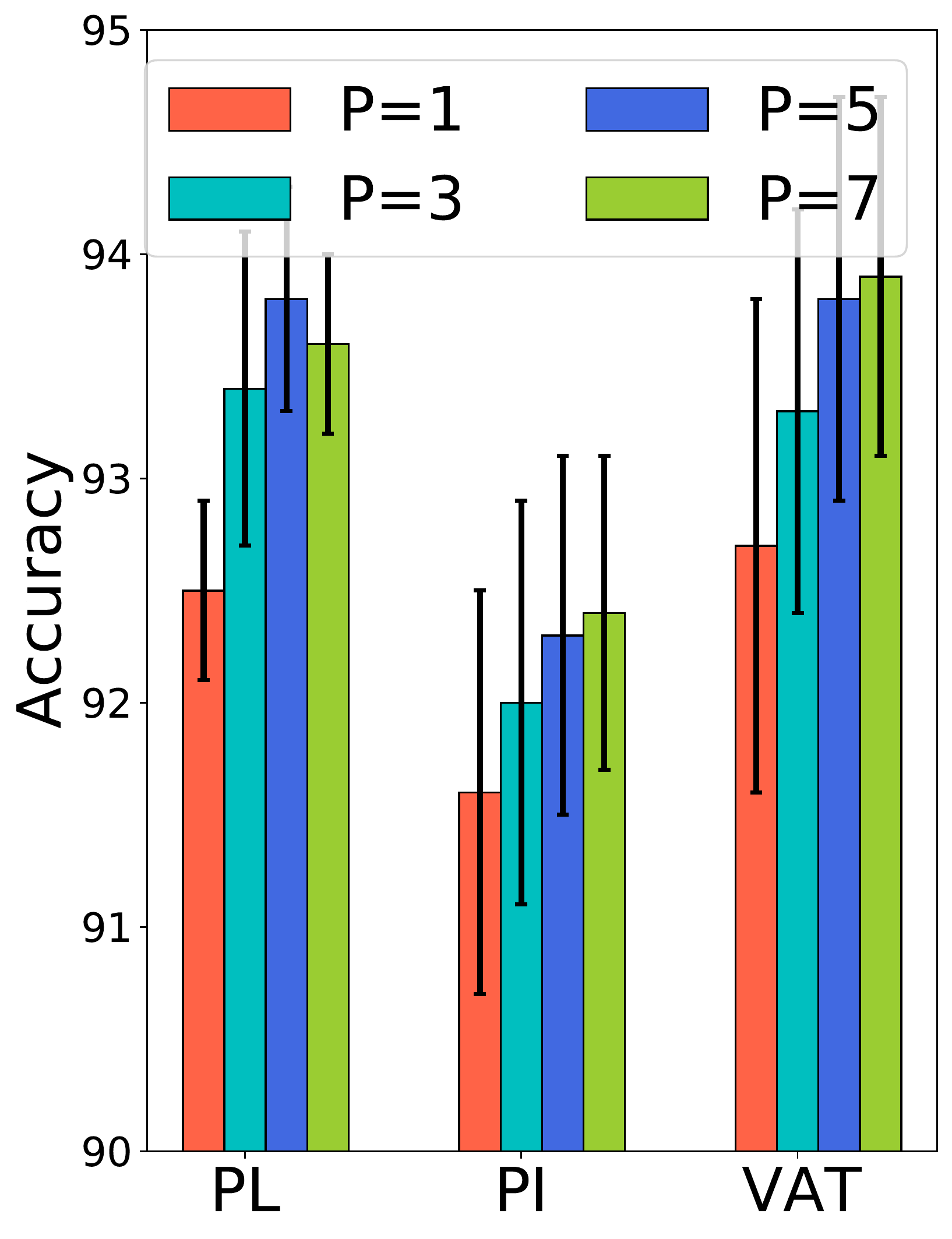}
        \caption{Ablation Study}
        \label{fig6d}
    \end{subfigure}
    \begin{subfigure}[b]{0.25\textwidth}
        \centering
        \includegraphics[width=\linewidth]{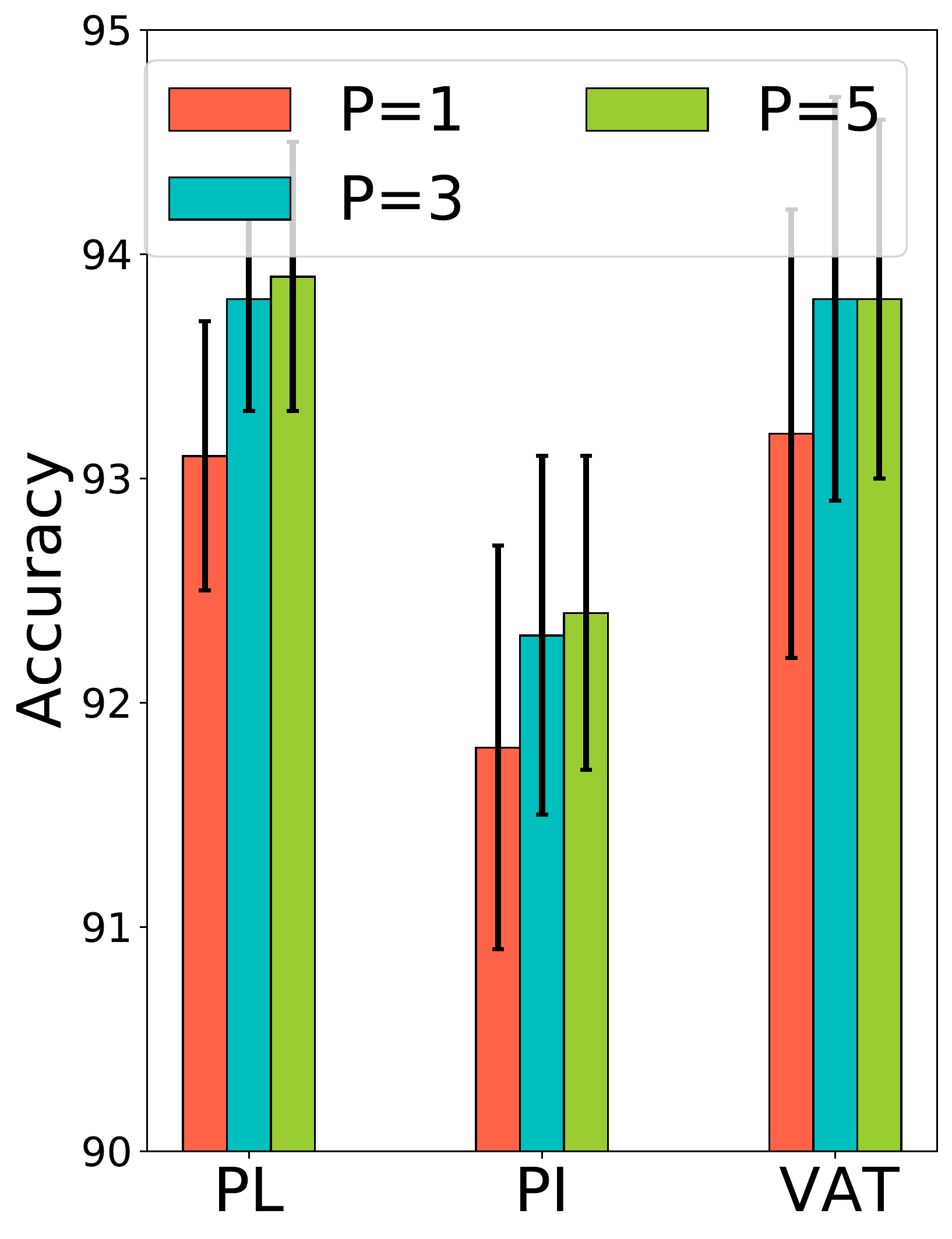}
        \caption{Ablation Study}
        \label{fig6e}
    \end{subfigure}
    \caption{ (a) shows that WBN and CRW (or L1 regularization) are critical in retaining the performance gains of reweighting; (b)-(c) demonstrate that the performance of our approach would increase with (inner loop gradients steps) and inverse Hessian approximation) increase due to high-order approximation.}
    \label{fig:addition_ana2} 
\end{figure*}

\subsection{Additional Analysis}

\begin{table}[h!]
\caption{SVHN-Extra (VAT) with different OOD ratios.} 
\centering
\begin{tabular}{c|ccc}

\textbf{OOD ratio} &   \textit{25\%} & \textit{50\%} & \textit{75\%} \\
\hline
VAT & 94.1$\pm$ 0.5 &93.6$\pm$ 0.7 &92.8$\pm$ 0.9   \\
L2RW-VAT & 96.0$\pm$ 0.6 &93.5$\pm$ 0.8 &92.7$\pm$ 0.8   \\
MWN-VAT & 96.2$\pm$ 0.5 &93.8$\pm$ 1.1 &93.0$\pm$ 1.3   \\
DS3L-VAT & 96.4$\pm$ 0.7 &93.9$\pm$ 1.0 &92.9$\pm$ 1.2   \\
Ours-VAT+L1 & 96.6$\pm$ 0.6 &94.4$\pm$ 0.8 &93.2$\pm$ 1.1   \\
Ours-VAT+CRW &\textbf{96.8$\pm$ 0.7} &\textbf{95.2$\pm$ 0.9} &\textbf{94.9$\pm$ 1.3} \\
\end{tabular}
\label{SVHN}
\end{table}

\begin{table}[h!]
\caption{CIFAR100 (MT) with different OOD ratios.} 
\centering
\begin{tabular}{c|ccc}
 
\textbf{OOD ratio} &   \textit{25\%} & \textit{50\%} & \textit{75\%} \\
                                    \hline
                  DS3L-MT   &60.8$\pm$ 0.5 &60.1$\pm$ 1.1 &57.2$\pm$ 1.2   \\
                 Ours-MT+L1 &61.5$\pm$ 0.4 &60.7$\pm$ 0.6 &59.0$\pm$ 0.8 \\
                 Ours-MT+CRW &\textbf{62.1$\pm$ 0.5} &\textbf{61.0$\pm$ 0.5} &\textbf{59.7$\pm$ 0.9} \\
                 \hline
                  DS3L-PI   &60.5$\pm$ 0.6 &60.1$\pm$ 1.0 &57.4$\pm$ 1.3   \\
                 Ours-PI+L1 &61.2$\pm$ 0.4 &60.4$\pm$ 0.4 &58.9$\pm$ 0.6 \\
                 Ours-PI+CRW &\textbf{61.6$\pm$ 0.4} &\textbf{60.7$\pm$ 0.5} &\textbf{59.5$\pm$ 0.7} \\
\end{tabular}
\label{CIFAR100}
\end{table}

\noindent {\bf Analysis of weight variation.} Fig~\ref{fig:addition_ana} (a) shows the weight learning curve of our proposed approach and DS3L on Fashion-MNIST OOD. The results show that our method learns better weights for unlabeled samples compared to DS3L. The weight distribution learned in other cases are also similar. 

\noindent {\bf Size of the clean validation set.}  We explore the sensitivity of the clean validation set used in robust SSL approaches on
Mean MNIST OOD. Fig~\ref{fig:addition_ana} (b) plots the classification performance with varying the size of the clean validation set. Surprisingly, our methods are stable even when using only 25 validation images, and the overall classification performance does not grow after having more than 1000 validation images. 

\noindent {\bf Ablation Studies.} We conducted additional experiments on
Fashion-MNIST OOD (see Fig~\ref{fig:addition_ana2} (a)-(c)) in order to demonstrate the contributions of the key technical components, including Cluster Re-weight (CRW) and weighted Batch Normalization (WBN). The key findings obtained from this experiment are \textit{1)} WBN plays a vital role in our uncertainty-aware robust SSL framework to improve the robustness of BN against OOD data; \textit{2)} Removing CRW (or L1 regularization) results in performance decrease, especially for VAT based approach, which demonstrates that CRW (and L1 regularization) can further improve performance for our robust-SSL approach;
\textit{3)} Fig~\ref{fig:addition_ana2} (b)-(c) demonstrate that the performance of our approach would increase with (inner loop gradients steps) and inverse Hessian approximation) increase due to high-order approximation. Based on this, we choose the best trade-off between speed and accuracy considering $J=3$ and $P=5$ when considering running time analysis in Fig~\ref{fig:run time} (c).

\begin{table}[th!]
% \footnotesize
\caption{Test accuracies for different numbers of clusters $K$ on the MNIST dataset with 50\%  Mean MNIST as OODs .} 
\centering
\begin{tabular}{c|cccc}
\# Clusters & K=5 & K=10& K=20 & K =30 \\
\hline
Ours-VAT &94.7$\pm$ 0.7 &95.3$\pm$ 0.4 &96.3$\pm$ 0.5  &95.6$\pm$ 0.5 \\
\hline
Ours-PL &95.2$\pm$ 0.5 &95.3$\pm$ 0.5 &96.2$\pm$ 0.4  &95.9$\pm$ 0.5 
\end{tabular}
\label{OOD_ex:K}
\end{table}

\noindent {\bf L1 vs CRW tricks. } Next, we discuss the trade-offs between L1 and CRW. We first analyzed the sensitivity of our proposed CRW methods to the number of clusters used. Table~\ref{OOD_ex:K} demonstrates the test accuracies of our approach with varying numbers of clusters. The results indicate a low sensitivity of our proposed methods to the number of clusters. We also find that CRW generally can further improve performance. Part of this success can be attributed to the good pretrained features (ImageNet). In addition, we compared our approach with the DS3L (safe SSL, SOTA robust SSL method) on CIFAR100 and SVHN dataset and get a similar pattern. The results are shown in Table~\ref{SVHN} and \ref{CIFAR100}.

\section{Conclusion}
In this work, we first propose the research question: \textit{How out-of-distribution data hurt semi-supervised learning performance?} To answer this question, 
we study the impact of OOD data on SSL algorithms and demonstrate empirically on synthetic data that the SSL algorithms' performance depends on how close the OOD instances are to the decision boundary (and the ID data instances). To address the above causes, we proposed a novel unified uncertainty-aware robust SSL framework that treats the weights directly as hyper-parameters in conjunction with weighted batch normalization, which is designed to improve the robustness of BN against OODs. To address the limitation of low-order approximations in bi-level optimization (DS3L), we designed an implicit-differentiation-based algorithm that considered high-order approximations of the objective and is scalable to a higher number of inner optimization steps to learn a massive amount of weight parameters. Next, we made our reweighting algorithms significantly more efficient by considering only last layer updates and infrequent weight updates, enabling us to have the same run time as simple SSL (naive reweighting algorithms are generally $3\times$ more expensive). 
In addition, we conduct a theoretical analysis of the impact of faraway OODs in the BN step and discuss the connection between our approach (high-order approximation based on implicit differentiation) and low-order approximation approaches. We show that our weighted robust SSL approach significantly outperforms existing robust approaches (L2RW, MWN, Safe-SSL, and UASD)  on several real-world datasets.

In Chapter~\ref{chapter:2} and Chapter~\ref{chapter:3}, we studied the uncertainty quantification on graph and image data with a semi-supervised learning setting. However, the uncertainty framework used in Chapter~\ref{chapter:2} and Chapter~\ref{chapter:3} only focuses on multi-class classification and statics problems. Therefore, it can not extend to multi-label classification or time series problems. To this end, in the following Chapter, we will study a time series multi-label classification setting for early event detection. Specifically,  we first propose a novel framework, Multi-Label Temporal Evidential Neural Network, to estimate the evidential uncertainty of multi-label time series classification. Then, we propose two novel uncertainty estimation heads to quantify the fused uncertainty of a sub-sequence for early event detection.

\chapter{Multi-Label Temporal Evidential Neural Networks for Early Event Detection}
\label{chapter:4}
    \section{Introduction}\blfootnote{\copyright 2022 IEEE. Reprinted, with permission, from Xujiang Zhao, Xuchao Zhang, Wei Cheng, Wenchao Yu, Yuncong Chen, Haifeng Chen, and Feng Chen, ``SEED: Sound Event Early Detection via Evidential Uncertainty,"  In ICASSP 2022-2022 IEEE International Conference on Acoustics, Speech and Signal Processing (ICASSP) (pp. 3618-3622), doi: 10.1109/ICASSP43922.2022.9746756.} 
In recent decades, early detection of temporal events has aroused a lot of attention and has applications in a variety of industries, including security \cite{sai2017application}, quality monitoring \cite{he2020confidence}, medical diagnostic \cite{zhao2019asynchronous}, transportation \cite{gupta2020early}, \textit{etc}. According to the time series, an event can be viewed with three components, pre-event, ongoing event, and post-event. Early event detection in machine learning identifies an event during its initial ongoing phase after it has begun but before it concludes \cite{hoai2014max,phan2018enabling}. As illustrated in Figure \ref{fig:motivation}, given a video clip with multiple frames, the goal is to accurately and rapidly detect human action(s) in the box (\textit{i.e.} smoke and watch a person) in the observation of incomplete video segments so that timely responses can be provided. This demands the detection of events prior to their completion.

To achieve the earliness of event detection, existing approaches can be broadly divided into several major categories. Prefix-based techniques
% \cite{gupta2020fault,gupta2020early,gupta2020divide,gupta2019early,gupta2019game} 
\cite{gupta2020fault,gupta2020early} 
aim to learn a minimum prefix length of the time series from the training instances and utilize it to classify a testing time series. Shapelet-based approaches
% \cite{yan2020extracting,zhao2019asynchronous,yao2019dtec} 
\cite{yan2020extracting,zhao2019asynchronous} 
focus on obtaining a set of key shapelets from the training dataset and utilizing them as class discriminatory features. Model-based methods for event early detection \cite{mori2019early,lv2019effective} are proposed to obtain conditional probabilities by either fitting a discriminative classifier or using generative classifiers on training. Although these approaches address the importance of early detection, they primarily focus on an event with a single label but fail to be applied to situations with multiple labels. 

\begin{figure}[!t]
    \centering
    \includegraphics[width=0.96\textwidth]{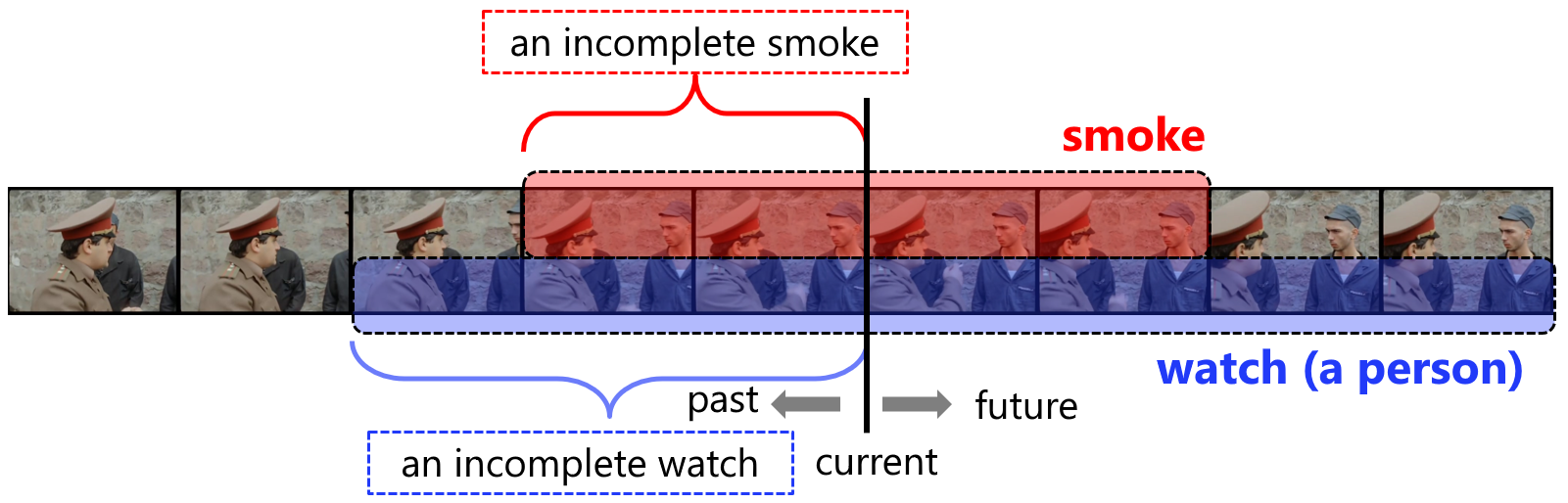}
    \caption{How many frames do we need to detect smoke and watch actions reliably?
    Can we even detect these actions before they finish? Existing event detectors are trained to recognize complete events only; they require
    seeing the entire event for a reliable decision, preventing early detection. We propose a learning formulation to recognize partial
    events, enabling early detection.}
    \label{fig:motivation}
\end{figure}

Another non-negligible issue for early event detection is the prediction with overconfidence \cite{zhao2020uncertainty,sensoy2018evidential}. In general, the occurrence of an event is determined by its predicted probability. An event with a high probability is considered as an occurrence. This, however, may not be reliable. Figure \ref{fig:overconfidence} shows an example that the prediction of the occurrence of an event (\textit{i.e.} an action) in a video clip with a binary class (occurs or not) based on its predicted probability is overconfident at the pre-event stage. In this case, the groundtruth (red line) demonstrates that the ongoing stage starts at the $20$th frame. 
Nevertheless, the event is falsely detected, prior to it actually occurring (Figure \ref{fig:overconfidence}, left), because a greater probability (\textit{i.e.} 0.9 indicated on the green line) is given by positive evidence. Here, the evidence indicates data samples (\textit{i.e.} actions) that are closest to the predicted one in the feature space and used to support the decision-making. Positive (negative) evidence is the observed samples that have the same (opposite) class labels. The event prediction with overconfidence at its early stage is due to high vacuity uncertainty~\cite{josang2016subjective} which is a terminology representing a lack of evidence. Therefore, it makes event detection based on probability unreliable. To overcome this flaw, methods developed on uncertainty estimation using evidence are desirable for early event detection.

In this work, to address the aforementioned issues, we first introduce a novel problem, namely \textit{early event detection with multiple labels}. In the problem setting, a temporal data with multiple events occurs sequentially over time. The goal of this work is to accurately detect the occurrence of all events within the least amount of time. 
% Technically a novel framework 
% ,Multi-Label Temporal Evidential Neural Network (MTENN)
% is proposed. 
Inspired by \cite{sensoy2018evidential} and subjective logic (SL) \cite{yager2008classic}, a proposed framework is proposed, which is composed of two phases: in phase one, a time series data is viewed as a sequence of segments with equal temporal length, where each segment comes one after another. Instead of predicting occurrence probabilities for all events, their positive and negative evidence is estimated through the proposed Multi-Label Temporal Evidential Neural Network (MTENN). The positive and negative evidence is seen as parameters of a Beta distribution which is a conjugate prior to Binomial likelihood. In the second phase, a sliding window spanning the most recent collected segments is designed to validate whether an event is successfully detected through two novel uncertainty estimation heads: (1) Weighted Binomial Comultiplication (WBC), where the belief of the occurrence of an event is successively updated through a binomial comultiplication operation \cite{josang2016subjective} from SL, and (2) Uncertainty Mean Scan Statistics (UMSS) for early event detection aims to detect the distribution change of the vacuity uncertainty through hypothesis testing from statistics. 
\textbf{Key contributions} of this work are summarized:
\begin{itemize}[leftmargin=*]
    \item We introduce a novel framework consisting of two phases
    % , namely Multi-Label Temporal Evidential Neural Network (MTENN), 
    for early event detection with multiple labels. At each timestamp, the framework estimates positive and negative evidence through the proposed Multi-Label Temporal Evidential Neural Network (MTENN). Inspired by subjective logic and belief theory,
    the occurrence uncertainty of an event is sequentially estimated over a subset of temporal segments.
    
    \item We introduce two novel uncertainty fusion operators (weighted binomial comultiplication (WBC) and uncertainty mean scan statistics (UMSS)) based on MTENN  to quantify the fused uncertainty of a sub-sequence for early event detection. We demonstrate the effectiveness of WBC and UMSS on detection accuracy and detection delay, respectively.
    
    \item We validate the performance of our approach with state-of-the-art techniques on real-world audio and video datasets. Theoretic analysis and empirical studies demonstrate the effectiveness and efficiency of the proposed framework.

\end{itemize}

\begin{figure}
    \centering
    \includegraphics[width=0.9\textwidth]{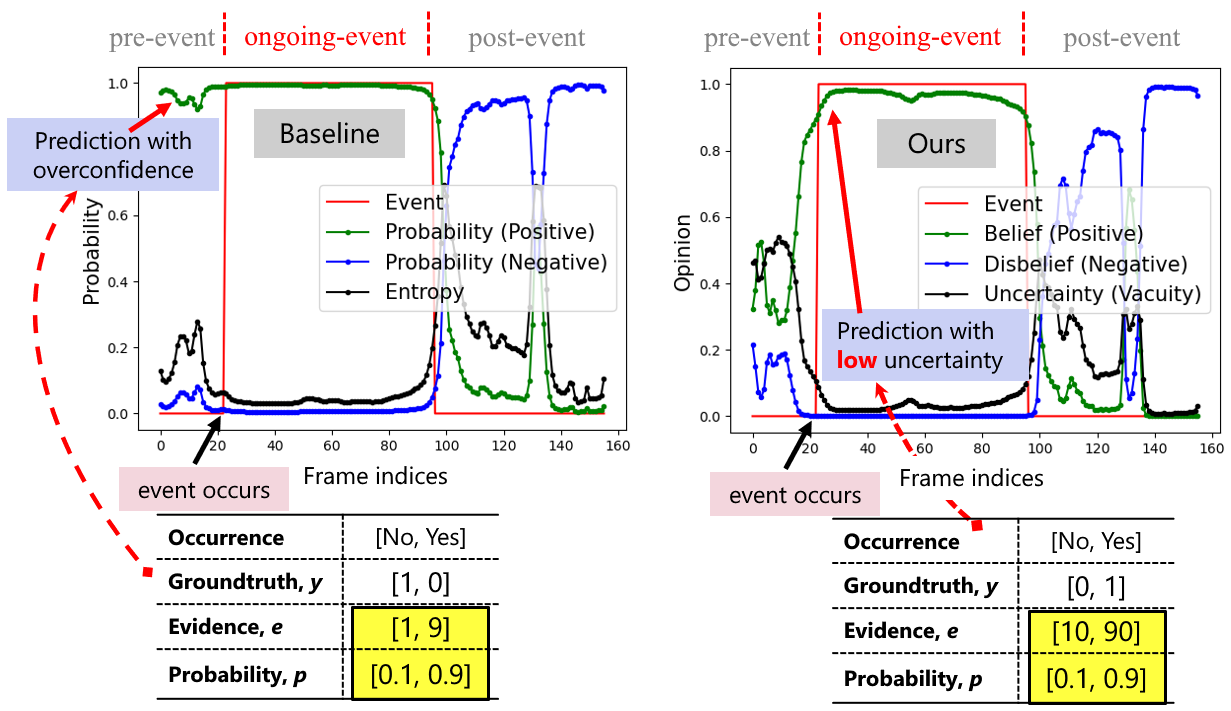}
    \caption{Illustration of overconfidence prediction. (Left) The occurrence of the event is falsely detected at the pre-event stage prior to its starting. This indicates that predicted probabilities are not reliable due to insufficient evidence. (Right) Instead of probabilities, subjective opinions (\textit{e.g.,} belief, disbelief, uncertainty) are used in the proposed method for early event detection.}
    \label{fig:overconfidence}
\end{figure}

\section{Related Work}
{\bf Early Event Detection} has been studied extensively in the literature of the time series domain. The primary task of it is to classify an incomplete time series event as soon as possible with some
desired level of accuracy. \cite{gupta2020fault} attempts to classify various complex human activities such as sitting on a sofa, sitting on the floor, standing while talking, walking upstairs, and eating, using only partial time series. A maximum-margin framework is proposed in ~\cite{hoai2014max} for training temporal event detectors to recognize partial events, enabling early detection. 
The generative adversarial network introduced in ~\cite{wang2019early} improves the early recognition accuracy of partially observed videos though narrowing the feature difference of partially observed videos from complete ones. Dual-DNN~\cite{phan2018enabling} is proposed for sound event early detection via a monotonous function design. \cite{mcloughlin2018early} identifies seed regions from spectrogram features to detect events at the early stage. Other algorithms have considered epistemic uncertainty for reliable event prediction~\citep{soleimani2017scalable}.
However, little attention has been paid to early event detection with multi-label settings, where multiple events may occur at the same time. Although there are several existing works designed for multi-label event detection~\cite{pan2021actor,tang2020asynchronous}, most of them cannot be applied to early detection event problems.

{\bf Uncertainty Estimation.} In machine learning and data mining, researchers have mainly focused on aleatoric uncertainty and epistemic uncertainty using Bayesian Neural Networks for computer vision applications. 
% Aleatoric uncertainty consists of homoscedastic uncertainty (\textit{i.e.,} constant errors for different inputs) and heteroscedastic uncertainty (\textit{i.e.,} different errors for different inputs)~\citep{gal2016uncertainty}. 
Bayesian Neural Networks frameworks are presented to simultaneously estimate both aleatoric and epistemic uncertainty in regression~\citep{gal2016dropout} and classification tasks~\citep{kendall2017uncertainties}. 
% Dropout variational inference~\citep{gal2016dropout} was used for an approximate inference in BNNs using epistemic uncertainty. 
However, aleatoric or epistemic uncertainty is not able to estimate evidence uncertainty, which is essential for early event detection. As for evidential uncertainty, its origins from the belief (or evidence) theory domain, such as Dempster-Shafer Theory (DST)~\citep{sentz2002combination}, or Subjective Logic (SL)~\citep{josang2016subjective}. 
% Belief theory focuses on reasoning inherent uncertainty in information caused by unreliable, incomplete, deceptive, or conflicting evidence. 
SL considered predictive uncertainty in subjective opinions in terms of {\em vacuity}
% and {\em vagueness} (\textit{i.e.,} failing in discriminating a belief state)
~\citep{josang2016subjective}. Evidential neural networks (ENN) was proposed in  \cite{sensoy2018evidential} to estimate evidential uncertainty for multi-class classification problem in the deep learning domain. However, ENN is designed for single-label classification due to the subjective logic assumption. 

To address the limitation of ENN and existing early event detection methods, we proposed a novel framework, namely Multi-Label Temporal Evidential Neural Network (MTENN), for early event detection in temporal data. MTENN is able to quality predictive uncertainty due to lack of evidence for multi-label classifications at each time stamp based on belief/evidence theory.

\section{Preliminaries}\label{sec:background}
The introduction of subjective logic is in Sec~\ref{SL}
In this section, we start with  Evidential neural networks (ENN) \cite{sensoy2018evidential}, a hybrid model of subjective logic and neural networks. Then we recap multi-label classification.

\subsection{Notations}
A time series data $\{(\x^t,\y^t)\}_{t=1}^T\in(\mathcal{X}\times\mathcal{Y})$ consists of $T$ segments where each $(\x^t, \y^t)$ is collected one after another over time. $\x^t$ represents the feature vector. $\y^t=[y^t_1, \ldots, y^t_K]^T$ denotes the multi-label formula with $y^t_k=\{0, 1\},\forall k\in\{1,\cdots K\}$ representing an event occurs or not and $K$ is the number of classes.
Vectors are denoted by lower case bold face letters, \textit{e.g.}, class probability ${\bm p} \in [0, 1]^T$ where their \textit{i}-th entries are $p_i$. Scalars are denoted by lowercase italic letters, \textit{e.g.} $u\in[0, 1]$. Matrices are denoted by capital italic letters. $\omega$ denotes the subjective opinion. We use subscripts to denote the index of the class, and we use superscripts to denote the index of the time stamp.
Some important notations are listed in Table~\ref{table:notation_4}

\begin{table*}[h]%{l}{0.8\textwidth}
% \footnotesize
\caption{Important notations and corresponding descriptions.}
\centering
  \begin{tabular}{l|l}
    \toprule
      \textbf{Notations} & \textbf{Descriptions}  \\
    \midrule
    $\mathcal{B}$   & Segment buffer  \\
    $\x^t$   &  feature vector for segment $t$   \\
    $\y^t$   &  multi-label for segment $t$   \\
    $\p^t$ & Class probability for segment $t$ \\
    ${\bm \theta}$ & model parameters \\
    $\omega$ & Subjective opinion \\
    $b$ & Belief mass  \\
    $d$ & Disbelief mass  \\
    $u$ & Vacuity uncertainty \\
    $a$ & Base rate \\
    $\alpha^t_k$ & Positive evidence at segment $t$ for class $k$ \\
    $\beta^t_k$ & Negative evidence at segment $t$ for class $k$  \\
    $K$ & Number of classes \\
    $\cc$ & Opinion weight in WBC \\
    $f(\cdot)$ & MTENN model function \\
    $\psi(\cdot)$ & Digamma function \\
    $\textbf{Beta}(p|\alpha, \beta)$ & PDF of Beta distribution \\
    $\textbf{BCE}(\cdot)$ & binary cross-entropy loss \\
    \bottomrule
  \end{tabular}
   \label{table:notation_4}
\end{table*}

\subsection{Evidential neural networks}
Evidential neural networks (ENNs)~\cite{sensoy2018evidential} is a hybrid framework of subjective belief models~\cite{josang2016subjective} with neural networks. ENNs is designed to estimate the evidential uncertainty of classification problem. They are similar to classic neural networks for classification. The main difference is that the softmax layer is replaced with an activation function in ENNs, \textit{e.g.,} ReLU, to ensure non-negative output (range of $[0, +\infty)$), which is taken as the evidence vector for the predicted Dirichlet (or Beta) distribution, or equivalently, multinomial (binomial) opinion. 

Given of an input sample ${\bf x}$, let $f({\bf x} | {\bm \theta})$ represent the evidence vector by the network for the classification, where ${\bm \theta}$ is network parameters. Then the corresponding Dirichlet distribution has parameters ${\bm \alpha} = f({\bf x}_i | {\bm \theta})$, where ${\bm \alpha}=[\alpha_1, \ldots, \alpha_K]$, $K$ is the number of classes. Let $\p=(p_1, \dots, p_K)^T$ be the class probabilities, which can be sampled from the Dirichlet distribution.
% \begin{eqnarray} \label{eq:multinomial-dir}
% \mathrm{Dir}(\bm{p}| {\bm \alpha}) = \frac{1}{B({\bm \alpha})} \prod\nolimits_{k\in \mathbb{Y}} p_k ^{(\alpha_k-1)},
% \end{eqnarray} 
% where $\frac{1}{B({\bm \alpha})} = \frac{\Gamma (\sum_{k \in \mathbb{Y}} \alpha_k)}{\prod_{k \in \mathbb{Y}} (\alpha_k)}$. 
% $\alpha_k \geq 0$, and $p_k \neq 0$, if $\alpha_k < 1$. The expected value of class probabilities $\p=(p_1, \ldots, p_K)^T$ is given by: 
% \begin{align}
% \mathbb{E}[p_k]=\frac{\alpha_k}{\sum_{j=1}^K \alpha_j}=\frac{e_k+a_kW}{\sum_{j=1}^K e_j + W}. 
% \end{align}
% % where $a_k=1/K$.  
% The observed evidence in a Dirichlet pdf $\mathrm{Dir}(\bm{p}| {\bm \alpha})$ can be mapped to a multinomial opinion $(b_1, \cdots, b_K, u)$ as follows:
% \begin{align}
% \label{eq:dirichlet-opinion}
%     b_k= \frac{e_k}{S}, \; \; u= \frac{W}{S}, \text{ for }  k=1, \cdots,  K, 
% \end{align}
% where $S = \sum_{k=1}^K \alpha_k$ refers to the Dirichlet strength. Without loss of generality, we set $a_k = 1/K$
% and the non-informative prior weight (i.e., $W = K$), which indicates that $a_k \cdot W = 1$ for each $k \in \{1, \cdots, K\}$. 
Therefore, the output of an ENNs can be applied to measure the evidential uncertainty about the predictive class variable $\y$, such as vacuity. Usually, we consider the following loss function~\cite{sensoy2018evidential} to train an ENNs model:
\begin{eqnarray}
\mathcal{L}_{ENN} = \int \Big[\sum_{i=1}^K -y_i \log(p_i)\Big] \textbf{Dir}(p|{\bm \alpha}) dp 
\end{eqnarray}

\subsection{Multi-label classification}
In machine learning and data mining, multi-class classification is a learning problem where each data sample is associated with a unique label from a set of disjoint classes. 
The single-label classification problem can be divided to binary classification or multi-class classification depending on the  number of classes. 
Unlike multi-class classification problems, multi-label classification allows each data sample belongs to more than one class. 
Similar to deep learning models for multi-label classification where they typically use a sigmoid layer on top of deep neural networks for each of class, for time series data, a temporal multi-label classifier is considered to handle the temporal dependency between segments collected at each timestamp.
Both traditional binary and multi-class problems can be posed as specific cases of multi-label problems. For most single-label classification methods, cross-entropy is used as the loss function. In contrast, for multi-label classification problems, traditional multi-label temporal neural networks take binary cross entropy as the objective function,
\begin{eqnarray}
\mathcal{L}_{ML} &=& \sum_{t=1}^T \sum_{k=1}^K  \textbf{BCE}(y_k^t, p_k^t) \nonumber \\
&=& \sum_{t=1}^T \sum_{k=1}^K -y_{k}^t \log(p_{k}^t) - (1-y_k^t) \log(1-p_{k}^t) 
\label{eq:bce}
\end{eqnarray} 
where $\textbf{BCE}(\cdot)$ is the binary cross-entropy loss.

\section{Problem Formulation}
Given a time series data with multiple labels (\textit{e.g.} a person in a video clip has multiple actions) where each class label is viewed as an event, 
let $\mathcal{X}\times\mathcal{Y}$ be the data space, where $\mathcal{X}$ is an input space and $\mathcal{Y}=\{0,1\}^K$ is an output space. A time series data $\{(\x^t,\y^t)\}_{t=1}^T\in(\mathcal{X}\times\mathcal{Y})$ consists of $T$ segments where each $(\x^t, \y^t)$ is collected one after another over time. $\x^t$ represents the feature vector. $\y^t=[y^t_1, \ldots, y^t_K]^T$ denotes the multi-label formula with $y^t_k=\{0, 1\},\forall k\in\{1,\cdots K\}$ representing an event occurs or not and $K$ is the number of classes. A segment buffer $\mathcal{B}$ is initialized as empty. It is maintained by adding each segment one at a time. That is, at timestamp $t$, the buffer includes all segments from previous $\mathcal{B}=\{(\x^i,\y^i)\}_{i=1}^t$ and $|\mathcal{B}|=t$. At each time, a predictive model $f:\mathcal{X}\rightarrow\mathcal{Y}$ parameterized by $\boldsymbol{\theta}$ takes segments in $\mathcal{B}$ as the input and outputs a event prediction vector $\hat{\y}^t=[\hat{y}^t_1, \ldots, \hat{y}^t_K]^T$ where $\hat{y}^t_k\in\{0,1\}$ represents the predicted result of the $k$-th event at time $t$ (1 represents occurrence 0 otherwise). Therefore, for some events which are predicted as occurrences, one may conclude that they can be detected at time $t$.

In the following sections, as demonstrated in Figure \ref{fig:framework}, we propose a novel framework for early event detection, that is Multi-label Temporal Evidential Neural Networks (MTENN) followed by sequential uncertainty estimation. This framework is composed of two phases. In the first phase, the data is viewed as a sequence of segments with equal temporal length, where each segment comes one after another over time. At the time $t$, instead of the prediction result $\hat{\y}^t$, a pair of vectors consisting of positive evidence $\boldsymbol{\alpha}^t$ and negative evidence $\boldsymbol{\beta}^t$ are estimated. They can be seen as parameters of a Beta distribution which is a conjugate prior to Binomial likelihood. In the second phase, a sliding window including $m$ most recent collected segments is used to validate whether an event is successfully detected through an early detection function. The function maps a sequence of $\{\boldsymbol{\alpha}^i,\boldsymbol{\beta}^i\}_{i=t-m}^t$ in the window and outputs a subjective opinion for all events by recursively combining a set of opinions of each segment. The integrated opinion is used to determine the occurrence of each event.

\section{Multi-Label Temporal Evidential Neural Networks}\label{sec:method}
In this section, we introduce the proposed Multi-Label Temporal  Evidential Neural Networks (MTENN) Framework for reliable early event detection. The overall description of the framework is shown in Figure~\ref{fig:framework}. 
\begin{figure*}[!t]
    \centering
    \includegraphics[width=1\textwidth]{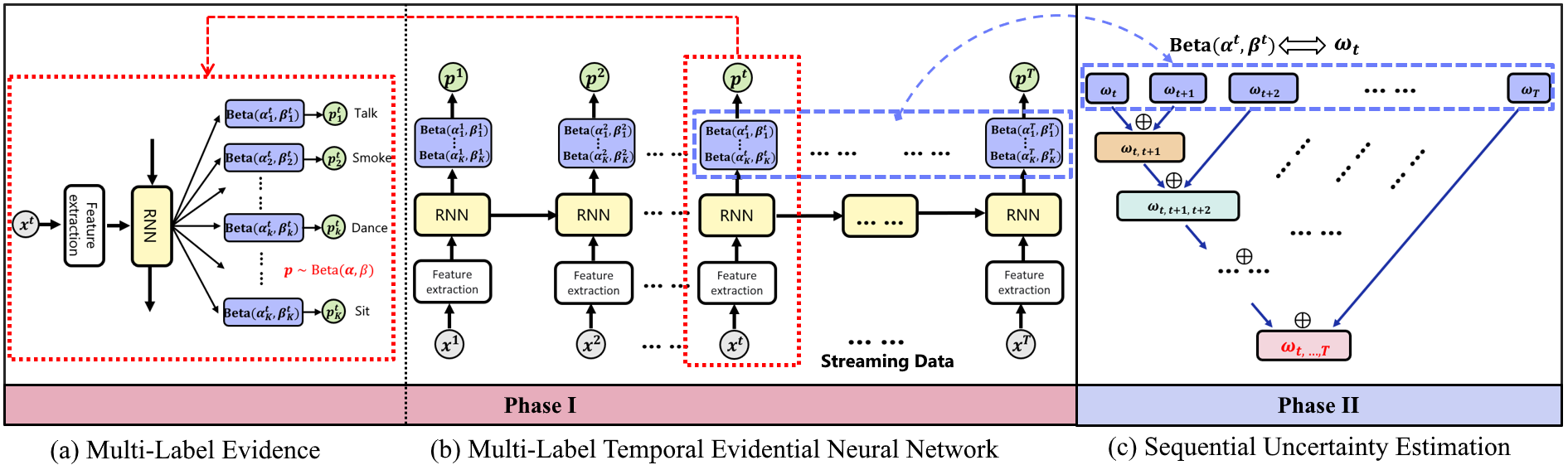}
    \caption{\textbf{Framework Overview.} Given the streaming data, (b) MTENN is able to quality predictive uncertainty due to a lack of evidence for multi-label classifications at each time stamp based on belief/evidence theory. Specifically, (a) at each time step with data segment $x^t$, MTENN is able to predict Beta distribution for each class, 
    which can be equivalent transfer to subjective opinion $\omega_t$; (c) based on a sliding window, two novel fusion operators (weighted binomial comultiplication and uncertainty mean scan statistics) are introduced to quantify the fused uncertainty of a sub-sequence for an early event .}
    \label{fig:framework}
\end{figure*}

\subsection{MTENN Framework}
For multi-label early event detection, most existing methods would consider a binary classification for each class, such as sigmoid output~\cite{Turpault2019_DCASE,hershey2021benefit}. 
As discussed in Section~\ref{sec:background}, evidential uncertainty can be derived from binomial opinions or Beta distributions to model an event distribution for each class. Therefore, we proposed a novel Multi-label Evidential Temporal Neural Network (MTENN) $f(\cdot)$ to form their binomial opinions for the class-level Beta distribution of a given time series segments $[\x^1, \ldots, \x^t]$. 
% In addition, we considered a sequential output for sequential input purpose. 
Then, the conditional probability $P(p^t_k|\x^1, \ldots, \x^t;{\bm \theta})$ of class $k$ at timestamp $t$ can be obtained by:
\begin{eqnarray}
[f^1, \ldots, f^t] &\leftarrow& f(\x^1, \ldots, \x^t;{\bm \theta}) \nonumber \\
(\alpha^t_1, \beta^t_1),\ldots, (\alpha^t_K, \beta^t_K),  &\leftarrow& f^t(\x^1, \ldots, \x^t;{\bm \theta}), \nonumber \\
p^t_k  &\sim&\textbf{Beta}(p_k^t|\alpha^t_k, \beta^t_k), \nonumber \\
y^t_k &\sim&\textbf{Bernoulli}(p^t_k),
\label{eq:mlenn}
\end{eqnarray} 
where $k\in \{1, \cdots, K\}$, $t \in \{1, \cdots, T\}$, $f^t$ is the output of MTENN at timestamp $t$, and ${\bm \theta}$ refers to model parameters. $\textbf{Beta}(p^t|\alpha^t_k, \beta^t_k)$ is the Beta probability function. 
Note that MTENN is similar to the classical multi-label time classification model (e.g., CRNN~\cite{Turpault2019_DCASE}), except that we use an activation layer (e.g., ReLU) instead of the sigmoid layer (only outputs class probabilities). This ensures that MTENN would output non-negative values taken as the evidence for the predicted Beta distribution. 
Therefore,  MTENN is able to quality predictive uncertainty (vacuity) due to a lack of evidence for multi-label classifications at each time stamp based on belief/evidence theory, and vacuity can be calculated from the estimated Beta distribution.

\subsection{Loss}
In this work, we design and train MTENN to form their binomial opinions for the classification of a given streaming segment as a Beta distribution. For the binary cross-entropy loss, we have the MTENN loss by computing its Bayes risk for the class predictor,
\begin{eqnarray}
\mathcal{L}_{MTENN} &=& \sum_{t=1}^T \sum_{k=1}^K \int \Big[\textbf{BCE}(y_k^t, p_k^t) \Big] \textbf{Beta}({p}_k^t;\alpha_k^t, \beta_k^t) d {p}_k^t \nonumber \\
&=& \sum_{t=1}^T \sum_{k=1}^K \Big[y_k^t\Big(\psi(\alpha_k^t+\beta_k^t) - \psi(\alpha_k^t)\Big) \nonumber \\
&& + (1-y_k^t)\Big(\psi(\alpha_k^t+\beta_k^t) - \psi(\beta_k^t)\Big) \Big],
\label{eq:loss}
\end{eqnarray} 
where $\textbf{BCE}(y_k^t, p_k^t)= -y_{k}^t \log(p_{k}^t) - (1-y_k^t) \log(1-p_{k}^t)$ is the binary cross-entropy loss, and $\psi(\cdot)$ is the \textit{digamma} function. The log expectation of Beta distribution derives the second equality. The details of the first phase of our proposed framework for MTENN are shown in Algorithm~\ref{algorithm_MTENN}.

% To make a reliable prediction. For each class, we predict sound events happened only when its class belief is the largest one, and the whole prediction associate with a small vacuity,
% \begin{equation}
% \hat{y}_{k}^{t} (u_k^t)=\arg \max _{i}\left\{\alpha_{k}^{t}\right\}_{i} \odot \mathbb{I}\left(\sigma-u_{k}^{t}\right)
% \end{equation}
% where $\hat{y}_k^t\in \{0, 1\}$ is the model prediction for class $k$ in segment $t$, and $\sigma$ is the vacuity threshold. And we denote $\left\{\alpha_{k}^{t}\right\}= [\alpha_{k}^{t},\beta_{k}^{t}]$.
% For uncertainty loss, we compute the difference between reliable prediction $\hat{y}^t_k$ and ground truth label,
% \begin{equation}
% \mathcal{L}_{u}=\sum_{t=1}^T \sum_{k=1}^K \left(y_{k}^{t}-\hat{y}_{k}^{t}\left(u^{t}_k\right)\right)^{2}
% \end{equation}
% Finally, we train MTENN based on two loss components, Beta loss $\mathcal{L}_{MTENN} $ and uncertainty loss $\mathcal{L}_{u}$, 
% \begin{equation}
%     \mathcal{L} = \mathcal{L}_{TMENN} + \lambda \cdot\mathcal{L}_{u}
% \end{equation}
% where $\lambda$ is the trade off parameter.

\begin{algorithm}[t]
\small{
\DontPrintSemicolon
\KwIn{A time series data $\{(\x^t,\y^t)\}_{t=1}^T$}
\KwOut{Model params: ${\bm \theta}$} 
\SetKwBlock{Begin}{function}{end function}
{
  Set $t=0$; learning rate $\alpha$; \;
  Initialize model parameters ${\bm \theta}$; \;
 \Repeat{convergence}
 {
 Estimate the Beta distribution via Eq.~\eqref{eq:mlenn};\;
 Calculate the gradient $\nabla_{{\bm \theta}}\mathcal{L}_{MTENN}$ via Eq.~\eqref{eq:loss};\;
Update net parameters \;
${\bm \theta}_{t+1} = {\bm \theta}_{t} - \alpha \nabla_{{\bm \theta}} \mathcal{L}_{MTENN}({\bm \theta}_{t})$\; 
$t=t+1$\;
 }
\Return{${\bm \theta}_{t+1}$}
}
\caption{MTENN-Phase I}\label{algorithm_MTENN}
}
\end{algorithm}
\setlength{\textfloatsep}{1pt}% Remove \textfloatsep

\subsection{Theoretical Analysis}
In this section, we demonstrate two main theoretical results. First, traditional evidential neural networks (ENN) can be posed as specific cases of multi-label temporal evidential neural networks. Second, traditional multi-label temporal neural networks (MTNN) can be posed as specific cases of multi-label Temporal Evidential neural networks (MTENN) when training data has large evidence for each class. 
Theorem~\ref{MTENN_ENN} shows that the loss function in MTENN is equivalent to it in MTNN when sufficient evidence is observed. 
Furthermore, given a feature vector, MTNN predicts probabilities for each class, while MTENN outputs positive and negative evidence estimations ($\alpha$ and $\beta$) that can be seen as parameters of a Beta distribution. Statistically, the predicted class probabilities are sampled from a Beta probability density function parameterized by the estimated $\alpha$ and $\beta$. When uncertainty mass approaches zero, the variance of sampled class probabilities is small and close to its expectation $\frac{\alpha}{\alpha + \beta}$. As a consequence, one may infer the probability of each class using the predicted evidence from MTENN, which is equivalent to the output of MTNN.
% Which indicates that MTENN has a same output as MTNN. 
The results indicate that our proposed MTENN is a generalization of ENN and MTNN and has thus inherited the merits of both models for the task of early event detection. 
\begin{theorem}
Denote $\mathcal{L}_{ML}$ is the loss function of traditional multi-label temporal neural networks, $\mathcal{L}_{TMENN}$ is the loss function of multi-label Temporal Evidential neural networks, and $u$ is the vacuity uncertainty estimated form  MTENN. We have $\lim_{\alpha\rightarrow \infty, \beta\rightarrow \infty}|\mathcal{L}_{ML}- \mathcal{L}_{MTENN}| \rightarrow 0$.
\label{MTENN_MTNN}
\end{theorem}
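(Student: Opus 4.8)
The plan is to reduce the statement to a term-by-term comparison after identifying the probability $p_k^t$ used by the multi-label network $\mathcal{L}_{ML}$ with the mean $\mathbb{E}[p_k^t] = \alpha_k^t/(\alpha_k^t+\beta_k^t)$ of the Beta distribution produced by MTENN. Since both $\mathcal{L}_{ML}$ and $\mathcal{L}_{MTENN}$ are finite sums over $t\in\{1,\dots,T\}$ and $k\in\{1,\dots,K\}$, it suffices to bound the gap on a single $(t,k)$ summand and then sum; I will drop the indices and write $\alpha,\beta,y,p$ below, with the understanding that the limit sends every $\alpha_k^t,\beta_k^t$ to infinity.

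First I would record the two digamma identities that already underlie the closed form of $\mathcal{L}_{MTENN}$ in Eq.~\eqref{eq:loss}: for $p\sim\textbf{Beta}(\alpha,\beta)$ one has $\mathbb{E}[\log p]=\psi(\alpha)-\psi(\alpha+\beta)$ and $\mathbb{E}[\log(1-p)]=\psi(\beta)-\psi(\alpha+\beta)$, which is exactly how the integral form of $\mathcal{L}_{MTENN}$ collapses to its digamma expression. Substituting the Beta mean $p=\alpha/(\alpha+\beta)$ into the corresponding $\mathcal{L}_{ML}$ summand gives $y[\log(\alpha+\beta)-\log\alpha]+(1-y)[\log(\alpha+\beta)-\log\beta]$, whereas the MTENN summand is $y[\psi(\alpha+\beta)-\psi(\alpha)]+(1-y)[\psi(\alpha+\beta)-\psi(\beta)]$. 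Their difference is therefore
\[
y\big[(\psi(\alpha+\beta)-\log(\alpha+\beta)) - (\psi(\alpha)-\log\alpha)\big] + (1-y)\big[(\psi(\alpha+\beta)-\log(\alpha+\beta)) - (\psi(\beta)-\log\beta)\big].
\]

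The crucial analytic fact is the asymptotic expansion $\psi(x)=\log x-\tfrac{1}{2x}+O(1/x^2)$, so that $\psi(x)-\log x\to 0$ as $x\to\infty$. Because $\alpha\to\infty$ and $\beta\to\infty$ force $\alpha+\beta\to\infty$ as well, each of the grouped quantities $\psi(\alpha)-\log\alpha$, $\psi(\beta)-\log\beta$, and $\psi(\alpha+\beta)-\log(\alpha+\beta)$ vanishes in the limit, so the displayed difference vanishes for every $(t,k)$. Summing over the finitely many pairs then yields $|\mathcal{L}_{ML}-\mathcal{L}_{MTENN}|\to 0$, as claimed.

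The only real obstacle is bookkeeping around \emph{which} probability $p$ enters $\mathcal{L}_{ML}$: the comparison is meaningful precisely under the identification $p=\mathbb{E}[p]=\alpha/(\alpha+\beta)$, which is the regime flagged before the theorem, namely vacuity $u\to 0$ forcing the Beta sample to concentrate on its mean. A reassuring feature is that the argument runs through the \emph{difference} of $\psi(x)-\log x$ terms, so it stays valid even when the ratio $\alpha/\beta$ drifts toward $0$ or $1$ and the individual BCE terms blow up; only the gap between the two losses is being controlled. If one wanted a fully self-contained justification of replacing $\mathbb{E}[\textbf{BCE}(y,p)]$ by $\textbf{BCE}(y,\mathbb{E}[p])$, I would additionally invoke the concentration of $\textbf{Beta}(\alpha,\beta)$ about its mean, since its variance $\alpha\beta/((\alpha+\beta)^2(\alpha+\beta+1))\to 0$; but the digamma asymptotics already deliver the stated limit directly, so I expect this extra step to be optional.
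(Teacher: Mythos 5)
Your proposal is correct and follows essentially the same route as the paper's proof: both identify $p_k^t$ with the Beta mean $\alpha_k^t/(\alpha_k^t+\beta_k^t)$ and invoke the digamma asymptotics $\psi(x) = \log x - \tfrac{1}{2x} + O(1/x^2)$ to reduce the claim to a term-by-term comparison of the digamma differences in $\mathcal{L}_{MTENN}$ with the logarithmic terms of $\mathcal{L}_{ML}$. Your grouping of the gap into vanishing quantities $\psi(x)-\log x$ is a marginally cleaner restatement of the paper's chain of approximations (it avoids asserting convergence to the moving target $-\ln p_k^t$), but the key fact and the decomposition are identical.
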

\begin{proof}
When $\alpha \rightarrow \infty$, we have
\begin{align*}
\small
&\psi(\alpha_k^t+\beta_k^t) - \psi(\alpha_k^t) \nonumber \\
&\xrightarrow[]{\alpha\rightarrow \infty} \ln(\alpha_k^t+\beta_k^t) - \frac{1}{2(\alpha_k^t+\beta_k^t)} -  \ln(\alpha_k^t) + \frac{1}{2\alpha_k^t} \nonumber \\
&= -\ln (\frac{\alpha_k^t}{\alpha_k^t+\beta_k^t}) + \frac{1}{2\alpha_k^t} - \frac{1}{2(\alpha_k^t+\beta_k^t)} \nonumber \\
&\xrightarrow[]{\alpha\rightarrow \infty} -\ln (\frac{\alpha_k^t}{\alpha_k^t+\beta_k^t}) = -\ln p_k^t
\end{align*}
and when $\beta \rightarrow \infty$, we have
\begin{align*}
&\psi(\alpha_k^t+\beta_k^t) - \psi(\beta_k^t) \nonumber \\
&\xrightarrow[]{\beta\rightarrow \infty}\ln(\alpha_k^t+\beta_k^t) - \frac{1}{2(\alpha_k^t+\beta_k^t)} -  \ln(\beta_k^t) + \frac{1}{2\beta_k^t} \nonumber \\
&= -\ln (\frac{\beta_k^t}{\alpha_k^t+\beta_k^t}) + \frac{1}{2\beta_k^t} - \frac{1}{2(\alpha_k^t+\beta_k^t)} \nonumber \\
&\xrightarrow[]{\beta\rightarrow \infty} -\ln (\frac{\beta_k^t}{\alpha_k^t+\beta_k^t}) = -\ln (1-p_k^t)
\end{align*}
where the expected probability $p_k^t = \frac{\alpha_k^t}{\alpha_k^t+\beta_k^t}$. Then we have $|\mathcal{L}_{ML}- \mathcal{L}_{MTENN}| \rightarrow 0$.
\end{proof}

\begin{theorem}
Denote $\mathcal{L}_{ENN}$ is the loss function of traditional evidential neural networks, $\mathcal{L}_{Beta}$ is the loss function of multi-label Temporal Evidential neural networks, and $K$ is the total number of the class. We have $\mathcal{L}_{ENN} = \mathcal{L}_{MTENN}$ when $K=1, T=1$.
\label{MTENN_ENN}
\end{theorem}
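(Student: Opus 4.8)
The plan is to specialize both loss functionals to the binary, single-timestamp regime and then show that, under the natural identification of the evidence parameters, their integrands coincide pointwise, so the integrals agree. First I would set $K=1$ and $T=1$ in $\mathcal{L}_{MTENN}$: the double sum over $t$ and $k$ collapses to a single term, leaving
\[
\mathcal{L}_{MTENN} = \int \textbf{BCE}(y_1^1, p_1^1)\, \textbf{Beta}(p_1^1 \mid \alpha_1^1, \beta_1^1)\, d p_1^1,
\]
i.e. the Bayes risk of the binary cross-entropy under a single Beta distribution with positive evidence $\alpha_1^1$ and negative evidence $\beta_1^1$.

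The key observation is that $\mathcal{L}_{ENN}$, when instantiated for a binary classification problem, is exactly this object. For two classes the Dirichlet density over the probability simplex $\{(p_1,p_2) : p_1+p_2=1\}$ becomes, after eliminating the redundant coordinate via $p_2 = 1-p_1$, precisely a Beta density $\textbf{Beta}(p_1 \mid \alpha_1, \alpha_2)$. I would then expand the categorical cross-entropy integrand $\sum_{i=1}^{2} -y_i \log p_i = -y_1 \log p_1 - y_2 \log p_2$ and substitute the one-hot constraint $y_2 = 1 - y_1$ together with $p_2 = 1-p_1$, which rewrites it as $-y_1 \log p_1 - (1-y_1)\log(1-p_1) = \textbf{BCE}(y_1, p_1)$.

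With this in hand, the equality follows from a direct matching of symbols: identify the positive/negative evidence of the single MTENN event with the two Dirichlet strengths ($\alpha_1^1 \mapsto \alpha_1$, $\beta_1^1 \mapsto \alpha_2$), the binary label $y_1^1$ with $y_1$, and the occurrence probability $p_1^1$ with $p_1$. Under this correspondence the two integrands are identical functions on $[0,1]$ and the integration domains coincide, so $\mathcal{L}_{ENN} = \mathcal{L}_{MTENN}$. No integral need be evaluated; alternatively one could evaluate both to the closed form in terms of the digamma function $\psi$ already derived for $\mathcal{L}_{MTENN}$ and check agreement term by term.

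The main obstacle is conceptual rather than computational: it is the notational clash in the meaning of $K$. In $\mathcal{L}_{ENN}$ the index runs over multinomial classes, so a binary problem corresponds to $K=2$ and a two-parameter Dirichlet, whereas in $\mathcal{L}_{MTENN}$ the count $K$ enumerates independent binary events, so a single event ($K=1$) already encodes a two-outcome decision through its Beta distribution. The careful step is therefore to justify that ``$K=1$ for MTENN'' and ``binary ENN'' describe the same underlying binomial opinion, i.e. that the two-class Dirichlet collapses to exactly the Beta that MTENN predicts; once this identification is made explicit, the rest of the argument is immediate.
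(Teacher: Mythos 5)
Your proposal is correct and takes essentially the same approach as the paper: both proofs rest on identifying the binary (two-class) ENN Dirichlet with the single-event MTENN Beta distribution and the one-hot categorical cross-entropy with $\textbf{BCE}$, under the matching $\alpha_1 \mapsto \alpha$, $\alpha_2 \mapsto \beta$, $y_1 \mapsto y$, $y_2 \mapsto 1-y$ — including your careful handling of the clash between MTENN's event count ($K=1$) and ENN's class count ($K=2$), which is exactly the implicit step in the paper's argument. The only difference is presentational: the paper evaluates both Bayes risks to the common digamma closed form $y\big(\psi(\alpha+\beta)-\psi(\alpha)\big)+(1-y)\big(\psi(\alpha+\beta)-\psi(\beta)\big)$ and checks agreement, whereas you match the integrands pointwise before integrating, an alternative you yourself note.
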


\begin{proof}
When $K=1, T=1$, we have
\begin{align*}
\small
&\mathcal{L}_{TMENN} = \sum_{t=1}^1 \sum_{k=1}^1 \int \Big[\textbf{BCE}(y_k^t, p_k^t) \Big] \textbf{Beta}({p}_k^t;\alpha_k^t, \beta_k^t) d {p}_k^t \nonumber \\
&= \int \Big[\textbf{BCE}(y, p) \Big] \textbf{Beta}({p};\alpha, \beta) d {p} \nonumber \\
&= y\Big(\psi(\alpha+\beta) - \psi(\alpha)\Big) 
+ (1-y)\Big(\psi(\alpha+\beta) - \psi(\beta)\Big) \nonumber
\end{align*}
and for the ENN model, $k=1$ means the binary classification, and we denote the positive evidence $\alpha=\alpha_1$, negative evidence $\beta = \alpha_2$, positive label $y=y_1$, and negative label $(1-y)=y_2$ 
% \begin{eqnarray}
% \small
% &&\mathcal{L}_{ENN} = \int \Big[\textbf{CrossEntopy}(y, p)\Big] \textbf{Dir}(p;\alpha) dp \nonumber \\
% &=&  \sum_{j=1}^2 y_j \Big(\psi(\sum_{j=1}^2 \alpha_j) - \psi(\alpha_j)\Big)  \nonumber \\
% &=& y\Big(\psi(\alpha+\beta) - \psi(\alpha)\Big) 
% + (1-y)\Big(\psi(\alpha+\beta) - \psi(\beta)\Big) \nonumber
% \end{eqnarray}
\begin{align*}
\tiny
&\mathcal{L}_{ENN} = \int \Big[\textbf{CrossEntopy}(y, p)\Big] \textbf{Dir}(p;\alpha) dp \nonumber \\
&= \sum_{j=1}^2 y_j \Big(\psi(\sum_{j=1}^2 \alpha_j) - \psi(\alpha_j)\Big)  \nonumber \\
&= y\Big(\psi(\alpha+\beta) - \psi(\alpha)\Big) 
+ (1-y)\Big(\psi(\alpha+\beta) - \psi(\beta)\Big) \nonumber
\end{align*}

Then we proof that $\mathcal{L}_{ENN} = \mathcal{L}_{TMENN}$.
\end{proof}

\section{Multi-label Sequential Uncertainty Quantitation}
\label{sec:uncertainQ}
% In this section, we designed two fusion operator for early event detection. One is subjective logic based operator. One is mean scan statistics based fusion operator.

In the second phase, for early event detection, at time $t$, a subset including $m$ most recent collected segments are considered to validate whether an event is successfully detected or not, as shown in Fig~\ref{fig:framework} (c). We name the subset as a sliding window, as it dynamically restructures a small sequence of segments from $t-m$ to $t$ and performs validation through an early detection function at each time. 
% The function maps the sequence $S$ in the window and outputs an uncertainty vector for all events by recursively combining a set of uncertainty vectors of each segment. The integrated uncertainty is used to determine the occurrence of all events. In contrast to existing methods \textcolor{red}{[citations]}, in section \ref{sec:uncertainQ}, 
Based on the sliding window, we introduce two novel uncertainty fusion operators based on MTENN to quantify the fused uncertainty of a sub-sequence for early event detection.

\subsection{Weighted Binomial Comultiplication}

{\bf Binomial Comultiplication Operator.} After we get the sequential Beta distribution output, a sequential fusional opinion can be estimated via a subjective operator (e.g., union operator). As shown in Fig~\ref{fig:framework} (b), we can use subjective operator $\oplus$ to fuse the opinions. Here we consider to use comultiplication operator~\cite{Audun2006BeliefCalculus} to fusion two opinion $\omega_i$ and $\omega_j$ via Eq.~\eqref{coproduct},
\begin{equation}
\begin{aligned}
&b_{i \oplus j}=b_{i}+b_{j}-b_{i} b_{j} \\
&d_{i \oplus j}=d_{i} d_{j}+\frac{a_{i}\left(1-a_{j}\right) d_{i} u_{j}+\left(1-a_{i}\right) a_{j} u_{i} d_{j}}{a_{i}+a_{j}-a_{i} a_{j}} \\
&u_{i \oplus j}=u_{i} u_{j}+\frac{a_{j} d_{i} u_{j}+a_{i} u_{i} d_{j}}{a_{i}+a_{j}-a_{i} a_{j}} \\
&a_{i \oplus j}=a_{i}+a_{j}-a_{i} a_{j}
\end{aligned}
\label{coproduct}
\end{equation}

Based on $m$ sliding windows, the sequential fusional opinion can be calculated by
\begin{equation}
    \omega_{t-m, \ldots, t} = \omega_{t-m} \oplus \omega_{t-m+1} \oplus \ldots \oplus \omega_t \label{BC}
\end{equation}

The above operator ignores the order information, which means $\omega_x \oplus \omega_y$ has the same effect as $\omega_y \oplus \omega_x$. To consider the order information and emphasize the importance of current time step $t$, we propose a weighted comultiplication operator that assigns the weight $\cc$ for each opinion when executing the operator, then the weighted sequential opinion can be obtained,
% \begin{equation}
% \begin{aligned}
% &b^w_{x \vee y}=w_x b_{x}+w_y b_{y}-w_x w_y b_{x} b_{y} \\
% &d^w_{x \vee y}=w_x w_y d_{x} d_{y}+\frac{a_{x}\left(1-a_{y}\right) w_x w_yd_{x} u_{y}+\left(1-a_{x}\right) a_{y} w_x w_y u_{x} d_{y}}{a_{x}+a_{y}-a_{x} a_{y}} \\
% &u^w_{x \vee y}=w_x w_y u_{x} u_{y}+\frac{a_{y} w_x w_y d_{x} u_{y}+a_{x}w_x w_y u_{x} d_{y}}{a_{x}+a_{y}-a_{x} a_{y}} \\
% &a_{x \vee y}=a_{x}+a_{y}-a_{x} a_{y}
% \end{aligned}
% \label{weight_coproduct}
% \end{equation}
\begin{equation}
   \hat{\omega}^t =c_{t-m} \cdot \omega^{t-m} \oplus c_{t-m+1} \cdot \omega^{t-m+1} \oplus \ldots \oplus c_t \cdot \omega^t \label{WBC}
\end{equation}
We consider the vacuity from $\hat{\omega}^t$ as sequential uncertainty for a sub-sequence. 

% \begin{proposition}
% Given two opinions, $\omega_i=[b_i, d_i, u_i, a_i]$ and $\omega_j=[b_j, d_j, u_j, a_j]$, the weighted sequential opinion $\omega_{ij}=[b_{ij}, d_{ij}, u_{ij}, a_{ij}]$ calculated via Eq.~\eqref{WBC}. We have
% \begin{equation}
%     b_{ij} + d_{ij} + u_{ij} = 1 \nonumber
% \end{equation}
% only when 
% \begin{equation}
%     % c_i b_i + c_j b_j -c_i c_j b_i - c_i c_j b_j = 1
%     c_i = \frac{c_j b_j}{c_j b_j + c_j b_i - bi}
% \end{equation}
% \end{proposition}

{\bf Uncertainty-based Inference.}
At the test stage, we consider a simple strategy to make a reliable prediction based on the sequential uncertainty estimated from WBC. For each class, we predict sound events happened only when the belief is larger than disbelief with a small vacuity,
\begin{eqnarray}
\hat{y}_k^t =\begin{cases}1,& \text{if } b^t_k > d^t_k \text{ and } u_k^t < V \\0 ,& \text{otherwise}\end{cases}
\label{eq:inference}
\end{eqnarray} 
where $\hat{y}_k^t\in \{0, 1\}$ is the model prediction for class $k$ in segment $t$, $V$ is the vacuity threshold.

\subsection{Uncertainty mean scan statistics.}

\begin{figure}[!t]
    \centering
    \includegraphics[width=0.6\textwidth]{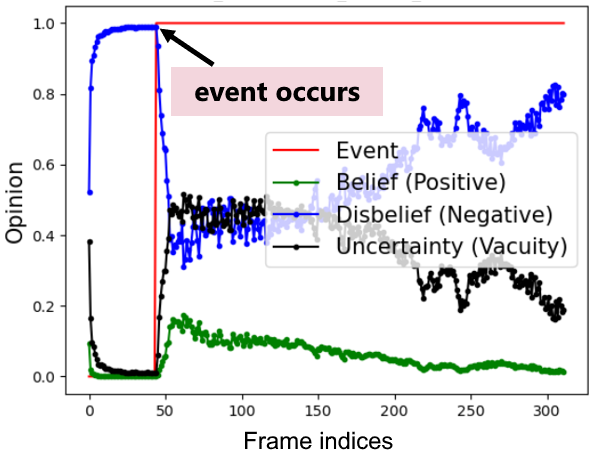}
    \caption{Uncertainty distribution changed at the ongoing event early stage .}
    \label{fig:UMSS}
\end{figure}

Intuition: uncertainty distribution changes at the early ongoing event stage. One example is shown in Fig~\ref{fig:UMSS}. Here we proposed a simple uncertainty fusion operator based on the mean scan statistics method for early event detection.
Let $u_t$ be the vacuity uncertainty at time $t$ and $T$ be the sliding window size. Let $S_t$ be the sliding window at time $t$: $S_t=\{u_{t-T+1}, u_{t-T+2}, \dots, u_t\}$. We consider the following hypothesis testing about the detection of an event at time $t$.
\begin{itemize}[leftmargin=*]
    \item {\bf Null hypothesis $H_0$:} The uncertainty at $S_t$ follow the distribution $u\in \mathcal{N}(0, 1), \forall  u\in S_t$
    \item {\bf Alternative hypothesis $H_1$:} The uncertainty at $S_t$ follow the distribution $u\in\mathcal{N}(\mu, 1), \forall  u\in S_t$ with $\mu>0$
\end{itemize}
Based on the above hypothesis testing, we can derive the log-likelihood ratio, called positive alleviated mean scan statistic (EMS),  that is used as the test statistic for event detection. The larger the test statistic, the higher the chance there is an event in this time window.  
\begin{equation}
    \hat{u}_t= \frac{\sum_{u\in S_t} u_i}{T} \label{UMSS}
\end{equation}
After we get the scan statistic score $\hat{u}_t$, we need to decide a threshold, i.e., $V$, such that we will reject $H_0$ and accept $H_1$ if $\hat{u}_t>V$, based on the confidence level, and such as 0.05. In order to decide the threshold $V$, we use Monte Carlo sampling as follows: We calculate the EMS statistics scores for all historical windows that have no event, we call them historical windows under $H_0$, and find the 5\% quantile value, i.e., the value, such that the number of historical windows under $H_0$ less than this value is 5\% of the total number of historic windows. The details of the second phase of our framework for early event detection are shown in Algorithm~\ref{algorithm_MTENN2}.

\begin{algorithm}
\small{
\DontPrintSemicolon
\KwIn{ A streaming time series data $\{\x^t\}_{t=1}^T$, 
 MTENN parameter ${\bm \theta}$, threshold $V$, and weight $\cc$}
\KwOut{Early event prediction: $[\hat{\y}^1, \ldots \hat{\y}^T]$} 
\SetKwBlock{Begin}{function}{end function}
{
 Set $t=0$; sliding window size $m$; \;
 Initialize Buffer $\mathcal{B}=\emptyset$; \;
 \For{$t=1, \ldots, T$}{
 collect current stream data $\x^t$,
 $\mathcal{B}=\mathcal{B}\cup \x^t$ \;
 estimate Beta distribution via trained MTENN  
 $(\boldsymbol{\alpha}^1, \boldsymbol{\beta}^1),\ldots, (\boldsymbol{\alpha}^t, \boldsymbol{\beta}^t)  \leftarrow f(\mathcal{B};{\bm \theta}),$ \;
 Obtain the sliding window $\mathcal{S}=\{\boldsymbol{\alpha}^i,\boldsymbol{\beta}^i\}_{i=t-m}^t$\;
 Transfer the Beta distribution in $\mathcal{S}$ into subjective opinion\;
 \If{WBC}{Estimate sequential opinion $\hat{\omega}^t$ by 
 $\hat{\omega}^t =c_{t-m} \omega^{t-m} \oplus c_{t-m+1} \omega^{t-m+1} \oplus \ldots \oplus c_t \omega^t$ \;
 calculate the early event prediction for each class,
 $\hat{y}_k^t =\begin{cases}1,& \text{if } \hat{b}^t_k > \hat{d}^t_k \text{ and } \hat{u}_k^t < V \\0 ,& \text{otherwise}\end{cases}$
 }
 \If{UMSS}{
 Estimate sequential uncertainty $\hat{u}^t$ via Eq.~\eqref{UMSS}\;
 calculate the early event prediction for each class, \\
 $\hat{y}_k^t =\begin{cases}1,& \text{if } \hat{u}^t > V \\ 0 ,& \text{otherwise}\end{cases}$
 }
 } 
\Return{$[\hat{\y}^1, \ldots \hat{\y}^T]$}
}
\caption{MTENN-Phase II}\label{algorithm_MTENN2}
}
\end{algorithm}

\section{Experiments}\label{sec:experiment}
Our experimental section aims to verify the effectiveness of our proposed method (MTENN) by evaluating MTENN through both \textit{early sound event detection} scenario and \textit{early human action detection} scenario on real-world datasets. Furthermore, our work’s experimental scenarios are very relevant in terms of research and real-world applications. Finally, we have implemented the MTENN framework using PyTorch. 
We repeat the same experiment for three runs with different initialization and report the mean detection delay and detection F1 score (ref to Evaluation Metrics section).

\subsection{Experiment Details}\label{sec:audio_data}

\begin{table*}[h]
\small
\centering
\caption{Description of datasets and their experimental setup for the early event detection.}
\begin{tabular}{l|c|c|c|c|c|c}
 \toprule
 & \textbf{DESED2021} & \textbf{Explosion} & \textbf{Alarm} & \textbf{Liquid}  &\textbf{Engine} & \textbf{AVA} \\
\midrule
\# Classes & 10  & 5 & 7 & 9 & 8 & 60\\
\midrule
\# Training samples & 10,000  & 5,518 & 8,085  & 6,517 & 18,741 & 43,232 \\
\# Validation samples & 1,168  & 788 & 1,355   & 931 & 2,677 & 20,000 \\
\# Test samples & 1,016  & 1,577 & 2,311  & 1,862 & 5,354 & 23,498 \\
\bottomrule
\end{tabular}
\label{tab:datasets}
\end{table*}

{\bf Dataset.} \textit{a) Early sound event detection task}, 
we conduct the experiments on DESED2021 dataset~\cite{Turpault2019_DCASE} and AudioSet-Strong-Labeled dataset~\cite{hershey2021benefit}.
DESED2021 dataset is composed of 10 second audio clips recorded in domestic environments or synthesized using Scaper to simulate a domestic environment. 
The original AudioSet-Strong-Labeled dataset consists of an expanding ontology of 632 audio event classes and a collection of 2,084,320 human-labeled 10-second sound clips drawn from YouTube videos. To simulate the application of early event detection in industries, we select some subsets from the AudioSet-Strong-Labeled dataset to form early event detection datasets. Specifically, we select four subsets from the AudioSet-Strong-Labeled dataset, including explosion, alarm, liquid, and engine subclasses. 
\textit{b)~Early human action detection task}, we consider AVA dataset~\cite{gu2018ava}. AVA is a video dataset for spatiotemporal localizing atomic visual actions with 60 classes. For AVA, box annotations and their corresponding action labels are provided on key frames
of 430 15-minute videos with a temporal stride of 1 second. We use version 2.2 of the AVA dataset by default.
The details of each dataset are shown in Table~\ref{tab:datasets}.

{\bf Features.} For the early sound event detection task,
the input features used in the experiments are log-mel spectrograms extracted from the audio signal resampled to 16000 Hz.
The log-mel spectrogram uses 2048 STFT windows with a hop size
of 256 and 128 Mel-scale filters. At the training stage, the input is the full observed 10-second sound clip. As a result, each 10-second sound clip is transformed into a 2D time-frequency representation with a size of (626×128). At the test stage, we collect an audio segment at each timestamp, which can be transformed into a 2D time-frequency representation with a size of (4×128). For the early human action detection task, each input video contains 91 image frames. To satisfy the streaming data at the inference stage, we cut the video into $T$ segments. In our experiment, we consider $T=\{4, 6, 8, 10, 12, 15\}$, the corresponding segment length $ST=\{0.75s, 0.5s, 0.375s, 0.3s, 0.25s, 0.2s\}$. 

{\bf Comparing Methods.}
To evaluate the effectiveness of our proposed approach with two variants, named MTENN-WBC and MTENN-UMSS, \textit{a) Early sound event detection task}, we compare it with two state-of-the-art early sound event detection methods: Dual DNN~\cite{phan2018enabling} and SEED~\cite{zhao2022seed}; two sound event detection methods: CRNN~\cite{Turpault2019_DCASE} and Conformer~\cite{miyazaki2020conformer}; 
\textit{b) Early human action detection task}, we compare it with three sound human action detection methods: ACAR~\cite{pan2021actor}, AIA~\cite{tang2020asynchronous}, and SlowFast~\cite{feichtenhofer2019slowfast};
For both early detection tasks, we consider three different uncertainty methods as the baselines, which include \textit{Entropy}, 
\textit{Epistemic} uncertainty~\cite{gal2016dropout} (represents the uncertainty of model parameters), and \textit{Aleatoric} uncertainty~\cite{depeweg2018decomposition} ( represents the uncertainty of data noise). 
We consider using uncertainty to filter the high uncertainty prediction for three uncertainty-based methods.
We use MC-drop~\cite{gal2016dropout} to estimate epistemic and aleatoric uncertainties in the experiments.

{\bf Evaluation Metrics.} We consider both early detection F1 score and detection delay as the evaluation metrics to evaluate early detection performance. 
We first define the true positive prediction for the event $k$, which only happens when the first prediction timestamp $d_p$ is located in the ongoing event region. 
% In addition, we set an early predict tolerance $L$ that if the first prediction is earlier than true event occurred. Otherwise, we consider the prediction for this event is false positive,
\begin{eqnarray}
\text{TP}_k =\begin{cases}1, & \text{if }y^{d_p}_k==1 \text{ and } d_p-d_t \ge L \\ 0 ,& \text{otherwise}  \end{cases}
\label{eq:tp}
\end{eqnarray} 
where $d_t$ is the onset timestamp of the predicted event. In contrast, the false positive prediction happened when the first prediction timestamp $d_p$ is not located in the ongoing event region. Then we can calculate precision, recall, and F1 score based on true positive prediction and false positive prediction for each event.
For detection delay, it's only measured when we have a true positive prediction. 
Then the detection delay is defined as follows,
\begin{eqnarray}
\text{delay} =\begin{cases}d_p-d_t, & \text{if } d_p\ge d_t \\ 0 ,& \text{if } d_p< d_t  \end{cases}
\label{eq:delay}
\end{eqnarray} 

{\bf Settings.}  \textit{a) Early sound event detection task}, we use CRNN~\cite{turpault2020training} as the backbone except for Conformer. We use the Adam optimizer for all methods and follow the same training setting as~\cite{turpault2020training}. 
\textit{b) Early human action detection task}, we use ACAR as the backbone except for AIA and SlowFast. We use the SGD optimizer for all methods and follow the same training setting as~\cite{pan2021actor}.
For the uncertainty threshold, we set 0.5 for epistemic uncertainty and 0.9 for other uncertainties (entropy, vacuity, aleatoric). 
% We set $L=-2$ for all experiments.

\subsection{Results and Analysis}\label{sec:result}

{\bf Early Sound Event Detection Performance.}
Table~\ref{tab:results1} shows that our proposed methods (MTENN-WBC and MTENN-UMSS) outperform all baseline models under the detection delay and early detection F1 score for the sound event early detection. The outperformance of MTENN-WBC is fairly impressive. 
This confirms that the belief comultiplication operator is the key to improving the sequential uncertainty estimation such that MTENN-WBC can significantly improve the early detection accuracy. In addition, MTENN-UMSS is a more aggressive method for early event detection, which significantly reduced the detection delay, demonstrating that vacuity uncertainty distribution changed at the event's ongoing early stage.
Furthermore, the test inference time of our approach is around 5ms, less than the streaming segment duration (60ms), which indicates that our method satisfies the real-time requirement.

\begin{table*}[ht!]
\footnotesize
\centering
\caption{Early sound event detection performance on Audio datasets.}
\begin{tabular}{l|c|c|c|c|c}
 \toprule
\multirow{2}{*}{\textbf{Datasets}} & \textbf{DESED2021} & \textbf{Explosion} & \textbf{Alarm} & \textbf{Liquid}  &\textbf{Engine} \\ &  \multicolumn{5}{c}{\textbf{Detection Delay} $\downarrow$ / \textbf{Detection F1 Score} $\uparrow$}
  \\ 
\midrule
Dual DNN & 0.386  / 0.682 & 0.325 / 0.295 &	0.257 /	0.221 &	0.467 /	0.162 & 0.324 / 0.323 \\ 
SEED & 0.252 / 0.691& 0.339 / 0.288 & 0.293 / 0.407 & 0.334 / 0.172 & 0.428 / 0.342 \\ 
Conformer & 0.372  / 0.639 & 0.444	/ 0.268 & 0.292 / 0.429 & 0.463 / 0.166 &	0.427 /	0.323 \\
CRNN & 0.284  / 0.677 & 0.415 / 0.278	&0.273 / 0.408 & 0.451 / 0.144 & 0.404 / 0.301 \\
CRNN + entropy & 0.312  / 0.669& 0.422 / 0.272 & 0.282 / 0.406 & 0.465 / 0.142 & 0.423 / 0.313 \\
CRNN + epistemic & 0.278  /  0.647 & 0.401 / 0.28 & 0.244 / 0.413 & 0.411 / 0.152 & 0.356 / 0.31 \\
CRNN + aleatoric & 0.281  /  0.643 &  0.404	/ 0.288 & 0.252 / 0.419 & 0.421 / 0.157 &	0.377 /	0.312  \\
\midrule

MTENN-WBC & \textbf{0.206}  /\textbf{ 0.727} &  \textbf{0.119} / 0.314 & 0.217 /  0.470 &	0.059 /	\textbf{0.200} &	0.294 /	\textbf{0.391}  \\
MTENN-UMSS & 0.267  / 0.575&  0.120 / \textbf{0.345} & \textbf{0.191} / \textbf{0.473} & \textbf{0.026} / 0.188 & \textbf{0.237} /	0.349 \\
\bottomrule
\end{tabular}
\label{tab:results1}
\end{table*}

{\bf Early human action detection performance.}
Table~\ref{tab:results2} shows the experiment results of early human action detection, which get a similar pattern to early sound event detection. Note that MTENN-WBC achieved the best detection accuracy (e.g., 50\% increase compared with ACAR baseline) with a large detection delay. While our MTENN-UMSS outperforms all baseline models under the detection delay and early detection F1 score.
In addition, as segment length decreases (more challenging to detect event), baseline methods' performance decreases significantly. In contrast, our proposed methods (MTENN-WBC and MTENN-UMSS) can still hold a robust performance.

\begin{table*}[h]
\scriptsize
\centering
\caption{Early human action detection performance on AVA datasets with different segment lengths (ST).}
\begin{tabular}{l|c|c|c|c|c|c}
 \toprule
  \multirow{2}{*}{\textbf{Segment Length}} & \textbf{0.75s} & \textbf{0.5s} & \textbf{0.375s} & \textbf{0.3s} & \textbf{0.25s} & \textbf{0.2s}  \\ 
&  \multicolumn{6}{c}{\textbf{Detection Delay} $\downarrow$ / \textbf{Detection F1 Score} $\uparrow$}
  \\ 
\midrule
SlowFast & 0.156 / 0.402 & 0.175 / 0.376 &	0.183 / 0.372 &	0.208 / 0.367 &	0.220 / 0.360 &	0.229 / 0.357  \\ 
AIA & 0.214 / 0.410 & 0.231 / 0.387 & 0.230 / 0.382  &	0.239 / 0.379 &	0.250 / 0.368 &	0.264 / 0.360 \\
ACAR & 0.227  / 0.446 & 0.245 / 0.430	& 0.245 / 0.424  &	0.240 / 0.406 &	0.235 / 0.399 &	0.253 / 0.389 \\
ACAR + entropy & 0.234  / 0.464 & 0.268 / 0.462 & 0.267 / 0.457  &	0.284 / 0.456 &	0.276 / 0.456 &	0.307 / 0.447\\
ACAR + epistemic & 0.232  / 0.451 & 0.265 / 0.448 & 0.264 / 0.442  &	0.441 / 0.413 &	0.274 / 0.438 &	0.301 / 0.430 \\
ACAR + aleatoric & 0.213 / 0.434 & 0.244 /0.428 & 0.243 / 0.423  &	0.259 / 0.421 &	0.254 / 0.416 &	0.275 / 0.407 \\
\midrule
MTENN-WBC & 0.308  /\textbf{ 0.736} &  0.343 / \textbf{0.734} & 0.341 / \textbf{0.706} &	0.361 / \textbf{0.708} &	0.354 / \textbf{0.706} &	0.377 / \textbf{0.698} \\
MTENN-UMSS & \textbf{0.089}  / 0.533&  \textbf{0.090} / 0.535 & \textbf{0.090} / 0.527&	\textbf{0.093} / 0.532 &	\textbf{0.093} / 0.525 &	\textbf{0.081} / 0.524 \\
\bottomrule
\end{tabular}
\label{tab:results2}
\end{table*}

\vspace{3mm}
{\bf Sensitive Analysis.} (1) Uncertainty threshold. We explore the sensitivity of the vacuity threshold used in the MTENN-WBC model. Figure~\ref{fig:experiment_effect} (a) shows the detection delay and early detection F1 score with varying vacuity threshold values. When the vacuity threshold increases, the detection delay decreases continuously, while the early detection accuracy (F1 score) decreases as well. There is a tradeoff between detection delay and detection accuracy. The higher the uncertainty threshold increase, the more overconfident predictions (predictions with high uncertainty) result in an aggressive early prediction (may predict event happen early but may cause a false positive prediction).

\begin{figure}[h]
    \centering
    \begin{subfigure}[b]{0.4\textwidth}
        \centering
        \includegraphics[width=\linewidth]{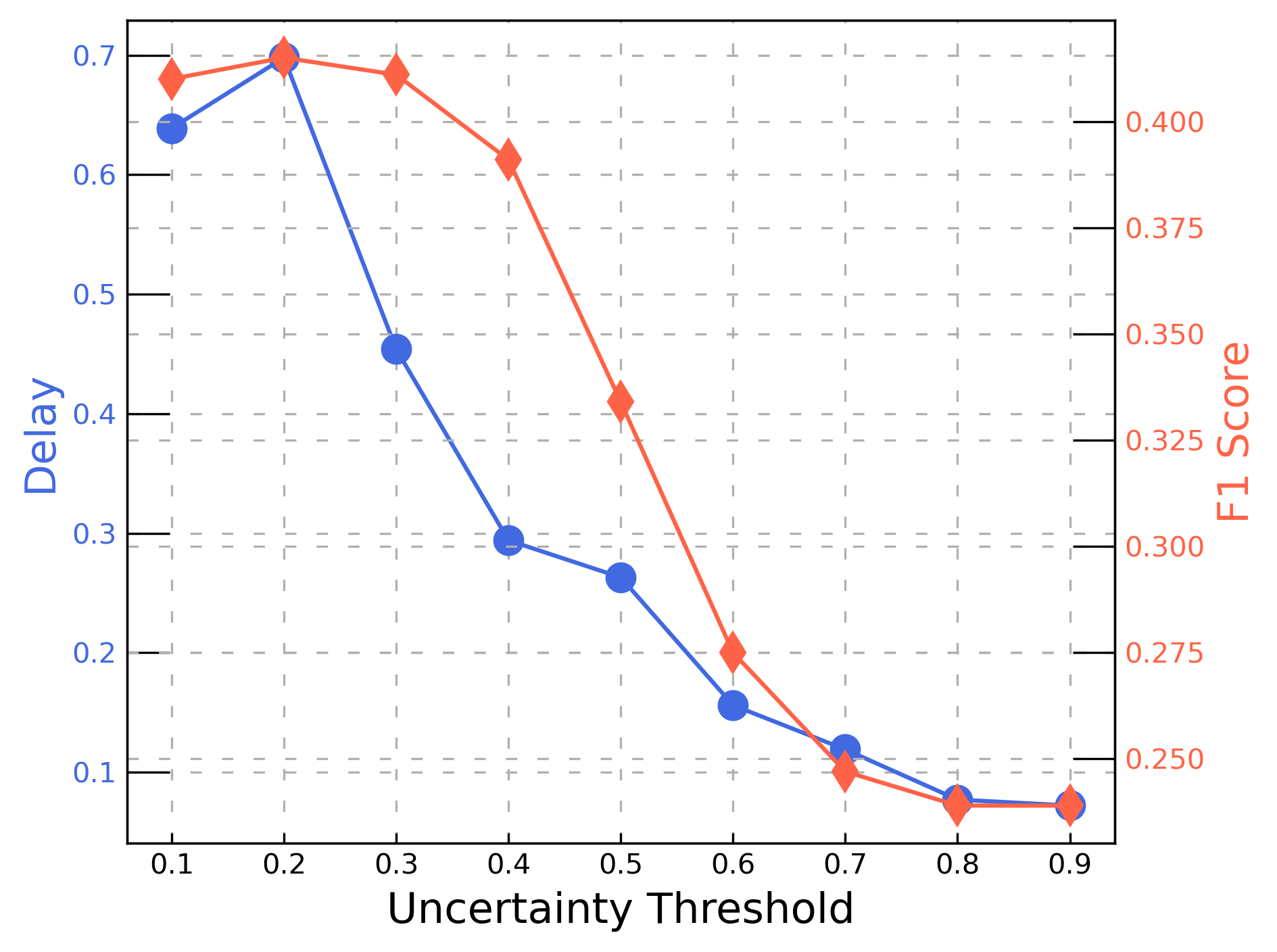}
    \caption{AudioSet (Engine)}
    \end{subfigure}
     \begin{subfigure}[b]{0.4\textwidth}
        \centering
        \includegraphics[width=\linewidth]{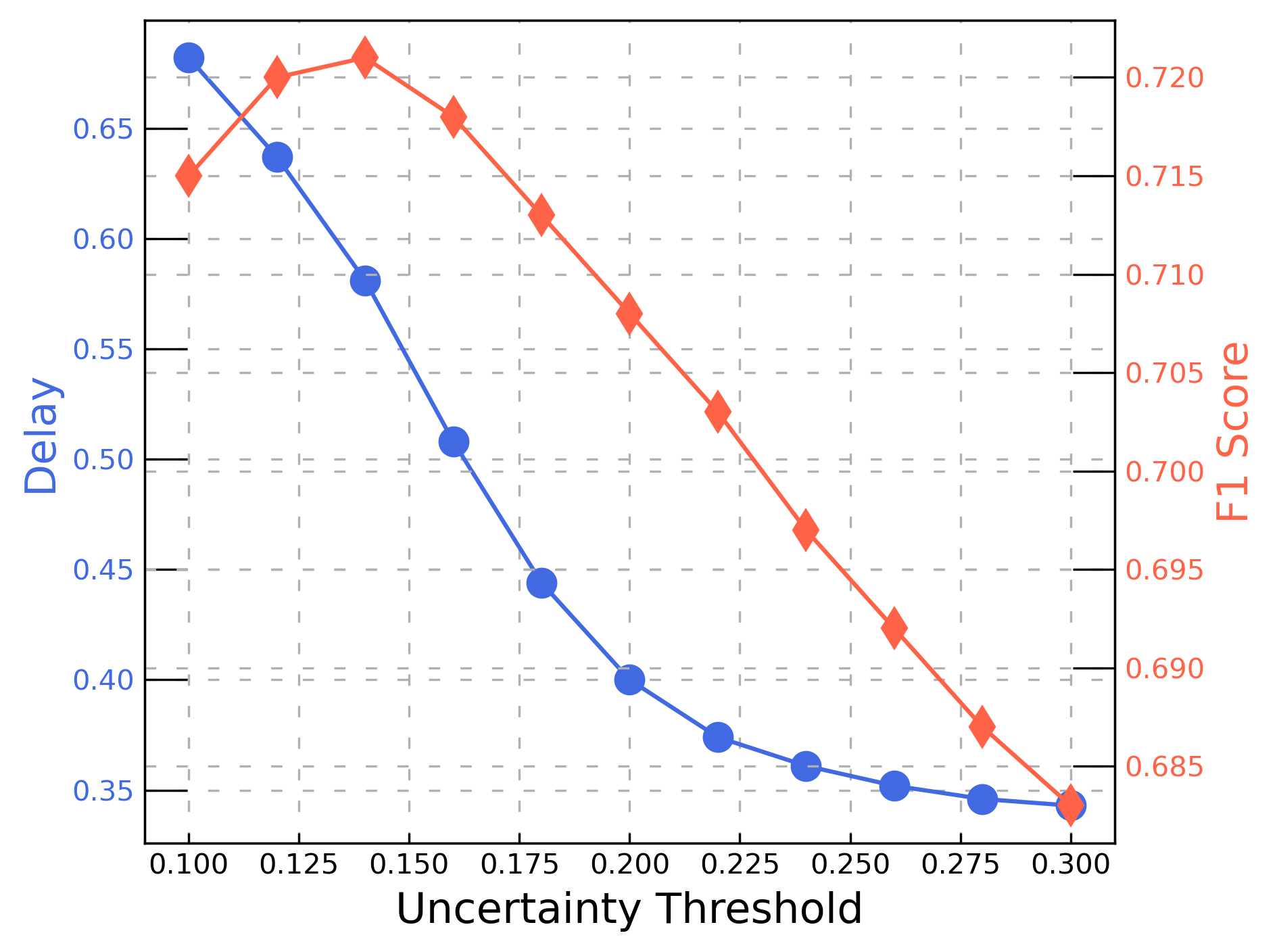}
    \caption{AVA (ST=0.3s)}
    \end{subfigure}
    \small{
    \caption{Sensitive Analysis of uncertainty threshold. There is a tradeoff between detection delay and detection accuracy. The higher uncertainty threshold increase, the more overconfidence predictions.}
    \label{fig:experiment_effect}
    }
\end{figure}

(2) Effect of sliding window size.
We analyzed the sensitivity of our proposed sequential fusional opinion to the size of sliding windows. Fig~\ref{fig:sliding_window} (b) shows the performance of detection delay and F1 score with the varying size of sliding windows. When the sliding window size increases, the detection delay continuously decreases, and detection F1 increases until the sliding window size is large enough.
The results demonstrate that sequential uncertainty estimation is critical to improving early event detection performance.

\begin{figure}[h]
    \centering
     \begin{subfigure}[b]{0.4\textwidth}
        \centering
        \includegraphics[width=\linewidth]{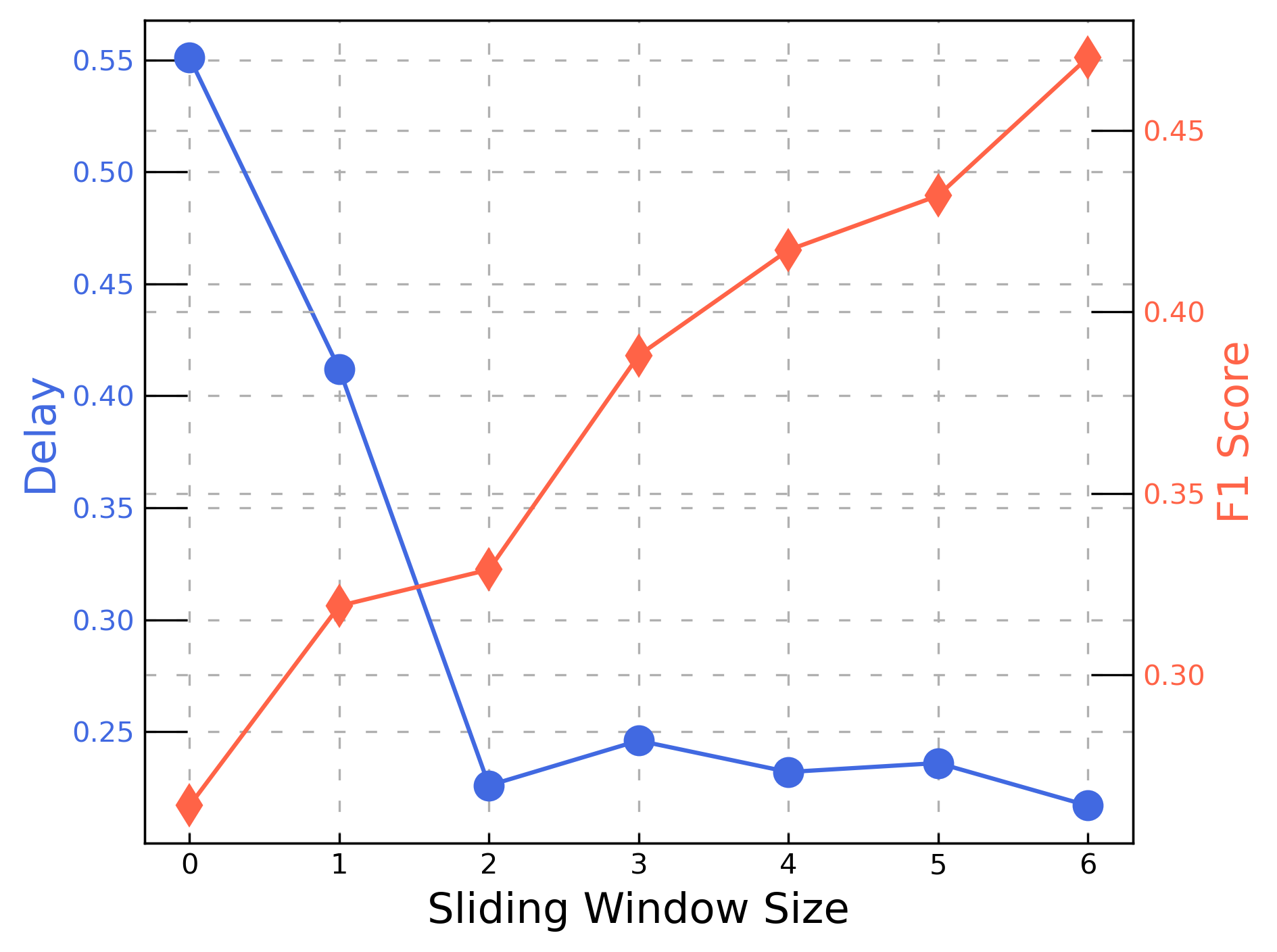}
    \caption{AudioSet (Alarm)}
    \end{subfigure}
    % \hfill
    \begin{subfigure}[b]{0.4\textwidth}
        \centering
        \includegraphics[width=\linewidth]{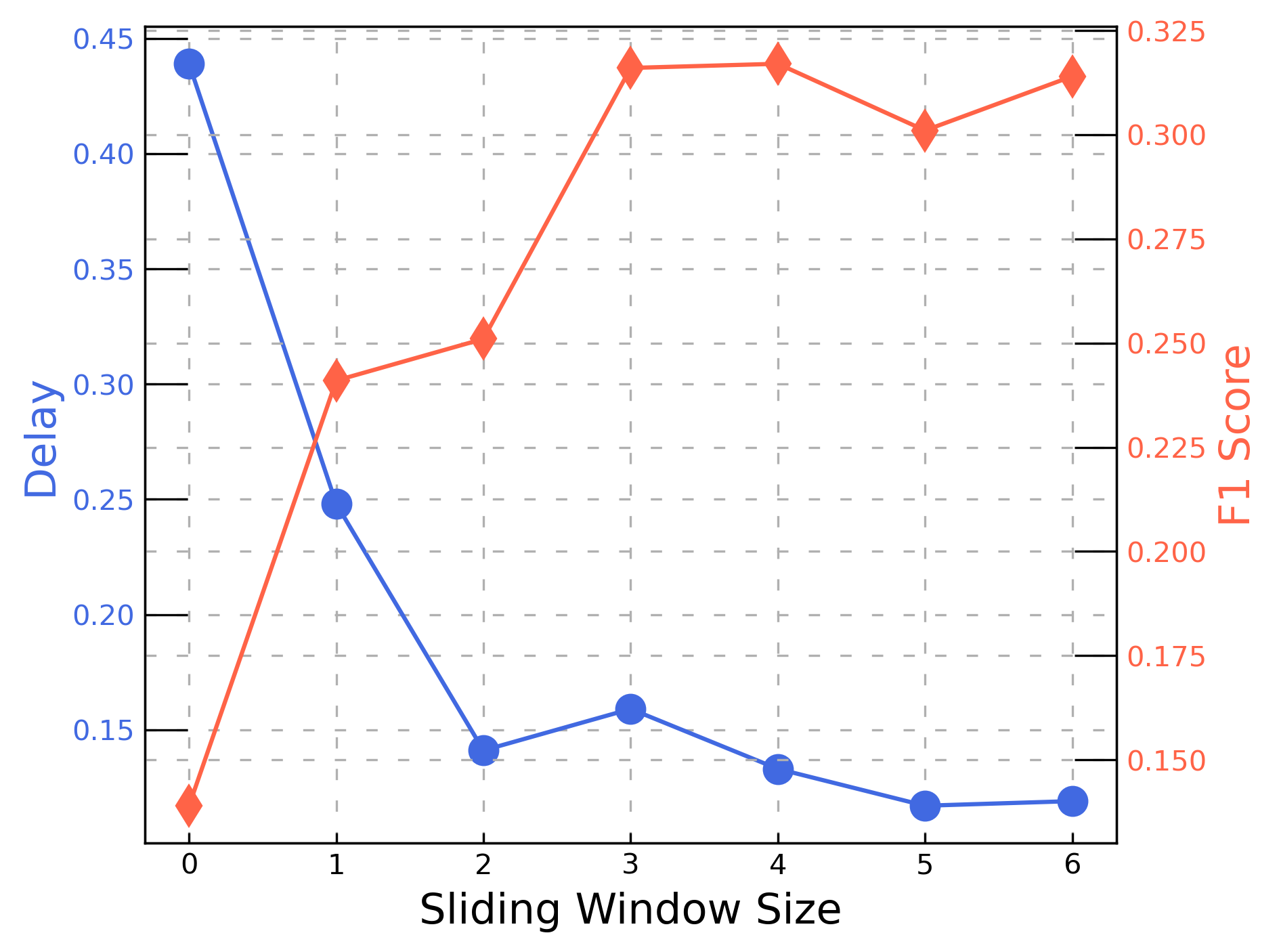}
    \caption{AudioSet (Explosion)}
    \end{subfigure}
    \small{
    \caption{Sensitive Analysis of sliding window size. When the sliding window size increases, the detection delay continuously decreases, and detection F1 increases until the sliding window size is large enough. }
    \label{fig:sliding_window}
    }
\end{figure}

{\bf Per-class performance.}
In addition to the overall comparison, we plot the per-class performance on the Audio (Engine) dataset compared with our method and SEED baselines. As shown in Figure~\ref{fig:per_class}, MTENN-WBC outperforms others under the detection accuracy among most classes. MTENN-UMSS outperforms others under the detection delay. Among most classes. Note that both SEED and MTENN cannot detect difficult events (`Heavy engine' or `Idling') due to class imbalance in the training set. But our MTENN-UMSS can detect these complex events with a small detection delay.

\begin{figure*}[h]
    \centering
    \begin{subfigure}[b]{0.48\textwidth}
        \centering
        \includegraphics[width=\linewidth]{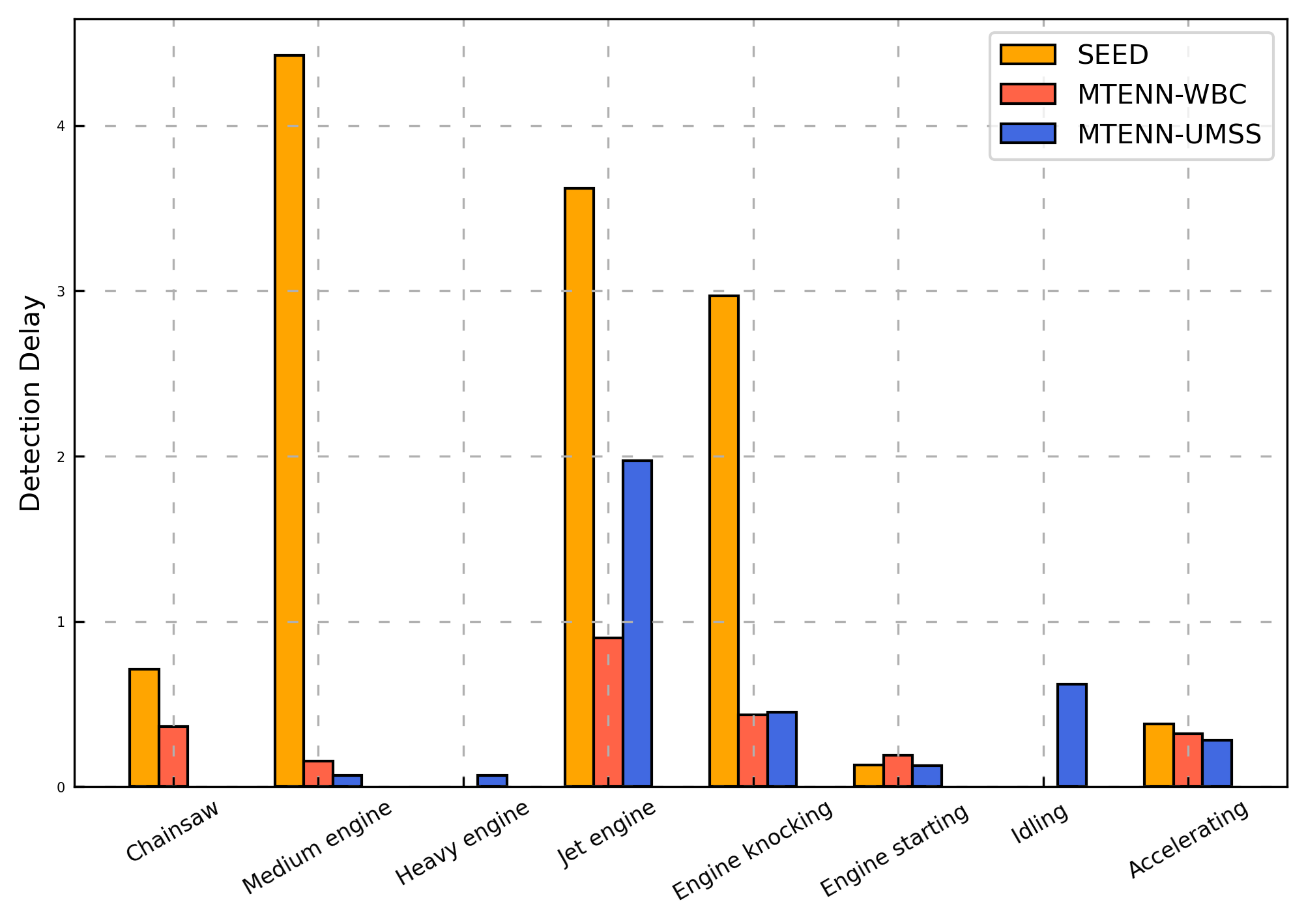}
    \caption{Detection Delay (The less the better)}
    \end{subfigure}
     \begin{subfigure}[b]{0.48\textwidth}
        \centering
        \includegraphics[width=\linewidth]{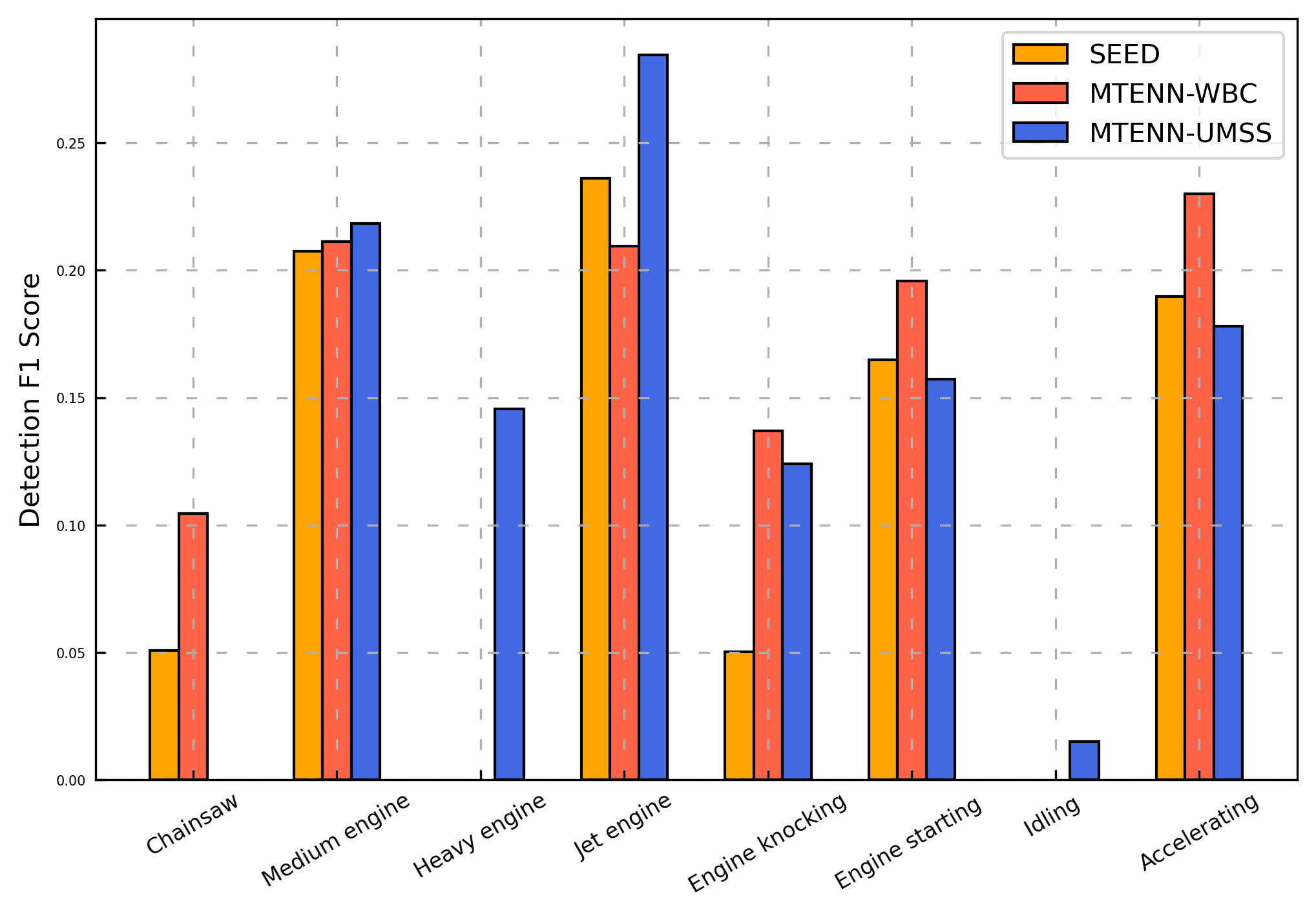}
    \caption{Detection F1 Score (The large the better)}
    \end{subfigure}
    \small{
    \caption{Per-Class Evaluation on Audio (Engine) dataset.}
    \label{fig:per_class}
    }
\end{figure*}

\begin{table}[h]
%\small
\centering
\caption{Ablation study. MTENN-BC: a variant of MTENN-WBC that uses binomial comultiplication instead of weighted binomial comultiplication; MTENN (Phase I): only consider phase I to predict event without any sequential uncertainty head; MTENN w/o MTENN loss: a variant of MTENN (Phase I) that consider BCE loss.}
\begin{tabular}{l|c|c}
 \toprule
\multirow{2}{*}{\textbf{Datasets}} & \textbf{AudioSet(Engine)} & \textbf{AudioSet(Liquid)}  \\ &  \multicolumn{2}{c}{\textbf{Detection Delay} $\downarrow$ / \textbf{Detection F1 Score} $\uparrow$}
  \\ 
\midrule
MTENN w/o MTENN loss & 0.463 / 0.307 & 0.326 / 0.083  \\ 
MTENN (Phase I) & 0.448 / 0.313 & 0.329 / 0.142  \\ 
MTENN-BC & 0.312 /0.383 &	0.074 /	0.207  \\
\midrule
MTENN-WBC & 0.294 / \textbf{0.391} &	0.057 /	\textbf{0.196}  \\
MTENN-UMSS & \textbf{0.237} / 0.349 & \textbf{0.026} / 0.193 \\
\bottomrule
\end{tabular}
\label{tab:ablation}
\end{table}

{\bf Ablation study.}
We conducted additional experiments (see Table~\ref{tab:ablation}) in order to demonstrate the contributions of the key technical components, including MTENN loss, WBC, and UMSS. Specifically, we consider three ablated models: (a) MTENN-BC, a variant of MTENN-WBC that uses binomial comultiplication via Eq.~\eqref{BC} instead of weighted binomial comultiplication; (b) MTENN (Phase I): only consider phase I to predict event via Eq.~\eqref{eq:inference} without any sequential uncertainty head; (c) MTENN w/o MTENN loss: a variant of MTENN (Phase I) that consider BCE loss (Eq.~\eqref{eq:bce}), where the probability can be calculated based on the expected probability of Beta distribution. 
The key findings obtained from this experiment are: (1) both MTENN loss and sequential uncertainty (WBC or UMSS)  can enhance the early event detection in detection delay and detection accuracy; (2) WBC is more effective for detection accuracy and UMSS is more effective for detection delay.

{\bf Inference time.} Table~\ref{tab:inference} shows the inference time for all methods used in our experiments. Note that the audio streaming segment duration is 60 milliseconds (ms), and the video streaming segment duration is 300 milliseconds when $ST=0.3s$, while our approaches only take around 5ms and 190 ms for audio and video streaming segments, respectively. This indicates that our proposed framework satisfies the real-time requirement for early event detection.

\begin{table}[h]
%\small
\centering
\caption{Compare inference time with different methods.}
\begin{tabular}{l|c|l|c}
\toprule
% \hline
% Models & DESED2021 & Models & AVA (ST=0.3s)\\
\multicolumn{2}{c|}{DESED2021} & \multicolumn{2}{c}{AVA (ST=0.3s)} \\
\midrule
Dual DNN & 5.1ms &   &  \\ 
SEED & 5.0ms & SlowFast & 175ms  \\ 
Conformer & 6.6ms & AIA & 181ms \\
CRNN & 5.0ms & ACAR & 187ms \\
CRNN + entropy &  5.0ms & ACAR + entropy &  188ms  \\
CRNN + epistemic & 27.0ms & ACAR + epistemic & 564ms \\
CRNN + aleatoric &  27.0ms & ACAR + aleatoric &  564ms  \\
\midrule
MTENN-WBC & 5.5ms & MTENN-WBC & 192ms  \\
MTENN-UMSS & 5.3ms & MTENN-UMSS & 190ms \\
\bottomrule
\end{tabular}
\label{tab:inference}
\end{table}

\section{Conclusion}
In this work, we propose a novel framework, Multi-Label Temporal Evidential Neural Network (MTENN), for early event detection in temporal data. MTENN is able to quality predictive uncertainty due to the lack of evidence for multi-label classifications at each time stamp based on belief/evidence theory. In addition, we introduce two novel uncertainty fusion operators (weighted binomial comultiplication (WBC) and uncertainty mean scan statistics (UMSS)) based on MTENN  to quantify the fused uncertainty of a sub-sequence for early event detection.. 
We validate the performance of our approach with state-of-the-art techniques on real-world audio and video datasets. Theoretic analysis and empirical studies demonstrate the effectiveness and efficiency of the proposed framework in both detection delay and accuracy.

\chapter{Conclusion and Future Work}
\label{chapter:conclusion}
    \section{Conclusion of Completed Work}
In this dissertation, the proposed research aims on the design a general multi-source uncertainty framework to quantify the inherent uncertainties of deep neural networks.
We are focused on three major types of direction, including uncertainty-aware semi-supervised learning on graph data (Chapter~\ref{chapter:2}), uncertainty-aware robust semi-supervised learning (Chapter~\ref{chapter:3}), and uncertainty-aware early event detection with multi-labels (Chapter~\ref{chapter:4}).

In Chapter~\ref{chapter:2}, we study the uncertainty decomposition problem for graph neural networks. We first provide a theoretical analysis of
the relationships between different types of uncertainties. Then, we proposed a multi-source uncertainty framework of GNNs for semi-supervised
node classification. Our proposed framework provides an effective way of predicting node classification and out-of-distribution detection considering multiple types of uncertainty. We leveraged various types of uncertainty estimates from both deep learning and evidence/belief theory domains. Through our extensive experiments, we found that dissonance-based detection yielded the best performance on misclassification detection while vacuity-based detection performed the best for OOD detection, compared to other competitive counterparts. In particular, it was noticeable that applying GKDE and the Teacher network further enhanced the accuracy in node classification and uncertainty estimates.

We further study uncertainty in robust semi-supervised learning setting in Chapter~\ref{chapter:3}. In this setting, traditional semi-supervised learning (SSL) performance can degrade substantially and is sometimes even worse than simple supervised learning approaches due to the OODs involved in the unlabeled pool. To solve this problem, we first study the impact of OOD data on SSL algorithms and demonstrate empirically on synthetic data that the SSL algorithms' performance depends on how close the OOD instances are to the decision boundary (and the ID data instances), which can also be measured via vacuity uncertainty introduced in Chapter~\ref{chapter:2}. Based on this observation, we proposed a novel unified uncertainty-aware robust SSL framework that treats the weights directly as hyper-parameters in conjunction with weighted batch normalization, which is designed to improve the robustness of BN against OODs. In addition, we proposed two efficient bi-level algorithms for our proposed robust SSL approach, including
meta-approximation and implicit-differentiation based, that have different tradeoffs on computational efficiency and accuracy. We also conduct a theoretical analysis of the impact of faraway OODs in the BN step and discuss the connection between our approach (high-order approximation based on implicit differentiation) and low-order approximation approaches.

Finally, in Chapter~\ref{chapter:4}, we consider a time series setting for early event detection. In this setting, a temporal event with multiple labels occurs sequentially along the timeline. The goal of this work is to accurately detect all classes at the ongoing stage of an event within the least amount of time. The problem is formulated as an online multi-label time series classification problem.
To this end, technically, we propose a novel framework, Multi-Label Temporal Evidential Neural Network, and two novel uncertainty estimation heads (weighted binomial comultiplication (WBC) and uncertainty mena scan statistics(UMSS)) to quantify the fused uncertainty of a sub-sequence for early event detection. We empirically show that the proposed approach outperforms state-of-the-art techniques on real-world datasets.

In light of these discoveries, the following are some interesting future avenues to investigate.

\section{Future Work}
\subsection{Quantification of multidimensional uncertainty}
Different forms of the distribution have respective limitations in quantifying uncertainty. For example, a Dirichlet distribution can estimate vacuity and
dissonance, but not vagueness (e.g., non-distinctive class labels, such as ‘A or B’). Vagueness can emerge only in a hyper-Dirichlet distribution; but both Dirichlet and hyper-Dirichlet distributions are unimodels and cannot estimate data
source dependent uncertainty, i.e., the training data fused from multiple sources. Even subjective logic does not provide ways of measuring the uncertainty types based on multi-model or heterogeneous distributions over a simplex. Therefore, future researchers can explore different forms of distributions to measure different types of uncertainties, such as Dirichlet, hyper-Dirichlet, a mixture of Dirichlet, Logistic Normal distributions, or implicit generative models that are free of parametric forms about the distribution.

\subsection{Interpretation of Multidimensional Uncertainty}
Based on our in-depth literature review on different uncertainty types, we observed that some uncertainty types were studied in different domains as DL and belief
theory with using distinctive terminologies but referring to the same uncertainty types. For example, the distributional uncertainty in deep learning and vacuity in subjective logic are designed to measure uncertainty caused by a lack of information and knowledge.
Based on my prior work~\cite{zhao2020uncertainty}, we found that vacuity is more effective than distributional uncertainty for node-level OOD detection in graph data. This finding explains the difference between distributional uncertainty and vacuity, even if the intent of formulating them was from the same origin. Hence, in this plan, I will further delve into how and why these two similar types of uncertainties perform differently through empirical experiments and theoretical proof. Here is the research question for future investigation: what type of uncertainty is more critical than other types to maximize decision effectiveness under what problem contexts (e.g., images, graph data) and what task types (e.g., classification prediction or OOD)?

% \subsection{Multidimensional Uncertainty-Aware Deep Reinforcement Learning}
% In this plan, I will extend the applicability of my proposed multidimensional Uncertainty framework to address uncertainty aware sequential decision making problems where agents need to interact with a highly dynamic and adversarial environment and learn optimal actions over time based on trial and error. In this plan, I will develop multidimensional uncertainty-aware deep reinforcement learning where the expected reward is maximized by considering multidimensional uncertainty-aware policy function, immediate reward and state transition probabilities, along with an uncertainty-aware exploration rule,
% where the policy is modeled by an SNN network as proposed~\cite{zhao2019quantifying}. 
% % In addition, existing RL-based approaches have been easily fooled by advanced adversarial attacks aiming to confuse DRL agents. I would like to solve this issue by developing fundamental algorithms to tackle the potential challenges.
% I am also excited about exploring the potential applications of multidimensional uncertainty-aware deep reinforcement learning in open environment systems, such as autonomous driving, robotics, and healthcare.

\appendix % required only if you have appendixes
\chapter*{Proofs of the Proposed Theorems}
\label{appendix}
    \section{Proof of Theorem~\ref{theorem_un}}
\label{app:A1}

\textbf{Interpretation}. {\bf Theorem 1.1 (a)} implies that increases in both uncertainty types may not happen 
at the same time.  A higher vacuity leads to a lower dissonance, and vice versa (a higher dissonance leads to a lower vacuity).  This indicates that a high dissonance only occurs only when a large amount of evidence is available and the vacuity is low.  {\bf Theorem 1.1 (b)} shows relationships between vacuity and epistemic uncertainty in which vacuity is an upper bound of 
epistemic uncertainty.  Although some existing approaches~\cite{josang2016subjective, sensoy2018evidential}  treat epistemic uncertainty the same as vacuity, it is not necessarily true except for an extreme case \feng{where} a sufficiently large amount of evidence available, making vacuity  \feng{close to} zero. {\bf Theorem 1.2 (a) and (b)} explain how entropy differs from vacuity and/or dissonance.  We observe \feng{that} entropy is 1 when either vacuity or dissonance is 0.  This implies that entropy cannot distinguish different types of uncertainty \feng{due to different root causes}.  For example, \feng{a} high entropy is observed when an example is either an OOD or misclassified example.  Similarly, \feng{a} high aleatoric uncertainty \feng{value} and 
\feng{a} low epistemic uncertainty \feng{value} are observed under both cases.  However, vacuity and dissonance can capture \feng{different} causes of uncertainty \feng{due to lack of information and knowledge and to conflicting evidence, respectively.}  For example, an OOD objects typically show \feng{a} high vacuity \feng{value and a} low dissonance \feng{value} while \feng{a conflicting} \feng{prediction} exhibit\feng{s} \feng{a} low vacuity \feng{and a} high dissonance.

\begin{proof}
    1. (a) 
    Let the opinion $\omega = [b_1, \ldots, b_K, u_v]$, where $K$ is the number of classes, $b_i$ is the belief for class $i$, $u_v$ is the uncertainty mass (vacuity), and $\sum_{i=1}^K b_i + u_v =1$. Dissonance has an upper bound with 
\begin{eqnarray}
u_{diss} &=& \sum_{i=1}^K \Big(\frac{b_i\sum_{j=1, j\neq i}^K b_j \text{Bal}(b_i, b_j)}{\sum_{j=1, j\neq i}^K b_j} \Big) \\
&\le & \sum_{i=1}^K \Big(\frac{b_i\sum_{j=1, j\neq i}^K b_j }{\sum_{j=1, j\neq i}^K b_j} \Big), \quad (\text{since } 0\le \text{Bal}(b_i, b_j) \le 1 ) \nonumber \\
&=& \sum_{i=1}^K b_i, \nonumber
\end{eqnarray}
where $\text{Bal}(b_i, b_j)$ is the relative mass balance, then we have
\begin{eqnarray}
u_{v} + u_{diss} \le \sum_{i=1}^K b_i + u_v = 1.
\end{eqnarray}

1. (b) For the multinomial random variable $y$, we have
\begin{eqnarray}
y \sim \text{Cal}(\p), \quad \p\sim \text{Dir}({\bm \alpha}),
\end{eqnarray}
where $\text{Cal(\p)}$ is the categorical distribution and $ \text{Dir}({\bm \alpha})$ is Dirichlet distribution. Then we have
\begin{eqnarray}
\text{Prob}(y|{\bm \alpha}) = \int \text{Prob}(y|\p) \text{Prob}(\p|{\bm \alpha}) d\p,
\end{eqnarray}
\feng{and} the epistemic \feng{uncertainty is} estimated by mutual information,
\begin{eqnarray}
    \mathcal{I}[y, \p|{\bm \alpha}]
    =  \mathcal{H}\Big[\mathbb{E}_{\text{Prob}(\p|{\bm \alpha})}[P(y|\p)]\Big]- \mathbb{E}_{\text{Prob}(\p|{\bm \alpha})}\Big[\mathcal{H}[P(y|\p)] \Big].
\end{eqnarray}
Now we consider another measure of ensemble diversity\feng{:} \textit{Expected Pairwise KL-Divergence} between each model in the ensemble. Here the expected pairwise KL-Divergence between two independent distribution\feng{s, including} $P(y|\p_1)$ \feng{and} $P(y|\p_2)$, where $\p_1$ and $\p_2$ are two independent samples from $\text{Prob}(\p|{\bm \alpha})$, \feng{can be} computed,
\begin{eqnarray}
    \mathcal{K}[y, \p|{\bm \alpha}] &=& \mathbb{E}_{\text{Prob}(\p_1|{\bm \alpha}\text{Prob}(\p_2|{\bm \alpha})}\Big[KL[P(y| \p_1)\| P(y| \p_2)]   \Big]  \\
    &=& -\sum_{i=1}^K \mathbb{E}_{\text{Prob}(\p_1|{\bm \alpha})}[P(y|\p_1)] \mathbb{E}_{\text{Prob}(\p_2|{\bm \alpha})}[\ln P(y|\p_2)]  - \mathbb{E}_{\text{Prob}(\p|{\bm \alpha})}\Big[\mathcal{H} [P(y|\p)] \Big] \nonumber \\
    &\ge& \mathcal{I}[y, \p|{\bm \alpha}], \nonumber
\end{eqnarray}
where $\mathcal{I}[y, \p_1|{\bm \alpha}] = \mathcal{I}[y, \p_2|{\bm \alpha}]$. \feng{W}e consider Dirichlet ensemble, the \textit{Expected Pairwise KL Divergence},
\begin{eqnarray}
    \mathcal{K}[y, \p|{\bm \alpha}] &=& -\sum_{i=1}^K \frac{\alpha_i}{S} \Big(\psi(\alpha_i ) - \psi(S)  \Big)  - \sum_{i=1}^K -\frac{\alpha_i}{S}\Big(\psi(\alpha_i + 1) - \psi(S + 1)  \Big) \nonumber \\
    &=& \frac{K-1}{S},
\end{eqnarray}
where $S = \sum_{i=1}^K \alpha_i$ and $\psi(\cdot)$ is the \textit{digamma Function}, which is the derivative of the natural logarithm of the gamma function.
Now \feng{we obtain} the relation\feng{s} between vacuity and epistemic,
\begin{eqnarray}
    \underbrace{\frac{K}{S}}_{\text{Vacuity}} >\mathcal{K}[y, \p|{\bm \alpha}] = \frac{K-1}{S}
    \ge  \underbrace{\mathcal{I}[y, \p|{\bm \alpha}]}_{\text{Epistemic}}.
\end{eqnarray}

2. (a) For \feng{an} out-of-distribution sample, $\alpha=[1, \ldots, 1]$,
\feng{the} vacuity \feng{can be calculated as}
\begin{eqnarray}
u_v &=& \frac{K}{\sum_{i=1}^K \alpha_i} = \frac{K}{K} = 1, 
\end{eqnarray}
and the belief mass $b_i = (\alpha_i - 1)/\sum_{i=1}^K \alpha_i= 0$, we estimate dissonance,
\begin{eqnarray}
u_{diss} &=& \sum_{i=1}^K \Big(\frac{b_i\sum_{j=1, j\neq i}^K b_j \text{Bal}(b_i, b_j)}{\sum_{j=1, j\neq i}^K b_j} \Big) = 0.
\end{eqnarray}
Given the expected probability $\hat{p} = [1/K, \ldots, 1/K]^\top$, the entropy is calculated based on $\log_K$,
\begin{eqnarray}
u_{en} = \mathcal{H}[\hat{p}] =-\sum_{i=1}^K \hat{p}_i \log_K \hat{p}_i = -\sum_{i=1}^K \frac{1}{K} \log_K \frac{1}{K} = \log_K \frac{1}{K}^{-1} =\log_K K = 1\feng{,}
\end{eqnarray}
where $\mathcal{H}(\cdot)$ is the entropy.  Based on Dirichlet distribution, the aleatoric uncertainty refers to the expected entropy,
\begin{eqnarray}
u_{alea} &=& \mathbb{E}_{p\sim \text{Dir}(\alpha)}[\mathcal{H}[p]] \\
&=& -\sum_{i=1}^K \frac{\Gamma (S)}{\prod_{i=1}^K\Gamma(\alpha_i)} \int_{S_K} p_i\log_K p_i \prod _{i=1}^K p_i^{\alpha_i-1} d {\bm p} \nonumber \\
&=& - \frac{1}{\ln K}\sum_{i=1}^K \frac{\Gamma (S)}{\prod_{i=1}^K\Gamma(\alpha_i)} \int_{S_K} p_i\ln p_i \prod _{i=1}^K p_i^{\alpha_i-1} d {\bm p} \nonumber \\
&=& -\frac{1}{\ln K}\sum_{i=1}^K \frac{\alpha_i}{S} \frac{\Gamma(S+1)}{\Gamma(\alpha_i +1)\prod_{i'=1, \neq i}^K \Gamma(\alpha_{i'})} \int_{S_K} p_i^{\alpha_i}\ln p_i \prod _{i'=1, \neq i}^K p_{i'}^{\alpha_{i'}-1} d {\bm p} \nonumber \\
&=& \frac{1}{\ln K}\sum_{i=1}^K \frac{\alpha_i}{S} \big(\psi(S+1) - \psi(\alpha_i +1) \big) \nonumber \\
&=& \frac{1}{\ln K}\sum_{i=1}^K \frac{1}{K}(\psi(K+1)-\psi(2)) \nonumber \\
&=& \frac{1}{\ln K} (\psi(K+1)-\psi(2)) \nonumber \\
&=&\frac{1}{\ln K} ( \psi(2) +\sum_{k=2}^K \frac{1}{k}-\psi(2)) \nonumber \\
&=&\frac{1}{\ln K} \sum_{k=2}^K \frac{1}{k} <\frac{1}{\ln K} \ln K = 1, \nonumber
\label{Eq_alea}
\end{eqnarray}
where $S= \sum_{i=1}^K \alpha_i$\feng{,} ${\bm p} = [p_1, \ldots, p_K]^\top$, \feng{and} $K\ge 2$ is the number of category. The epistemic uncertainty \feng{can be calculated via} the mutual information,
\begin{eqnarray}
u_{epis} &=& \mathcal{H}[\mathbb{E}_{p\sim \text{Dir}(\alpha)}[p]] - \mathbb{E}_{p\sim \text{Dir}(\alpha)}[\mathcal{H}[p]] \\
&=& \mathcal{H}[\hat{p}] - u_{alea} \nonumber \\ 
&=& 1 - \frac{1}{\ln K} \sum_{k=2}^K \frac{1}{k} < 1. \nonumber
\end{eqnarray}
To compare aleatoric \feng{uncertainty} \feng{with} epistemic uncertainty, we first prove that aleatoric uncertainty (Eq.~\eqref{Eq_alea}) is monotonically increasing and converging to 1 \feng{as $K$ increases}. \feng{B}ased on \textit{Lemma~\ref{lemma1}}\feng{,} we have
\begin{eqnarray}
&&\Big(\ln(K+1)-\ln K \Big)\sum_{k=2}^K \frac{1}{k} < \frac{\ln K}{K+1} \nonumber \\
&&\Rightarrow \ln(K+1)\sum_{k=2}^K \frac{1}{k}  < \ln K \Big(\sum_{k=2}^K \frac{1}{k}  + \frac{1}{K+1} \Big) = \ln K \sum_{k=2}^{K+1} \frac{1}{k} \nonumber \\
&& \Rightarrow \frac{1}{\ln K} \sum_{k=2}^K \frac{1}{k}  < \frac{1}{\ln (K+1)} \sum_{k=2}^{K+1} \frac{1}{k} \label{use_lemma1}\feng{.}
\end{eqnarray}
\feng{B}ased on Eq.~\eqref{use_lemma1} and Eq.~\eqref{Eq_alea}, we prove that aleatoric uncertainty is monotonically increasing with respect to $K$. So the minimum aleatoric \feng{can be shown to} be $\frac{1}{\ln 2} \frac{1}{2}$\feng{,} when $K=2$.

\feng{Similarly,} for epistemic uncertainty, which is monotonically decreasing as $K$ increases based on \textit{Lemma~\ref{lemma1}}, the maximum epistemic \feng{can be shown to} be $1- \frac{1}{\ln 2} \frac{1}{2}$ when $K=2$. Then we have,
\begin{eqnarray}
u_{alea} \ge  \frac{1}{\ln 2} \frac{1}{2} > 1 - \frac{1}{2\ln 2} \ge u_{epis} 
\end{eqnarray}

Therefore, we prove \feng{that} $1= u_v = u_{en}>  u_{alea} > u_{epis} > u_{diss} = 0 $.
    
2. (b) For a conflict\feng{ing} prediction, i.e., $\alpha=[\alpha_1, \ldots, \alpha_K]$\feng{,}  with $\alpha_1 = \alpha_2 =\cdots = \alpha_K =C$, and $S=\sum_{i=1}^K \alpha_i = CK$, the expected  probability $\hat{p}=[1/K, \ldots, 1/K]^\top$, the belief mass $b_i=(\alpha_i-1)/S$, \feng{and the} vacuity \feng{can be calculated} as
\begin{eqnarray}
u_v &=& \frac{K}{S} \xrightarrow[]{S\rightarrow \infty} 0, 
\end{eqnarray}
and the dissonance \feng{can be calculated} as
\begin{eqnarray}
u_{diss} &=& \sum_{i=1}^K \Big(\frac{b_i\sum_{j=1, j\neq i}^K b_j \text{Bal}(b_i, b_j)}{\sum_{j=1, j\neq i}^K b_j} \Big) = \sum_{i=1}^K b_i \\
&=& \sum_{i=1}^K\left(\frac{a_i-1}{\sum_{i=1}^Ka_i}\right) \nonumber \\
&=&\frac{\sum_{i=1}^K a_i-k}{\sum_{i=1}^K a_i} \nonumber\\
&=&1-\frac{K}{S}\xrightarrow[]{S\rightarrow \infty} 1. \nonumber
\end{eqnarray}
Given the expected probability $\hat{p} = [1/K, \ldots, 1/K]^\top$, the entropy \feng{can be calculated based on Dirichlet distribution},
\begin{eqnarray}
u_{en} &=& \mathcal{H}[\hat{p}] = -\sum_{i=1}^K \hat{p}_i \log_K \hat{p}_i = 1\feng{,}
\end{eqnarray}
and \feng{the} aleatoric uncertainty is estimated as the expected entropy,
\begin{eqnarray}
u_{alea} &=& \mathbb{E}_{p\sim \text{Dir}(\alpha)}[\mathcal{H}[p]] \\
&=& -\sum_{i=1}^K \frac{\Gamma (S)}{\prod_{i=1}^K\Gamma(\alpha_i)} \int_{S_K} p_i\log_K p_i \prod _{i=1}^K p_i^{\alpha_i-1} d {\bm p} \nonumber \\
&=& - \frac{1}{\ln K}\sum_{i=1}^K \frac{\Gamma (S)}{\prod_{i=1}^K\Gamma(\alpha_i)} \int_{S_K} p_i\ln p_i \prod _{i=1}^K p_i^{\alpha_i-1} d {\bm p} \nonumber \\
&=& -\frac{1}{\ln K}\sum_{i=1}^K \frac{\alpha_i}{S} \frac{\Gamma(S+1)}{\Gamma(\alpha_i +1)\prod_{i'=1, \neq i}^K \Gamma(\alpha_{i'})} \int_{S_K} p_i^{\alpha_i}\ln p_i \prod _{i'=1, \neq i}^K p_{i'}^{\alpha_{i'}-1} d {\bm p} \nonumber \\
&=& \frac{1}{\ln K}\sum_{i=1}^K \frac{\alpha_i}{S} \big(\psi(S+1) - \psi(\alpha_i +1) \big) \nonumber \\
&=& \frac{1}{\ln K}\sum_{i=1}^K \frac{1}{K}(\psi(S+1)-\psi(C + 1)) \nonumber \\
&=& \frac{1}{\ln K} (\psi(S+1)-\psi(C + 1)) \nonumber \\
&=&\frac{1}{\ln K} ( \psi(C + 1) +\sum_{k=C + 1}^S \frac{1}{k}-\psi(C + 1)) \nonumber \\
&=& \frac{1}{\ln K} \sum_{k=C+1}^S \frac{1}{k} \xrightarrow[]{S\rightarrow \infty} 1. \nonumber 
\end{eqnarray}
The epistemic \feng{uncertainty can be calculated via} mutual information,
\begin{eqnarray}
u_{epis} &=& \mathcal{H}[\mathbb{E}_{p\sim \text{Dir}(\alpha)}[p]] - \mathbb{E}_{p\sim \text{Dir}(\alpha)}[\mathcal{H}[p]] \\
&=& \mathcal{H}[\hat{p}] - u_{alea} \nonumber \\ 
&=& 1 - \frac{1}{\ln K} \sum_{k=C+1}^S \frac{1}{k} \xrightarrow[]{S\rightarrow \infty} 0. \nonumber 
\end{eqnarray}
Now we compare aleatoric uncertainty \feng{with} vacuity, 
\begin{eqnarray}
u_{alea} &=& \frac{1}{\ln K} \sum_{k=C+1}^S \frac{1}{k} \\
&=& \frac{1}{\ln K} \sum_{k=C+1}^{CK} \frac{1}{k} \nonumber \\
&=& \frac{\ln(CK+1)-\ln(C+1)}{\ln K} \nonumber \\ 
&=& \frac{\ln(K-\frac{K-1}{C+1})}{\ln K} \nonumber \\ 
&>& \frac{\ln(K-\frac{K-1}{2})}{\ln K} \nonumber \\ 
&=& \frac{\ln(4/K+4/K + 1/2)}{\ln K} \nonumber \\ 
&\ge& \frac{\ln[3(4/K+4/K + 1/2)^{\frac{1}{3}}]}{\ln K} \nonumber \\ 
&=& \frac{\ln 3 + \frac{1}{3} \ln (\frac{K^2}{32})}{\ln K} \nonumber \\ &=& \frac{\ln 3 + \frac{2}{3} \ln K - \frac{1}{3}\ln 32}{\ln K} > \frac{2}{3}. \nonumber
\label{eq_alea2}
\end{eqnarray}
\feng{B}ased on Eq.~\eqref{eq_alea2}, when $C > \frac{3}{2}$, we have
\begin{eqnarray}
u_{alea} > \frac{2}{3} > \frac{1}{C} = u_v
\end{eqnarray}
We have already prove\feng{d} that $u_v>u_{epis}$, when $u_{en}=1$, we have $u_{alea}>u_{diss}$
Therefore, we prove \feng{that} $u_{en}>  u_{alea} > u_{diss} >  u_{v} > u_{epis}$ with $u_{en} = 1, u_{diss}\rightarrow 1, u_{alea}\rightarrow 1, u_{v}\rightarrow 0, u_{epis}\rightarrow 0$
\end{proof}

\begin{lemma}
For all integer $N\ge 2$, we have $\sum_{n=2}^N\frac{1}{n} < \frac{\ln N}{(N+1)\ln (\frac{N+1}{N})}$. \label{lemma1}
\end{lemma}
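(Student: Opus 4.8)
The plan is to clear denominators and prove the equivalent product form of the inequality, namely that for all integers $N \ge 2$,
$$(N+1)\ln\!\Big(1+\tfrac1N\Big)\sum_{n=2}^N \frac1n < \ln N,$$
which is legitimate since $(N+1)\ln\frac{N+1}{N}>0$. The naive harmonic estimate $\sum_{n=2}^N \frac1n < \ln N$ is too weak to close the argument directly, because the prefactor $(N+1)\ln(1+1/N)$ exceeds $1$ (e.g.\ it equals $3\ln\frac32\approx1.22$ at $N=2$). Hence the whole difficulty lies in pairing a sharp-enough bound on the harmonic tail against a sharp-enough bound on this prefactor, calibrated so that their product still falls below $\ln N$.

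First I would control the prefactor. Setting $u=1/N$, I would establish the elementary inequality $(1+u)\ln(1+u) < u + \tfrac{u^2}{2}$ for $u>0$ by introducing $h(u)=u+\tfrac{u^2}{2}-(1+u)\ln(1+u)$, observing $h(0)=0$ and $h'(u)=u-\ln(1+u)>0$. Dividing by $u$ and substituting $u=1/N$ then yields $(N+1)\ln(1+\tfrac1N) < \tfrac{2N+1}{2N}$. Next I would bound the harmonic tail by exploiting the convexity of $1/x$: for each integer $n\ge2$ the midpoint rule underestimates the integral, so $\frac1n < \int_{n-1/2}^{\,n+1/2}\frac{dx}{x} = \ln\frac{2n+1}{2n-1}$, and summing from $n=2$ to $N$ telescopes to $\sum_{n=2}^N \frac1n < \ln\frac{2N+1}{3}$.

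Combining the two bounds reduces the lemma to the single real-variable inequality $\frac{2N+1}{2N}\ln\frac{2N+1}{3} \le \ln N$, equivalently $(2N+1)\ln\frac{2N+1}{3}\le 2N\ln N$. Finally I would dispatch this residual inequality by the substitution $m=2N+1\ge5$, which turns it into $G(m):=(m-1)\ln\frac{m-1}{2}-m\ln\frac m3\ge0$. A direct computation gives $G'(m)=\ln\frac{3(m-1)}{2m}$, positive exactly when $m\ge3$, so $G$ is increasing on $[5,\infty)$; since $G(5)=4\ln2-5\ln\frac53>0$, positivity follows for all $m\ge5$. Chaining the three inequalities then recovers the displayed product form and hence the lemma.

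The main obstacle I anticipate is precisely this calibration in the middle step rather than any single computation: the midpoint bound $\ln\frac{2N+1}{3}$ is about the coarsest harmonic estimate that remains compatible with the inflated prefactor $\frac{2N+1}{2N}$, and verifying this compatibility is exactly the content of the monotone reduction to $G(m)$. A looser harmonic bound (such as $\ln N$ or $\ln(N+1)$) would break the argument at small $N$, where the margin is tightest, so the proof hinges on choosing matched, non-asymptotic bounds on both factors.
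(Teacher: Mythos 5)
Your proof is correct, but it takes a genuinely different route from the paper's. The paper argues by induction on $N$: the base case $N=2$ is checked directly, and the induction step rests on the algebraic identity $\frac{1}{K+1} + \frac{\ln K}{(K+1)\ln\frac{K+1}{K}} = \frac{\ln(K+1)}{(K+1)\ln\frac{K+1}{K}}$, after which one only needs that $g(x) = (x+1)\ln\frac{x+1}{x}$ is decreasing (via $g'(x) = \ln(1+\frac{1}{x}) - \frac{1}{x} < 0$) to pass from $g(K)$ to $g(K+1)$ in the denominator. Your argument is non-inductive: you bound the two factors separately --- the prefactor via $(1+u)\ln(1+u) < u + \frac{u^2}{2}$, giving $(N+1)\ln(1+\frac{1}{N}) < \frac{2N+1}{2N}$, and the harmonic tail via the midpoint rule, giving $\sum_{n=2}^N \frac{1}{n} < \ln\frac{2N+1}{3}$ --- and then close with a single monotonicity computation for $G(m)=(m-1)\ln\frac{m-1}{2}-m\ln\frac{m}{3}$ on $[5,\infty)$. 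Both are valid; the paper's identity makes the induction step essentially one line, whereas your route avoids induction and produces explicit quantitative intermediate bounds that could be reused elsewhere. It is worth noting that the two proofs ultimately lean on the same object: your prefactor estimate is a quantitative refinement of the paper's observation that $g(x)=(x+1)\ln\frac{x+1}{x}$ decreases (toward $1$). One trivial slip in your write-up: $G'(m) = \ln\frac{3(m-1)}{2m}$ vanishes at $m=3$ and is positive for $m>3$, not ``positive exactly when $m\ge 3$''; this is immaterial since you only use monotonicity on $[5,\infty)$, where $G(5)=4\ln 2 - 5\ln\frac{5}{3}>0$ seals the argument.
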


\begin{proof}
We will prove by induction that, for all integer $N\ge 2$,
\begin{eqnarray}
\sum_{n=2}^N\frac{1}{n} < \frac{\ln N}{(N+1)\ln (\frac{N+1}{N})} \label{statement1}.
\end{eqnarray}
\textit{Base case}: \feng{W}hen $N=2$\feng{,} we have $ \frac{1}{2} < \frac{\ln 2}{3 \ln \frac{3}{2}}$ \feng{and} Eq.~\eqref{statement1} is true for $N=2$.

\textit{Induction step}: \feng{L}et \feng{the} integer $K\ge 2$ is given and suppose Eq.~\eqref{statement1} is true for $N=K$, then
\begin{eqnarray}
\sum_{k=2}^{K+1}\frac{1}{k}= \frac{1}{K + 1} + \sum_{k=2}^{K}\frac{1}{k} 
< \frac{1}{K + 1} + \frac{\ln K}{(K+1)\ln (\frac{K+1}{K})} 
= \frac{\ln(K + 1)}{(K + 1)\ln (\frac{K + 1}{K})}. 
\label{step1}
\end{eqnarray}

Denote that $g(x) = (x+1)\ln (\frac{x+1}{x})$ with $x> 2$\feng{.} \feng{W}e get its derivative, $g'(x) = \ln (1+ \frac{1}{x})- \frac{1}{x}< 0$\feng{,} such that $g(x)$ is monotonically decreasing, which results in $g(K)> g(K + 1)$\feng{.} \feng{B}ased on Eq.~\eqref{step1} we have,
\begin{eqnarray}
\sum_{k=2}^{K+1}\frac{1}{k}< \frac{\ln (K +1)}{g(K)} < \frac{\ln (K + 1)}{g(K + 1)} = \frac{\ln(K + 1)}{(K + 2)\ln (\frac{K + 2}{K+1})}. 
\end{eqnarray}
Thus, Eq.~\eqref{statement1} holds for $N=K+1$, and the proof of the induction step is complete.

\textit{Conclusion}: By the principle of induction, Eq.~\eqref{statement1} is true for all integer $N\ge 2$.
\end{proof}

\section{Derivations for Joint Probability and KL Divergence}\label{app:A2}

\subsection{Joint Probability}
We infer the joint probability (Eq.~\eqref{Baye_model}) by:
\begin{eqnarray}
\small
&&p(\y |A, \rr; \mathcal{G}) = \int \int \text{Prob}(\y | \p) \text{Prob}(\p | A, \rr; \bm{\theta} ) \text{Prob}(\bm{\theta} | \mathcal{G}) d \p d\bm{\theta} \nonumber\\
&\approx& \int \int \text{Prob}(\y | \p) \text{Prob}(\p | A, \rr; \bm{\theta} ) q(\bm{\theta}) d \p d\theta  \nonumber\\
&\approx& \frac{1}{M}\sum_{m=1}^M \int \text{Prob}(\y | \p) \text{Prob}(\p | A, \rr;\bm{\theta}^{(m)} ) d \p, \quad  \bm{\theta}^{(m)} \sim q( \bm{\theta}) \nonumber \\
&\approx& \frac{1}{M}\sum_{m=1}^M \int \sum_{i=1}^N \text{Prob}(\y_i | \p_i) \text{Prob}(\p_i | A, \rr;\bm{\theta}^{(m)} ) d \p_i, \quad  \bm{\theta}^{(m)} \sim q( \bm{\theta}) \nonumber \\
&\approx& \frac{1}{M}\sum_{m=1}^M  \sum_{i=1}^N \int \text{Prob}(\y_i | \p_i) \text{Prob}(\p_i | A, \rr;\bm{\theta}^{(m)} ) d \p_i, \quad \bm{\theta}^{(m)} \sim q( \bm{\theta}) \nonumber \\
&\approx& \frac{1}{M}\sum_{m=1}^M  \prod_{i=1}^N \int \text{Prob}(\y_i | \p_i)  \text{Dir}(\p_i|\bm{\alpha}_i^{(m)}) d \p_i,\quad  \bm{\alpha}^{(m)} = f(A, \rr, \bm{\theta}^{(m)}),q\quad \bm{\theta}^{(m)} \sim q( \bm{\theta}), \nonumber 
\end{eqnarray}
where the posterior over class label $p$ will be given by the mean of the Dirichlet:
\begin{eqnarray}
\text{Prob}(y_i = p | \bm{\theta}^{(m)}) = \int \text{Prob}(y_i =p | \p_i) \text{Prob}(\p_i | A, \rr;\bm{\theta}^{(m)} ) d \p_i = \frac{\alpha_{ip}^{(m)}}{\sum_{k=1}^K \alpha_{ik}^{(m)}}. \nonumber
\end{eqnarray}
The probabilistic form for a specific node $i$ by using marginal probability,
\begin{eqnarray}
\text{Prob}(\y_i | A, \rr; \mathcal{G}) &=& \sum_{y\setminus y_i} \text{Prob}(\y | A, \rr; \mathcal{G})  \nonumber \\
&=& \sum_{y\setminus y_i} \int \int \prod_{j=1}^N\text{Prob}(\y_j | \p_j) \text{Prob}(\p_j | A, \rr; \bm{\theta} ) \text{Prob}(\bm{\theta} | \mathcal{G}) d \p d\bm{\theta} \nonumber \\
&\approx& \sum_{y\setminus y_i} \int \int \prod_{j=1}^N\text{Prob}(\y_j | \p_j) \text{Prob}(\p_j | A, \rr; \bm{\theta} ) q(\bm{\theta})d \p d\bm{\theta} \nonumber \\
&\approx&\sum_{m=1}^M \sum_{y\setminus y_i} \int \prod_{j=1}^N\text{Prob}(\y_j | \p_j) \text{Prob}(\p_j | A, \rr; \bm{\theta}^{(m)} )  d \p,\quad \bm{\theta}^{(m)} \sim q( \bm{\theta})  \nonumber \\
&\approx& \sum_{m=1}^M \Big[\sum_{y\setminus y_i} \int \prod_{j=1}^N\text{Prob}(\y_j | \p_j) \text{Prob}(\p_j | A, \rr; \bm{\theta}^{(m)} )  d \p_j\Big], \quad \bm{\theta}^{(m)} \sim q( \bm{\theta})  \nonumber \\
&\approx& \sum_{m=1}^M  \Big[\sum_{y\setminus y_i}  \prod_{j=1, j\neq i}^N\text{Prob}(\y_j | A, \rr_j; \bm{\theta}^{(m)})   \Big] \text{Prob}(\y_i | A, \rr; \bm{\theta}^{(m)} ),\quad \bm{\theta}^{(m)} \sim q( \bm{\theta})  \nonumber \\
&\approx& \sum_{m=1}^M \int \text{Prob}(\y_i | \p_i) \text{Prob}(\p_i |A, \rr; \bm{\theta}^{(m)} ) d \p_i , \quad \bm{\theta}^{(m)} \sim q( \bm{\theta}).  \nonumber
\end{eqnarray}
To be specific, the probability of label $p$ is,
\begin{eqnarray}
\text{Prob}(y_i=p | A, \rr; \mathcal{G}) \approx \frac{1}{M} \sum_{m=1}^M \frac{\alpha_{ip}^{(m)}}{\sum_{k=1}^K \alpha_{ik}^{(m)}}, \quad  \bm{\alpha}^{(m)} = f(A, \rr, \bm{\theta}^{(m)}),\quad \bm{\theta}^{(m)} \sim q( \bm{\theta}). \nonumber
\end{eqnarray}
\subsection{KL-Divergence }
KL-divergence between $\text{Prob}({\bf y} | {\bf r}; \bm{\gamma},  \mathcal{G})$ and  $\text{Prob}({\bf y} | \hat{\p})$ is given by 
\begin{eqnarray}
\text{KL}[\text{Prob}(\y | A, \rr;\mathcal{G})|| \text{Prob}({\bf y} | \hat{\p}))]
&=& \mathbb{E}_{\text{Prob}(\y | A, \rr;\mathcal{G})}\Big[\log \frac{\text{Prob}(\y | A, \rr;\mathcal{G})} {\text{Prob}({\bf y} | \hat{\p})} \Big] \nonumber \\
&\approx&\mathbb{E}_{\text{Prob}(\y | A, \rr;\mathcal{G})} \Big[\log \frac{\prod_{i=1}^N  \text{Prob}(\y_i | A, \rr; \mathcal{G})}{\prod_{i=1}^N  \text{Prob}({\bf y} | \hat{\p})} \Big] \nonumber \\
&\approx&  \mathbb{E}_{\text{Prob}(\y | A, \rr;\mathcal{G})} \Big[\sum_{i=1}^N \log\frac{\text{Prob}(\y_i | A, \rr; \mathcal{G})}{\text{Prob}({\bf y} | \hat{\p})} \Big] \nonumber\\
&\approx& \sum_{i=1}^N \mathbb{E}_{\text{Prob}(\y | A, \rr;\mathcal{G})} \Big[ \log\frac{\text{Prob}(\y_i | A, \rr; \mathcal{G})}{\text{Prob}({\bf y} | \hat{\p})} \Big] \nonumber\\
&\approx& \sum_{i=1}^N \sum_{j=1}^K \text{Prob}(y_i=j | A, \rr; \mathcal{G}) \Big( \log\frac{\text{Prob}(y_i=j | A, \rr; \mathcal{G})}{\text{Prob}(y_i=j| \hat{\p})} \Big) \nonumber
\end{eqnarray}

The KL divergence between two Dirichlet distributions $\text{Dir}(\alpha)$ and $\text{Dir}(\hat{\alpha})$ can be obtained in closed form as,
\begin{eqnarray}
&&\text{KL}[\text{Dir}(\alpha)\| \text{Dir}(\hat{\alpha})] \nonumber \\ &&= \ln \Gamma(S) - \ln \Gamma(\hat{S}) + \sum_{c=1}^K \big(\ln \Gamma(\hat{\alpha}_c) - \ln \Gamma(\alpha_c) \big)   + \sum_{c=1}^K (\alpha_c - \hat{\alpha}_c)(\psi(\alpha_c) - \psi(S)),   \nonumber
\end{eqnarray}
where $S = \sum_{c=1}^K \alpha_c$ and $\hat{S} = \sum_{c=1}^K \hat{\alpha}_c$.

\section{Proof of Theorem~\ref{theorem_rssl_conv}}\label{app:A3}
For our robust SSL, the training loss $\mathcal{L}_T$ can be decomposition into supervised loss $\mathcal{L}_{L}$ and unsupervised loss $\mathcal{L}_{U}$, i.e., $\mathcal{L}_{T}(\theta) = \mathcal{L}_{L}(\theta) + \w \mathcal{L}_{U}(\theta)$, where $\mathcal{L}_{L}(\theta) = \sum\nolimits_{(\x_i, y_i)\in \mathcal{D}} l(f(\x_i, \theta), y_i) $ and $\mathcal{L}_{U}(\theta) = \sum\nolimits_{x_j\in \mathcal{U}} r(f(\x_j, \theta))$.
\begin{lemma}
Suppose the validation loss function $\mathcal{L}_V$ is Lipschitz smooth with constant L, and the unsupervised loss function $\mathcal{L}_U$ have $\gamma$-bounded gradients. Then the gradient of the validation loss function with respect to $\w$ is Lipschitz continuous.
\label{lemma2}
\end{lemma}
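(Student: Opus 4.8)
The plan is to reduce the Lipschitz continuity of $\w \mapsto \nabla_\w \mathcal{L}_V(\theta^*(\w))$ to the assumed smoothness of $\mathcal{L}_V$ in $\theta$, by exploiting that under the one-step meta approximation ($J=1$) invoked by Theorem~\ref{theorem_rssl_conv} the inner optimum $\theta^*$ is an \emph{affine} function of $\w$. First I would write the inner update explicitly. Decomposing $\mathcal{L}_T(\theta, \w) = \mathcal{L}_{L}(\theta) + \sum_{j} w_j r(f(\x_j; \theta))$, the single gradient step gives
\begin{equation}
\theta^*_t(\w) = \theta_t - \alpha\Big(\nabla_\theta \mathcal{L}_{L}(\theta_t) + \sum_j w_j \nabla_\theta r_j(\theta_t)\Big),
\end{equation}
where $\theta_t$ is held fixed. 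Collecting the per-sample unsupervised gradients into the columns of a matrix $G \triangleq [\nabla_\theta r_1(\theta_t), \ldots, \nabla_\theta r_m(\theta_t)]$, the best-response Jacobian $\partial \theta^*_t / \partial \w = -\alpha G$ is \emph{constant} in $\w$, which is the structural fact that makes the argument go through.

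Next, applying the chain rule yields the closed form $\nabla_\w \mathcal{L}_V(\theta^*_t(\w)) = -\alpha\, G^\top \nabla_\theta \mathcal{L}_V(\theta^*_t(\w))$, so that the entire $\w$-dependence of the outer gradient sits inside $\nabla_\theta \mathcal{L}_V$. I would then take two weight vectors $\w_1, \w_2$ and subtract,
\begin{equation}
\nabla_\w \mathcal{L}_V(\theta^*_t(\w_1)) - \nabla_\w \mathcal{L}_V(\theta^*_t(\w_2)) = -\alpha\, G^\top \big[\nabla_\theta \mathcal{L}_V(\theta^*_t(\w_1)) - \nabla_\theta \mathcal{L}_V(\theta^*_t(\w_2))\big].
\end{equation}
Bounding the norm then uses three ingredients in sequence: the operator-norm inequality $\|G^\top v\| \le \|G\|\,\|v\|$; the $L$-Lipschitz smoothness of $\mathcal{L}_V$ to get $\|\nabla_\theta \mathcal{L}_V(\theta^*_t(\w_1)) - \nabla_\theta \mathcal{L}_V(\theta^*_t(\w_2))\| \le L\,\|\theta^*_t(\w_1) - \theta^*_t(\w_2)\|$; and the affine form above, which gives $\theta^*_t(\w_1) - \theta^*_t(\w_2) = -\alpha\, G(\w_1 - \w_2)$ and hence $\|\theta^*_t(\w_1) - \theta^*_t(\w_2)\| \le \alpha\,\|G\|\,\|\w_1 - \w_2\|$. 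Chaining these produces $\|\nabla_\w \mathcal{L}_V(\theta^*_t(\w_1)) - \nabla_\w \mathcal{L}_V(\theta^*_t(\w_2))\| \le \alpha^2 L \|G\|^2 \|\w_1 - \w_2\|$, and reading the $\gamma$-bounded gradient hypothesis as the per-sample bound $\|\nabla_\theta r_j\| \le \gamma$ caps $\|G\|^2 \le \|G\|_F^2 \le m\gamma^2$, yielding the explicit constant $\tilde L = \alpha^2 L m \gamma^2$. The direct regularizer $\lambda\,\mathrm{Un}(\w, \uu) = \lambda \uu (1-\w)^2$ has a gradient affine in $\w$, hence Lipschitz with an additive constant of order $\lambda\|\uu\|_\infty$, which I would fold in at the end.

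The main obstacle I anticipate is not the algebra but isolating the right structural fact and pinning down the interpretation of the hypotheses. The whole argument hinges on the constancy of $\partial\theta^*_t/\partial\w$, which is exactly what the one-step ($J=1$) approximation buys us: with $J>1$ inner steps the map $\w \mapsto \theta^*_t(\w)$ is no longer affine, its Jacobian becomes a product of $\w$-dependent, Hessian-involving factors, and Lipschitzness would additionally require a bounded and Lipschitz-Hessian assumption on $\mathcal{L}_T$ together with a careful telescoping over the $J$ steps. I would therefore prove the lemma in the $J=1$ regime actually used by the convergence theorem, and be explicit about reading "$\gamma$-bounded gradients" as a per-sample bound so that $\|G\|$ is controlled; once those choices are fixed, the remaining steps are routine norm manipulations.
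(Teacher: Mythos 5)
Your proposal is correct and follows essentially the same route as the paper's proof: both exploit the one-step meta approximation so that $\theta^*(\w)$ is affine in $\w$ with a constant best-response Jacobian built from the unsupervised-loss gradients, apply the chain rule, and combine the $L$-smoothness of $\mathcal{L}_V$ with the $\gamma$-bounded gradients to obtain a Lipschitz constant of order $\alpha^2\gamma^2 L$. The only differences are cosmetic: the paper bounds the second derivative $\nabla^2_{\w}\mathcal{L}_V$ and invokes the mean value theorem, whereas you bound the gradient difference directly (and treat $\w$ honestly as a vector via the matrix $G$, which is why your constant picks up the extra factor $\|G\|^2 \le m\gamma^2$ in place of the paper's scalar-style $\gamma^2$).
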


\begin{proof}
For the meta approximation method, \feng{the gradient of the validation loss function with respect to $\w$} can be written as:
\begin{eqnarray}
    \nabla_\w \partial \mathcal{L}_V(\theta^*(\w)) &=& \frac{\partial \mathcal{L}_V(\theta^*(\w))}{\partial \theta^*} \cdot \frac{\theta^*(\w)}{\partial \w} \nonumber \\
    &=& \frac{\partial \mathcal{L}_V(\theta^*(\w))}{\partial \theta^*} \cdot \frac{\theta-\alpha \frac{\partial \mathcal{L}_T(\theta)}{\partial \theta}}{\partial \w} \nonumber \\
     &=& \frac{\partial \mathcal{L}_V(\theta^*(\w))}{\partial \theta^*} \cdot \frac{\theta-\alpha \frac{\partial (\mathcal{L}_L(\theta)+\w \mathcal{L}_U(\theta))}{\partial \theta}}{\partial \w} \nonumber \\
     &=& -\alpha \frac{\partial \mathcal{L}_V(\theta^*(\w))}{\partial \theta^*} \cdot \frac{\partial\mathcal{L}_U(\theta)}{\partial \theta}\feng{.}
     \label{w_grad}
\end{eqnarray}
% where for robust SSL, the training loss can be decomposition into labeled loss $\mathcal{L}_{L}(\theta)$ and unlabeled loss $\mathcal{L}_{U}(\theta)$, i.e., $\mathcal{L}_{T}(\theta) = \mathcal{L}_{L}(\theta) + \w \mathcal{L}_{U}(\theta)$. 
Taking gradient\feng{s} \feng{on} both sides of Eq~\eqref{w_grad} ]\feng{with respect to {\bf w}}, we have
\begin{eqnarray}
     \|\nabla^2_\w \partial \mathcal{L}_V(\theta^*(\w)) \|
     &=& -\alpha \Big\|\frac{\partial}{\partial \w} \Big( \frac{\partial \mathcal{L}_V(\theta^*(\w))}{\partial \theta^*} \cdot \frac{\partial\mathcal{L}_U(\theta)}{\partial \theta} \Big)\Big\| \nonumber \\
     &=& -\alpha \Big\|\frac{\partial}{\partial \theta^*} \Big( \frac{\partial \mathcal{L}_V(\theta^*(\w))}{\partial \w}\Big) \cdot \frac{\partial\mathcal{L}_U(\theta)}{\partial \theta}\Big\|  \nonumber \\
      &=& -\alpha \Big\|\frac{\partial}{\partial \theta} \Big( -\alpha \frac{\partial \mathcal{L}_V(\theta^*(\w))}{\partial \theta^*} \cdot \frac{\partial\mathcal{L}_U(\theta)}{\partial \theta}\Big) \cdot \frac{\partial\mathcal{L}_U(\theta)}{\partial \theta}\Big\|  \nonumber \\
       &=& \alpha^2 \Big\| \frac{\partial^2 \mathcal{L}_V(\theta^*(\w))}{\partial \theta^* \partial \theta^*} \frac{\partial\mathcal{L}_U(\theta)}{\partial \theta} \cdot \frac{\mathcal{L}_U(\theta)}{\partial \theta}\Big\|  \nonumber \\
       &\le & \alpha^2 \gamma^2 L \feng{,}
\end{eqnarray}
since $\|\frac{\partial^2 \mathcal{L}_V(\theta^*(\w))}{\partial \theta^* \partial \theta^*}\|\le L, \|\frac{\mathcal{L}_U(\theta)}{\partial \theta}\| \le \gamma$. \feng{Let} $\tilde{L} = \alpha^2 \gamma^2 L $\feng{.} \feng{B}ased on \feng{the} Lagrange mean value theorem, we have,
\begin{eqnarray}
    \|\nabla \mathcal{L}_V(\theta^*(\w_i)) - \nabla \mathcal{L}_V(\theta^*(\w_j)) \| \le \tilde{L} \|\w_i-\w_j\|, \quad \text{for all } \w_i, \w_j\feng{,}
\end{eqnarray}
where $\nabla \mathcal{L}_V(\theta^*(\w_i)) = \nabla \mathcal{L}_V(\theta^*(\w))|_{\w_i}$
\end{proof}
Based on Lemma~\ref{lemma2}, now we start to proof of theorem~\ref{theorem_rssl_conv}.

\begin{proof}
First, according to the updating rule, we have:
\begin{eqnarray}
    \mathcal{L}_V(\theta_{t+1}) - \mathcal{L}_V(\theta_{t}) &=& 
    \mathcal{L}_V(\theta_{t}-\alpha \nabla_\theta \mathcal{L}_T(\theta_t, \w_t)) - \mathcal{L}_V(\theta_{t-1}-\alpha \nabla_\theta \mathcal{L}_T(\theta_{t-1}, \w_{t-1}))   \nonumber \\
    &=& \underbrace{ \mathcal{L}_V(\theta_{t}-\alpha \nabla_\theta \mathcal{L}_T(\theta_t, \w_t)) - \mathcal{L}_V(\theta_{t-1}-\alpha \nabla_\theta \mathcal{L}_T(\theta_{t}, \w_{t}))}_{(a)} + \nonumber \\
    && \underbrace{\mathcal{L}_V(\theta_{t-1}-\alpha \nabla_\theta \mathcal{L}_T(\theta_t, \w_t)) - \mathcal{L}_V(\theta_{t-1}-\alpha \nabla_\theta \mathcal{L}_T(\theta_{t-1}, \w_{t-1}))}_{(b)}
\end{eqnarray}
and for term (a), we have
\begin{eqnarray}
    &&\mathcal{L}_V(\theta_{t}-\alpha \nabla_\theta \mathcal{L}_T(\theta_t, \w_t)) - \mathcal{L}_V(\theta_{t-1}-\alpha \nabla_\theta \mathcal{L}_T(\theta_{t}, \w_{t})) \nonumber \\
    &\le &  (\theta_t- \theta_{t-1})\cdot \nabla_\theta \mathcal{L}_V(\theta_{t-1}-\alpha \nabla_\theta \mathcal{L}_T(\theta_{t}, \w_{t})) +\frac{L}{2}\|\theta_t-\theta_{t-1} \|^2_2 \quad (\text{Lipschitzs smooth})  \nonumber \\
    &\le & \alpha\gamma^2+\frac{L}{2}\alpha^{2} \gamma^2 \nonumber \\
    &=&\alpha \gamma^2(\frac{\alpha L}{2} + 1) \feng{.}
\end{eqnarray}
For term (b), we can adapt a Lipschitz-continuous function to make $\mathcal{L}_V$ smooth w.r.t. $\w$. Then we have,
\begin{eqnarray}
    &&\mathcal{L}_V(\theta_{t-1}-\alpha \nabla_\theta \mathcal{L}_T(\theta_t, \w_t)) - \mathcal{L}_V(\theta_{t-1}-\alpha \nabla_\theta \mathcal{L}_T(\theta_{t-1}, \w_{t-1})) \nonumber \\
    &=&\mathcal{L}_V(\theta^*(\w_t)) - \mathcal{L}_V(\theta^*(\w_{t-1})) \nonumber \\
    &\le & (\w_t-\w_{t-1})\cdot \nabla_\w \mathcal{L}_V(\theta_t)+ \frac{\tilde{L}}{2}\|\w_t-\w_{t-1} \|^2_2 \qquad \blue{\text{(From Lemma~\ref{lemma2})}} \nonumber \\
    &=& -\beta \nabla_\w  \mathcal{L}_V(\theta_t)\cdot \nabla_\w \mathcal{L}_V(\theta_t) + \frac{\zxj{\tilde{L}}}{2}\|-\beta \nabla_\w  \mathcal{L}_V(\theta_t)\|^2_2 \nonumber \\
    &=& (\frac{\zxj{\tilde{L}}}{2}\beta^2-\beta) \| \nabla_\w  \mathcal{L}_V(\theta_t)\|^2_2.
\end{eqnarray}
Then we have,
\begin{eqnarray}
    \mathcal{L}_V(\theta_{t+1}) - \mathcal{L}_V(\theta_{t}) \le \alpha \gamma^2(\frac{\zxj{\alpha} L}{2} + 1) + (\frac{\zxj{\tilde{L}}}{2}\beta^2-\beta) \| \nabla_\w  \mathcal{L}_V(\theta_t)\|^2_2
\end{eqnarray}
Summing up the above inequalities and rearranging the terms, we can obtain
\begin{eqnarray}
    \sum_{t=1}^T (\beta -\frac{\zxj{\tilde{L}}}{2}\beta^2) \| \nabla_\w  \mathcal{L}_V(\theta_t)\|^2_2 &\le& 
    \mathcal{L}_V(\theta_1) - \mathcal{L}_V(\theta_{T+1}) + \alpha \gamma^2(\frac{\zxj{\alpha} LT}{2}-T) \nonumber \\
    &\le& \mathcal{L}_V(\theta_1) + \alpha \gamma^2(\frac{\zxj{\alpha} LT}{2} + T)
\end{eqnarray}
Furthermore, we can deduce that,
\begin{eqnarray}
    \min_t \mathbb{E}\big[ \| \nabla_\w  \mathcal{L}_V(\theta_t)\|^2_2 \big] &\le&  \frac{ \sum_{t=1}^T (\beta -\frac{\zxj{\tilde{L}}}{2}\beta^2) \| \nabla_\w  \mathcal{L}_V(\theta_t)\|^2_2 }{ \sum_{t=1}^T (\beta -\frac{L}{2}\beta^2) } \nonumber \\
    &\le&  \frac{ \sum_{t=1}^T (\beta -\frac{\zxj{\tilde{L}}}{2}\beta^2) \| \nabla_\w  \mathcal{L}_V(\theta_t)\|^2_2 }{ \sum_{t=1}^T (\beta -\frac{L}{2}\beta^2) } \nonumber \\
     &\le&  \frac{1}{ \sum_{t=1}^T (2\beta -\zxj{\tilde{L}}\beta^2) }\Big[  2\mathcal{L}_V(\theta_1) + \alpha \gamma^2(\zxj{\alpha} LT+2T) \Big] \nonumber \\
      &\le&  \frac{1}{ \sum_{t=1}^T \beta }\Big[  2\mathcal{L}_V(\theta_1) + \alpha \gamma^2( LT+2T) \Big] \nonumber \\
      &=& \frac{ 2\mathcal{L}_V(\theta_1)}{T}\frac{1}{\beta} + \frac{\alpha \gamma^2( L+ 2)}{\beta} \nonumber \\ 
      &=& \frac{ 2\mathcal{L}_V(\theta_1)}{T}\max\{L, \frac{\sqrt{T}}{C}\} + \min\{1, \frac{k}{T}\} \max\{L, \frac{\sqrt{T}}{C}\}\gamma^2 ( L+2)\nonumber \\
      &=& \frac{ 2\mathcal{L}_V(\theta_1)}{C\sqrt{T}} + \frac{k\gamma^2 ( L+2)}{C\sqrt{T}} = \mathcal{O}(\frac{1}{\sqrt{T}})
\end{eqnarray}
The third inequality holds for $ \sum_{t=1}^T \beta \le \sum_{t=1}^T (2\beta -\zxj{\tilde{L}}\beta^2)$. Therefore, we can conclude that our algorithm can always achieve $\min_{0\le t \le T} \mathbb{E}[\|\nabla_\w \mathcal{L}_V(\theta_t)  \|^2_2] \le \mathcal{O}(\frac{1}{\sqrt{T}}) $ in T steps.
\end{proof}

\section{Proof of Proposition~\ref{IFT_connection}}\label{app:A4}
\begin{proof}
The bi-level optimization problem for weight hyperparameters $\w$ using a model characterized by parameters $\theta$ is as follows:
\begin{align}\label{bilevel}
    \w^{*} = \underset{\w}{\operatorname{argmin\hspace{0.7mm}}}\mathcal{L}_{V}(\w, \theta^{*}(\w)) \text{\hspace{1mm}where} \\
    \theta^{*}(\w) = \underset{\theta}{\operatorname{argmin\hspace{0.7mm}}} \mathcal{L}_{T}(\w, \theta)
\end{align}

\begin{case} \textbf{IFT Approach:}
In order to optimize $\w$ using gradient descent, we need to calculate the weight gradient $\frac{\partial \mathcal{L}_{V}(\theta^*(\w), \w)}{ \partial \w}$. Using chain rule in Eq.~\eqref{bilevel}, we have:
\begin{eqnarray}
    \frac{\partial \mathcal{L}_{V} (\theta^*(\w), \w)}{ \partial \w}
    =\underbrace{\frac{\partial \mathcal{L}_{V} }{ \partial \w}}_{(a)} + \underbrace{\frac{\partial \mathcal{L}_{V} }{ \partial \theta^*(\w)}}_{(b)} \times \underbrace{\frac{\partial \theta^*(\w)}{\partial \w}}_{(c)} \label{ID_gradient2}
\end{eqnarray}
where (a) is the direct weight gradient and (b) is the direct parameter gradient, which is easy to compute. The tricky part is the term (c) (best-response Jacobian). 

In the IFT approach, we approximate (c) using the Implicit function theorem,
\begin{eqnarray}
    \frac{\partial \theta^* (\w)}{\w} = - \underbrace{\Big[ \frac{\partial \mathcal{L}_{T}}{\partial \theta\partial \theta^T} \Big]^{-1}}_{(d)} \times  \underbrace{\frac{\partial \mathcal{L}_{T}}{\partial \w\partial \theta^T}}_{(e)} \label{IFT2}
\end{eqnarray}

However, computing Eq.~\eqref{IFT2} is challenging when using deep nets because it requires inverting a high dimensional Hessian (term (d)), which often requires $\mathcal{O}(m^3)$ operations. Therefore, the IFT approach ~\cite{lorraine2020optimizing} uses the Neumann series approximation to effectively compute the Hessian inverse term(d), which is as follows,

\begin{align}\label{inverseapprox}
    \Big[ \frac{\partial \mathcal{L}_{T}}{\partial \theta\partial \theta^T} \Big]^{-1} =  \lim_{P \rightarrow{}\infty} \sum_{p=0}^P \Big[I- \frac{\partial \mathcal{L}_{T}}{\partial \theta\partial \theta^T} \Big]^p
\end{align}
where $I$ is the identity matrix.

Assuming $P=0$ in Eq.~\ref{inverseapprox}, we have $\Big[ \frac{\partial \mathcal{L}_{T}}{\partial \theta\partial \theta^T} \Big]^{-1} =  \mathbf{I}$ and substituting it in Eq.~\eqref{IFT2}, we have:
\begin{eqnarray}
    \frac{\partial \theta^* (\w)}{\w} = - \frac{\partial \mathcal{L}_{T}}{\partial \w\partial \theta^T}
\end{eqnarray}

Substituting the above equation in Eq.~\eqref{ID_gradient2}, we have:
\begin{eqnarray}
    \frac{\partial \mathcal{L}_{V} (\theta^*(\w), \w)}{ \partial \w}
    =\frac{\partial \mathcal{L}_{V} }{ \partial \w} - \frac{\partial \mathcal{L}_{V}(\theta^*(\w)) }{ \partial \theta^*(\w)} \times \frac{\partial^2 \mathcal{L}_{T}}{\partial \w\partial \theta^T}
\end{eqnarray}

Since we are using unweighted validation loss $\mathcal{L}_V$, there is no dependence of validation loss on weights directly, i.e., $\frac{\partial \mathcal{L}_{V}}{\partial \w} = 0$. Hence, the weight gradient is as follows:
\begin{eqnarray}
    \frac{\partial \mathcal{L}_{V} (\theta^*(\w), \w)}{ \partial \w} = - \frac{\partial \mathcal{L}_{V}(\theta^*(\w)) }{ \partial^2 \theta^*(\w)} \times \frac{\partial \mathcal{L}_{T}}{\partial \w\partial \theta^T}
\end{eqnarray}

Since we are using one-step gradient approximation, we have $\theta^*(\w) = \theta - \alpha \frac{\partial \mathcal{L}_T(\w, \theta)}{\partial \theta}$ where $\alpha$ is the model parameters learning rate.

The weight update step is as follows:
\begin{eqnarray}
    \w^* = \w + \beta \frac{\partial \mathcal{L}_{V}(\theta^*(\w)) }{ \partial \theta^*(\w)} \times \frac{\partial^2 \mathcal{L}_{T}}{\partial \w\partial \theta^T}
\end{eqnarray}
where $\beta$ is the weight learning rate.
\end{case}

\begin{case} \textbf{Meta-approximation Approach} In Meta-approximation approach, we have $\theta^*(\w) = \theta - \alpha \frac{\partial \mathcal{L}_T(\w, \theta)}{\partial \theta}$ where $\alpha$ is the model parameters learning rate.

Using the value of $\theta^{*}$, the gradient of validation loss with weight hyperparameters is as follows:

\begin{align}
    \frac{\partial \mathcal{L}_{V} (\theta^*(\w))}{ \partial \w} &= \frac{\partial \mathcal{L}_{V}(\theta - \alpha \frac{\partial \mathcal{L}_T(\w, \theta)}{\partial \theta})}{\partial \w} \\
    &= - \frac{\partial \mathcal{L}_{V}(\theta^*(\w))}{\partial \theta^*(\w)} \times \alpha \frac{\partial^2 \mathcal{L}_T(\w, \theta)}{\partial \theta \partial \w^T}
\end{align}

Assuming $\alpha = 1$, we have:
\begin{align}
    \frac{\partial \mathcal{L}_{V} (\theta^*(\w))}{ \partial \w} = - \frac{\partial \mathcal{L}_{V}(\theta^*(\w))}{\partial \theta^*(\w)} \times \frac{\partial^2 \mathcal{L}_T(\w, \theta)}{\partial \theta \partial \w^T}
\end{align}

Hence, the weight update step is as follows:
\begin{eqnarray}
    \w^* = \w + \beta \frac{\partial \mathcal{L}_{V}(\theta^*(\w)) }{ \partial \theta^*(\w)} \times \frac{\partial^2 \mathcal{L}_{T}}{\partial \w\partial \theta^T}
\end{eqnarray}
where $\beta$ is the weight learning rate
\end{case}
As shown in the cases above, we have a similar weight update. Hence, it inherently means that using the meta-approximation of $J=1$ is equivalent to using an identity matrix as the Hessian inverse of training loss with $\theta$ (i.e., $P=0$ for implicit differentiation approach).
\end{proof}

\section{Proof of Proposition~\ref{proposition_WBN}}\label{app:A5}
\begin{proof}
(1). The mini-batch mean of $\mathcal{I}$,
\begin{eqnarray}
    \mu_{\mathcal{I}} = \frac{1}{m}\sum_{i=1}^m \x_i = \mu_{I}\feng{.}
\end{eqnarray}
The mixed mini-batch mean of $\mathcal{IO}$,
\begin{eqnarray}
     \mu_{\mathcal{IO}} &=& \frac{1}{2m}(\sum_{i=1}^m \x_i +\sum_{i=1}^m \hat{\x}_i) \nonumber \\
     &=& \frac{1}{2}\mu_{\mathcal{I}} + \frac{1}{2}\mu_{\mathcal{O}}\feng{.}
\end{eqnarray}
\feng{T}hen we have,
\begin{eqnarray}
     \|\mu_{\mathcal{IO}}- \mu_{\mathcal{I}}\|_2 &=& \|\frac{1}{2}\mu_{\mathcal{I}} + \frac{1}{2}\mu_{\mathcal{O}} -\mu_{\mathcal{I}}\|_2 \nonumber\\
     &=& \| \frac{1}{2}\mu_{\mathcal{O}} -\mu_{\mathcal{I}}\|_2 >\frac{L}{2} \gg 0 \feng{.}
\end{eqnarray}
The mini-batch variance of $\mathcal{I}$,
\begin{eqnarray}
     \sigma^2_{\mathcal{I}} = \frac{1}{m}\sum_{i=1}^m (\x_i - \mu_{\mathcal{I}})^2 \feng{.}
\end{eqnarray}
The traditional batch normalizing transform based on mini-batch $\mathcal{I}$ for $\x_i$
\begin{eqnarray}
    BN_{\mathcal{I}}(\x_i) = \gamma \frac{\x_i-\mu_\mathcal{I}}{\sqrt{\sigma^2_{\mathcal{I}}+\epsilon}} +\beta
    \label{BN_I}\feng{.}
\end{eqnarray}
The mini-batch variance of $\mathcal{IO}$,
\begin{eqnarray}
     \sigma^2_{\mathcal{IO}}
     &=& \frac{1}{2m}(\sum_{i=1}^m \x_i^2+ \sum_{i=1}^m \hat{\x}_i^2) -\mu_{\mathcal{IO}}^2 \nonumber \\
     &=& \frac{1}{2}\sigma^2_{\mathcal{I}} + \frac{1}{2}\sigma^2_{\mathcal{O}}+ \frac{1}{4}(\mu_{\mathcal{O}}-\mu_{\mathcal{I}})^2 \nonumber \\
     &\approx& \frac{1}{4}(\mu_{\mathcal{O}}-\mu_{\mathcal{I}})^2
     \label{var_IO}\feng{,}
\end{eqnarray}
and the traditional batch normalizing transform based on mini-batch $\mathcal{IO}$ for $\x_i$,
\begin{eqnarray}
     BN_{\mathcal{IO}}(\x_i) &=& \gamma \frac{\x_i -\mu_{\mathcal{B}}(IO)}{\sqrt{\sigma^2_{\mathcal{B}}(IO)+\epsilon}} +\beta \nonumber \\
     &=& \gamma\Big( \frac{\x_i -\mu_O}{2\sqrt{\sigma^2_{\mathcal{B}}(IO)+\epsilon}} + \frac{\x_i -\mu_I}{2\sqrt{\sigma^2_{\mathcal{B}}(IO)+\epsilon}}\Big) + \beta \nonumber \\
     &\approx& \gamma\Big( \frac{\x_i -\mu_O}{\|\mu_O-\mu_I \|_2} + \frac{\x_i -\mu_I}{\|\mu_O-\mu_I \|_2}\Big) + \beta \nonumber \\
     &\approx& \gamma \frac{\x_i -\mu_O}{\|\mu_O-\mu_I \|_2} + \beta
      \label{BN_IO} \feng{.}
\end{eqnarray}
The approximation\feng{s} in Eq.~\eqref{var_IO} and Eq.~\eqref{BN_IO} hold when $\sigma^2_{\mathcal{I}}$ \feng{and} $\sigma^2_{\mathcal{O}}$ have \feng{the} same magnitude level\feng{s} as $\mu_I$, i.e., $\|\mu_{\mathcal{O}} - \sigma^2_{\mathcal{I}}\|_2>L$ and $\|\mu_{\mathcal{O}} - \sigma^2_{\mathcal{O}}\|_2 > L$. Compar\feng{ing} Eq.~\eqref{BN_I} \feng{with} Eq.~\eqref{BN_IO}, we prove that $BN_{\mathcal{I}}(\x_i) \neq BN_{\mathcal{IO}}(\x_i)$. 
 
(2) The weighted mini-batch mean of $\mathcal{IO}$,
\begin{eqnarray}
     \mu^\w_{\mathcal{IO}} &=& \frac{\sum_{i=1}^m w^i_{\mathcal{I}} \x_i +\sum_{i=1}^m w^i_{\mathcal{O}} \hat{\x}_i}{\sum_{i=1}^m w^i_{\mathcal{I}}+ \sum_{i=1}^m w^i_{\mathcal{O}}}  \nonumber \\
     &=& \frac{\sum_{i=1}^m 1\cdot \x_i +\sum_{i=1}^m 0\cdot \hat{\x}_i}{\sum_{i=1}^m 1+ \sum_{i=1}^m 0}  \nonumber \\
     &=& \mu_{I} \feng{,} \nonumber
\end{eqnarray}
\feng{witch} prove\feng{s} that $\mu_{\mathcal{I}} = \mu^{\w}_{\mathcal{IO}}$. The weighted mini-batch variance of $\mathcal{IO}$,
\begin{eqnarray}
     {\sigma^{\w}_{\mathcal{IO}}}^2 &=& \frac{\sum_{i=1}^m w^i_{\mathcal{I}} (\x_i-\mu^{\w}_{\mathcal{IO}})^2}{\sum_{i=1}^m w^i_{\mathcal{I}}+ \sum_{i=1}^m w^i_{\mathcal{O}}}  
     + \frac{\sum_{i=1}^m w^i_{\mathcal{O}} (\hat{\x}_i--\mu^{\w}_{\mathcal{IO}})^2}{\sum_{i=1}^m w^i_{\mathcal{I}}+ \sum_{i=1}^m w^i_{\mathcal{O}}}  \nonumber \\
     &=& \frac{\sum_{i=1}^m 1\cdot (\x_i-\mu_{\mathcal{IO}})^2 +{\bf 0}}{\sum_{i=1}^m 1+ \sum_{i=1}^m 0}  \nonumber \\
     &=& \frac{\sum_{i=1}^m 1\cdot (\x_i-\mu_{\mathcal{IO}})^2 }{m}  \nonumber \\
     &=& \sigma^2_{\mathcal{I}}\feng{.}  \nonumber
\end{eqnarray}
The weighted batch normalizing transform based on mini-batch $\mathcal{IO}$ for $\x_i$,
\begin{eqnarray}
    WBN_{\mathcal{IO}}(\x_i, \w) = \gamma \frac{\x_i-\mu^{\w}_\mathcal{IO}}{\sqrt{{\sigma^{\w}_{\mathcal{IO}}}^2+\epsilon}} +\beta =  \gamma \frac{\x_i-\mu_\mathcal{I}}{\sqrt{\sigma^2_{\mathcal{I}}+\epsilon}} +\beta \feng{,}
\end{eqnarray}
which proves that $BN_{\mathcal{I}}(\x_i) = WBN_{\mathcal{IO}}(\x_i, \w)$.
\end{proof}

% Begin the bibliography:
\begin{thesisbib}  % <--- THIS LINE IS REQUIRED!
    \bibliography{reference}
\end{thesisbib}  % <-- THIS LINE IS REQUIRED!

\begin{biosketch}
    Xujiang Zhao is currently a last-year PhD student in Computer Science at The University of Texas at Dallas. He is working under the supervision of Prof. Feng Xujiang. Before joining the Pattern Discovery and Machine Learning Laboratory, he received an MS in computer science from the University of Science and Technology of China (USTC) and a bachelor's degree at Chongqing University in China. His research interests include machine learning and data mining, particularly in uncertainty quantification, reinforcement learning, and semi-supervised learning. Xujiang published his work in top-tier machine learning and data mining conferences, including NeurIPS, AAAI, ICDM, EMNLP, etc. So far, Xujiang has 14 accepted publications, and 7 of them are first-author papers. In addition, he has done two internships at Alibaba Damo Academy in the summer of 2019 and NEC Laboratories America in the summer of 2021.

Besides, Xujiang is passionate about professional services. He was invited to serve on the program committee at the ICML, NeurIPS, ICLR, KDD, AAAI, WSDM, and SDM for the most recent two consecutive years. 

In the last semester of his doctoral program, Xujiang was actively looking for full-time jobs. Before graduation, he received 8 onsite interviews and 5 of them extended employment offers, including \textit{NEC Laboratories America, Inc.}, \textit{Amazon}, \textit{Black Sesame Technologies}, \textit{etc.} Xujiang eventually decided to choose \textit{NEC Laboratories America, Inc. }, which is the US-based Center for NEC Corporation's global network of corporate research laboratories. Xujiang was hired as a researcher and will continue his machine learning and data mining research.
\end{biosketch}

\begin{vita}  % <-- THIS LINE IS REQUIRED!
  % Replace the lines below with your CV using any formatting you wish,
  % or put nothing in this section and replace these pages with your CV
  % in the resulting PDF file.  (But you MUST include the \begin{vita}
  % and \end{vita} lines even if you intend to replace the pages, since
  % those lines are needed to put the Curriculum Vitae entry into the
  % Table of Contents.)
  \begin{center}
    {\LARGE\bfseries Xujiang Zhao} \\[5pt]
    Jun 06, 2022
\end{center}

%   \bigskip

{\large\bfseries Contact Information:\par}
\medskip
\noindent\vtop{\hsize=.49\hsize
Department of Computer Science\par
The University of Texas at Dallas\par
800 W.~Campbell Rd.\par
Richardson, TX 75080-3021, U.S.A.\par}
\hfil\vtop{\hsize=.49\hsize
% Voice: (518) 883-4724\par
Email: \texttt{xujiang.zhao@utdallas.edu}\par}\par

\bigskip

{\large\bfseries Educational History:\par}
\begin{itemize}
  \item PhD, Computer Science, The University of Texas at Dallas, 2022
  \item MS, Computer Science, University of Science and Technology of China, 2017
  \item BS, Civil Engineering, Chongqing University, 2014
\end{itemize}

{\large\bfseries Working Experience:\par}
\begin{itemize}
  \item Researcher, NEC Laboratories America, Inc., August 2022 -- Present    
  \item Research Intern, NEC Laboratories America, Inc., May 2021 -- August 2021
  \item Research Intern,  Alibaba Damo Academy, May 2019 -- August 2019
  \item Research Assistant, The University of Texas at Dallas, September 2019 -- August 2022
  \item Research Assistant, The Research Foundation for SUNY, January 2018 -- July 2019
\end{itemize}

% {\large\bfseries Teaching Experience:\par}
% \medskip
% \begin{itemize}
%     \item The University of Texas at Dallas
%     \begin{itemize}
%         \item Summer 2021, CS 6385.0W1, Algorithmic Aspects of Telecommunication Networks
%         \item Summer 2021, CS 4365.0U1, Artificial Intelligence
%         \item Summer 2021, CS 1136.1W1, Computer Science Laboratory
%         \item Fall 2019, CS 6364.003, Artificial Intelligence
%         \item Fall 2019, CS 6301.502, Convolutional Neural Network
%     \end{itemize}
%     \item The State University of New York at Albany
%     \begin{itemize}
%         \item Spring 2018, Fall 2018, Spring 2019, ICSI 500, Operating Systems
%         \item Fall 2017, ICSI 518, Software Engineering
%         \item Fall 2015, Spring 2016, Fall 2016, ICSI 201, Introduction of Computer Science
%     \end{itemize}
%     \item Albany Medical College
%     \begin{itemize}
%         \item Spring 2015, Spring 2014, MCP 608, Cardiovascular Physiology
%         \item Spring 2015, Spring 2014, MCP 609, Respiratory \& Renal Physiology
%     \end{itemize}
% \end{itemize}

{\large\bfseries Awards and Scholarships:\par}
\begin{itemize}
%   \item Dissertation Research Award (\$1,224), The University of Texas at Dallas, 2021
   \item Conference on Neural Information Processing Systems (\textit{NeurIPS}) -- Student Travel Award, 2020
  \item IEEE International Conference on Data Mining (\textit{ICDM}) -- Student Travel Award, 2019
  \item Outstanding Graduate Award of Chongqing University, 2014
  \item First-class College Scholarship, 2014
  \item National Scholarship, 2013
  \item First Prize in The National Drawing Skills and Advanced Technology, 2012
\end{itemize}

{\large\bfseries Professional Services:\par}
\begin{itemize}
    \item Program Committee Member for the Conference on Neural Information Processing Systems (\textit{NeurIPS}), 2022
    \item Program Committee Member for the International Conference on Machine Learning (\textit{ICML}), 2022
    \item Program Committee Member for the ACM Conference on Knowledge Discovery and Data Mining (\textit{ACM SIGKDD}), 2022
    \item Program Committee Member for the SIAM International Conference on Data Mining (\textit{SDM}), 2022
    \item Program Committee Member for the ACM International Conference on Web Search and Data Mining (\textit{WSDM}), 2022
    \item Program Committee Member for the International Conference on Learning Representations (\textit{ICLR}), 2022
    \item Program Committee Member for the AAAI Conference on Artificial Intelligence (\textit{AAAI}), 2022
    \item Program Committee Member for the Conference on Neural Information Processing Systems (\textit{NeurIPS}), 2021
    \item Program Committee Member for the ACM Conference on Knowledge Discovery and Data Mining (\textit{ACM SIGKDD}), 2021
    \item Program Committee Member for the ACM Conference on Knowledge Discovery and Data Mining (\textit{ACM SIGKDD}), 2020
\end{itemize}

{\large\bfseries Presentations:\par}
\begin{itemize}
    \item IEEE International Conference on Acoustics, Speech and Signal Processing (Oral Presentation), Virtual, April 2022
    \item Neural Information Processing Systems (Spotlight Presentation), Virtual, December 2020
    \item IEEE International Conference on Big Data (Oral Presentation), Seattle WA, November 2018
    \item Institute of Information Engineering, Chinese Academy of Sciences (Keynote)
, Beijing China, November 2018
    \item IEEE International Conference on Data Mining (Oral Presentation), Singapore, November 2018
    \item IEEE Military Communications Conference (Oral Presentation), Los Angeles CA , October 2018
\end{itemize}

{\large\bfseries Publications:\par}
\begin{enumerate}
    \item \textbf{Xujiang Zhao}, Xuchao Zhang, Wei Cheng, Wenchao Yu, Yuncong Chen, Haifeng Chen, Feng Chen. “SEED: Sound Event Early Detection via Evidential Uncertainty”. IEEE International Conference on Acoustics, Speech and Signal Processing (ICASSP 2022).
\item  Haoliang Wang, Chen Zhao,\textbf{ Xujiang Zhao}, Feng Chen. “Layer Adaptive Deep Neural Networks for Out-of-distribution Detection”. Pacific-Asia Conference on Knowledge Discovery and Data Mining (PAKDD 2022).
\item  Krishnateja Killamsetty, \textbf{Xujiang Zhao}, Feng Chen, Rishabh Iyer. “RETRIEVE: Corset Selection for Efficient and Robust Semi-Supervised Learning”. Advances in neural information processing systems  (NeurIPS 2021).
\item  Liyan Xu, Xuchao Zhang, \textbf{Xujiang Zhao}, Haifeng Chen, Feng Chen, Jinho D. Choi. “Boosting Cross-Lingual Transfer via Self-Learning with Uncertainty Estimation”. 2021 Conference on Empirical Methods in Natural Language Processing (EMNLP 2021), Short Paper.
\item  Zhuoyi Wang, Chen Zhao, Yuqiao Chen, Hemeng Tao, Yu Lin, \textbf{Xujiang Zhao}, Yigong Wang and Latifur Khan. “CLEAR: Contrastive-Prototype Learning with Drift Estimation for Resource Constrained Stream Mining.” In Proceeding of TheWebConf 2021 (WWW 2021).
\item  Yibo Hu, Yuzhe Ou, \textbf{Xujiang Zhao}, Feng Chen. “Multidimensional Uncertainty-Aware Evidential Neural Networks.” In Proceeding of the Thirty-fifth AAAI Conference on Artificial Intelligence (AAAI 2021).
\item  \textbf{Xujiang Zhao}, Krishnateja Killamsetty, Rishabh Iyer, Feng Chen. “Robust Semi-Supervised Learning with Out of Distribution Data.” arXiv preprint arXiv:2010.03658, 2020
\item  \textbf{Xujiang Zhao}, Feng Chen, Shu Hu, Jin-Hee Cho. “Uncertainty Aware Semi-Supervised Learning on Graph Data.” Advances in neural information processing systems  (NeurIPS 2020, Spotlight; Acceptance rate: 4\%). 
\item  Weishi Shi, \textbf{Xujiang Zhao}, Qi Yu, Feng Chen. “Multifaceted Uncertainty Estimation for Label-Efficient Deep Learning.” Advances in neural information processing systems (NeurIPS 2020). 
\item  Adil Alim, \textbf{Xujiang Zhao}, Jin-Hee Cho, Feng Chen. “Uncertainty-Aware Opinion Inference Under Adversarial Attacks.” In 2019 IEEE International Conference on Big Data (Big Data 2019), pp. 6-15. IEEE, 2019.
\item  \textbf{Xujiang Zhao}, Yuzhe. Ou, Lance. Kaplan, Feng. Chen, and Jin-Hee. Cho. “Quantifying Classification Uncertainty using Regularized Evidential Neural Networks.” accepted to AAAI 2019 Fall Symposium Series, Artificial Intelligence in Government and Public Sector.
\item  \textbf{Xujiang Zhao}, Shu Hu, Jin-Hee Cho, and Feng Chen. “Uncertainty-based Decision Making using Deep Reinforcement Learning.” In 2019 22nd International Conference on Information Fusion (FUSION 2019), pp. 1-8. IEEE, 2019.
\item  \textbf{Xujiang Zhao}, Feng Chen, and Jin-Hee Cho. “Deep Learning for Predicting Dynamic Uncertain Opinions in Network Data.” In 2018 IEEE International Conference on Big Data 
\item  \textbf{Xujiang Zhao}, Feng Chen, and Jin-Hee Cho. "Deep Learning based Scalable Inference of Uncertain Opinions." In 2018 IEEE International Conference on Data Mining (ICDM 2018), pp. 807-816. IEEE, 2018. (Full paper; Acceptance rate: 8.86\%)
\item  \textbf{Xujiang Zhao}, Feng Chen, and Jin-Hee Cho. “Uncertainty-Based Opinion Inference on Network Data Using Graph Convolutional Neural Networks.” In MILCOM 2018-2018 IEEE Military Communications Conference (MILCOM 2018), pp. 731-736. IEEE, 2018.
\end{enumerate}

\end{vita}  % <-- THIS LINE IS REQUIRED!

\end{document}